\newtheorem{theorem}{Theorem}
\newtheorem{lemma}{Lemma}
\newtheorem{corollary}{Corollary}
\newtheorem{remark}{Remark}
\newtheorem*{assumption*}{\assumptionnumber}
\providecommand{\assumptionnumber}{}
\newenvironment{assumption}[3]{
  \renewcommand{\assumptionnumber}{Assumption #1#2 (#3)}
  \begin{assumption*}
  \protected@edef\@currentlabel{#1#2}}
{\end{assumption*}}
\newcommand*\rel@kern[1]{\kern#1\dimexpr\macc@kerna}
\newcommand*\widebar[1]{%
  \begingroup
  \def\mathaccent##1##2{%
    \rel@kern{0.8}%
    \overline{\rel@kern{-0.8}\macc@nucleus\rel@kern{0.2}}%
    \rel@kern{-0.2}%
  }%
  \macc@depth\@ne
  \let\math@bgroup\@empty \let\math@egroup\macc@set@skewchar
  \mathsurround\z@ \frozen@everymath{\mathgroup\macc@group\relax}%
  \macc@set@skewchar\relax
  \let\mathaccentV\macc@nested@a
  \macc@nested@a\relax111{#1}%
  \endgroup
}
\newcommand*\ccol[1]{\omit\hfil$\displaystyle#1$\hfil\ignorespaces}
\newcommand{\argmin}{\mathop{\mathrm{argmin}}}
\newcommand{\st}{\mathop{\mathrm{subject\,\,to}}}
\newcommand{\esssup}{\mathop{\mathrm{ess\,\,sup}}}
\newcommand{\essinf}{\mathop{\mathrm{ess\,\,inf}}}
\def\E{\mathbb{E}}
\def\P{\mathbb{P}}
\def\Cov{\mathrm{Cov}}
\def\half{\frac{1}{2}}
\def\tr{\mathrm{tr}}
\def\df{\mathrm{df}}
\def\dim{\mathrm{dim}}
\def\nul{\mathrm{null}}
\def\supp{\mathrm{supp}}
\def\diag{\mathrm{diag}}
\def\spa{\mathrm{span}}
\def\hu{\hat{u}}
\def\htheta{\hat{\theta}}
\def\R{\mathbb{R}}
\def\cE{\mathcal{E}}
\def\cF{\mathcal{F}}
\def\cG{\mathcal{G}}
\def\cH{\mathcal{H}}
\def\cM{\mathcal{M}}
\def\cP{\mathcal{P}}
\def\cR{\mathcal{R}}
\def\cS{\mathcal{S}}
\def\cW{\mathcal{W}}
\def\thetaj{\theta_j}
\def\hthetaj{\htheta_j} 
\def\Xj{X_j}
\def\Dj{D^{(\Xj,k+1)}}
\def\one{\mathbbm{1}}
\def\TV{\mathrm{TV}}
\def\hf{\hat{f}}
\def\tf{\tilde{f}}
\def\hDelta{\hat\Delta}
\def\tDelta{\tilde\Delta}
\def\bDelta{\bar\Delta}
\def\hJ{\hat{J}}
\def\tJ{\tilde{J}}
\def\sJ{J^*}
\def\Cov{\mathrm{Cov}}
\def\KL{\mathrm{KL}}
\def\Id{\mathrm{Id}}
\def\TF{\mathrm{TF}}
\def\isim{\overset{\mathrm{i.i.d.}}{\sim}}
\title{Additive Models with Trend Filtering}
\author{Veeranjaneyulu Sadhanala \and Ryan J. Tibshirani}
\date{}
\begin{document}
\maketitle

\begin{abstract}
We study additive models built with trend filtering, i.e., additive
models whose components are each regularized by the (discrete) total
variation of their $k$th (discrete) derivative, for a chosen
integer $k \geq 0$.  This results in $k$th degree piecewise polynomial
components, (e.g., $k=0$ gives piecewise constant components, $k=1$
gives piecewise linear, $k=2$ gives piecewise quadratic, etc.).
Analogous to its advantages in the univariate case, additive trend
filtering has favorable theoretical and computational properties,
thanks in large part to the localized nature of the (discrete) total
variation regularizer that it uses.  On the theory side, we derive
fast error rates for additive trend filtering estimates, and show
these rates are minimax optimal when the underlying function is
additive and has component functions whose derivatives are of bounded
variation.  We also show that these rates are unattainable by additive
smoothing splines (and by additive models built from linear smoothers,
in general).  On the computational side, as per the standard in
additive models, backfitting is an appealing method for optimization,
but it is particularly appealing for additive trend filtering because
we can leverage a few highly efficient univariate trend filtering
solvers.  Going one step further, we describe a new backfitting
algorithm whose iterations can be run in parallel, which (as far as we
know) is the first of its kind.  Lastly, we present experiments
to examine the empirical performance of additive trend filtering. 
%and outline some possible extensions.
\end{abstract}

\section{Introduction}
\label{sec:intro}

As the dimension of the input space grows large, nonparametric regression
turns into a notoriously difficult problem. In this work, we adopt the stance
taken by many others, and consider an {\it additive model} for responses $Y^i
\in \R$, $i=1,\ldots,n$ and corresponding input points $X^i=(X^i_1,\ldots,X^i_d) 
\in \R^d$, $i=1,\ldots,n$, of the form 
$$
Y^i = \mu + \sum_{j=1}^d f_{0j}(X^i_j) + \epsilon^i, \;\;\;
i=1,\ldots,n,
$$
where $\mu \in \R$ is an overall mean parameter, each
$f_{0j}$ is a univariate function with
\smash{$\sum_{i=1}^n f_{0j}(X^i_j)=0$} for
identifiability, $j=1,\ldots,d$, 
and the errors $\epsilon^i$, $i=1,\ldots,n$ are
i.i.d.\ with mean zero. 
% independent of the distribution of inputs.  
A comment on notation: here and throughout, when indexing over the    
$n$ samples we use superscripts, and when indexing over the $d$
dimensions we use subscripts, so that, e.g., \smash{$X^i_j$}
denotes the $j$th component of the $i$th input point.  
(Exceptions will occasionally be made, but the role of the
index should be clear from the context.)  

Additive models are a special case of the more general 
{\it projection pursuit regression} model of \citet{friedman1981}.
Additive models for the Cox regression and logistic regression
settings were studied in \citet{tibshirani1983} and
\citet{hastie1983}, respectively.  Some of the first asymptotic theory
for additive models was developed in
\citet{stone1985additive}. Two algorithms closely related to
(backfitting for) additive models are the {\it alternating least
  squares} and {\it alternating conditional expectations} methods,
from \citet{vanderburg1983} and \citet{breiman1985}, respectively. The
work of \citet{buja1989linear} advocates for the use of additive models in  
combination with linear smoothers, a surprisingly simple combination
that gives rise to flexible and scalable multidimensional regression 
tools. The book by \citet{hastie1990generalized} is the definitive 
practical guide for additive models for exponential family data 
distributions, i.e., generalized additive models.  

More recent work on additive models is focused on high-dimensional  
nonparametric estimation, and here the natural goal is to induce
sparsity in the component functions, so that only a few select
dimensions of the input space are used in the fitted additive model.
Some nice contributions are given in
\citet{lin2006component,ravikumar2009sparse,meier2009high}, all  
primarily focused on fitting splines for component functions and
achieving sparsity through a group lasso type penalty.  In other even
more recent and interesting work sparse additive models,
\citet{lou2016sparse} consider a semiparametric (partially linear)
additive model, and \citet{petersen2016fused} consider a formulation
that uses fused lasso (i.e., total variation) penalization applied to
the component functions.  

The literature on additive models (and by now, sparse additive models)
is vast and the above is far form a complete list of references.
In this paper, we examine a method for estimating additive models
wherein each component is fit in a way that is {\it locally adaptive}
to the underlying smoothness along its associated dimension of the
input space.  The literature on this line of work, as far as we can
tell, is much less extensive. First, we review linear smoothers in 
additive models, motivate our general goal of local adaptivity, and
then describe our specific proposal. 

\subsection{Review: additive models and linear smoothers} 

The influential paper by \citet{buja1989linear} studies additive 
minimization problems of the form
\begin{equation}
\label{eq:add_quad}
\begin{alignedat}{2}
&\ccol{\min_{\theta_1,\ldots,\theta_d \in \R^n}} \quad
&&\bigg\|Y - \bar{Y}\one - \sum_{j=1}^d \thetaj \bigg\|_2^2 +      
\lambda \sum_{j=1}^d \thetaj^T Q_j \thetaj \\
&\ccol{\st}\quad &&\one^T \thetaj =0, \;\;\; j=1,\ldots,d,    
\end{alignedat}
\end{equation}
where $Y=(Y^1,\ldots,Y^n) \in \R^n$ denotes the vector of responses, 
and \smash{$Y-\bar{Y}\one$} is its centered version, with
\smash{$\bar{Y}=\frac{1}{n}\sum_{i=1}^n Y^i$} denoting the sample mean 
of $Y$, and $\one=(1,\ldots,1) \in \R^n$ the vector of all 1s. 
Each vector $\thetaj=(\theta^1_j,\ldots,\theta^n_j) \in \R^n$
represents the evaluations of the $j$th component function $f_j$ in  
our model, i.e., tied together by the relationship
$$
\theta^i_j = f_j(X^i_j), \;\;\; i=1,\ldots,n, \; j=1,\ldots,d.  
$$
In the problem \eqref{eq:add_quad}, $\lambda \geq 0$ is a
regularization parameter and $Q_j$, $j=1,\ldots,d$ are  
penalty matrices.  As a typical example, we might consider $Q_j$ to be
the Reinsch penalty matrix for smoothing splines along the $j$th 
dimension of the input space, for $j=1,\ldots,d$.  Under this choice,
a backfitting (block coordinate descent) routine for
\eqref{eq:add_quad} would repeatedly cycle through the updates 
\begin{equation}
\label{eq:backfit_quad}
\thetaj = (I+\lambda Q_j)^{-1}\bigg(
Y - \bar{Y}\one -\sum_{\ell\not=j} \theta_\ell \bigg),
\;\;\; j=1,\ldots,d, 
\end{equation}
where the $j$th update fits a smoothing spline to the $j$th partial 
residual,
%\smash{$Y-\bar{Y}\one-\sum_{\ell\not=j} \theta_\ell$}, 
over the $j$th dimension of the input points, denoted by $\Xj
=(X^1_j,X^2_j,\ldots X^n_j) \in \R^n$.  At convergence, we arrive at an additive
smoothing spline estimate, which solves \eqref{eq:add_quad}.   

Modeling the component functions as smoothing splines is arguably the 
most common formulation for additive models, and it is the standard
in several statistical software packages like the R package {\tt gam}.
As \citet{buja1989linear} explain, the backfitting steps in 
\eqref{eq:backfit_quad} suggest that a more algorithmic approach to
additive modeling can be taken.  Instead of starting with a particular
criterion in mind, as in \eqref{eq:backfit_quad}, one can instead
envision repeatedly cycling through updates 
\begin{equation}
\label{eq:backfit_smooth}
\thetaj = S_j \bigg(Y-\bar{Y}\one -\sum_{\ell\not=j} \theta_\ell 
\bigg), \;\;\; j=1,\ldots,d, 
\end{equation}
where each $S_j$ is a particular (user-chosen) {\it linear 
  smoother}, meaning, a linear map that performs a univariate smoothing 
across the $j$th dimension of inputs $\Xj$.  The linear 
smoothers $S_j$, $j=1,\ldots,d$ could correspond to, e.g., smoothing 
splines, regression splines (regression using a spline basis with given
knots), kernel smoothing, local polynomial smoothing, or a
combination of these, across the input dimensions.  The convergence
point of the iterations \eqref{eq:backfit_smooth} solves a problem
of the form \eqref{eq:add_quad} with $\lambda Q_j=S_j^+-I$, where
$S_j^+$ is the Moore-Penrose pseudoinverse of $S_j$, for
$j=1,\ldots,d$.  

The class of linear smoothers is broad enough to offer fairly flexible, 
interesting mechanisms for smoothing, and simple enough to understand 
precisely. 
\citet{buja1989linear} provide a unified analysis of additive models with 
linear smoothers, in which they derive the effective degrees of freedom 
of these estimators and a generalized cross-validation 
routine for tuning; they also study fundamental properties such as 
uniqueness of the component fits, and convergence of the backfitting
steps.   

Much of the work following \citet{buja1989linear} remains in keeping with 
the idea of using linear smoothers in combination with additive
models. Studying high-dimensional additive models,
\citet{lin2006component,ravikumar2009sparse,meier2009high,
koltchinskii2010sparsity,raskutti2012minimax}
all essentially build their methods off of linear smoothers, with
modifications to induce sparsity in the estimated component functions.      
\citet{ravikumar2009sparse} consider a sparsified version of backfitting
in \eqref{eq:backfit_smooth}, while the others consider penalized versions of
the additive criterion in \eqref{eq:add_quad}.  

\subsection{The limitations of linear smoothers}

The beauty of linear smoothers lies in their simplicity.  However,
with this simplicity comes serious limitations, in terms of their
ability to adapt to varying local levels of smoothness.  In the
univariate setting, the seminal theoretical work by
\citet{donoho1998minimax} makes this idea precise.  With $d=1$,
suppose that underlying regression function $f_0$ lies in the univariate
function class 
\begin{equation}
\label{eq:tv_space}
\cF_k(C) = \{f : \TV(f^{(k)}) \leq C \},   
\end{equation}
for a constant $C>0$, where $\TV(\cdot)$ is the total 
variation operator, and \smash{$f^{(k)}$} the $k$th weak 
derivative of $f$. The class in \eqref{eq:tv_space} allows for greater  
fluctuation in the local level of smoothness of $f_0$ than, say, more  
typical function classes like Holder and Sobolev spaces. The
results of \citet{donoho1998minimax} (see also Section 5.1 of 
\citet{tibshirani2014adaptive}) imply that the minimax  
error rate for estimation over $\cF_k(C)$ is
\smash{$n^{-(2k+2)/(2k+3)}$}, but the minimax error rate when we 
consider only linear smoothers (linear transformations of  
$Y$) is \smash{$n^{-(2k+1)/(2k+2)}$}. This difference is highly
nontrivial, e.g., for $k=0$ this
is a difference of $n^{-2/3}$ (optimal) versus $n^{-1/2}$ (optimal 
among linear smoothers) for estimating a function $f_0$ of bounded 
variation. 

It is important to emphasize that this shortcoming is not just a
theoretical one; it is also clearly noticeable in basic practical
examples.
% (see \citet{tibshirani2014adaptive} for some such examples).
% (such as those given in the next subsection). 
This does not bode well for additive models built from linear
smoothers, when estimating component functions $f_{0j}$,
$j=1,\ldots,d$ that display locally heterogeneous degrees of
smoothness.  Just as linear smoothers will struggle in the univariate 
setting, an additive estimate based on linear smoothers will not be 
able to efficiently track local changes in smoothness, across any of
the input dimensions. This could lead to a loss in accuracy even if  
only some (or one) of the components $f_{0j}$,
$j=1,\ldots,d$ possesses heterogeneous smoothness
across its domain. 

Two well-studied univariate estimators that are locally adaptive, i.e.,
that attain the minimax error rate over the $k$th order total
variation class in \eqref{eq:tv_space}, are wavelet smoothing
and locally adaptive regression splines, as developed by 
\citet{donoho1998minimax} and \citet{mammen1997locally}, respectively.
There is a substantial literature on these methods in the univariate
case (especially for wavelets), but fewer authors have
considered using these locally adaptive estimators in the additive
models context.  Some notable exceptions are
\citet{zhang2003wavelet,sardy2004amlet}, who study additive models built from
wavelets, and \citet{petersen2016fused}, who study sparse additive models with
components given by 0th order locally adaptive regression splines (i.e., the
components are regularized via fused lasso penalties or total variation
penalties).  The latter work is especially related to our focus in this paper. 

\subsection{Additive trend filtering}

We consider additive models that are constructed using {\it trend filtering}  
(instead of linear smoothers, wavelets, or locally adaptive regression splines)
as their componentwise smoother.  Proposed independently by
\citet{steidl2006splines} and \citet{kim2009trend}, trend filtering is a
relatively new approach to univariate nonparametric regression.  As explained in  
\citet{tibshirani2014adaptive}, it can be seen as a discrete-time
analog of the locally adaptive regression spline estimator.  Denoting by
$X=(X^1,\ldots,X^n) \in \R^n$ the vector of univariate input points, where we
assume $X^1 < \ldots < X^n$, the trend filtering estimate of order $k \geq 0$
is defined as the solution of the optimization problem    
\begin{equation}
\label{eq:tf}
\min_{\theta \in \R^n} \; \half \|Y-\theta\|_2^2 + 
\lambda \|D^{(X,k+1)} \theta\|_1,
\end{equation}
where $\lambda \geq 0$ is a tuning parameter, and $D^{(X,k+1)} \in
\R^{(n-k-1)\times n}$ is a $k$th order difference operator, constructed based on
$X$. These difference operators can be defined recursively, as in      
\begin{align}
\label{eq:d1}
D^{(X,1)} &= \left[\begin{array}{rrrrrr}
-1 & 1 & 0 & \ldots & 0 & 0 \\
0 & -1 & 1 & \ldots & 0 & 0 \\
\vdots & & & & & \\
0 & 0 & 0 & \ldots & -1 & 1
\end{array}\right] \in \R^{(n-1) \times n}, \\
\label{eq:dk}
D^{(X,k+1)} &= D^{(X,1)} \cdot
\diag\bigg( \frac{k}{X^k-X^1}, \ldots,
\frac{k}{X^n-X^{n-k+1}} \bigg) \cdot D^{(X,k)} 
\in \R^{(n-k-1)\times n}, \;\;\; k=1,2,3,\ldots.  
\end{align}
(The leading matrix $D^{(X,1)}$ in \eqref{eq:dk} is the
$(n-k-1)\times (n-k)$ version of the difference operator in 
\eqref{eq:d1}.) 
Intuitively, the interpretation is that the problem \eqref{eq:tf}
penalizes the sum of absolute $(k+1)$st order discrete
derivatives of $\theta^1,\ldots,\theta^n$ across the input points  
$X^1,\ldots,X^n$.  Thus, at optimality, the coordinates of the trend filtering   
solution \smash{$\htheta^1,\ldots,\htheta^n$} obey a $k$th order piecewise
polynomial form. 

This intuition is formalized in \citet{tibshirani2014adaptive} and 
\citet{wang2014falling}, where it is shown that the components of 
the $k$th order trend filtering estimate \smash{$\htheta$} are
precisely the evaluations of a fitted $k$th order piecewise polynomial
function across the inputs, and that the trend filtering and locally
adaptive regression spline estimates of the same order $k$ are
asymptotically 
equivalent.  When $k=0$ or $k=1$, in fact, there is no need for
asymptotics, and the equivalence between trend filtering and locally 
adaptive regression spline estimates is exact in finite
samples. It is also worth pointing out that when $k=0$, the trend   
filtering estimate reduces to the 1d fused lasso estimate
\citep{tibshirani2005sparsity}, which is known as 1d total variation
denoising in signal processing \citep{rudin1992nonlinear}. 

Over the $k$th order total variation function class defined in
\eqref{eq:tv_space}, \citet{tibshirani2014adaptive,wang2014falling} 
prove that $k$th order trend filtering achieves 
the minimax optimal \smash{$n^{-(2k+2)/(2k+3)}$} error rate, just like $k$th  
order locally adaptive regression splines. Another important property,  
as developed by
\citet{kim2009trend,tibshirani2014adaptive,ramdas2016fast},     
is that trend filtering estimates are relatively cheap to 
compute---much cheaper than locally adaptive regression 
spline estimates---owing to the bandedness of the difference 
operators in \eqref{eq:d1}, \eqref{eq:dk}, which means that specially
implemented convex programming routines can solve \eqref{eq:tf} in an
efficient manner. 

It is this computational efficiency, along with its capacity for 
local adaptivity, that makes trend filtering a particularly desirable
candidate to extend to the additive model setting.  Specifically, we  
consider the {\it additive trend filtering} estimate of order $k \geq 0$,  
defined as a solution in the problem
\begin{equation}
\label{eq:add_tf} 
\begin{alignedat}{2}
&\ccol{\min_{\theta_1,\ldots,\theta_d \in \R^n}} \quad   
&&\half \bigg\|Y-\bar{Y}\one - \sum_{j=1}^d \thetaj \bigg\|_2^2 +     
\lambda \sum_{j=1}^d \big\|\Dj S_j \thetaj \big\|_1 \\ 
&\ccol{\st}\quad &&\one^T \thetaj =0, \;\;\; j=1,\ldots,d.  
\end{alignedat}
\end{equation}
As before, \smash{$Y-\bar{Y}\one$} is the centered response vector,
%each $\thetaj=(\theta^1_j,\ldots,\theta^n_j)$ is a vector 
%of evaluations for the $j$th component function, for $j=1,\ldots,d$,
and $\lambda \geq 0$ is a regularization parameter. Not to be confused   
with the notation for linear smoothers from a previous subsection,  
$S_j \in \R^{n\times n}$ in \eqref{eq:add_tf} is a permutation matrix 
that sorts the $j$th component of inputs $\Xj=(X^1_j,X^2_j,\ldots 
X^n_j)$ into increasing order, i.e.,
$$
S_j\Xj = (X^{(1)}_j,  X^{(2)}_j, \ldots, X^{(n)}_j), \;\;\; j=1,\ldots,d.
$$
Also, \smash{$\Dj$} in \eqref{eq:add_tf} is the $(k+1)$st order difference
operator, as in \eqref{eq:d1}, \eqref{eq:dk}, but defined over the sorted $j$th 
dimension of inputs $S_j\Xj$, for $j=1,\ldots,d$.  With backfitting (block
coordinate descent), computation of a solution in \eqref{eq:add_tf} is still  
quite efficient, since we can leverage the efficient routines for
univariate trend filtering.

\subsection{A motivating example}
\label{sec:motivating}

Figure \ref{fig:intro} shows a simulated example that compares
the additive trend filtering estimates in \eqref{eq:add_tf} (of quadratic order,
$k=2$), to the additive smoothing spline estimates in \eqref{eq:add_quad} (of
cubic order). In the simulation, we used $n=3000$ and $d=3$.  We drew input
points \smash{$X^i \isim \mathrm{Unif}[0,1]^3$}, $i=1,\ldots,3000$, and 
drew responses \smash{$Y^i \isim N(\sum_{j=1}^3 f_{0j}(X^i_j),\sigma^2)$},  
$i=1,\ldots,3000$, where $\sigma=1.72$ was set to give a signal-to-noise 
ratio of about 1.  The underlying component functions were defined as 
$$
f_{01}(t) = \min(t, 1-t)^{0.2}
\sin \bigg(\frac{2.85 \pi}{0.3 + \min(t,1-t)}\bigg), \;\;\;
f_{02}(t) = e^{3t}  \sin(4\pi t), \;\;\;
f_{03}(t) = - (t-1/2)^2,
$$
so that $f_{01},f_{02},f_{03}$ possess different levels of smoothness ($f_{03}$
being the smoothest, $f_{02}$ less smooth, and $f_{01}$ the least smooth), and
so that $f_{01}$ itself has heteregeneous smoothness across its domain. 

\begin{figure}[p]
\centering
\includegraphics[width=\textwidth]{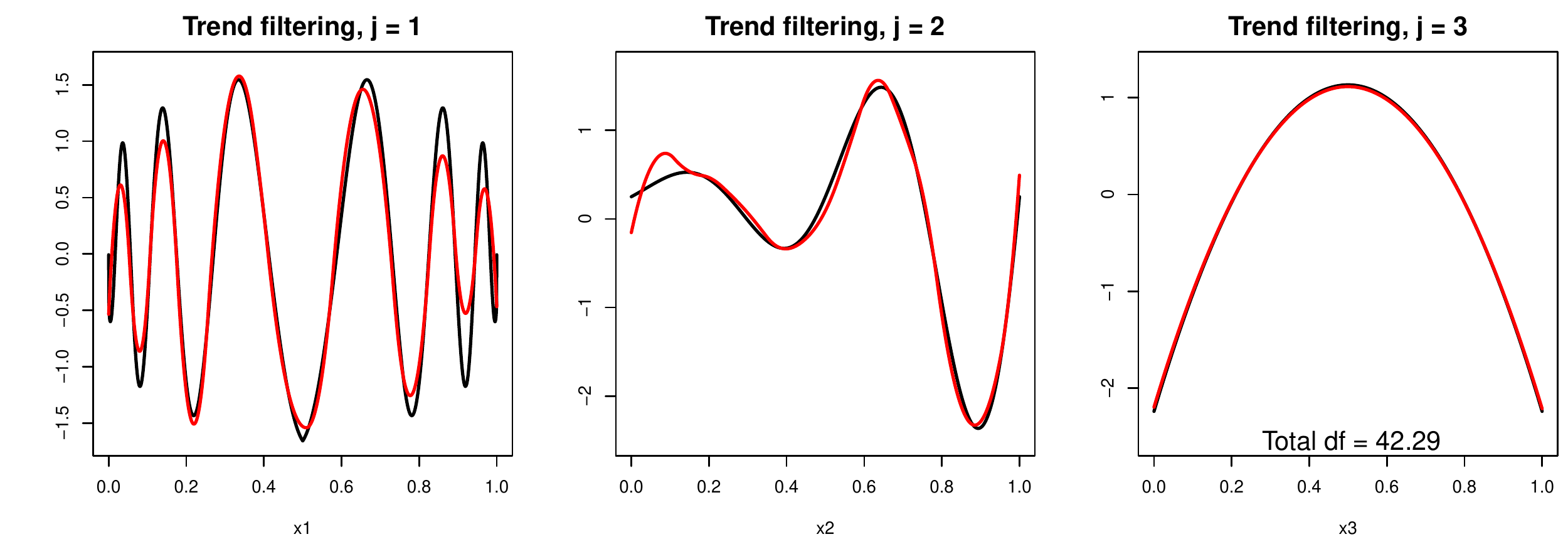} \\
\includegraphics[width=\textwidth]{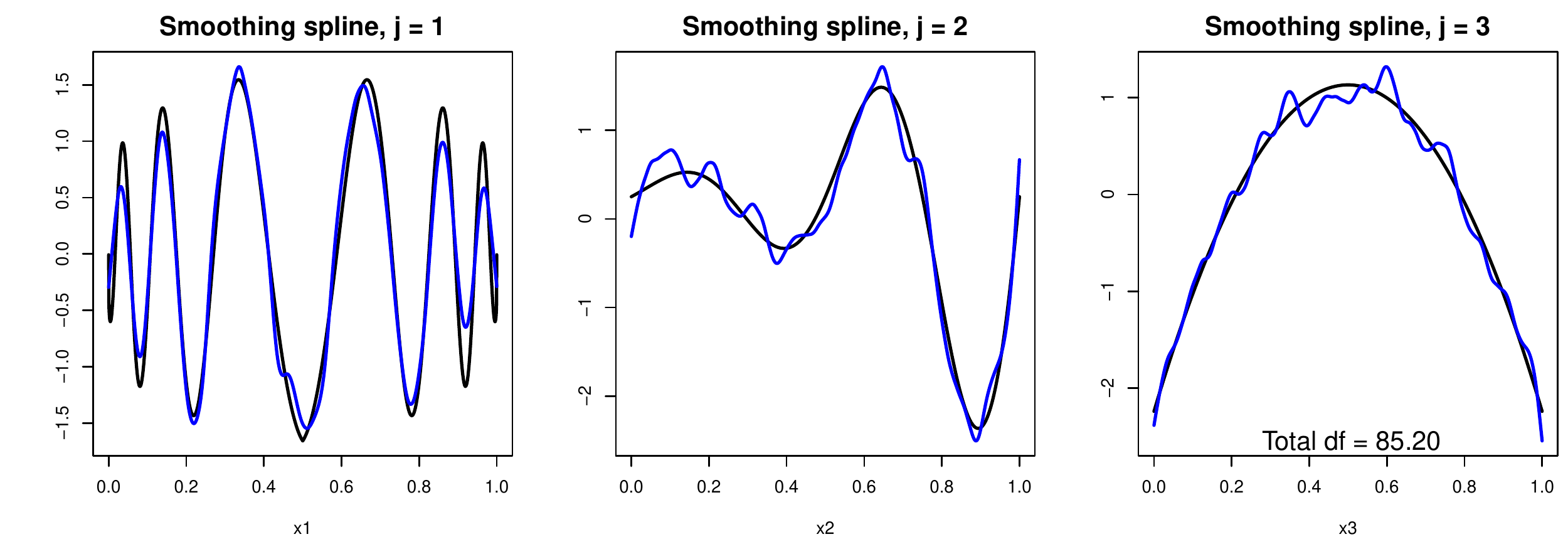} \\
\includegraphics[width=\textwidth]{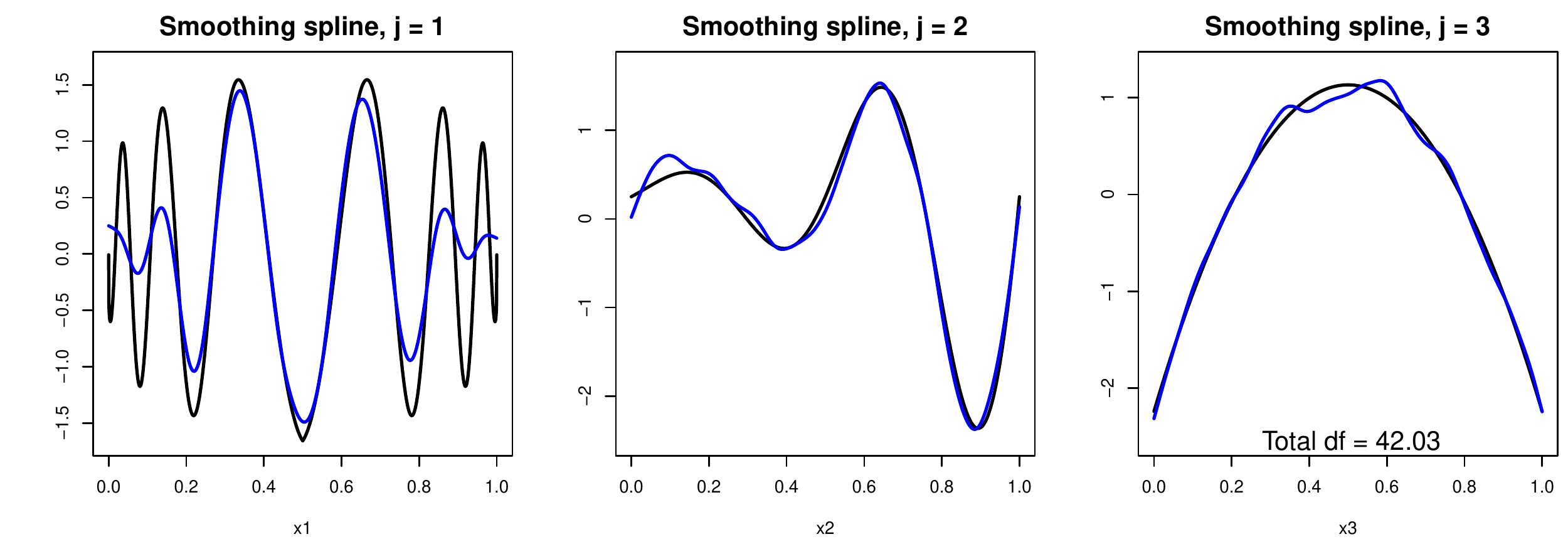} 
\caption{\it\small Comparing estimates from additive trend filtering
  \eqref{eq:add_tf} (of quadratic order) and additive smoothing splines
  \eqref{eq:add_quad} (of cubic order), for a simulation with $n=3000$
  and $d=2$, as described in Section \ref{sec:motivating}.  In each row, the
  underlying component functions are plotted in black.  The first row shows
  the estimated component functions using additive trend filtering, in 
red, at a value of $\lambda$ chosen to minimize mean squared error (MSE),
computed over 20 repetitions.  The second row shows the estimates from
additive smoothing splines, in blue, again at a value of $\lambda$ that
minimizes MSE. The third row shows the estimates from additive smoothing splines
when $\lambda$ is tuned so that the effective degrees of freedom (df) of the fit
roughly matches that of additive trend filtering in the first row.} 
\label{fig:intro}
\end{figure}

\begin{figure}[tb]
\hspace{-50pt}
\begin{minipage}[c]{0.675\textwidth}
\centering
\includegraphics[width=0.675\textwidth]{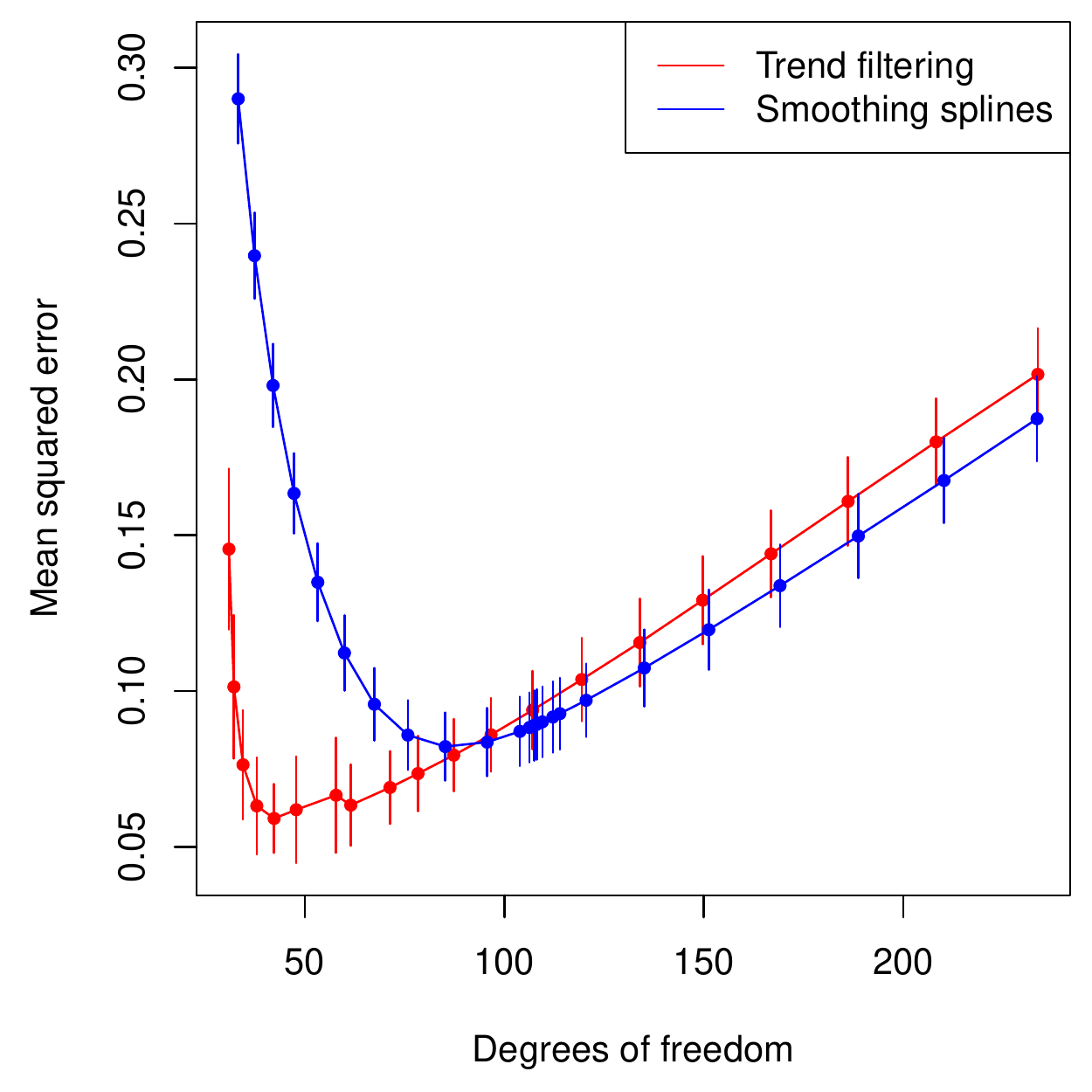}
\end{minipage}
\hspace{-40pt}
\begin{minipage}[c]{0.45\textwidth}
\caption{\it\small MSE curves for additive trend filtering and additive
smoothing splines, computed over 20 repetitions from the same simulation setup
as in Figure~\ref{fig:intro}. Vertical segments denote $\pm 1$ standard
deviations. The MSE curves are parametrized by degrees of freedom 
(computed via standard Monte Carlo methods over the 20 repetitions).}      
\label{fig:intro_mse}
\end{minipage}
\end{figure}
 
The first row of Figure \ref{fig:intro} shows the estimated component functions
from additive trend filtering, at a value of $\lambda$ that minimizes the mean 
squared error (MSE), computed over 20 repetitions. The second row shows the  
estimates from additive smoothing splines, also at a value of $\lambda$ that
minimizes the MSE.  We see that  
the trend filtering fits adapt well to the varying levels of smoothness, but the 
smoothing spline fits are undersmoothed, for the most part.  In terms of
effective degrees of freedom (df), the additive smoothing spline estimate is
much more complex, having about 85 df (computed via Monte Carlo over the 20
repetitions); the additive trend filtering has only about 42 df.  The third row
of the figure shows the estimates from additive smoothing splines, when
$\lambda$ is chosen so that the resulting df is roughly matches that of additive
trend filtering in the first row.  Now we see that the first component fit is
oversmoothed, yet the third is still undersmoothed. 

Figure \ref{fig:intro_mse} displays the MSE curves from additive trend
filtering, as a function of df. We see that trend filtering achieves a lower
MSE, and moreover, its MSE curve is optimized at a lower df (i.e., less complex
model) than that for smoothing splines.  This is analogous to what is typically
seen in the univariate setting \citep{tibshirani2014adaptive}.

\subsection{Multiple tuning parameters}
\label{sec:multi_lambda}

In problems \eqref{eq:add_quad} and \eqref{eq:add_tf}, consider generalizing the 
penalties in the criterions by
\begin{equation}
\label{eq:multi_lambda}
\sum_{j=1}^d \lambda_j \thetaj^T Q_j \thetaj \;\;\; \text{and} \;\;\;
\sum_{j=1}^d \lambda_j \big\|\Dj S_j \thetaj \big\|_1,
\end{equation}
respectively, which means we would now have $d$ tuning parameters $\lambda_j$,
$j=1,\ldots,d$. In practice, selecting multiple tuning parameters is
significantly more challenging than selecting a single one, as is needed in
\eqref{eq:add_quad} and \eqref{eq:add_tf}.  However, as pointed out by a referee
of this article, there has been a considerable amount of work dedicated to this
problem by authors studying additive models built from splines (or other linear
smoothers), e.g.,
\citet{gu1991minimizing,wood2000modelling,fahrmeir2001bayesian,ruppert2003semiparametric,wood2004stable,kim2004smoothing,rue2009approximate,wood2011fast,wood2015generalized,wood2016smoothing}.
Many of these papers use an efficient computational approach based on restricted
maximum likelihood (REML) for selecting $\lambda_j$, $j=1,\ldots,d$; see also
\citet{wood2017generalized} for a nice introduction and description of this
approach.  Unfortunately, as far as we see it, REML does not easily apply to
additive trend filtering.

In this paper, we focus on a single tuning parameter $\lambda$ as in
\eqref{eq:add_quad} and \eqref{eq:add_tf} mainly for simplicity; we are not
suggesting that this is always the preferred formulation in practice.  Many of
our results (the basis formulation, and the uniqueness and degrees of 
freedom results in Section \ref{sec:basic}) carry over immediately to the
multiple tuning parameter case.  Others (the error bounds in 
Section \ref{sec:bounds}) do not, though extending them may certainly be  
possible.  Furthermore, the motivating example of the last subsection is meant
to elucidate the differences in what additive smoothing splines and additive
trend filtering can do with a single tuning parameter; a serious applied
statistician, in just $d=3$ dimensions, would likely use REML or some related
technique to fit a multiple tuning parameter smoothing spline model, which would
bring it closer to additive trend filtering in performance here (it would be
able to adjust to the variable smoothness across the components, though still
not that within $f_{01}$). Of course, as $d$ grows larger, a separate tuning
parameter per component will generally become infeasible, and the single tuning 
parameter comparisons will become more meaningful.  

\subsection{Summary of contributions}

A summary of our contributions, and an outline for the
rest of this paper, are given below.

\begin{itemize}
\item In Section \ref{sec:basic}, we investigate basic 
  properties of the additive trend filtering model: an
  equivalent continuous-time formulation, a condition for uniqueness 
  of component function estimates, and a simple formula for the 
  effective degrees of freedom of the additive fit.  We also introduce two
  estimators related to additive trend filtering, based on splines, that 
  facilitate theoretical analysis (and are perhaps of interest in
  their own right).

\item In Section \ref{sec:bounds}, we derive error bounds   
  for additive trend filtering.  Assuming that the underlying
  regression function is additive, denoted by \smash{$f_0=\sum_{j=1}^d
    f_{0j}$}, and that \smash{$\TV(f_{0j}^{(k)})$} is bounded, for
  $j=1,\ldots,d$, we prove that the $k$th order additive trend filtering
  estimator converges to $f_0$ at the rate \smash{$n^{-(2k+2)/(2k+3)}$} when 
  the dimension $d$ is fixed (under weak assumptions), and at the rate    
  \smash{$d n^{-(2k+2)/(2k+3)}$} when $d$ is growing (under stronger 
  assumptions).  We prove that these rates are optimal in a
  minimax sense, and also show that additive smoothing splines (or more
  generally, additive models built from linear smoothers of any kind) are
  suboptimal over such a class of functions $f_0$.    

\item In Section \ref{sec:comp}, we study the backfitting algorithm 
  for additive trend filtering models, and give a connection
  between backfitting and an alternating projections scheme in the
  additive trend filtering dual problem.  This inspires a new parallelized 
  backfitting algorithm for additive trend filtering.

\item In Section \ref{sec:experiments}, we present empirical
  experiments and comparisons, and we also investigate the use of
  cross-validation for tuning parameter selection, and multiple tuning parameter
  models as in \eqref{eq:multi_lambda}.  In Section \ref{sec:discuss}, we give a
  brief discussion. % and describe some potential extensions of our work.  
\end{itemize}

\section{Basic properties}
\label{sec:basic}

In this section, we derive a number of basic properties of additive
trend filtering estimates, starting with a representation for the
estimates as continuous functions over $\R^d$ (rather than simply 
discrete fitted values at the input points).

\subsection{Falling factorial representation}
\label{sec:fall_fact}

We may describe additive trend filtering in \eqref{eq:add_tf}
as an estimation problem written in {\it analysis form}.  
The components are modeled directly by the parameters  
$\thetaj$, $j=1,\ldots,d$, and the desired structure is established 
by regularizing the discrete derivatives of these parameters,
through the penalty terms \smash{$\|\Dj S_j  
  \thetaj\|_1$}, $j=1,\ldots,d$.  Here, we present an
alternative representation for \eqref{eq:add_tf} in 
{\it basis form}, where each component is expressed as a linear
combination of basis functions, and regularization is applied to the
coefficients in this expansion.     

Before we derive the basis formulation that underlies additive trend
filtering, we first recall the {\it falling factorial basis} 
\citep{tibshirani2014adaptive,wang2014falling}.  
Given knot points $t^1 < \ldots < t^n \in \R$, the $k$th order  
falling factorial basis functions $h_1,\ldots,h_n$ are defined by
\begin{equation}
\label{eq:hbasis}
\begin{gathered}
h_i(t) = \prod_{\ell=1}^{i-1} (t-t^\ell), 
\;\;\; i=1,\ldots,k+1,\\ 
h_{i+k+1}(t) = \prod_{\ell=1}^k (t-t^{i+\ell}) \cdot  
1\{t > t^{i+k}\}, 
\;\;\;  i=1,\ldots,n-k-1.
\end{gathered}
\end{equation}
We denote $1\{t > a\}=1$ when $t > a$, and 0 otherwise. 
(Also, our convention is to define the empty product to be 1, so that 
$h_1(t)=1$.) The functions $h_1,\ldots,h_n$ are piecewise 
polynomial functions of order $k$, and appear very similar
in form to the $k$th order truncated power basis functions.  
In fact, when $k=0$ or $k=1$, the two bases are exactly equivalent
(meaning that they have the same span). Similar 
to an expansion in the truncated power basis, an expansion in the falling 
factorial basis, 
$$
g = \sum_{i=1}^n \alpha^i h_i
$$
is a continuous piecewise polynomial function, having a global
polynomial structure determined by $\alpha^1,\ldots,\alpha^{k+1}$,
and exhibiting a knot---i.e., a change in its $k$th derivative---at
the location $t^{i+k}$ when $\alpha^{i+k+1} \not= 0$.  
But, unlike the truncated power functions, the falling factorial
functions in \eqref{eq:hbasis} are not splines, and when $g$ (as
defined above) has a knot at a particular location, it displays a change not
only in its $k$th derivative at this location, but also in all lower order
derivatives (i.e., all derivatives of orders $1,\ldots,k-1$).  

\citet{tibshirani2014adaptive,wang2014falling} establish a connection 
between univariate trend filtering and the falling 
factorial functions, and show that the trend filtering problem can be  
interpreted as a sparse basis regression problem using these 
functions. As we show next, the analogous result holds for additive
trend filtering. 

\begin{lemma}[\textbf{Falling factorial representation}]
\label{lem:fall_fact}
For $j=1,\ldots,d$, let \smash{$h_1^{(\Xj)}, \ldots, h_n^{(\Xj)}$} be
the falling factorial basis in \eqref{eq:hbasis} with knots 
$(t^1,\ldots,t^n)=S_j \Xj$, the $j$th dimension of the input points, 
properly sorted. Then the additive trend filtering problem \eqref{eq:add_tf} is
equivalent to the problem  
\begin{equation}
\label{eq:add_tf_func}
\begin{alignedat}{2}
&\ccol{\min_{\alpha_1,\ldots,\alpha_d \in \R^n}} \quad
&&\half \sum_{i=1}^n \bigg(Y^i - \bar{Y} -
\sum_{j=1}^d \sum_{\ell=1}^n 
\alpha^\ell_j h^{(\Xj)}_\ell (X^i_j) \bigg)^2 + 
\lambda k! \sum_{j=1}^d \sum_{\ell=k+2}^n |\alpha^\ell_j| \\  
&\ccol{\st}\quad &&\sum_{i=1}^n \sum_{\ell=1}^n \alpha^\ell_ j
h^{(\Xj)}_\ell (X^i_j) =0,  \;\;\; j=1,\ldots,d,
\end{alignedat}
\end{equation}
in that, at any solutions in \eqref{eq:add_tf}, \eqref{eq:add_tf_func}, we have  
$$
\htheta^i_j = \sum_{\ell=1}^n \hat\alpha^\ell_j
h^{(\Xj)}_\ell (X^i_j), 
\;\;\; i=1,\ldots,n, \; j=1,\ldots,d.
$$
An alternative way of expressing problem \eqref{eq:add_tf_func} is   
\begin{equation}
\label{eq:add_tf_func_tv}
\begin{alignedat}{2}
&\ccol{\min_{f_j \in \cH_j, \, j=1,\ldots,d}} \quad   
&&\half \sum_{i=1}^n \bigg(Y^i - \bar{Y} -  
\sum_{j=1}^d f_j(X^i_j) \bigg)^2 +  
\lambda \sum_{j=1}^d \TV(f_j^{(k)}) \\
&\ccol{\st} \quad &&\sum_{i=1}^n f_j(X^i_j) = 0, \;\;\; j=1,\ldots,d,       
\end{alignedat}
\end{equation}
where 
\smash{$\cH_j=\spa\{h^{(\Xj)}_1,\ldots,h^{(\Xj)}_n\}$} is the 
span of the falling factorial basis over the $j$th dimension, and  
\smash{$f_j^{(k)}$} is the $k$th weak derivative of $f_j$, 
$j=1,\ldots,d$.  In this form, at any solutions in \eqref{eq:add_tf}, 
\eqref{eq:add_tf_func_tv},  
$$
\htheta^i_j = \hf_j(X^i_j), 
\;\;\; i=1,\ldots,n, \;  j=1,\ldots,d.
$$
\end{lemma}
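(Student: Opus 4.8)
\emph{Proof plan.} The plan is to reduce the claim to the known univariate correspondence between trend filtering and the falling factorial basis of \citet{tibshirani2014adaptive,wang2014falling}, applied separately to each of the $d$ components; the additive structure is harmless because the criterion in \eqref{eq:add_tf} is a sum over $j$ of penalties and constraints, coupled only through the quadratic loss, which we merely rewrite rather than optimize. First I would fix, for each coordinate $j$, the falling factorial evaluation matrix $\tilde H_j\in\R^{n\times n}$ at the (unsorted) inputs, $(\tilde H_j)_{i\ell}=h^{(\Xj)}_\ell(X^i_j)$, and note that $S_j\tilde H_j$ is the falling factorial evaluation matrix over the sorted knots $S_j\Xj$ evaluated at those knots. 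As in the univariate case I assume the entries of $\Xj$ are distinct (so that $S_j$ and $\Dj$ are well defined); then the falling factorial functions are linearly independent at the knots, so $\tilde H_j$ is invertible and $\alpha_j\mapsto\theta_j:=\tilde H_j\alpha_j$ is a bijection of $\R^n$ --- precisely the relation $\theta^i_j=\sum_\ell\alpha^\ell_j h^{(\Xj)}_\ell(X^i_j)$ in the statement.

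Next I would invoke the key univariate identity of \citet{tibshirani2014adaptive,wang2014falling}: $\Dj\,(S_j\tilde H_j)$ has its first $k+1$ columns zero and its trailing $(n-k-1)\times(n-k-1)$ block equal to $k!$ times the identity. Hence $\Dj S_j\theta_j=\Dj(S_j\tilde H_j)\alpha_j=k!\,(\alpha^{k+2}_j,\dots,\alpha^n_j)$, so $\|\Dj S_j\theta_j\|_1=k!\sum_{\ell=k+2}^n|\alpha^\ell_j|$. Substituting $\theta_j=\tilde H_j\alpha_j$ for every $j$ into \eqref{eq:add_tf}: the loss becomes $\half\sum_i(Y^i-\bar Y-\sum_j\sum_\ell\alpha^\ell_j h^{(\Xj)}_\ell(X^i_j))^2$, since the $i$th entry of $\sum_j\tilde H_j\alpha_j$ is $\sum_j\sum_\ell\alpha^\ell_j h^{(\Xj)}_\ell(X^i_j)$; the penalty becomes $\lambda k!\sum_j\sum_{\ell=k+2}^n|\alpha^\ell_j|$ by the identity above; and $\one^T\theta_j=0$ becomes $\sum_i\sum_\ell\alpha^\ell_j h^{(\Xj)}_\ell(X^i_j)=0$. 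This is exactly \eqref{eq:add_tf_func}, and since $\alpha_j\mapsto\tilde H_j\alpha_j$ is a bijection of $\R^n$ for each $j$ preserving the objective and carrying the feasible set of \eqref{eq:add_tf_func} onto that of \eqref{eq:add_tf}, the two problems have corresponding solution sets, related as claimed.

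For the reformulation \eqref{eq:add_tf_func_tv}, I would use the bijection $\R^n\ni\alpha_j\leftrightarrow f_j:=\sum_{\ell=1}^n\alpha^\ell_j h^{(\Xj)}_\ell\in\cH_j$ (again a bijection because the $h^{(\Xj)}_\ell$ are linearly independent). Then $f_j(X^i_j)=\sum_\ell\alpha^\ell_j h^{(\Xj)}_\ell(X^i_j)=\theta^i_j$, so the loss and constraints of \eqref{eq:add_tf_func_tv} agree with those of \eqref{eq:add_tf_func}; and by the univariate computation of the total variation of the $k$th derivative of a falling factorial expansion \citep{tibshirani2014adaptive,wang2014falling} --- for $\ell\le k+1$, $h^{(\Xj)}_\ell$ is a polynomial of degree $\le k$ and contributes nothing, while for $\ell=i+k+1$ the $k$th derivative jumps by $k!$ at the knot $t^{i+k}$, and these knot locations are distinct across $\ell$ --- one obtains $\TV(f_j^{(k)})=k!\sum_{\ell=k+2}^n|\alpha^\ell_j|$, matching the penalty in \eqref{eq:add_tf_func}. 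Hence \eqref{eq:add_tf_func} and \eqref{eq:add_tf_func_tv} have corresponding solutions with $\htheta^i_j=\hf_j(X^i_j)$.

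There is nothing deep here: all the substantive content is borrowed from the univariate falling factorial representation, and the hardest part is simply the bookkeeping --- in particular, keeping straight that the basis in \eqref{eq:add_tf_func}, \eqref{eq:add_tf_func_tv} is evaluated at the \emph{unsorted} inputs $X^i_j$ while the penalty in \eqref{eq:add_tf} is written in \emph{sorted} order, which is exactly what the permutations $S_j$ reconcile via $S_j\tilde H_j$, together with the mild requirement that the entries of each $\Xj$ be distinct, which is what makes the change of variables a genuine bijection. The sum over $j$ and the coupling of the components through the loss cause no trouble, since we only rewrite the criterion term by term.
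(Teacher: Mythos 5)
Your proposal is correct and follows essentially the same route as the paper: reparametrize each component via the falling factorial evaluation matrix, invoke the univariate identity from \citet{wang2014falling} relating $\Dj S_j H^{(\Xj,k)}$ to $k!$ times (the trailing block of) the identity — which is just the forward form of the inverse formula the paper cites — and then compute $\TV(f_j^{(k)})$ coefficientwise for the second reformulation. Your added care about sorted versus unsorted evaluation and the distinctness of the entries of each $\Xj$ is consistent with, and slightly more explicit than, the paper's treatment.
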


\begin{proof}
For $j=1,\ldots,d$, define the $k$th order falling factorial
basis matrix \smash{$H^{(\Xj,k)} \in \R^{n\times n}$} by
\begin{equation}
\label{eq:hmat}
H_{i\ell}^{(\Xj,k)} = h^{(\Xj)}_\ell(X^i_j), 
\;\;\; i=1,\ldots,n, \;  \ell=1,\ldots,n.
\end{equation}
Note that the columns of \smash{$H^{(\Xj,k)}$} follow the order of the
sorted inputs $S_j\Xj$, but the rows do not; however, for 
\smash{$S_j H^{(\Xj,k)}$}, both its rows and columns of follow the
order of $S_j\Xj$.  From \citet{wang2014falling}, we know that 
$$
(S_j H^{(\Xj,k)})^{-1} = \left[\begin{array}{c}
C^{(\Xj,k+1)} \\ \frac{1}{k!} \Dj
\end{array}\right],
$$
for some matrix \smash{$C^{(\Xj,k+1)} \in \R^{(k+1)\times n}$}, i.e., 
\begin{equation}
\label{eq:hinv}
(H^{(\Xj,k)})^{-1} = \left[\begin{array}{c}
C^{(\Xj,k+1)} \\ \frac{1}{k!} \Dj
\end{array}\right] S_j.
\end{equation}
Problem \eqref{eq:add_tf_func} is given by reparameterizing 
\eqref{eq:add_tf} according to \smash{$\thetaj =
  H^{(\Xj,k)}\alpha_j$}, for $j=1,\ldots,d$.
As for \eqref{eq:add_tf_func_tv}, the equivalence
between this and \eqref{eq:add_tf_func} follows by noting  
that for \smash{$f_j=\sum_{\ell=1}^n \alpha^\ell_j h_\ell^{(\Xj)}$},
we have 
$$
f_j^{(k)}(t) = k! + k!\sum_{\ell=k+2}^n 
\alpha^\ell_j \cdot 1\{t > X^{\ell-1}_j\},
$$
and so
\smash{$\TV(f_j^{(k)}) = k! \sum_{\ell=k+2}^n |\alpha^\ell_j|$}, for
each $j=1,\ldots,d$.
\end{proof}

This lemma not only provides an interesting
reformulation for additive trend filtering, it is also practically
useful in that it allows us to perform 
interpolation or extrapolation using the additive trend filtering
model.  That is, from a solution
\smash{$\htheta=(\htheta_1,\ldots,\htheta_d)$} in \eqref{eq:add_tf},
we can extend each component fit \smash{$\hthetaj$} to the real line,
by forming an appropriate linear combination of falling factorial
functions: 
\begin{equation}
\label{eq:fit_j}
\hf_j(x_j) =
\sum_{\ell=1}^n \hat\alpha^\ell_j h_\ell ^{(\Xj)} (x_j), \;\;\;
x_j \in \R.
\end{equation} 
The coefficients above are determined by the 
relationship \smash{$\hat\alpha_j = (H^{(\Xj,k)})^{-1} \hthetaj$},  
and  are easily computable given the highly
structured form of \smash{$(H^{(\Xj,k)})^{-1}$}, as revealed in 
\eqref{eq:hinv}.  
Writing the coefficients in block form, as in 
$\hat\alpha_j=(\hat{a}_j,\hat{b}_j) \in 
\R^{(k+1)} \times \R^{(n-k-1)}$, we have
\begin{gather}
\label{eq:a_j}
\hat{a}_j = C^{(\Xj,k+1)} S_j \hthetaj, \\
\label{eq:b_j}
\hat{b}_j = \frac{1}{k!} \Dj S_j \hthetaj.
\end{gather}
The first $k+1$ coefficients 
\smash{$\hat{a}_j$} index the pure polynomial functions 
\smash{$h^{(\Xj)}_1,\ldots,h^{(\Xj)}_{k+1}$}. These coefficients will
be generically dense 
(the form of \smash{$C^{(\Xj,k+1)}$} is not important here, so we omit
it for simplicity, but details are given in Appendix \ref{app:fastext}).
The last $n-k-1$ coefficients 
\smash{$\hat{b}_j$} index the knot-producing functions
\smash{$h^{(\Xj)}_{k+2}, \ldots,h^{(\Xj)}_n$}, and when
\smash{$(\hat{b}_j)_\ell = 
\frac{1}{k!}(\Dj S_j \hthetaj)_\ell \not= 0$}, the fitted  
function \smash{$\hf_j$} exhibits a knot at the $(\ell+k)$th sorted
input point among $S_j\Xj$, i.e., at 
\smash{$\Xj^{(\ell+k)}$}.  Figure \ref{fig:3dplot} gives an example.  

\begin{figure}[p]
\centering
\hspace{-7pt} 
\includegraphics[width=0.325\textwidth]{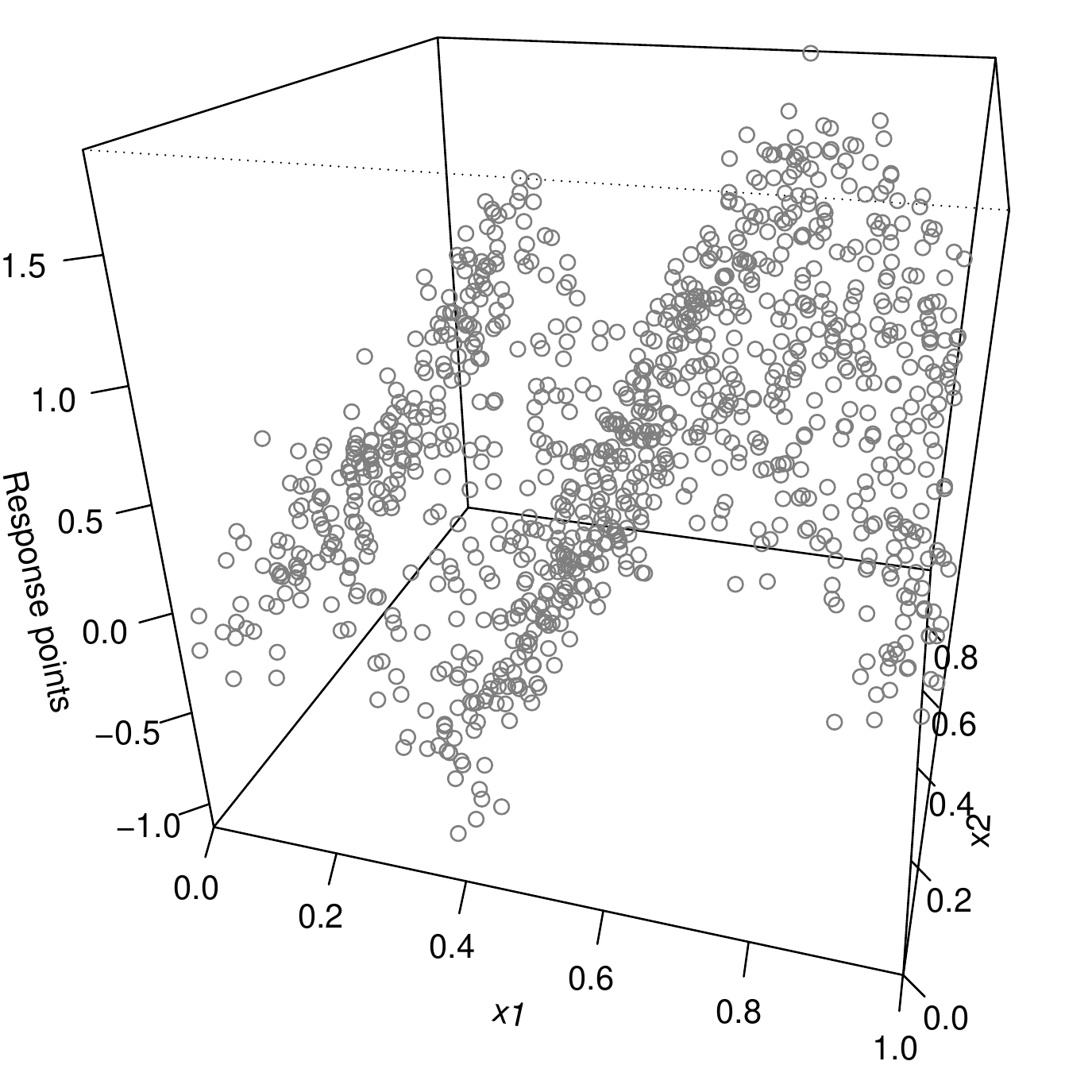}
\hspace{-7pt} 
\includegraphics[width=0.325\textwidth]{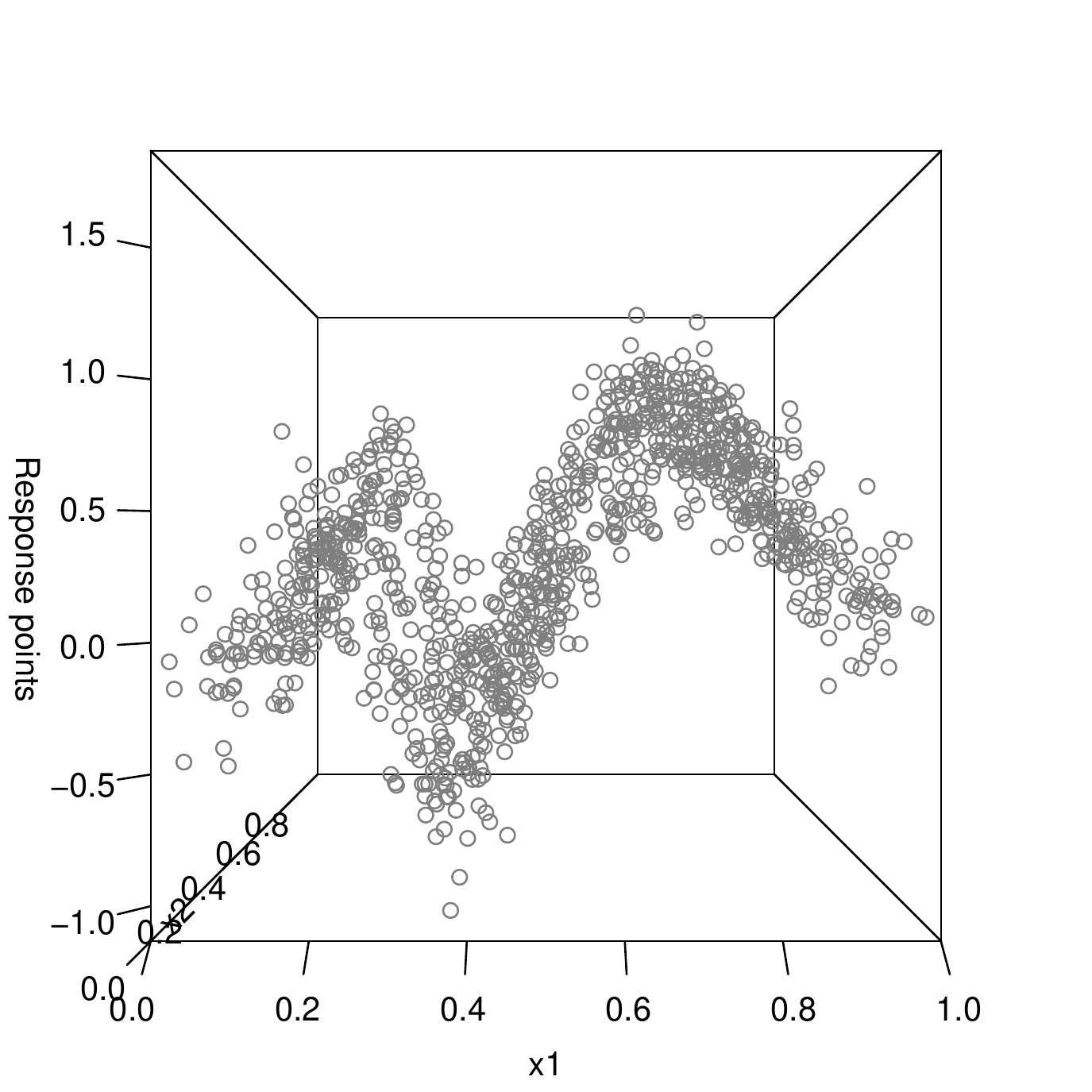}
\hspace{-7pt} 
\includegraphics[width=0.325\textwidth]{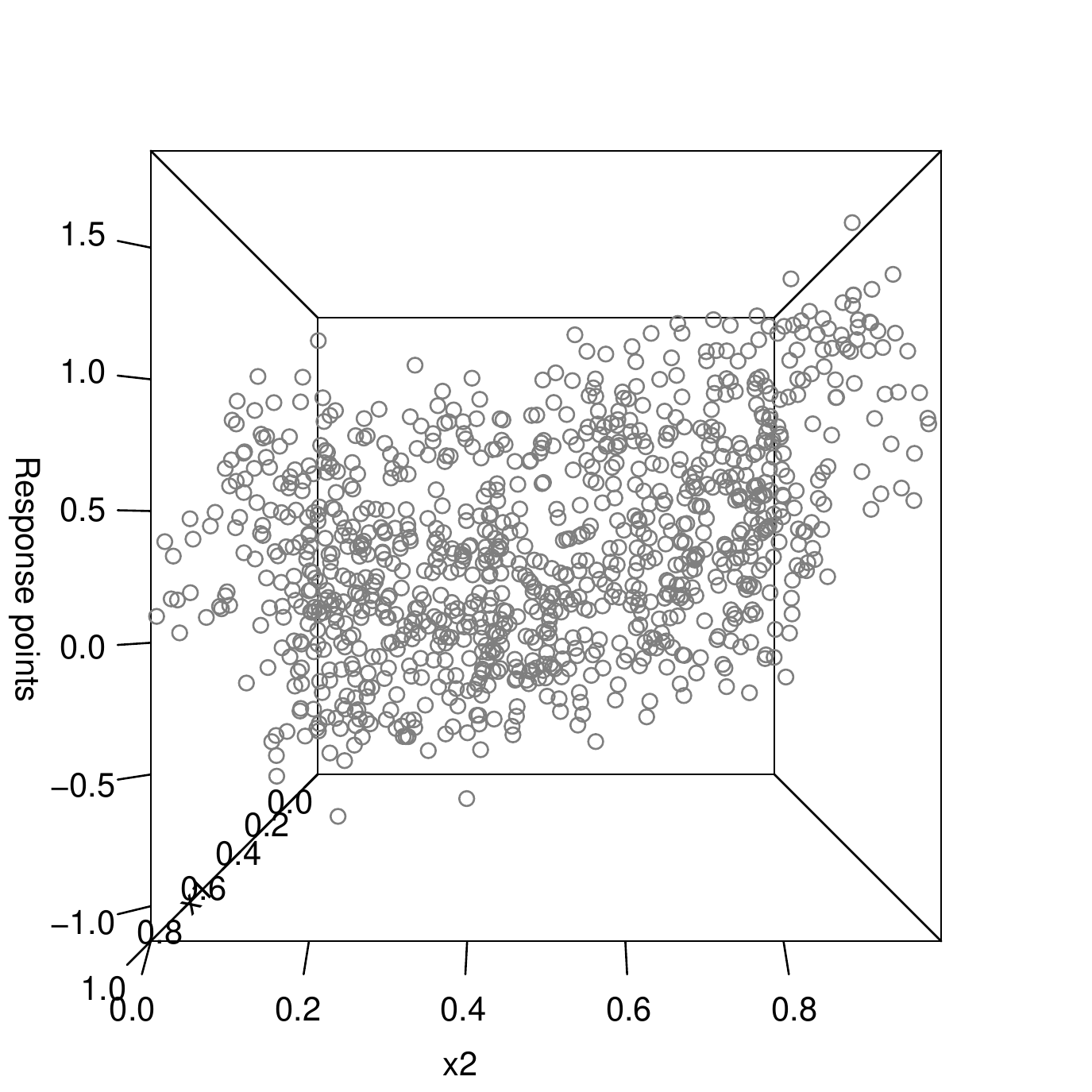} 
\bigskip \\
\hspace{-10pt} 
\includegraphics[width=0.5\textwidth]{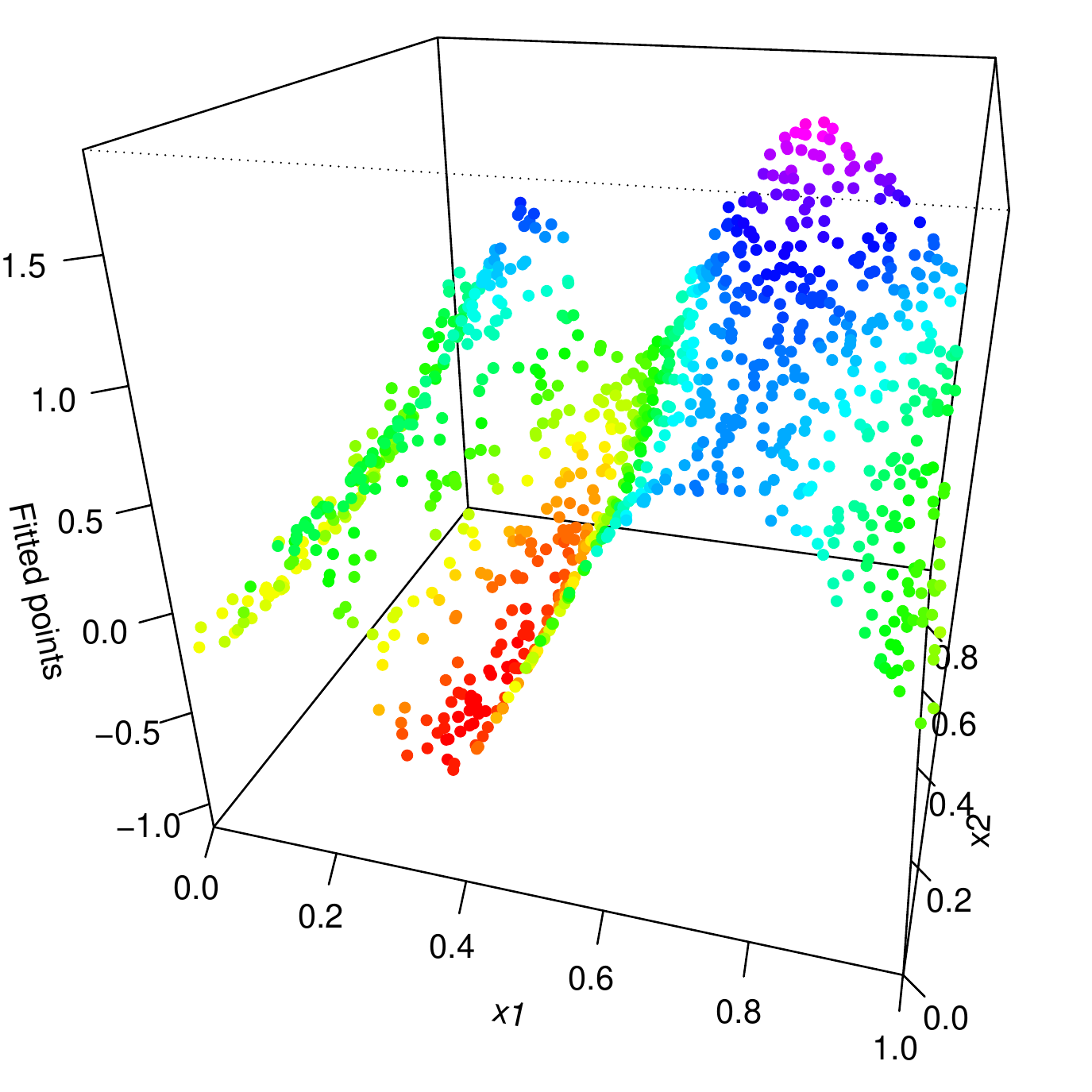}
\hspace{-10pt} 
\includegraphics[width=0.5\textwidth]{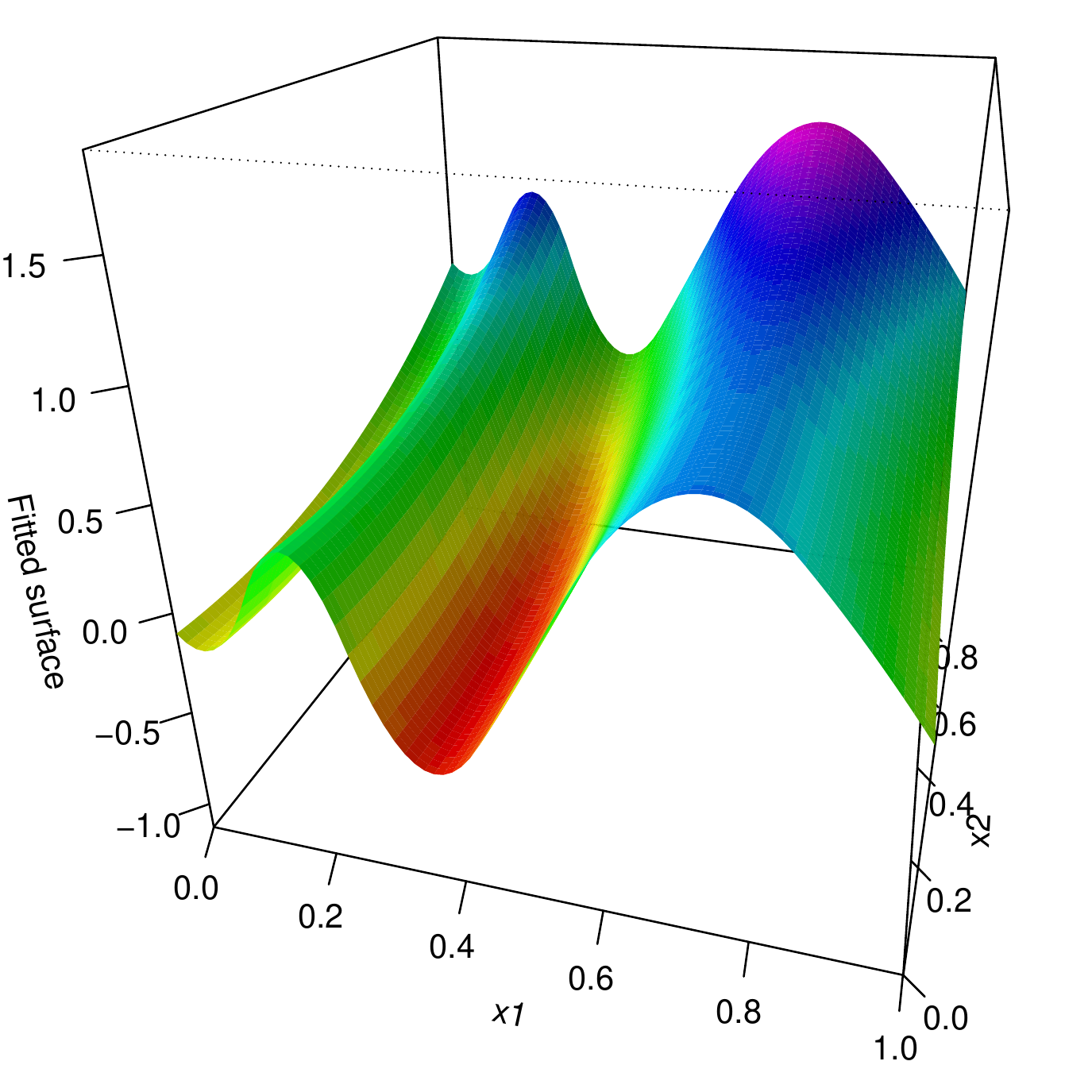}
\bigskip \\ 
\caption{\it\small An example of extrapolating the fitted additive
  trend filtering model, 
  where $n=1000$ and $d=2$.  We generated input points 
  \smash{$X^i \isim \mathrm{Unif}[0,1]^2$},  
  $i=1,\ldots,1000$, and responses
  \smash{$Y^i \isim N(\sum_{j=1}^2 f_{0j}(X^i_j),\sigma^2)$},
  $i=1,\ldots,1000$, where we 
  \smash{$f_{01}(x_1)=\sqrt{x_1}\sin(3\pi/(x_1+1/2))$} and 
  \smash{$f_{02}(x_2)=x_2(x_2-1/3)$}, and $\sigma=0.36$.
  The top row shows three perspectives of the data.  The bottom left
  panel shows the fitted values from additive trend filtering 
  \eqref{eq:add_tf} (with $k=2$ and $\lambda=0.004$), where points
  are colored by their depth for visualization purposes. The 
  bottom right panel shows the 2d surface associated with the trend 
  filtering estimate, \smash{$\hf_1(x_1)+\hf_2(x_2)$} over $(x_1,x_2)
  \in [0,1]^2$, with each component function extrapolated as in
  \eqref{eq:fit_j}.} 
\label{fig:3dplot}
\end{figure}

We note that the coefficients
\smash{$\hat\alpha_j=(\hat{a}_j,\hat{b}_j)$} in
\eqref{eq:a_j}, \eqref{eq:b_j} can be computed in $O(n)$
operations and $O(1)$ memory.  This makes extrapolation of the $j$th
fitted function \smash{$\hf_j$} in \eqref{eq:fit_j} highly efficient.
Details are given in Appendix \ref{app:fastext}. 
%Computation is further discussed in Section \ref{sec:comp}.

\subsection{Uniqueness of component fits}
\label{sec:unique}

It is easy to see that, for the problem \eqref{eq:add_tf}, the
additive fit \smash{$\sum_{j=1}^d \hthetaj$} is always uniquely
determined: denoting 
\smash{$\sum_{j=1}^d \thetaj = T\theta$} for a linear operator $T$
and $\theta=(\theta_1,\ldots,\theta_d) \in \R^{nd}$, 
the loss term $\|y-T\theta\|_2^2$ is strictly convex in the variable 
$T\theta$, and this, along with the convexity of the problem 
\eqref{eq:add_tf}, implies a unique additive fit \smash{$T\htheta$}, no matter
the choice of solution \smash{$\htheta=(\htheta_1,\ldots,\htheta_d) \in  
  \R^{nd}$}. 

On the other hand, when $d>1$, the criterion in \eqref{eq:add_tf} is
not strictly convex in $\theta$, and hence there need not be a unique  
solution \smash{$\htheta$}, i.e., the individual components fits 
\smash{$\hthetaj$}, $j=1,\ldots,d$ need not be uniquely determined. 
We show next that uniqueness of the component fits can be 
guaranteed under some conditions on the input matrix
$X = [X_1 \, \cdots \, X_d] \in \R^{n\times d}$.  We will rely on the 
falling factorial representation for additive trend filtering,
introduced in the previous subsection, and on the notion of 
{\it general position}: a matrix $A \in \R^{m \times p}$ is said to  
have columns in general position provided that, for any 
$\ell < \min\{m,p\}$, subset of $\ell+1$ columns denoted 
\smash{$A_{i_1},\ldots,A_{i_{\ell+1}}$}, and signs
 $s_1,\ldots,s_{\ell+1} \in \{-1,1\}$, the affine span of  
\smash{$\{s_1A_{i_1},\ldots,s_{\ell+1}A_{i_{\ell+1}}\}$}
does not contain any element of 
\smash{$\{\pm A_i : i \not= i_1,\ldots,i_{\ell+1}\}$}.  
Informally, if the columns of $A$ are not in general position,
then there must be some small subset of columns that are affinely
dependent. 

\begin{lemma}[\textbf{Uniqueness}]
\label{lem:unique}
For $j=1,\ldots,d$, let
\smash{$H^{(\Xj,k)} \in \R^{n\times n}$} be
the falling factorial basis matrix constructed over
the sorted $j$th dimension of inputs $S_j\Xj \in \R^n$, as in 
\eqref{eq:hmat}. Decompose \smash{$H^{(\Xj,k)}$} into its first 
$k+1$ columns \smash{$P^{(\Xj,k)} \in \R^{n\times(k+1)}$}, and its
last $n-k-1$ columns \smash{$K^{(\Xj,k)} \in \R^{n\times(n-k-1)}$}.
The former contains evaluations of the pure polynomials  
\smash{$h^{(\Xj)}_1,\ldots,h^{(\Xj)}_{k+1}$}; the latter
contains evaluations of the knot-producing functions 
\smash{$h^{(\Xj)}_{k+2},\ldots,h^{(\Xj)}_n$}. Also, 
let \smash{$\tilde{P}^{(\Xj,k)}$} denote
the matrix \smash{$P^{(\Xj,k)}$} with its first column removed, for
$j=1,\ldots,d$, and $M=I-\one\one^T/n$.  Define  
\begin{equation}
\label{eq:ptil}
\tilde{P} = M \big[\, \tilde{P}^{(X_1,k)} \;\; 
\ldots \;\; \tilde{P}^{(X_d,k)} \, \big] 
\in \R^{n\times dk},  
\end{equation}
the product of $M$ and the columnwise concatenation of
\smash{$\tilde{P}^{(\Xj,k)}$}, $j=1,\ldots,d$. Let 
$UU^T$ denote the projection operator onto the space orthogonal to
the column span of \smash{$\tilde{P}$}, where $U \in
\R^{n\times(n-kd-1)}$ has orthonormal columns, and define 
\begin{equation}
\label{eq:ktil}
\tilde{K} = U^T M \big[\, K^{(X_1,k)} \;\; 
\ldots \;\; K^{(X_d,k)} \, \big] 
\in \R^{(n-kd-1)\times (n-k-1)d},  
\end{equation}
the product of $U^T M$ and the columnwise concatenation of  
\smash{$K^{(\Xj,k)}$}, $j=1,\ldots,d$.  A sufficient
condition for uniqueness of the additive trend filtering solution
in \eqref{eq:add_tf} can now be given in two parts.  

\begin{enumerate}
\item If \smash{$\tilde{K}$} has columns in general position,
  then the knot-producing parts of all component
  fits are uniquely determined, i.e., for each $j=1,\ldots,d$, the
  projection of \smash{$\hthetaj$} onto the column
  space of \smash{$K^{(\Xj,k)}$} is unique.

\item If in addition \smash{$\tilde{P}$} has full column rank,
  then the polynomial parts of component fits are uniquely
  determined, i.e., for each $j=1,\ldots,d$, the projection of
  \smash{$\hthetaj$} 
  onto the column space of \smash{$P^{(\Xj,k)}$} is unique, and thus
  the component fits \smash{$\hthetaj$}, $j=1,\ldots,d$ are all
  unique.   
\end{enumerate}
\end{lemma}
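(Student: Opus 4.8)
The plan is to use the falling factorial (basis) representation of Lemma~\ref{lem:fall_fact} to recast the additive trend filtering problem \eqref{eq:add_tf} as a standard lasso in the ``knot-producing'' coefficients after eliminating the centering constraints and profiling out the unpenalized pure-polynomial coefficients, and then to read off uniqueness of the various pieces of \smash{$\hthetaj$} from the classical uniqueness theorem for the lasso under general position.

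First I would pass to the basis form \eqref{eq:add_tf_func}, splitting \smash{$H^{(\Xj,k)} = [\,P^{(\Xj,k)}\ \ K^{(\Xj,k)}\,]$} and $\alpha_j = (a_j,b_j) \in \R^{k+1}\times\R^{n-k-1}$, so that the penalty is $\lambda k!\sum_j\|b_j\|_1$ and the criterion is \smash{$\half\big\|Y-\bar Y\one - \sum_j(P^{(\Xj,k)}a_j + K^{(\Xj,k)}b_j)\big\|_2^2$} subject to \smash{$\one^T(P^{(\Xj,k)}a_j + K^{(\Xj,k)}b_j)=0$}, $j=1,\ldots,d$. Since $h_1\equiv 1$, the first column of \smash{$P^{(\Xj,k)}$} is $\one$; writing $a_j = (a_j^1,\tilde a_j)$, the $j$th centering constraint pins $a_j^1$ down as an affine function of $(\tilde a_j,b_j)$, and substituting it back gives the component fit \smash{$\thetaj = M\tilde P^{(\Xj,k)}\tilde a_j + M K^{(\Xj,k)}b_j$} with $M = I-\one\one^T/n$. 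Stacking over $j$ and using $Y-\bar Y\one = MY$, problem \eqref{eq:add_tf} becomes the unconstrained problem of minimizing \smash{$\half\|MY - \tilde P\tilde a - \bar K b\|_2^2 + \lambda k!\|b\|_1$} over $\tilde a=(\tilde a_1,\ldots,\tilde a_d)$ and $b=(b_1,\ldots,b_d)$, where $\tilde P$ is as in \eqref{eq:ptil} and \smash{$\bar K = M[\,K^{(X_1,k)}\cdots K^{(X_d,k)}\,]$}.

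Next I would profile out the unpenalized block $\tilde a$: for fixed $b$, the residual at the optimal $\tilde a$ is $(I-\Pi_{\tilde P})(MY - \bar K b)$, where $\Pi_{\tilde P}$ is the orthogonal projection onto $\col(\tilde P)$. Since $MY$ and the columns of $\bar K$ lie in $\col(M)$, a short computation using the definitions of $U$ and $\tilde K$ in the lemma shows this residual equals $U(U^T MY - \tilde K b)$ with $\tilde K = U^T\bar K$ exactly as in \eqref{eq:ktil}; as $U^T U = I$, the objective reduces, with no leftover constant, to minimizing \smash{$\half\|U^T MY - \tilde K b\|_2^2 + \lambda k!\|b\|_1$}, a standard lasso with design matrix $\tilde K$. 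By the partial-minimization principle for jointly convex functions, $\hat b$ is a solution of this lasso if and only if it is the $b$-block of a solution of \eqref{eq:add_tf}. Now invoking the known fact that general position of the predictor columns forces the lasso solution to be unique (for every response and every $\lambda>0$): if $\tilde K$ has columns in general position then $\hat b$, hence each $\hat b_j$, is unique; since \smash{$K^{(\Xj,k)}$} is a column block of the invertible matrix \smash{$H^{(\Xj,k)}$}, the knot-producing part \smash{$K^{(\Xj,k)}\hat b_j$} of \smash{$\hthetaj$} is then unique, which is part~1. For part~2, at any solution, stationarity in $\tilde a$ forces the residual to be orthogonal to $\col(\tilde P)$, i.e.\ $\tilde P\tilde a = \Pi_{\tilde P}(MY - \bar K\hat b)$; with $\hat b$ now unique the right-hand side is a fixed vector, so full column rank of $\tilde P$ determines $\tilde a$, hence each $\tilde a_j$, hence $a_j^1$ via the eliminated centering relation, hence \smash{$P^{(\Xj,k)}\hat a_j$}, and finally \smash{$\hthetaj = P^{(\Xj,k)}\hat a_j + K^{(\Xj,k)}\hat b_j$} is unique.

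I expect the main obstacle to be the reduction in the third paragraph: one must handle the centering constraints and the elimination of the pure-polynomial coefficients carefully enough to see that what remains is \emph{exactly} a lasso in $b$ with design matrix $\tilde K$ --- in particular checking the identities $(I-\Pi_{\tilde P})MY = UU^T MY$ and $(I-\Pi_{\tilde P})\bar K = U\tilde K$, and that no spurious additive constant survives the profiling. Once this is in place, part~1 is an immediate appeal to the lasso uniqueness theorem, and part~2 is elementary linear algebra using the invertibility of the falling factorial basis matrices.
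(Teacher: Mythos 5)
Your proposal is correct and follows essentially the same route as the paper's proof: reparametrize via the falling factorial basis, reduce to an unconstrained problem in $(\tilde a, b)$ with design blocks $\tilde P$ and $M[K^{(X_1,k)}\cdots K^{(X_d,k)}]$, profile out the unpenalized polynomial block to obtain the lasso \smash{$\min_b \half\|U^T MY - \tilde K b\|_2^2 + \lambda k!\|b\|_1$}, and then invoke lasso uniqueness under general position for part~1 and full column rank of $\tilde P$ for part~2. The only cosmetic difference is that you eliminate the constant coefficient $a_j^1$ via the centering constraint, whereas the paper substitutes $\thetaj \mapsto M\thetaj$ up front and lets the constant column be annihilated by $M$; both yield the identical reduced problems \eqref{eq:a_min} and \eqref{eq:b_min}.
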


The proof is deferred to Appendix \ref{app:unique}.  To rephrase, the
above lemma decomposes each component of the additive trend filtering 
solution according to  
$$
\hthetaj =
\hthetaj^{\mathrm{poly}} + \hthetaj^{\mathrm{knot}}, \;\;\; 
j=1,\ldots,d, 
$$
where \smash{$\hthetaj^{\mathrm{poly}}$} exhibits a purely
polynomial trend over $S_j\Xj$, and  
\smash{$\hthetaj^{\mathrm{knot}}$} exhibits a piecewise polynomial  
trend over $S_j\Xj$, and hence determines the knot locations, for  
$j=1,\ldots,d$.  The lemma shows that the knot-producing parts 
\smash{$\hthetaj^{\mathrm{knot}}$}, $j=1,\ldots,d$ are uniquely
determined when the columns of \smash{$\tilde{K}$} are in general
position, and the polynomial parts
\smash{$\hthetaj^{\mathrm{knot}}$}, $j=1,\ldots,d$ are unique when the 
columns of \smash{$\tilde{K}$} are in general
position, and the columns of \smash{$\tilde{P}$} are linearly
independent. 

The conditions placed on
\smash{$\tilde{P},\tilde{K}$} in Lemma \ref{lem:unique} are not
strong.  When $n>kd$, and the elements of input matrix $X$ are drawn 
from a density over $\R^{nd}$, it is not hard to show that
\smash{$\tilde{P}$} has full column rank with probability 1.  We
conjecture that, under the same conditions, 
\smash{$\tilde{K}$} will also have columns in general position with
probability 1, but do not pursue a proof.    

\begin{remark}[\textbf{Relationship to concurvity}]
It is interesting to draw a connection to \citet{buja1989linear}. 
In the language used by these authors, when \smash{$\tilde{P}$} has  
linearly dependent columns, we say that the predictor variables    
display {\it concurvity}, i.e., linear dependence
after nonlinear (here, polynomial) transformations are applied. 
\citet{buja1989linear} establish that the components in the  
additive model \eqref{eq:add_quad}, built with quadratic penalties,  
are unique provided there is no concurvity between variables.  In
comparison, Lemma \ref{lem:unique} establishes uniqueness of the  
additive trend filtering components when there is no 
concurvity between variables, and additionally, the columns of
\smash{$\tilde{K}$} are in general position. The latter two
conditions together can be seen as requiring no
{\it generalized concurvity}---if \smash{$\tilde{K}$} were to fail
the general position assumption, then there would be a small subset  
of the variables that are linearly dependent after nonlinear (here, 
piecewise polynomial) transformations are applied.   
\end{remark}

\subsection{Dual problem}
\label{sec:dual}

Let us abbreviate \smash{$D_j=\Dj$}, $j=1,\ldots,d$ for the penalty
matrices in the additive trend filtering problem \eqref{eq:add_tf}.
Basic arguments in convex analysis, deferred to Appendix
\ref{app:dual}, show that the dual of problem \eqref{eq:add_tf} can be
expressed as: 
\begin{equation}
\label{eq:add_tf_dual}
\begin{gathered}
\min_{u \in \R^n} \; \|Y - \bar{Y}\one - u\|_2^2
\;\; \st \;\; u \in U = U_1 \cap \cdots \cap U_d, \\ 
\text{where} \;\;\; U_j = \{S_j D_j^T v_j : \|v_j\|_\infty \leq
\lambda\}, \;\;\; j=1,\ldots,d,
\end{gathered}
\end{equation}
and that primal and dual solutions in \eqref{eq:add_tf}, \eqref{eq:add_tf_dual}
are related by: 
\begin{equation}
\label{eq:add_tf_pd}
\sum_{j=1}^d \hthetaj = Y - \bar{Y}\one - \hu.
\end{equation}
From the form of \eqref{eq:add_tf_dual}, it is clear that we can
write the (unique) dual solution as \smash{$\hu =  
  \Pi_U(Y-\bar{Y}\one)$}, where $\Pi_U$ is the (Euclidean)
projection operator onto $U$. Moreover, using \eqref{eq:add_tf_pd},  
we can express the additive fit as \smash{$\sum_{j=1}^d
  \hthetaj = (\Id-\Pi_U)(Y-\bar{Y}\one)$}, where   
$\Id-\Pi_U$ is the operator that gives the residual from projecting
onto $U$.  These relationships will be revisited in Section
\ref{sec:comp}, where we return to the dual perspective, and argue 
that the backfitting algorithm for the additive trend filtering
problem \eqref{eq:add_tf} can be seen as a type of alternating
projections algorithm for its dual problem \eqref{eq:add_tf_dual}.  

\subsection{Degrees of freedom}

In general, given data $Y \in \R^n$ with $\E(Y)=\eta$, 
$\Cov(Y)=\sigma^2 I$, and an estimator \smash{$\hat\eta$} of        
$\eta$, recall that we define the {\it effective degrees of freedom}
of \smash{$\hat\eta$} as
\citep{efron1986biased,hastie1990generalized}:   
$$
\df(\hat\eta) = \frac{1}{\sigma^2}\sum_{i=1}^n 
\Cov\big(\hat\eta^i(Y),Y^i\big),
$$
where 
\smash{$\hat\eta(Y)=(\hat\eta^1(y),\ldots,\hat\eta^n(Y))$}.  Roughly 
speaking, the above definition sums the influence of the $i$th
component $Y^i$ on its corresponding fitted value
\smash{$\hat\eta^i(Y)$}, across $i=1,\ldots,n$.  A precise
understanding of degrees of freedom is  
useful for model comparisons (recall the x-axis in Figure
\ref{fig:intro_mse}), and other reasons. For linear 
smoothers, in which \smash{$\hat\eta(Y)=SY$} for some $S \in
\R^{n\times n}$, it is clear that \smash{$\df(\hat\eta)=\tr(S)$},  
the trace of $S$.  (This also covers additive models whose 
components are built from univariate linear smoothers, because in 
total these are still just linear smoothers: the additive fit is still
just a linear function of $Y$.)     

Of course, additive trend filtering is a not a linear smoother;
however, it is a particular type of generalized lasso estimator, and
degrees of freedom for such a class of estimators is well-understood  
\citep{tibshirani2011solution,tibshirani2012degrees}. 
The next result is an consequence of existing generalized lasso
theory, proved in Appendix \ref{app:df}. 

\begin{lemma}[\textbf{Degrees of freedom}]
\label{lem:df}
Assume the conditions of Lemma \ref{lem:unique}, i.e., that the matrix 
\smash{$\tilde{P}$} in \eqref{eq:ptil} has full column rank, and the
matrix \smash{$\tilde{K}$} in \eqref{eq:ktil} is in general position.
Assume also that the response is Gaussian,
$Y \sim N(\eta,\sigma^2 I)$, and treat the input points
$X^i \in \R^d$, $i=1,\ldots,n$ as fixed and
arbitrary, as well as the tuning parameter value $\lambda \geq 0$.
Then the additive trend filtering fit from \eqref{eq:add_tf} has
degrees of freedom 
$$
\df\bigg(\sum_{j=1}^d\hthetaj \bigg) = 
\E \bigg(\sum_{j=1}^d (\text{\rm number of knots in $\hthetaj$}) 
\bigg) + kd. 
$$
\end{lemma}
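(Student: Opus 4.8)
The plan is to treat additive trend filtering as a generalized lasso problem and apply the degrees-of-freedom theory of \citet{tibshirani2011solution,tibshirani2012degrees}, made concrete through the falling factorial representation of Lemma \ref{lem:fall_fact}. First I would dispose of the identifiability constraints $\one^T\thetaj=0$ by restricting each $\thetaj$ to $\col M$, with $M=I-\one\one^T/n$; this leaves every penalty term unchanged, since $\Dj S_j M = \Dj S_j$ (difference operators annihilate $\one$, and $S_j\one=\one$), and turns \eqref{eq:add_tf} into an unconstrained generalized lasso over $(\thetaj)_{j=1}^d\in(\col M)^d$ with response $MY$, ``design'' equal to the summation map $\theta\mapsto\sum_j\thetaj$, and penalty matrix the block stacking of $\Dj S_j$. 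Equivalently I would work from the basis form \eqref{eq:add_tf_func}, in which the problem is a lasso in the coefficients $\alpha_j=(\hat a_j,\hat b_j)$ with the $\ell_1$ penalty imposed only on $\hat b_j = \frac{1}{k!}\Dj S_j\hthetaj$, subject to the centering constraint on the fit. In either formulation, the hypotheses of Lemma \ref{lem:unique}---full column rank of $\tilde P$ in \eqref{eq:ptil} and general position of the columns of $\tilde K$ in \eqref{eq:ktil}---are exactly what is needed so that, for almost every $Y$, the boundary/active set of the generalized lasso is locally constant, the map $Y\mapsto\sum_j\hthetaj$ is locally the affine projection of $MY$ onto a fixed linear subspace $\cL$, and Stein's divergence formula for degrees of freedom applies with no boundary contribution, giving $\df(\sum_j\hthetaj)=\E[\dim\cL]$.

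Granting this, I would identify $\cL$ using Lemma \ref{lem:fall_fact}. Let $\cA_j\subseteq\{1,\ldots,n-k-1\}$ be the support of $\hat b_j$, so that $r_j:=|\cA_j|$ is the number of knots in $\hthetaj$; then $\hthetaj=M\hthetaj$ lies in $\col(M P^{(\Xj,k)})+\spa\{M K^{(\Xj,k)}_\ell:\ell\in\cA_j\}$, where $K^{(\Xj,k)}_\ell$ is the $\ell$th column of $K^{(\Xj,k)}$. Since the first column of $P^{(\Xj,k)}$ equals $\one$ (as $h^{(\Xj)}_1\equiv1$) and $M\one=0$, we have $\col(M P^{(\Xj,k)})=\col(M\tilde P^{(\Xj,k)})$, of dimension $k$ when $\tilde P$ has full column rank. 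Hence the fit lies in $\cL=\sum_{j=1}^d\big[\col(M\tilde P^{(\Xj,k)})+\spa\{M K^{(\Xj,k)}_\ell:\ell\in\cA_j\}\big]$, and (by local fullness of the model, again from the general-position hypotheses) the fit locally fills out all of $\cL$.

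It remains to show $\dim\cL=kd+\sum_{j=1}^d r_j$ almost surely. The inequality $\dim\cL\le kd+\sum_j r_j$ is immediate. For the reverse, full column rank of $\tilde P = M[\tilde P^{(X_1,k)}\cdots\tilde P^{(X_d,k)}]$ makes the $d$ polynomial subspaces $\col(M\tilde P^{(\Xj,k)})$ jointly independent, contributing exactly $kd$; and the general position of the columns of $\tilde K = U^T M[K^{(X_1,k)}\cdots K^{(X_d,k)}]$ (with $U$ as in Lemma \ref{lem:unique}, so that $UU^T$ projects onto $(\col\tilde P)^\perp$), together with the uniqueness it guarantees, forces the active knot columns $\{U^T M K^{(\Xj,k)}_\ell: j=1,\ldots,d,\ \ell\in\cA_j\}$ to be linearly independent---the same mechanism by which a generalized lasso active set is linearly independent under general position. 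Combining gives $\dim\cL=kd+\sum_j r_j$, and taking expectations yields $\df(\sum_j\hthetaj)=kd+\E[\sum_j r_j]=\E[\sum_j(\text{number of knots in }\hthetaj)]+kd$.

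I expect the main obstacle to be the first step: showing rigorously that, outside a set of $Y$ of Lebesgue measure zero, no row of any $\Dj S_j$ has its (scaled) dual coordinate sitting exactly at $\pm\lambda$, so that the active knot configuration $(\cA_1,\ldots,\cA_d)$ is locally constant and the generalized lasso degrees-of-freedom identity holds with equality rather than merely as an inequality. This is the multi-block analogue of the lasso genericity arguments of \citet{tibshirani2012degrees}, and carrying it out requires tracking how the $d$ separate difference penalties interact through the common summation design; once this is in place, the rank and general-position bookkeeping for $\tilde P$ and $\tilde K$ delivers the clean additive constant $kd$.
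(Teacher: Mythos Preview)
Your proposal is correct and takes essentially the same approach as the paper: eliminate the centering constraints via $M=I-\one\one^T/n$, recognize the resulting problem as a generalized lasso (with design $T$ consisting of $d$ copies of $M$ and block-diagonal penalty $D$), invoke the degrees-of-freedom result of \citet{tibshirani2012degrees} to obtain $\df(T\htheta)=\E[\dim(T\,\nul(D_{-A}))]$, and then use the falling factorial basis together with the rank and general-position assumptions on $\tilde P,\tilde K$ to compute this dimension as $kd+\sum_j|A_j|$. The ``main obstacle'' you anticipate---the measure-zero genericity argument ensuring the active set is locally constant and Stein's formula applies with equality---is precisely what Theorem 3 of \citet{tibshirani2012degrees} already establishes for the generalized lasso under the stated conditions, so no new work is needed there; the paper simply cites that theorem directly.
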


\begin{remark}[\textbf{The effect of shrinkage}]
Lemma \ref{lem:df} says that for an unbiased estimate of the
degrees of freedom of the additive trend filtering fit, we count the 
number of knots in each component fit \smash{$\hthetaj$} (recall that 
this is the number of nonzeros in \smash{$\Dj\hthetaj$}, i.e., the
number of changes in the discrete $(k+1)$st derivative), add them up
over $j=1,\ldots,d$, and add $kd$.  This may seem surprising,
as these knot locations are chosen adaptively based on the  
data $Y$. But, such adaptivity is counterbalanced by the shrinkage 
induced by the $\ell_1$ penalty in \eqref{eq:add_tf} (i.e., for each
component fit \smash{$\hthetaj$}, there is shrinkage in the
differences between the attained $k$th derivatives on either side of a
selected knot).  See \citet{tibshirani2015degrees} for a study of this
phenomenon.    
\end{remark}

\begin{remark}[\textbf{Easy unbiased degrees of freedom estimation}] 
It is worth emphasizing that an unbiased estimate from Lemma
\ref{lem:df} for the degrees of freedom of the total fit in additive
trend filtering is very easy to calculate: we scan the
individual component fits and add up the number of knots that appear
in each one. The same cannot be said for additive smoothing splines,
or additive models built from
univariate linear smoothers, in general.     
Although computing the fit itself is typically cheaper with 
additive linear smoothers than with additive trend filtering,
computing the degrees of freedom is more 
challenging. For example, for the additive model in
\eqref{eq:add_quad} built with quadratic penalties, we have  
$$
\df\bigg(\sum_{j=1}^d \hthetaj \bigg) = 
\tr\Big(F^T F (F^T F + \lambda Q)^+ \Big),
$$
where $F \in \R^{n\times nd}$ has $d$ copies of the centering matrix
$M=I-\one\one^T/n \in \R^{n\times n}$ stacked across its columns, 
$Q \in \R^{nd   \times nd}$ is a block diagonal matrix with blocks
$MQ_jM$, $j=1,\ldots,d$, and $A^+$ denotes the Moore-Penrose
pseudoinverse of a matrix $A$.  The above formula does not
obviously decompose into a sum of quantities across components, and is 
nontrivial to compute post optimization of \eqref{eq:add_quad},
specifically when a backfitting algorithm as in
\eqref{eq:backfit_quad} has been used to compute a solution.  
\end{remark}

\subsection{Two related additive spline estimators}
\label{sec:splines}

From its equivalent formulation in \eqref{eq:add_tf_func_tv}, additive trend
filtering is seen to be closely related to two other additive spline estimators,
which we introduce here. Consider, for univariate function classes $\cS_j$,
$j=1,\ldots,d$, the problem   
\begin{equation}
\label{eq:add_func_tv}
\begin{alignedat}{2}
&\ccol{\min_{f_j \in \cS_j\, j=1,\ldots_d}} \quad 
&&\half \sum_{i=1}^n \bigg(Y^i - \bar{Y} - \sum_{j=1}^d f_j(X^i_j)   
\bigg)^2 + \lambda \sum_{j=1}^d \TV(f_j^{(k)}) \\
&\ccol{\st} \quad &&\sum_{i=1}^n f_j(X^i_j) = 0, 
\;\;\; j=1,\ldots,d.  
\end{alignedat}
\end{equation}
When each $\cS_j$, $j=1,\ldots,d$ is the set of $k$ times weakly
differentiable functions, we call the solution in \eqref{eq:add_func_tv} the 
{\it additive locally adaptive regression spline} of order $k \geq 0$, as it
is the natural extension of the univariate estimator considered in
\citet{mammen1997locally}.  Denote by \smash{$\hf_j$}, $j=1,\ldots,d$ this  
solution; the representation arguments used by these authors apply immediately   
to the additive setting, and imply that each \smash{$\hf_j$}, $j=1,\ldots,d$ is 
indeed a spline of degree $k$ (justifying the choice of name). The same
arguments show that, for $k=0$ or $k=1$, the knots of the spline \smash{$\hf_j$}  
lie among the $j$th dimension of the input points $X^1_j,\ldots,X^n_j$, for 
$j=1,\ldots,d$, but for $k \geq 2$, this need not be true, and in general the
components will be splines with knots at locations other than the inputs.

We can facilitate computation by taking $\cS_j=\cG_j$, where $\cG_j$ is the
set of splines of degree $k$ with knots lying among the $j$th dimension of
inputs $X^1_j,\ldots,X^n_j$, for $j=1,\ldots,d$.  We call the resulting
solution the {\it restricted additive locally adaptive regression spline} of
order $k \geq 0$. More precisely, we require that the splines in $\cG_j$ have
knots in a set $T_j$, which, writing $t_j = S_j\Xj$ for the sorted inputs along
the $j$th dimension, is defined by    
\begin{equation}
\label{eq:tj}
T_j = \begin{cases}
\big\{t^{k/2+2}_j,\ldots,t^{n-k/2}_j\big\} & 
\text{if $k$ is even}, \\
\big\{t^{(k+1)/2+1}_j,\ldots,t^{n-(k+1)/2}_j\big\} & 
\text{if $k$ is odd},  
\end{cases}
\end{equation}
i.e., defined by removing $k+1$ input points at the boundaries, for
$j=1,\ldots,d$. Setting $\cS_j=\cG_j$, $j=1,\ldots,d$ makes
\eqref{eq:add_func_tv} a finite-dimensional problem, just like
\eqref{eq:add_tf_func_tv}.  When $k=0$ or $k=1$, as claimed in Section
\ref{sec:fall_fact} (and shown in \citet{tibshirani2014adaptive}), the falling
factorial functions are simply splines, which means that $\cH_j=\cG_j$ for
$j=1,\ldots,d$, hence additive trend filtering and restricted additive locally
adaptive regression splines are the same estimator.  When $k \geq 2$, this is no
longer true, and they are not the same.  Additive trend filtering will be much
easier to compute, since \smash{$\TV(g^{(k)})$} does not admit a nice
representation in terms of discrete derivatives for a $k$th order spline (and
yet it does for a $k$th order falling factorial function, as seen in
\eqref{eq:add_tf}).

To summarize, additive locally adaptive splines, restricted additive locally
adaptive splines, and additive trend filtering all solve a problem of the form
\eqref{eq:add_func_tv} for different choices of function classes $\cS_j$,
$j=1,\ldots,d$.  For $k=0$ or $k=1$, these three estimators are equivalent.  For 
$k \geq 2$, they will be generically different, though our intuition tells us
that their differences should not be too large: the unrestricted problem admits
a solution that is a spline in each component; the restricted problem simply
forces these splines to have knots at the input points; and the trend filtering 
problem swaps splines for falling factorial functions, which are
highly similar in form.  Next, we give theory that confirms this intuition, in
large samples.

\section{Error bounds} 
\label{sec:bounds}

We derive error bounds for additive trend filtering and additive locally
adaptive regression splines (both the unrestricted and restricted variants),
when the underlying regression function is additive, and has components whose
derivatives are of bounded variation.  These results are actually special cases
of a more general result we prove in this section, on a generic
roughness-regularized additive estimator, where we assume a certain decay for
the entropy of the unit ball in the roughness operator.  We treat separately the 
settings in which the dimension $d$ of the input space is fixed and growing.  We
also complement our error rates with minimax lower bounds.  We start by
introducing helpful notation. 

\subsection{Notation} 
\label{sec:notation}

Given a distribution $Q$ supported on a set $D$, and i.i.d.\ samples $X^i$,   
$i=1,\ldots,n$ from $Q$, denote by $Q_n$ the associated empirical
distribution.   We define the $L_2(Q)$ and  
$L_2(Q_n)$ inner products, denoted \smash{$\langle \cdot, \cdot  
  \rangle_{L_2(Q)}$} and \smash{$\langle \cdot, \cdot \rangle_{L_2(Q_n)}$},
respectively, over functions $m,r : D \to \R$ 
$$
\langle m,r \rangle_{L_2(Q)} =  
\int_D m(x) r(x) \, dQ(x), \;\;\;\text{and}\;\;\;
\langle m,r \rangle_{L_2(Q_n)} =  
\frac{1}{n} \sum_{i=1}^n m(X^i) r(X^i).
$$
Definitions for the corresponding $L_2(Q)$ and $L_2(Q_n)$ norms, denoted 
\smash{$\|\cdot\|_{L_2(Q)}$} and  \smash{$\|\cdot\|_{L_2(Q_n)}$},
respectively, arise naturally from these inner products, defined by 
$$
\|m\|_2^2 = \langle m,m \rangle_2 = 
\int_D m(x)^2 \, dQ(x), \;\;\;\text{and}\;\;\; 
\|m\|_n^2 = \langle m,m \rangle_n = 
\frac{1}{n} \sum_{i=1}^n m(X^i)^2.
$$
Henceforth, we will abbreviate subscripts when using these norms and
inner products, writing $\|\cdot\|_2$ and $\|\cdot\|_n$ for the
$L_2(Q)$ and $L_2(Q_n)$ norms, respectively, and similarly for the
inner products.  This abbreviated notation omits the underlying  
distribution $Q$; thus, unless explicitly stated otherwise, the underlying 
distribution should always be interpreted as the distribution of the input 
points.  We will often call $\|\cdot\|_2$ the $L_2$ norm and $\|\cdot\|_n$ the 
empirical norm, and similarly for inner products.   

In what follows, of particular interest will be the case when $D=[0,1]^d$, and  
$m : [0,1]^d \to \R$ is an additive function, of the form
$$
m=\sum_{j=1}^d m_j,
$$ 
which we write to mean \smash{$m(x) = \sum_{j=1}^d m_j(x_j)$}.
In a slight abuse of notation (overload of notation), for each $j=1,\ldots,d$,
we will abbreviate the $L_2(Q_j)$ norm by $\|\cdot\|_2$, where $Q_j$ is the
$j$th marginal of $Q$, and will also abbreviate $L_2(Q_{jn})$ norm by
$\|\cdot\|_n$, where $Q_{jn}$ is the empirical distribution of \smash{$X^i_j$},
$i=1,\ldots,n$. We will use similar abbreviations for the inner products.    

A few more general definitions are in order. We denote the $L_\infty$ 
norm, also called the sup norm, of a function $f : D \to \R$ by 
\smash{$\|f\|_\infty =  \esssup_{z \in  D} |f(z)|$}.
For a functional $\nu$, acting on functions from $D$ to $\R$, 
we write $B_\nu(\delta)$ for the $\nu$-ball of radius $\delta > 0$, 
i.e., \smash{$B_\nu(\delta) = \{ f : \nu(f) \leq \delta\}$}. We abbreviate
$B_n(\delta)$ for the $\|\cdot\|_n$-ball of radius $\delta$, $B_2(\delta)$ for
the $\|\cdot\|_2$-ball of radius $\delta$, and $B_\infty(\delta)$ for the
$\|\cdot\|_\infty$-ball of radius $\delta$.  We will use these concepts fluidly,
without explicit reference to the domain $D$ (or its dimensionality), as the
meaning should be clear from the context.     

Lastly, for a set $S$ and norm $\|\cdot\|$, we define the covering number 
$N(\delta,\|\cdot\|,S)$ to be the smallest number of $\|\cdot\|$-balls  
of radius $\delta$ to cover $S$, and the packing number
$M(\delta,\|\cdot\|,S)$ to be the largest number of disjoint
$\|\cdot\|$-balls of radius $\delta$ that are contained in $S$.  
We call $\log N(\delta,\|\cdot\|,S)$ the entropy number.     

\subsection{Error bounds for a fixed dimension $d$}
\label{sec:error_bd}

We consider error bounds for the generic roughness-penalized estimator defined 
as a solution of 
\begin{equation}
\label{eq:add_func_j}
\begin{alignedat}{2}
&\ccol{\min_{f_j \in \cS_j, \, j=1,\ldots,d}}  \quad   
&&\half \sum_{i=1}^n \bigg(Y^i - \bar{Y} - 
\sum_{j=1}^d f_j(X^i_j) \bigg)^2 + \lambda \sum_{j=1}^d J(f_j) \\    
&\ccol{\st} \quad && \sum_{i=1}^n f_j(X^i_j) = 0, 
\;\;\;  j=1,\ldots,d, 
\end{alignedat}
\end{equation}
where $\cS_j$, $j=1,\ldots,d$ are univariate function spaces, and 
$J$ is a regularizer that acts on univariate functions.  We assume in this
subsection that the dimension $d$ of the input space is fixed, i.e., it does not
grow with $n$. Before stating our main result in this setting, we list and
briefly discuss our other assumptions.  First, we give our assumptions on the
data generation process.    

\begin{assumption}{A}{1}{\textbf{Continuous input distribution}}  
\label{as:x_dist}
The input points $X^i$, $i=1,\ldots,n$ are i.i.d.\ from a continuous
distribution $Q$ supported on $[0,1]^d$. 
\end{assumption}

\begin{assumption}{B}{1}{{\bf Generic regression model, with sub-Gaussian 
      errors}}    
\label{as:y_dist}
The responses $Y^i$, $i=1,\ldots,n$ follow the model 
$$
Y^i = \mu + f_0(X^i) + \epsilon^i, \;\;\; i=1,\ldots,n, 
$$
with overall mean $\mu \in \R$, where \smash{$\sum_{i=1}^n
  f_0(X^i)=0$} for identifiability. The errors $\epsilon^i$,
$i=1,\ldots,n$ are uniformly sub-Gaussian and have mean zero, i.e.,  
$$
\E(\epsilon)=0, \;\;\;\text{and}\;\;\;
\E[\exp(v^T \epsilon)] \leq \exp(\sigma^2 \|v\|_2^2/2) 
\;\, \text{for all $v \in \R^n$},
$$
for a constant $\sigma>0$. The errors and input points are independent.   
\end{assumption}

Assumptions \ref{as:x_dist} and \ref{as:y_dist} are very weak.  We do
not place any specific smoothness or additivity conditions on the underlying
regression function $f_0$, as our error bound in Theorem \ref{thm:error_bd_j}
will involve the error of the closest additive function to $f_0$, whose
components lie in the given function spaces \smash{$\cS_j$}, $j=1,\ldots,d$.  

Next, we present our assumptions on the regularizer $J$.
We write \smash{$\|\cdot\|_{Z_n}$} for the empirical
norm defined over a set of univariate 
points $Z_n=\{z^1,\ldots,z^n\} \subseteq [0,1]$, i.e., 
\smash{$\|g\|_{Z_n}^2 = \frac{1}{n}\sum_{i=1}^n g^2(z^i)$}.

\begin{assumption}{C}{1}{{\bf Seminorm regularizer, null space of
      polynomials}}  
\label{as:poly_null}
The regularizer $J$ is a seminorm, and its domain is contained in
the space of $k$ times weakly differentiable functions, for an integer $k \geq 
0$. Furthermore, its null space contains all $k$th order polynomials, i.e., 
$J(g)=0$ for all $g(t)=t^\ell$, $\ell=0,\ldots,k$.     
\end{assumption}

\begin{assumption}{C}{2}{{\bf Relative boundedness of derivatives}}  
\label{as:deriv_bd}
There is a constant $L>0$ such that
\smash{$\esssup_{t \in [0,1]} g^{(k)}(t) - \essinf_{t \in [0,1]}
  g^{(k)}(t) \leq L$} for $g \in
B_J(1)$ (with $g^{(k)}$ the $k$th weak derivative of $g$).   
\end{assumption}

\begin{assumption}{C}{3}{{\bf Entropy bound}} 
\label{as:entropy_bd}
There are constants $0<w<2$ and $K>0$ such that          
$$
\sup_{Z_n = \{z^1,\ldots,z^n\} \subseteq [0,1]} \,
\log N\big(\delta, \|\cdot\|_{Z_n}, B_J(1) \cap B_\infty(1)\big)  
\leq K \delta^{-w}.
$$
\end{assumption}

Assumptions \ref{as:poly_null}, \ref{as:deriv_bd}, \ref{as:entropy_bd} on the
regularizer $J$ are not strong, and are satisfied by various common
regularizers, e.g., \smash{$J(g)=[\int_0^1 (g^{(k+1)}(t))^2\, dt]^{1/2}$} or    
\smash{$J(g)=\TV(g^{(k)})$} (the latter studied shortly).

We now state our main result in the fixed $d$ case, which is proved in
Appendix \ref{app:error_bd_j_prelim}, \ref{app:error_bd_j}.  

\begin{theorem}
\label{thm:error_bd_j}
Assume \ref{as:x_dist}, \ref{as:y_dist} on the data distribution, 
and assume \ref{as:poly_null}, \ref{as:deriv_bd}, \ref{as:entropy_bd}
on the seminorm $J$.  
Also, assume that the dimension $d$ of the input space is fixed. Let 
$C_n \geq 1$ be an arbitrary sequence.  
There exist constants $c_1,c_2,c_3,n_0>0$, that depend 
only on $d,\sigma,k,L,K,w$, such that for all $c \geq c_1$, $n \geq n_0$,
and tuning parameter values \smash{$\lambda \geq c n^{w/(2+w)}
  C_n^{-(2-w)/(2+w)}$}, any solution in \eqref{eq:add_func_j} 
satisfies   
\begin{equation}
\label{eq:error_bd_j}
\bigg\| \sum_{j=1}^d \hf_j - f_0 \bigg\|_n^2 \leq 
\bigg\| \sum_{j=1}^d \tf_j - f_0 \bigg\|_n^2 + \frac{6\lambda}{n}
  \max\bigg\{ C_n,\sum_{j=1}^d J(\tf_j) \bigg\},
\end{equation}
with probability at least \smash{$1-\exp(-c_2 c)-\exp(-c_3\sqrt{n})$},
simultaneously over all \smash{$\tf=\sum_{j=1}^d \tf_j$}, feasible for the
problem \eqref{eq:add_func_j}, such that \smash{$\|\tf-f_0\|_n \leq \max\{C_n,
  \sum_{j=1}^d J(\tf_j)\}$}.  
\end{theorem}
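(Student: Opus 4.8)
The plan is to adapt the classical "basic inequality" argument for penalized least-squares estimators, as used in the univariate trend filtering analysis, to the additive setting. First I would write down the optimality of \smash{$\hf = \sum_{j=1}^d \hf_j$}: since each \smash{$\hf_j$} is feasible and minimizes the penalized criterion, comparing against any feasible \smash{$\tf = \sum_{j=1}^d \tf_j$} gives
\begin{equation*}
\half \|Y - \bar{Y}\one - \hf\|_n^2 + \frac{\lambda}{n}\sum_{j=1}^d J(\hf_j) \leq \half \|Y - \bar{Y}\one - \tf\|_n^2 + \frac{\lambda}{n}\sum_{j=1}^d J(\tf_j).
\end{equation*}
Expanding the squares, substituting \smash{$Y - \bar{Y}\one = f_0 + \epsilon - \bar\epsilon\one$} (using Assumption \ref{as:y_dist}), and rearranging yields a bound of the form
\begin{equation*}
\half \|\hf - f_0\|_n^2 \leq \half \|\tf - f_0\|_n^2 + \frac{1}{n}\langle \epsilon, \hf - \tf\rangle + \frac{\lambda}{n}\sum_{j=1}^d \big(J(\tf_j) - J(\hf_j)\big),
\end{equation*}
where the empirical inner product is with the centered errors. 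The entire difficulty is then to control the stochastic term \smash{$\frac{1}{n}\langle \epsilon, \hf - \tf\rangle$} uniformly over the data-dependent estimator.

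The main obstacle is this empirical process term, and the key tool is a maximal inequality obtained via chaining, which requires a metric entropy bound for the relevant function class. The function \smash{$\hf - \tf$} is additive with components in \smash{$\cS_j$}; writing \smash{$g_j = \hf_j - \tf_j$}, we have \smash{$\sum_j J(g_j) \leq \sum_j J(\hf_j) + \sum_j J(\tf_j)$}, so the natural class to cover is the set of additive functions with bounded total roughness \smash{$\sum_j J(g_j) \leq t$} and bounded empirical norm. Here Assumption \ref{as:entropy_bd} does the heavy lifting: it gives \smash{$\log N(\delta, \|\cdot\|_{Z_n}, B_J(1)\cap B_\infty(1)) \leq K\delta^{-w}$} for each univariate component, and since $d$ is fixed, the entropy of the $d$-fold sum class is at most $d$ times this (up to constants depending on $d$), preserving the \smash{$\delta^{-w}$} rate. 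Assumption \ref{as:deriv_bd} (relative boundedness of the $k$th derivative) is what lets us pass from the roughness bound to a sup-norm bound on each component after subtracting its polynomial part — this is needed because the entropy bound is stated on \smash{$B_J(1)\cap B_\infty(1)$}, not on \smash{$B_J(1)$} alone; the null-space/polynomial piece is handled separately as a finite-dimensional term, contributing only lower-order fluctuations absorbed into the \smash{$\exp(-c_3\sqrt{n})$} probability.

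Having the entropy bound, I would invoke a standard bound (e.g., via Dudley's integral together with a Bernstein-type concentration for sub-Gaussian noise, as in van de Geer's theory) to show that with high probability, simultaneously over all feasible \smash{$\tf$} in the stated regime,
\begin{equation*}
\frac{1}{n}|\langle \epsilon, \hf - \tf\rangle| \leq \frac{1}{4}\|\hf - \tf\|_n^2 + \text{(lower-order)} + \frac{C\sigma}{n}\Big(\textstyle\sum_j J(\hf_j) + \sum_j J(\tf_j)\Big)^{w/2} \max\{C_n, \textstyle\sum_j J(\tf_j)\}^{1-w/2} n^{1/(2+w)} + \cdots,
\end{equation*}
where the exponents are chosen so that, under \smash{$\lambda \geq c n^{w/(2+w)} C_n^{-(2-w)/(2+w)}$}, this "variance" term is dominated by \smash{$\frac{\lambda}{n}$} times the roughness terms. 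The scaling \smash{$\lambda \gtrsim n^{w/(2+w)} C_n^{-(2-w)/(2+w)}$} is exactly what balances the chaining integral against the $\ell_1$-type penalty, and it is chosen so that the \smash{$\frac{\lambda}{n} J(\hf_j)$} terms appearing with a favorable sign on the right absorb the empirical-process contributions involving \smash{$J(\hf_j)$}, while the leftover is controlled by \smash{$\frac{\lambda}{n}\max\{C_n, \sum_j J(\tf_j)\}$}. After using the \smash{$\frac14\|\hf-\tf\|_n^2$} slack to re-absorb into the left side via \smash{$\|\hf - f_0\|_n^2 \geq \half\|\hf-\tf\|_n^2 - \|\tf - f_0\|_n^2$} and the assumed bound \smash{$\|\tf - f_0\|_n \leq \max\{C_n, \sum_j J(\tf_j)\}$}, collecting constants and multiplying through by $2$ gives precisely \eqref{eq:error_bd_j} with the constant $6$. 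The probability statement \smash{$1 - \exp(-c_2 c) - \exp(-c_3\sqrt{n})$} comes from the deviation bound in the chaining argument (the \smash{$\exp(-c_2 c)$} piece, tied to the choice \smash{$c \geq c_1$}) together with a separate event controlling the finite-dimensional polynomial components and the centering correction \smash{$\bar\epsilon$} (the \smash{$\exp(-c_3\sqrt{n})$} piece).
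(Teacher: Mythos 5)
Your overall strategy---basic inequality, entropy-based maximal inequality for the sub-Gaussian complexity term, absorption of the stochastic term into the penalty via the choice of $\lambda$---is the same as the paper's, and the way you handle the polynomial null space (finite-dimensional part treated separately, sup-norm bound on the orthogonal part via Assumption \ref{as:deriv_bd}) matches the paper's Lemmas \ref{lem:pi_perp_sup_bd}--\ref{lem:mn_sup_bd}. There are, however, two concrete gaps in the route as you describe it.

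First, the form of your basic inequality is too weak to deliver the sharp oracle inequality \eqref{eq:error_bd_j}, which has leading constant $1$ on $\|\tf-f_0\|_n^2$. You compare criterion values, obtaining $\half\|\hf-f_0\|_n^2 \leq \half\|\tf-f_0\|_n^2 + \langle\epsilon,\hDelta\rangle_n + \lambda_n(\tJ-\hJ)$ with no quadratic term in $\hDelta=\hf-\tf$ on the left, and then propose to absorb the slack $\tfrac14\|\hDelta\|_n^2$ using $\|\hDelta\|_n^2 \leq 2\|\hf-f_0\|_n^2+2\|\tf-f_0\|_n^2$. With slack exactly $\tfrac14$ the $\|\hf-f_0\|_n^2$ terms cancel and the bound is vacuous; with any smaller slack $\eta$ you get a leading constant $(\half+2\eta)/(\half-2\eta)>1$. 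The paper avoids this by starting from the first-order (variational) optimality condition $\langle W-\hf,\tf-\hf\rangle_n \leq \lambda_n(\tJ-\hJ)$ and applying the polarization identity, which places $\|\hDelta\|_n^2$ on the left-hand side with coefficient $1$ for free; the absorbed slack then costs nothing on the approximation error. Second, before the entropy-based maximal inequality can be applied to the normalized error $\hDelta/(\hJ+\sJ)$, one needs an a priori localization $\|\hDelta\|_n \lesssim \sJ = \max\{C_n,\sum_j J(\tf_j)\}$, since the uniform bound holds over the class intersected with an empirical-norm ball of fixed radius. The paper obtains this from the basic inequality itself together with the tail bound $\|\epsilon\|_n^2 \leq 5\sigma^2$ (a sub-Gaussian quadratic-form bound with deviation $t=\sqrt{n}$), and this localization event---not the polynomial components or the centering correction---is what produces the $\exp(-c_3\sqrt{n})$ term in the probability statement; the polynomial part contributes only an $\exp(-cn)$ event inside the sup-norm lemma.
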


\begin{remark}[\textbf{Error bound for additive, $J$-smooth $f_0$}]    
Assume \smash{$f_0=\sum_{j=1}^d f_{0j}$}, where 
\smash{$f_{0j} \in \cS_j$}, $j=1,\ldots,d$, and 
\smash{$\sum_{j=1}^d J(f_{0j}) \leq C_n$}.
Letting \smash{$\tf=f_0$}, the approximation error term in \eqref{eq:error_bd_j} 
(the first term on the right-hand side) is zero, and for \smash{$\lambda = c
  n^{w/(2+w)} C_n^{-(2-w)/(2+w)}$}, the result in the theorem reads  
\begin{equation}
\label{eq:error_bd_j_add}
\bigg\| \sum_{j=1}^d \hf_j - \sum_{j=1}^d f_{0j} \bigg\|_n^2 \leq 
6 c n^{-2/(2+w)} C_n^{2w/(2+w)},  
\end{equation}
with probability at least \smash{$1-\exp(-c_2 c)-\exp(-c_3\sqrt{n})$}. 
As we will see in the minimax lower bound in Theorem \ref{thm:lower_bd_j}
(plugging in $c_n=C_n/d$, and taking $d$ to be a constant), the rate
\smash{$n^{-2/(2+w)} C_n^{2w/(2+w)}$} is optimal for such a class of functions.   
\end{remark}

\begin{remark}[\textbf{Distance to best additive, $J$-smooth approximation of
    $f_0$}]   
The arguments used to establish the oracle-type inequality 
\eqref{eq:error_bd_j} also imply a result on the empirical
norm error between \smash{$\hf$} and the best additive
approximation of $f_0$.  To be precise, let
\smash{$(f^{\mathrm{best}}_1,\ldots,f^{\mathrm{best}}_d)$} denote a solution  
in the population-level problem 
\begin{equation}
\label{eq:add_func_j_best}
\begin{alignedat}{2}
&\ccol{\min_{f_j \in \cS_j, \, j=1,\ldots,d}}  \quad   
&&\half \sum_{i=1}^n \bigg(f_0(X^i) -
\sum_{j=1}^d f_j(X^i_j) \bigg)^2 + \frac{\lambda}{2} \sum_{j=1}^d J(f_j) \\    
&\ccol{\st} \quad && \sum_{i=1}^n f_j(X^i_j) = 0, 
\;\;\;  j=1,\ldots,d.
\end{alignedat}
\end{equation}
We note that \eqref{eq:add_func_j_best} has ``half'' of the regularization of
problem \eqref{eq:add_func_j}, as it uses a penalty parameter of $\lambda/2$
versus $\lambda$.
We can think of \smash{$f^{\mathrm{best}}=\sum_{j=1}^d 
  f^{\mathrm{best}}_j$} as the best additive, $J$-smooth approximation of 
$f_0$, where $\lambda$ as usual controls the level of smoothness.
The following is a consequence of the proof of Theorem \ref{thm:error_bd_j},
verified in Appendix \ref{app:error_bd_j_best}: assume that
\smash{$\|f^{\mathrm{best}}-f_0\|_n \leq
  \max\{C_n,\sum_{j=1}^d J(f^{\mathrm{best}}_j)\}$} almost surely (with
respect to $Q$), for sufficiently large $\lambda$; then any 
solution in \eqref{eq:add_func_j} 
satisfies for all $c \geq c_1$, $n \geq n_0$, and \smash{$\lambda
  \geq c n^{w/(2+w)} C_n^{-(2-w)/(2+w)}$},   
\begin{equation}
\label{eq:error_bd_j_best}
\bigg\| \sum_{j=1}^d \hf_j - \sum_{j=1}^d f^{\mathrm{best}}_j 
\bigg\|_n^2 \leq \frac{6\lambda}{n}
  \max\bigg\{ C_n,\sum_{j=1}^d J(f^{\mathrm{best}}_j) \bigg\},  
\end{equation}
with probability at least \smash{$1-\exp(-c_2 c)-\exp(-c_3\sqrt{n})$}, where 
as before $c_1,c_2,c_3,n_0>0$ are constants that depend  
only on $d,\sigma,k,L,K,w$.  Notably, the right-hand side in the bound
\eqref{eq:error_bd_j_best} does not depend on the approximation error; in
particular, we do not even require \smash{$\|f^{\mathrm{best}}-f_0\|_n$} to
converge to zero.  This is analogous to classical results from
\citet{stone1985additive}. 
\end{remark}

We examine a special case of the generic problem \eqref{eq:add_func_j} when the  
regularizer is \smash{$J(g)=\TV(g^{(k)})$}, and derive 
implications of the above Theorem \ref{thm:error_bd_j} for additive locally
regression adaptive splines and additive trend filtering, corresponding to   
different choices of the function classes $\cS_j$, $j=1,\ldots,d$ in
\eqref{eq:add_func_j}.  We must introduce an additional (weak) assumption on the
input distribution, for the results on restricted locally adaptive regression 
splines and trend filtering.

\begin{assumption}{A}{2}{\textbf{Bounded input density}}  
\label{as:x_dens_bd}
The density of the input distribution $Q$ is bounded below by a constant
$b_0>0$. 
\end{assumption}

Here is our result for additive locally adaptive splines and additive trend
filtering. The proof is given in Appendix \ref{app:error_bd_tv_prelim}, 
\ref{app:error_bd_tv}.  

\begin{corollary}
\label{cor:error_bd_tv}
Assume \ref{as:x_dist}, \ref{as:y_dist} on the data distribution.  Also, assume
that the dimension $d$ of the input space is fixed, and that the underlying
regression function is additive, \smash{$f_0=\sum_{j=1}^d f_{0j}$}, where the
components $f_{0j}$, $j=1,\ldots,d$ are $k$ times weakly differentiable, such
that \smash{$\sum_{j=1}^d \TV(f_{0j}^{(k)}) \leq C_n$} for a sequence $C_n \geq
1$.  For \smash{$J(g)=\TV(g^{(k)})$}, Assumptions \ref{as:poly_null},
\ref{as:deriv_bd}, \ref{as:entropy_bd} hold with $L=1$ and $w=1/(k+1)$.
Furthermore, the following is true of the estimator defined by problem
\eqref{eq:add_func_j}.

\begin{itemize}
\item[(a)]
Let $\cS_j$ be the set of all $k$ times weakly
differentiable functions, for each $j=1,\ldots,d$.  There are
constants $c_1,c_2,c_3,n_0 > 0$, depending only on
$d,\sigma,k$, such that for all $c \geq c_1$ and $n \geq n_0$,
any solution in the additive locally adaptive regression spline
problem \eqref{eq:add_func_j}, with tuning parameter value  
\smash{$\lambda = c n^{1/(2k+3)} C_n^{-(2k+1)/(2k+3)}$}, satisfies     
\begin{equation}
\label{eq:error_bd_tv}
\bigg\| \sum_{j=1}^d \hf_j - \sum_{j=1}^d f_{0j} \bigg\|_n^2 
\leq  c n^{-(2k+2)/(2k+3)} C_n^{2/(2k+3)},  
\end{equation}
with probability at least \smash{$1-\exp(-c_2 c) -\exp(-c_3\sqrt{n})$}. 

\item[(b)] Let $\cS_j=\cG_j$, the set of $k$th degree splines with knots in
the set $T_j$ in \eqref{eq:tj}, for $j=1,\ldots,d$, and assume
\ref{as:x_dens_bd} on the input density. Then there are constants
$c_1,c_2,c_3,n_0>0$, that depend only on $d,b_0,\sigma,k$, such that for all $c  
\geq c_1$ and \smash{$n(\log n)^{-(1+1/k)} \geq n_0 C_n^{(2k+2)/(2k^2+2k-1)}$},
any solution in the restricted additive locally adaptive spline
problem \eqref{eq:add_func_j}, with \smash{$\lambda = c 
  n^{1/(2k+3)} C_n^{-(2k+1)/(2k+3)}$}, satisfies the same result in
\eqref{eq:error_bd_tv}, with probability at least $1-\exp(-c_2 c) - c_3/n$.

\item[(c)] Let $\cS_j=\cH_j$, the set of $k$th degree falling factorial
functions defined over $\Xj$ (the $j$th dimension of inputs), for   
$j=1,\ldots,d$, and assume \ref{as:x_dens_bd}. Then there exist constants
$c_1,c_2,c_3,n_0>0$, that depend only on $d,b_0,\sigma,k$, such that for all $c 
\geq c_1$ and \smash{$n(\log n)^{-(2k+3)} \geq n_0 C_n^{4k+4}$}, any solution in
the additive trend filtering problem \eqref{eq:add_func_j}, with \smash{$\lambda
  = c n^{1/(2k+3)} C_n^{-(2k+1)/(2k+3)}$}, satisfies \eqref{eq:error_bd_tv},
with probability at least $1-\exp(-c_2 c) - c_3/n$.
\end{itemize}
\end{corollary}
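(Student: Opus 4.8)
The plan is to derive Corollary~\ref{cor:error_bd_tv} by verifying the hypotheses of Theorem~\ref{thm:error_bd_j} for the choice $J(g) = \TV(g^{(k)})$ and then controlling the approximation error term $\|\sum_j \tf_j - f_0\|_n^2$ by exhibiting a good feasible approximant $\tf$ in each of the three cases. The first step is to check Assumptions~\ref{as:poly_null}, \ref{as:deriv_bd}, \ref{as:entropy_bd} with $J(g)=\TV(g^{(k)})$. Assumption~\ref{as:poly_null} is immediate since $\TV$ of a constant is zero, so $J$ annihilates $k$th order polynomials; it is a seminorm by standard properties of total variation. For Assumption~\ref{as:deriv_bd}: if $\TV(g^{(k)}) \le 1$ then $g^{(k)}$ varies by at most $1$ over $[0,1]$, so $L=1$. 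The key input is Assumption~\ref{as:entropy_bd}: the metric entropy of $B_J(1) \cap B_\infty(1)$ in any empirical norm $\|\cdot\|_{Z_n}$ on $[0,1]$ scales like $\delta^{-1/(k+1)}$. This follows from classical entropy bounds for bounded-variation-type classes --- e.g. the results of Birman--Solomjak or the treatments in \citet{mammen1997locally, tibshirani2014adaptive} --- giving $w = 1/(k+1)$. Plugging $w = 1/(k+1)$ into $n^{w/(2+w)} = n^{1/(2k+3)}$ and $(2-w)/(2+w) = (2k+1)/(2k+3)$ recovers exactly the stated tuning parameter $\lambda = cn^{1/(2k+3)}C_n^{-(2k+1)/(2k+3)}$, and the rate $n^{-2/(2+w)}C_n^{2w/(2+w)} = n^{-(2k+2)/(2k+3)} C_n^{2/(2k+3)}$.

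For part~(a), the function class $\cS_j$ consists of all $k$ times weakly differentiable functions, so $f_{0j} \in \cS_j$ directly; taking $\tf = f_0$ makes the approximation error term in \eqref{eq:error_bd_j} vanish identically, and since $\sum_j J(f_{0j}) = \sum_j \TV(f_{0j}^{(k)}) \le C_n$, applying the remark following Theorem~\ref{thm:error_bd_j} (the ``additive, $J$-smooth $f_0$'' case) with $\tf = f_0$ yields \eqref{eq:error_bd_tv} immediately. The only care needed here is verifying the feasibility condition $\|\tf - f_0\|_n \le \max\{C_n, \sum_j J(\tf_j)\}$, which is trivially satisfied since the left side is zero. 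This part requires no input density assumption because no approximation of $f_0$ by restricted splines or falling factorials is needed.

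For parts~(b) and~(c), the obstacle is that $f_{0j}$ need not lie in the restricted class $\cG_j$ (splines with knots at input points) or in $\cH_j$ (falling factorial functions over $\Xj$), so we must construct an approximant $\tf_j \in \cS_j$ with (i) small empirical error $\|\tf_j - f_{0j}\|_n$, (ii) controlled regularizer $\TV(\tf_j^{(k)}) \lesssim \TV(f_{0j}^{(k)})$, and (iii) the feasibility/identifiability constraints. The standard route --- following the proximity arguments in \citet{mammen1997locally} for restricted locally adaptive splines and in \citet{tibshirani2014adaptive, wang2014falling} for trend filtering versus locally adaptive splines --- is: for each $j$, first take the unrestricted locally adaptive spline interpolant or a quasi-interpolant of $f_{0j}$ at a suitable subgrid of the sorted inputs $S_j\Xj$, which under Assumption~\ref{as:x_dens_bd} (density bounded below by $b_0$, guaranteeing the sorted inputs are not too sparse anywhere) approximates $f_{0j}$ in sup norm at rate $\lesssim (\log n / n)^{\text{something}}$ with $\TV$ of its $k$th derivative bounded by a constant multiple of $\TV(f_{0j}^{(k)})$; this handles case~(b). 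For case~(c), one additionally invokes the known approximation of $k$th degree splines by $k$th degree falling factorial functions over the same knots --- their difference is $o(1)$ in sup norm with comparable total variation of the $k$th derivative, as established in \citet{wang2014falling} --- to push the spline approximant into $\cH_j$. The sample-size conditions $n(\log n)^{-(1+1/k)} \gtrsim C_n^{(2k+2)/(2k^2+2k-1)}$ in~(b) and $n(\log n)^{-(2k+3)} \gtrsim C_n^{4k+4}$ in~(c) are precisely what is needed to guarantee that this approximation error, together with the logarithmic factors coming from covering the (random) input points uniformly, is dominated by the main term $n^{-(2k+2)/(2k+3)}C_n^{2/(2k+3)}$, so that after summing over the $d$ (fixed) components the bound \eqref{eq:error_bd_j} collapses to \eqref{eq:error_bd_tv}; the probability is adjusted from $\exp(-c_3\sqrt n)$ to $c_3/n$ to absorb the extra high-probability event controlling the empirical measure of the inputs. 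The main obstacle I expect is the careful bookkeeping in part~(c): one must track the interplay of three approximations (subgrid interpolation, spline-to-falling-factorial, and empirical-vs-population norm equivalence under $b_0$) while keeping all total variation bounds uniform in $n$, and this is where the rather delicate polynomial-in-$C_n$ sample-size thresholds come from.
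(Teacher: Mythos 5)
Your proposal is correct and follows essentially the same route as the paper: verify the entropy assumption via Birman--Solomjak to get $w=1/(k+1)$, apply Theorem \ref{thm:error_bd_j} with $\tf=f_0$ for part (a), and for parts (b) and (c) control the approximation error via the Mammen--van de Geer spline quasi-interpolant and its falling-factorial perturbation (Wang et al.), using the density lower bound to bound the maximum input gap by $O(\log n/n)$ and absorbing that event into the $c_3/n$ probability. The only detail worth flagging is that the approximants must also be empirically centered to be feasible (which changes neither the penalty nor the sup-norm error bound), and the condition $\|\tf-f_0\|_n\leq\max\{C_n,\sum_j J(\tf_j)\}$ should be checked explicitly for (b) and (c), though both are immediate from the sup-norm bounds you describe.
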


\begin{remark}[\textbf{Spline and falling factorial approximants}]  
\label{rem:spline_approx} 
For part (a) of the corollary, the approximation
error (the first term on the right-hand side) in
\eqref{eq:error_bd_j_add} is zero by definition, and we need only   
verify Assumptions \ref{as:poly_null}, \ref{as:deriv_bd},
\ref{as:entropy_bd} for the regularizer \smash{$J(g)=\TV(g^{(k)})$}.
Parts (b) and (c) require control over the approximation error,
because the underlying regression function \smash{$f_0=\sum_{j=1}^d
  f_{0j}$} need not have components that lie in the chosen function spaces 
$\cS_j$, $j=1,\ldots,d$.  To be clear: for $k=0$ or $k=1$, as 
discussed in Section \ref{sec:splines}, all three problems considered in parts
(a), (b), (c) are equivalent; hence parts (b) and (c) really only concern the
case $k \geq 2$.   For both of these parts, we control the approximation error
by controlling the univariate approximation error and then applying the triangle   
inequality.  For part (b), we use a special spline quasi-interpolant from
Proposition 7 in \citet{mammen1997locally} (who in turn construct this
using results from \citet{deboor1978practical}); for part (c), we develop a new
falling factorial approximant that may be of independent interest.
\end{remark}

\subsection{Error bounds for a growing dimension $d$}
\label{sec:error_bd_hd}

In this subsection, we allow the input dimension $d$ to grow with the sample
size $n$.  To keep our analysis as clean as possible, we consider a constrained
version of the problem \eqref{eq:add_func_j}, namely   
\begin{equation}
\label{eq:add_func_j_bd}
\begin{alignedat}{2}
&\ccol{\min_{f_j \in \cS_j, \, j=1,\ldots,d}}  \quad   
&&\half \sum_{i=1}^n \bigg(Y^i - \bar{Y} - 
\sum_{j=1}^d f_j(X^i_j) \bigg)^2 \\    
&\ccol{\st} \quad && \sum_{i=1}^n f_j(X^i_j) = 0, \;
J(f_j) \leq \delta, \;\;\;  j=1,\ldots,d, 
\end{alignedat}
\end{equation}
for a tuning parameter $\delta>0$.  (The penalized problem \eqref{eq:add_func_j} 
can also be analyzed in the setting of growing $d$, but we find that the analysis 
is messier and requires more assumptions in order to obtain the same results.) 
Instead of \ref{as:x_dist}, we now use the following assumption in the input
distribution.   

\begin{assumption}{A}{3}{\textbf{Product input distribution}}
\label{as:x_dist_prod}
The input points $X^i$, $i=1,\ldots,n$ are i.i.d.\ from a continuous
distribution $Q$ supported on $[0,1]^d$, that decomposes as 
$Q=Q_1 \times \cdots \times Q_d$, where the density of each $Q_j$ is lower and 
upper bounded by constants $b_1,b_2>0$, for $j=1,\ldots,d$.
\end{assumption}

Assumption \ref{as:x_dist_prod} is fairly restrictive, since it requires the
input distribution $Q$ to be independent across dimensions of the input
space. The reason we use this assumption: when 
$Q=Q_1 \times \cdots \times Q_d$, 
additive functions enjoy a key decomposability
property in terms of the (squared) $L_2$ norm defined with respect to $Q$. 
In particular, if \smash{$m=\sum_{j=1}^d m_j$} has components with
$L_2$ mean zero, denoted by \smash{$\bar{m}_j = \int_0^1 m_j(x_j) \,
  dQ_j(x_j)=0$}, $j=1,\ldots,d$, then we have 
\begin{equation}
\label{eq:L2_decomp}
\bigg\|\sum_{j=1}^d m_j \bigg\|_2^2 = 
\sum_{j=1}^d \|m_j\|_2^2. 
\end{equation}
This is explained by the fact that each pair of components $m_j$,
$m_\ell$ with $j \not= \ell$ are orthogonal with respect to the $L_2$
inner product, since  
$$
\langle m_j, m_\ell \rangle_2
= \int_{[0,1]^2} m_j(x_j) m_\ell(x_\ell)\, dQ_j(x_j) \, dQ_\ell(x_\ell)
= \bar{m}_j \bar{m}_\ell = 0.   
$$
The above orthogonality, and thus the decomposability property in
\eqref{eq:L2_decomp}, is only true because of the product form 
$Q=Q_1 \times \cdots \times Q_d$. Such decomposability is not
generally possible with the empirical norm (the inner products 
between components do not vanish even if all empirical means are zero).
In the proof of Theorem \ref{thm:error_bd_j_hd}, we move from considering the 
empirical norm of the error vector to the $L_2$ norm, in order to leverage the
property in \eqref{eq:L2_decomp}, which eventually leads to an error rate that
has a linear dependence on the dimension $d$.  In the absence of $L_2$
decomposability, the same error rate can be achieved with a weaker
incoherence bound, as in \eqref{eq:L2_incoherence}; see Remark
\ref{rem:L2_decomp} after the theorem.   

We now state our main result in the growing $d$ case, whose proof
is in Appendix \ref{app:error_bd_j_hd_prelim},
\ref{app:error_bd_j_hd}.  

\begin{theorem}
\label{thm:error_bd_j_hd}
Assume \ref{as:x_dist_prod}, \ref{as:y_dist} on the data
distribution, and assume \ref{as:poly_null}, \ref{as:deriv_bd},
\ref{as:entropy_bd} on the seminorm $J$.  Let $\delta \geq 1$ be arbitrary. 
There are constants $c_1,c_2,c_3,n_0>0$, that depend
only on $b_1,b_2,\sigma,k,L,K,w$, such that for all $c \geq c_1$ and
\smash{$n \geq n_0 (d\delta)^{1+w/2}$}, any solution in \eqref{eq:add_func_j_bd}
satisfies both     
\begin{gather}
\label{eq:error_bd_j_hd_empirical}
\bigg\| \sum_{j=1}^d \hf_j - f_0 \bigg\|_n^2 \leq 
\bigg\| \sum_{j=1}^d \tf_j - f_0 \bigg\|_n^2 + 
c d n^{-2/(2+w)} \delta, \\
\label{eq:error_bd_j_hd_L2}
\bigg\| \sum_{j=1}^d \hf_j - f_0 \bigg\|_2^2 \leq 
2\bigg\| \sum_{j=1}^d \tf_j - f_0 \bigg\|_2^2 +
24 \bigg\| \sum_{j=1}^d \tf_j - f_0 \bigg\|_n^2 +  
c d n^{-2/(2+w)} \delta^2,
\end{gather}
with probability at least \smash{$1-\exp(-c_2 c) - c_3/n$},
simultaneously over all functions \smash{$\tf=\sum_{j=1}^d \tf_j$}, feasible for
the problem \eqref{eq:add_func_j_bd}.
\end{theorem}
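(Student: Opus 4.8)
The plan is to run the classical ``basic inequality'' argument for constrained least squares, while exploiting the product structure of $Q$ from Assumption~\ref{as:x_dist_prod} to make the noise term decompose across the $d$ components. Write $\hat f=\sum_j \hat f_j$ for a solution in \eqref{eq:add_func_j_bd}, let $\tilde f=\sum_j\tilde f_j$ be any feasible competitor, and set $g=\hat f-\tilde f=\sum_j g_j$ with $g_j=\hat f_j-\tilde f_j$. Since $\hat f$, $\tilde f$, $f_0$ all have zero empirical mean along every coordinate, the centering term $\bar Y\one$ and the $\bar\epsilon\,\one$ contribution both cancel in the relevant cross products, and optimality of $\hat f$ together with $Y=\mu\one+f_0+\epsilon$ yields
\[
\Big\|\sum_j\hat f_j-f_0\Big\|_n^2 \;\le\; \Big\|\sum_j\tilde f_j-f_0\Big\|_n^2 + 2\langle\epsilon,g\rangle_n .
\]
By the seminorm triangle inequality (Assumption~\ref{as:poly_null}) each $g_j$ lies in $\cS_j$ with $J(g_j)\le2\delta$, so the whole argument reduces to a uniform upper bound on $\langle\epsilon,\sum_j g_j\rangle_n=\sum_j\langle\epsilon,g_j\rangle_n$ over this additive class, followed by rearranging.

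For the stochastic term I would bound each $\langle\epsilon,g_j\rangle_n$ \emph{separately}. Using Assumption~\ref{as:deriv_bd} to control the oscillation of $g_j^{(k)}$, together with the density bounds of Assumption~\ref{as:x_dist_prod}, one shows $\|g_j\|_\infty\lesssim\|g_j\|_2+\delta$; this reduces the relevant empirical covering numbers to those in Assumption~\ref{as:entropy_bd} after rescaling, giving $\log N\!\big(\eta,\|\cdot\|_n,\{g_j\in\cS_j: J(g_j)\le2\delta,\ \|g_j\|_2\le t\}\big)\lesssim(\max\{\delta,t\}/\eta)^{w}$ uniformly over the design. Feeding this into Dudley's entropy integral and running a peeling argument over dyadic values of $\|g_j\|_2$ gives, on an event of probability at least $1-\exp(-c_2c)-c_3/n$, a one-sided bound of the form $|\langle\epsilon,g_j\rangle_n|\le\tfrac1{8}\|g_j\|_2^2+C\,\delta^{2w/(2+w)}n^{-2/(2+w)}$ for all admissible $g_j$; the assumption $w<2$ is exactly what makes the entropy integral converge at the endpoint. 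After re-centering the $g_j$ to have $L_2(Q_j)$ mean zero (which costs only lower-order terms involving $\bar\epsilon$), the $L_2$ decomposition \eqref{eq:L2_decomp} gives $\sum_j\|g_j\|_2^2=\|g\|_2^2$, so summing over $j$ the stochastic term is at most $\tfrac1{8}\|g\|_2^2+C\,d\,\delta^{2w/(2+w)}n^{-2/(2+w)}\le\tfrac1{8}\|g\|_2^2+C\,d\,\delta\,n^{-2/(2+w)}$ for $\delta\ge1$. Plugging this into the basic inequality, using a uniform comparison $\|g\|_2^2\le 2\|g\|_n^2+(\text{remainder})$, the triangle inequality $\|g\|_n\le\|\hat f-f_0\|_n+\|\tilde f-f_0\|_n$, and absorbing the $\|\hat f-f_0\|_n^2$ term (treating the trivial case $\|\hat f-f_0\|_n\le\|\tilde f-f_0\|_n$ first) yields \eqref{eq:error_bd_j_hd_empirical}; keeping $\|g\|_2^2$ on the left instead and comparing $\|\hat f-f_0\|_2$ with $\|\hat f-f_0\|_n$ in the same way yields \eqref{eq:error_bd_j_hd_L2}, where the extra factor of $\delta$ (versus the empirical bound) comes from bounding the norm-comparison remainder by the squared diameter of the feasible class.

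The main technical obstacle is the uniform norm comparison: I need $\big|\,\|m\|_n^2-\|m\|_2^2\,\big|\le\tfrac12\|m\|_2^2+(\text{small})$ to hold simultaneously over all additive $m$ with $J$-bounded, $L_2(Q_j)$-orthogonal components and at the relevant scale, which is itself an empirical-process statement handled componentwise with the entropy bound of Assumption~\ref{as:entropy_bd} and the density bounds of Assumption~\ref{as:x_dist_prod}. This is precisely where the sample-size requirement $n\ge n_0(d\delta)^{1+w/2}$ enters, since the comparison must be valid across all $d$ components at once. Getting the dependence on $d$ and $\delta$ exactly right here---so that the final bounds take the clean form \eqref{eq:error_bd_j_hd_empirical}--\eqref{eq:error_bd_j_hd_L2} with constants depending only on $b_1,b_2,\sigma,k,L,K,w$---is the delicate part; the remaining ingredients (the re-centering to $L_2$ mean zero, bounding the $\bar\epsilon$ remainders, and the peeling bookkeeping) are routine.
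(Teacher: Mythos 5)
Your proposal is correct in outline and follows the same architecture as the paper's proof: basic inequality, componentwise control of the noise term, $L_2$ decomposability via the product structure of $Q$, and uniform empirical-vs-$L_2$ norm comparisons (the paper's Lemmas \ref{lem:bj_L2_norm} and \ref{lem:m2_L2_norm}), with the sample-size condition entering exactly where you say it does. The one real difference is in how the stochastic term is handled: you linearize each component bound into additive form up front (peeling over $\|g_j\|_2$, yielding $\tfrac18\|g_j\|_2^2 + C\delta^{2w/(2+w)}n^{-2/(2+w)}$ per component, then summing), whereas the paper keeps the ratio form $\langle\epsilon,\hDelta_j\rangle_n \lesssim n^{-1/2}(\delta+\|\hDelta_j\|_n)^{w/2}\|\hDelta_j\|_n^{1-w/2}$ from van de Geer's lemma, aggregates across components via H\"{o}lder with exponents $4/(2+w)$ and $4/(2-w)$, and only then applies the Young-type inequality (Lemma \ref{lem:ab_prod_ineq}). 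The two orders of operations give identical rates ($d^{(2+w)/4}$ from H\"{o}lder raised to the power $4/(2+w)$ is exactly your factor $d$), so this is a cosmetic rather than substantive divergence; your version trades the H\"{o}lder step for more peeling bookkeeping. One caution: you classify the re-centering to $L_2$ mean zero as routine, but the costly part is not the $\bar\epsilon$ remainder --- it is bounding the gap $|\sum_j \bar{g}_j|$ between the empirical-mean-zero and $L_2$-mean-zero versions of the error, which is needed to relate $\|\tilde g\|_n$ back to $\|g\|_n$ before you can absorb $\tfrac18\|\tilde g\|_2^2$ into the left-hand side. The paper devotes a separate local Rademacher argument to this (Lemma \ref{lem:m2_L2_mean} and the surrounding computation bounding $|\bDelta|$), and it is also the source of the extra factor of $\delta$ in the $L_2$ bound \eqref{eq:error_bd_j_hd_L2}; you should make this step explicit rather than folding it into the bookkeeping. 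Likewise, your sup-norm bound $\|g_j\|_\infty\lesssim\|g_j\|_2+\delta$ needs control of the degree-$k$ polynomial part of $g_j$ (the paper's Lemma \ref{lem:bj_b2_sup_bd}, built on Lemma \ref{lem:pi_perp_sup_bd}), not Assumption \ref{as:deriv_bd} alone, which only controls the component orthogonal to polynomials.
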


\begin{remark}[\textbf{Error bound for additive, $J$-smooth $f_0$}]   
\label{rem:error_bd_j_hd_add} 
Assume \smash{$f_0=\sum_{j=1}^d f_{0j}$}, where 
\smash{$f_{0j} \in \cS_j$} and 
\smash{$J(f_{0j}) \leq c_n$}, $j=1,\ldots,d$, for a sequence $c_n \geq 1$. 
Letting \smash{$\tf=f_0$}, and $\delta=c_n$, the results in  
\eqref{eq:error_bd_j_hd_empirical}, \eqref{eq:error_bd_j_hd_L2} translate to 
\begin{equation}
\label{eq:error_bd_j_hd_add}
\bigg\| \sum_{j=1}^d \hf_j - \sum_{j=1}^d f_{0j} \bigg\|_n^2 \leq
c d n^{-2/(2+w)} c_n, \;\;\;\text{and}\;\;\;
\bigg\| \sum_{j=1}^d \hf_j - \sum_{j=1}^d f_{0j} \bigg\|_2^2 \leq 
c d n^{-2/(2+w)} c_n^2,
\end{equation}
with probability at least \smash{$1-\exp(-c_2 c)-c_3/n$},
provided that \smash{$n \geq n_0 (d c_n)^{1+w/2}$}. From the minimax
lower bound in Theorem \ref{thm:lower_bd_j}, we can see that the optimal rate 
for such a class of functions is in fact \smash{$d n^{-2/(2+w)} c_n^{2w/(2+w)}$},
which reveals that the rates in \eqref{eq:error_bd_j_hd_add} are tight when
$c_n$ is a constant, but not when $c_n$ grows with $n$.  It is worth noting that
the dependence of the bounds on $c_n$ in Theorem \ref{thm:error_bd_j_hd} (and  
hence in \eqref{eq:error_bd_j_hd_add}) can be improved to have the optimal
scaling of \smash{$c_n^{2w/(2+w)}$} by assuming that $f_0$ is sup norm bounded,
and additionally placing a sup norm bound on the components in
\eqref{eq:add_func_j_bd}.  This feels like an unnecessary restriction, so we
prefer to present results without it, as in Theorem \ref{thm:error_bd_j_hd} 
(and \eqref{eq:error_bd_j_hd_add}).
% provided
% \smash{$n \geq n_0 (d c_n)^{1+w/2}$} (for a possibly redefined $n_0$).  
% When $f_0$ is not additive, a result that ties \smash{$\sum_{j=1}^d \hf_j$} to 
% the best additive, $J$-smooth approximation of $f_0$, similar to
% \eqref{eq:error_bd_j_best} in Remark \ref{rem:best_approx}, is also possible in 
% the current setting. But for brevity, we do not give details.   
\end{remark}

\begin{remark}[\textbf{Distance to best additive, $J$-smooth approximation of
    $f_0$}]  A consequence of the proof of
  \eqref{eq:error_bd_j_hd_empirical} is a bound on the empirical
norm error between \smash{$\hf$} and the best additive
approximation of $f_0$.  To be precise, let
\smash{$f^{\mathrm{best}}=\sum_{j=1}^d f^{\mathrm{best}}_j$} minimize 
\smash{$\|\sum_{j=1}^d \tf_j - f_0\|_n^2$} over all
additive functions \smash{$\tf=\sum_{j=1}^d \tf_j$} feasible for problem
\eqref{eq:add_func_j_bd}.  Then following directly from \eqref{eq:basic_ineq_9}
in the proof of Theorem \ref{thm:error_bd_j_hd}, we have for all $c \geq c_1$
and \smash{$n \geq n_0 (d \delta)^{1+w/2}$}, 
\begin{equation}
\label{eq:error_bd_j_hd_best}
\bigg\| \sum_{j=1}^d \hf_j - \sum_{j=1}^d f^{\mathrm{best}}_j 
\bigg\|_n^2 \leq c d n^{-2/(2+w)} \delta,
\end{equation}
with probability at least \smash{$1-\exp(-c_2 c)-c_3/n$},
where again $c_1,c_2,c_3,n_0>0$ are constants that depend on 
$b_1,b_2,\sigma,k,L,K,w$.   
Just as we saw in fixed $d$ case, the right-hand side in
\eqref{eq:error_bd_j_hd_best} does not depend on the approximation error
\smash{$\|f^{\mathrm{best}} - f_0\|_n$}, which is analogous to classical results
from \citet{stone1985additive}. 
\end{remark}

\begin{remark}[\textbf{$L_2$ decomposability and incoherence}] 
\label{rem:L2_decomp}
The decomposability property in \eqref{eq:L2_decomp} is critical in 
obtaining the sharp (linear) dependence on $d$ in the error
rates \eqref{eq:error_bd_j_hd_empirical}, \eqref{eq:error_bd_j_hd_L2}. However,
it is worth noting that all that is needed in the proof is in fact a lower bound
of the form  
\begin{equation}
\label{eq:L2_incoherence}
\bigg\|\sum_{j=1}^d m_j \bigg\|_2^2 \geq 
\phi_0 \sum_{j=1}^d \|m_j\|_2^2,
\end{equation}
for a constant $\phi_0>0$, rather than an exact equality, as in
\eqref{eq:L2_decomp}.  The condition 
\eqref{eq:L2_incoherence} is an incoherence condition
that can hold for nonproduct distributions $Q$, over an appropriate
class of functions (additive functions with smooth
components), provided that the correlations between components of $Q$
are not too large.  See \citet{meier2009high,vandegeer2014uniform} 
for similar incoherence conditions.  
\end{remark}

Next we present our results for additive locally adaptive regression splines
(both unresricted and restricted variants) and additive trend filtering.
The proof is in Appendix \ref{app:error_bd_tv_hd}.  

\begin{corollary}
\label{cor:error_bd_tv_hd}
Assume \ref{as:x_dist_prod}, \ref{as:y_dist} on the
data distribution.  Also, assume that the underlying regression
function is additive, \smash{$f_0=\sum_{j=1}^d f_{0j}$}, where the
components $f_{0j}$, $j=1,\ldots,d$ are $k$ times weakly differentiable, such 
that \smash{$\TV(f_{0j}^{(k)}) \leq c_n$}, $j=1,\ldots,d$, for a sequence $c_n
\geq 1$. Then for \smash{$J(g)=\TV(g^{(k)})$}, the following is true of the
estimator defined by problem \eqref{eq:add_func_j_bd}.      

\begin{itemize}
\item[(a)]
Let $\cS_j$ be the space of all $k$ times weakly
differentiable functions, for each $j=1,\ldots,d$. There exist
constants $c_1,c_2,c_3,n_0 > 0$, that depend only on 
$b_1,b_2,\sigma,k$, such that for all $c \geq c_1$ and \smash{$n \geq  
  n_0 (dc_n)^{(2k+3)/(2k+2)}$}, any solution in the constrained-form additive
locally adaptive spline problem \eqref{eq:add_func_j_bd}, with tuning 
parameter $\delta=c_n$, satisfies    
\begin{equation}
\label{eq:error_bd_tv_hd}
\bigg\| \sum_{j=1}^d \hf_j - \sum_{j=1}^d f_{0j} \bigg\|_n^2 \leq
c d n^{-(2k+2)/(2k+3)} c_n, \;\;\;\text{and}\;\;\;
\bigg\| \sum_{j=1}^d \hf_j - \sum_{j=1}^d f_{0j} \bigg\|_2^2 \leq 
c d n^{-(2k+2)/(2k+3)} c_n^2,
\end{equation}
with probability at least \smash{$1-\exp(-c_2 c) -c_3/n$}.

\item[(b)] Let $\cS_j=\cG_j$, the set of $k$th degree splines with knots in
the set $T_j$ in \eqref{eq:tj}, for $j=1,\ldots,d$. There exist constants
$c_1,c_2,c_3,n_0>0$, that depend only on $b_1,b_2,\sigma,k$, such that for  
$c \geq c_1$ and \smash{$n \geq (dc_n)^{(2k+3)/(2k+2)}$}, any solution in the 
constrained-form restricted additive locally adaptive spline
problem \eqref{eq:add_func_j_bd}, with tuning parameter 
$\delta=a_k c_n$, where $a_k \geq 1$ is a constant that depends only on $k$, 
satisfies \eqref{eq:error_bd_tv_hd}, with probability at least
\smash{$1-\exp(-c_2 c) -c_3d/n$}.   

\item[(c)] Let $\cS_j=\cH_j$, the set of $k$th degree falling factorial
functions defined over $\Xj$ (the $j$th dimension of input points), for    
$j=1,\ldots,d$. Then there are constants $c_1,c_2,c_3,n_0>0$, depending only 
on $b_1,b_2,\sigma,k$, such that for all $c \geq c_1$ and \smash{$n \geq n_0 
  (dc_n)^{(2k+3)/(2k+2)}$}, any solution in the constrained-form additive
trend filtering problem \eqref{eq:add_func_j_bd}, with tuning parameter
$\delta=a_k c_n$, where $a_k \geq 1$ is a constant depending only on $k$,
satisfies \eqref{eq:error_bd_tv_hd}, with probability at least
\smash{$1-\exp(-c_2 c) -c_3d/n$}.    
\end{itemize}
\end{corollary}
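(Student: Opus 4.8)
The plan is to derive the corollary directly from Theorem~\ref{thm:error_bd_j_hd}, applied with the roughness functional $J(g)=\TV(g^{(k)})$ and the three choices of component classes. The first step is to record — exactly as in Corollary~\ref{cor:error_bd_tv} — that this $J$ satisfies Assumptions~\ref{as:poly_null}, \ref{as:deriv_bd}, \ref{as:entropy_bd}: it is a seminorm; its null space is the space of degree-$k$ polynomials, whose $k$th derivative is constant and hence of zero total variation; it obeys the relative-derivative bound with $L=1$, since $\TV(g^{(k)})\le 1$ forces $\esssup_{t\in[0,1]} g^{(k)}(t)-\essinf_{t\in[0,1]} g^{(k)}(t)\le 1$; and the supremal empirical entropy $\log N(\delta,\|\cdot\|_{Z_n},B_J(1)\cap B_\infty(1))$ is at most $K\delta^{-w}$ with $w=1/(k+1)$, by the classical metric-entropy estimate for balls of the $k$th-order total-variation seminorm. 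With this value of $w$ one has $2/(2+w)=(2k+2)/(2k+3)$ and $1+w/2=(2k+3)/(2k+2)$, which produces the rate and the sample-size requirement appearing in \eqref{eq:error_bd_tv_hd}.

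For part~(a), $\cS_j$ is the full class of $k$ times weakly differentiable functions, so each $f_{0j}$ already lies in $\cS_j$. After recentering each $f_{0j}$ to have empirical mean zero over $X^1_j,\dots,X^n_j$ — which does not change $\TV(f_{0j}^{(k)})$, and does not change $\sum_j f_{0j}$ as a function, since the sum of the centering constants equals $\frac1n\sum_i f_0(X^i)=0$ — the additive function $\tf=f_0$ is feasible for \eqref{eq:add_func_j_bd} with $\delta=c_n$. Both approximation-error terms in \eqref{eq:error_bd_j_hd_empirical} and \eqref{eq:error_bd_j_hd_L2} then vanish, and those two displays reduce immediately to the two bounds in \eqref{eq:error_bd_tv_hd}; the hypothesis $n\ge n_0(d\delta)^{1+w/2}$ of Theorem~\ref{thm:error_bd_j_hd} becomes $n\ge n_0(dc_n)^{(2k+3)/(2k+2)}$, and the probability is the stated $1-\exp(-c_2 c)-c_3/n$.

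For parts~(b) and~(c) the components $f_{0j}$ need not lie in $\cS_j=\cG_j$, respectively $\cS_j=\cH_j$, once $k\ge 2$; for $k\in\{0,1\}$ these three classes coincide (Section~\ref{sec:splines}) so those parts follow from~(a). Thus for $k\ge 2$ one must exhibit a feasible comparator $\tf=\sum_j\tf_j$, with $\tf_j\in\cS_j$, and bound its approximation error. For part~(b), take $\tf_j$ to be the spline quasi-interpolant of (the empirically centered) $f_{0j}$ from Proposition~7 of \citet{mammen1997locally} (built on \citet{deboor1978practical}): it lies in $\cG_j$, satisfies $\TV(\tf_j^{(k)})\le a_k\,\TV(f_{0j}^{(k)})\le a_k c_n$ for a constant $a_k\ge 1$ depending only on $k$ (so set $\delta=a_k c_n$), and approximates $f_{0j}$ in sup norm with an error controlled by a power of the largest gap between consecutive order statistics of $X^1_j,\dots,X^n_j$ times $\TV(f_{0j}^{(k)})$ (using the relative-derivative bound). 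Under Assumption~\ref{as:x_dist_prod} the density is bounded above and below, so that maximal gap is $O(\log n/n)$ on an event of probability $1-O(1/n)$ for each fixed $j$; recentering $\tf_j$ and taking a union bound over $j=1,\dots,d$ (the source of the $c_3 d/n$ term) makes $\sum_j\|\tf_j-f_{0j}\|_n$ and $\sum_j\|\tf_j-f_{0j}\|_2$ negligible next to $c\,d\,n^{-(2k+2)/(2k+3)}c_n$ once $n$ is at least a constant multiple of $(dc_n)^{(2k+3)/(2k+2)}$, so \eqref{eq:error_bd_j_hd_empirical} and \eqref{eq:error_bd_j_hd_L2} yield \eqref{eq:error_bd_tv_hd} after folding the approximation error into $c$. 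Part~(c) has the same structure, with the spline quasi-interpolant replaced by a falling-factorial approximant of $f_{0j}$ lying in $\cH_j$ — the ``new approximant'' advertised in Remark~\ref{rem:spline_approx} — enjoying the same two properties: $\TV(\tf_j^{(k)})\le a_k c_n$ and a sup-norm error controlled by the maximal inter-point gap.

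I expect the main obstacle to be the construction of the falling-factorial approximant needed in part~(c) (and, to a lesser degree, extracting the spline quasi-interpolant bound in exactly the form part~(b) requires). Because each falling-factorial basis function jumps in every derivative of order at most $k-1$ at its knot — unlike a spline — one must build an element of $\cH_j$ that simultaneously tracks $f_{0j}$ to sup-norm accuracy set by the mesh width \emph{and} has the total variation of its $k$th derivative no larger than a purely $k$-dependent constant times $\TV(f_{0j}^{(k)})$; keeping that constant independent of $n$ and $d$ is precisely what keeps $\delta=a_k c_n$ (and hence the target rate) intact. The remaining ingredients are routine: the exponent identity $2/(2+w)=(2k+2)/(2k+3)$, recentering comparators to meet the per-component empirical-mean-zero constraint, and the union bound over the $d$ maximal-gap events.
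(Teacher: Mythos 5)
Your proposal is correct and follows essentially the same route as the paper: verify Assumptions C1--C3 for $J(g)=\TV(g^{(k)})$ with $w=1/(k+1)$, apply Theorem \ref{thm:error_bd_j_hd} with $\tf=f_0$ for part (a), and for parts (b) and (c) (with $k\ge 2$) substitute the centered spline quasi-interpolant and falling-factorial approximant (the paper's Lemmas \ref{lem:spline_approx} and \ref{lem:fall_fact_approx}), whose sup-norm errors scale with the maximal inter-point gap $O(\log n/n)$ and whose union bound over $j$ produces the $c_3 d/n$ term, with the approximation error absorbed once $n \gtrsim (dc_n)^{(2k+3)/(2k+2)}$. The one ingredient you flag as the main obstacle --- the falling-factorial approximant with $\TV(\tf_j^{(k)})\le a_k \TV(f_{0j}^{(k)})$ and mesh-width sup-norm error --- is exactly what the paper supplies in Lemma \ref{lem:fall_fact_approx}, built from the spline approximant plus the uniform closeness of falling-factorial and truncated-power basis functions.
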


\subsection{Minimax lower bounds}
\label{sec:lower_bd}

We consider minimax lower bounds for estimation over the class of
additive functions whose components are smooth with respect to the
seminorm $J$.  We allow the dimension $d$ to grow with $n$.  As for the data 
distribution, we will use the following assumptions in place of \ref{as:x_dist},
\ref{as:x_dens_bd}, \ref{as:x_dist_prod}, \ref{as:y_dist}.  

\begin{assumption}{A}{4}{{\bf Uniform input distribution}}
\label{as:x_dist_unif}
The inputs $X^i$, $i=1,\ldots,n$ are i.i.d.\ from the uniform distribution on 
$[0,1]^d$.   
\end{assumption}

\begin{assumption}{B}{2}{{\bf Additive model, Gaussian errors}}  
\label{as:y_dist_gauss}
The responses $Y^i$, $i=1,\ldots,n$ follow 
$$
Y^i = \mu + \sum_{j=1}^d f_{0j} (X^i_j) + \epsilon^i, \;\;\;
i=1,\ldots,n,  
$$
with mean $\mu \in \R$, where \smash{$\int_{[0,1]^d} f_0(x) \,
  dx = 0$} for identifiability.
The errors $\epsilon^i$, $i=1,\ldots,n$ are i.i.d.\ $N(0,\sigma^2)$,
for some constant $\sigma>0$.   The errors and input points are
independent. 
\end{assumption}

For the regularizer $J$, assumed to satisfy Assumptions
\ref{as:poly_null}, \ref{as:deriv_bd}, we will replace
Assumption \ref{as:entropy_bd} by the following assumption, on the
log packing and log covering (entropy) numbers. 

\begin{assumption}{C}{4}{{\bf Matching packing and 
      covering number bounds}}    
\label{as:packing_bd}
There exist constants $0<w<2$ and $K_1,K_2>0$ such that 
\begin{align*}
\log M \big(\delta, \|\cdot\|_2, B_J(1) \cap B_\infty(1)\big) 
&\geq K_1 \delta^{-w}, \\
\log N \big(\delta, \|\cdot\|_2, B_J(1) \cap B_\infty(1)\big) 
&\leq K_2 \delta^{-w}.
\end{align*} 
(To be clear, here $\|\cdot\|_2$ is the $L_2$ norm defined with 
respect to the uniform distribution on $[0,1]$.)
\end{assumption}

Let us introduce the notation
$$
B_J^d(\delta) = \bigg \{ \sum_{j=1}^d f_j : J(f_j) \leq \delta, \; 
 j=1,\ldots,d \bigg\}, 
$$
Now we state our main minimax lower bound.  The proof is given in Appendix 
\ref{app:lower_bd_j_prelim}, \ref{app:lower_bd_j}.   

\begin{theorem}
\label{thm:lower_bd_j}
Assume \ref{as:x_dist_unif}, \ref{as:y_dist_gauss} on the data distribution,
and \ref{as:poly_null}, \ref{as:deriv_bd}, \ref{as:packing_bd} on the seminorm
$J$.  Then there exist constants $c_0,n_0 > 0$, that depend only on 
$\sigma, k, L, K_1, K_2, w$, such that for all $c_n \geq 1$ and 
\smash{$n \geq n_0 d^{1+w/2} c_n^w$}, we have  
\begin{equation}
\label{eq:lower_bd_j}
\inf_{\hf} \, \sup_{f_0 \in B_J^d(c_n)} \,
\E \| \hf - f_0 \|_2^2 \geq c_0 d n^{-2/(2+w)} c_n^{2w/(2+w)}.
\end{equation}
\end{theorem}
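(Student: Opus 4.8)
The plan is a reduction-to-testing argument via Fano's inequality, applied to an additive hypothesis class built from a \emph{local} $L_2$ packing of the univariate unit ball $\mathcal{B} := B_J(1) \cap B_\infty(1)$; the localization is what lets the rate carry the factor $c_n^{2w/(2+w)}$, and the additive structure is what makes it linear in $d$.

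First I would set up the standard reduction. Given a finite family $\{f_0^\eta : \eta \in \mathcal{C}\} \subseteq B_J^d(c_n)$ that is $s$-separated in $\|\cdot\|_2$, taking $\psi$ to be the nearest hypothesis to $\hf$ yields
$$
\inf_{\hf}\, \sup_{f_0 \in B_J^d(c_n)} \E\|\hf - f_0\|_2^2 \;\ge\; \frac{s^2}{4}\, \inf_{\psi}\, \max_{\eta \in \mathcal{C}} \P_\eta(\psi \ne \eta),
$$
and Fano bounds the testing error below by $1 - (\overline{\KL} + \log 2)/\log|\mathcal{C}|$, where $\overline{\KL}$ is the average Kullback--Leibler divergence between the joint data laws. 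Under \ref{as:x_dist_unif}, \ref{as:y_dist_gauss} the input law does not depend on $f_0$ and the noise is $N(0,\sigma^2)$, so $\KL(P_\eta \| P_{\eta'}) = \tfrac{n}{2\sigma^2}\|f_0^\eta - f_0^{\eta'}\|_2^2$. Everything thus reduces to building $\mathcal{C}$ with $|\mathcal{C}|$ large and with $\|f_0^\eta - f_0^{\eta'}\|_2^2$ pinched between two constant multiples of a single quantity $d\gamma^2\epsilon^2$.

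The construction has two stages. \emph{Univariate local packing.} From the $\epsilon$-packing of $\mathcal{B}$ supplied by the lower bound in \ref{as:packing_bd}, I would first pass to the mean-zero slice of $\mathcal{B}$ (a pigeonhole over the means costs only a constant factor in the exponent, using that $J$ annihilates constants, Assumption \ref{as:poly_null}), and then use the covering \emph{upper} bound in \ref{as:packing_bd} with another pigeonhole to extract $g_1, \dots, g_M \in \mathcal{B}$ with $\log M \gtrsim \epsilon^{-w}$ whose pairwise $L_2$ distances all lie in $[\epsilon, C\epsilon]$ for a constant $C = C(w, K_1, K_2)$. Rescaling by $\gamma$ gives mean-zero functions $\gamma g_1, \dots, \gamma g_M$ with $J$-seminorm and sup norm $\le \gamma$ and pairwise $L_2$ distances in $[\gamma\epsilon, C\gamma\epsilon]$. \emph{Additive assembly.} Using a $q$-ary Gilbert--Varshamov code over $\{1,\dots,M\}^d$ of constant relative distance, I would get $\mathcal{C} \subseteq \{1,\dots,M\}^d$ with $\log|\mathcal{C}| \gtrsim d\log M \gtrsim d\epsilon^{-w}$ and pairwise Hamming distance $\ge c\,d$, and set $f_0^\eta(x) = \sum_{j=1}^d \gamma g_{\eta_j}(x_j)$. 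Each component has $J \le \gamma \le c_n$, so $f_0^\eta \in B_J^d(c_n)$, and each is mean zero, so $\int_{[0,1]^d} f_0^\eta = 0$ as required by \ref{as:y_dist_gauss}. Since the uniform law on $[0,1]^d$ is a product and the components are mean zero, \eqref{eq:L2_decomp} gives $\|f_0^\eta - f_0^{\eta'}\|_2^2 = \sum_{j : \eta_j \ne \eta'_j} \gamma^2 \|g_{\eta_j} - g_{\eta'_j}\|_2^2 \in [\,c\,d\,\gamma^2\epsilon^2,\; C^2 d\,\gamma^2\epsilon^2\,]$, exactly the two-sided control needed.

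Finally I would optimize. Take $\gamma = c_n$ (the largest value keeping the hypotheses in $B_J^d(c_n)$), and pick $\epsilon$ so that the average KL, which is $\lesssim \tfrac{n}{\sigma^2}d\,c_n^2\epsilon^2$, is a small fraction of $\log|\mathcal{C}| \asymp d\epsilon^{-w}$, i.e.\ $n c_n^2 \epsilon^{2+w} \asymp \text{const}$, so $\epsilon \asymp (n c_n^2)^{-1/(2+w)}$. Then Fano's bound exceeds $\tfrac12$, and the separation satisfies $s^2 \gtrsim d\,c_n^2\epsilon^2 \asymp d\,n^{-2/(2+w)} c_n^{\,2 - 4/(2+w)} = d\,n^{-2/(2+w)} c_n^{\,2w/(2+w)}$, the claimed lower bound. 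The sample-size hypothesis $n \ge n_0 d^{1+w/2} c_n^{w}$ is what makes this $\epsilon$ small enough for \ref{as:packing_bd} to apply, forces $M \ge 2$ so the Gilbert--Varshamov code is nontrivial, and keeps $\log|\mathcal{C}|$ above the additive $\log 2$. The hard part is the univariate localization: \ref{as:packing_bd} provides only \emph{global} packing and covering numbers for $\mathcal{B}$, while the argument needs a packing trapped inside an $L_2$-ball of radius comparable to its own spacing --- without this, $\|f_0^\eta - f_0^{\eta'}\|_2^2$ could be of order $d\gamma^2$, inflating the KL by a factor $\epsilon^{-2}$ and breaking the balance --- so converting the global bounds into such a local, mean-zero packing is the delicate step; once it is in hand, the Gilbert--Varshamov assembly and the Fano bookkeeping are routine.
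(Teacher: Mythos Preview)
Your proposal is correct and would yield the stated bound, but the route differs from the paper's in one essential respect: how the Kullback--Leibler term in Fano's inequality is controlled. You bound the KL by the \emph{diameter} of the packing set, which forces you to build a \emph{local} packing---univariate functions whose pairwise $L_2$ distances are trapped in $[\epsilon, C\epsilon]$---and you rightly flag this localization (via a pigeonhole against the covering upper bound in \ref{as:packing_bd}) as the delicate step. The paper instead follows the Yang--Barron strategy of \citet{yang1999information}: it bounds the average KL to the mixture by $\log N(\epsilon_n) + n\epsilon_n^2/(2\sigma^2)$, using the \emph{global} covering upper bound from \ref{as:packing_bd}, and separately invokes the \emph{global} packing lower bound to control $\log M(\delta_n)$. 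This decouples the packing radius $\delta_n$ from the covering radius $\epsilon_n$, so no local packing is needed at all---the two halves of Assumption \ref{as:packing_bd} are consumed directly, without the extra pigeonhole. A second, smaller difference: the paper works in $\Pi_{k,d}^\perp(B_J^d(c_n))$, projecting out all degree-$k$ additive polynomials (not just constants), because its packing lemma is phrased for $\Pi_k^\perp(B_J(1))$, whose sup-norm control comes from Assumption \ref{as:deriv_bd}; you sidestep this by working directly in $B_J(1)\cap B_\infty(1)$ and centering only the mean, which is a bit lighter and does not actually need \ref{as:deriv_bd}. The Gilbert--Varshamov assembly over $\{1,\dots,M\}^d$ and the $L_2$ decomposability \eqref{eq:L2_decomp} are common to both arguments.
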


When we choose \smash{$J(g)=\TV(g^{(k)})$} as our regularizer, the
additive function class \smash{$B_J^d(\delta)$} becomes
$$
\cF_k^d(\delta) = \bigg \{ \sum_{j=1}^d f_j : \TV(f^{(k)}_j) \leq \delta, \;
j=1,\ldots,d \bigg\}, 
$$
and Theorem \ref{thm:lower_bd_j} implies the following result, whose
proof is in Appendix \ref{app:lower_bd_tv}.

\begin{corollary}
\label{cor:lower_bd_tv}
Assume \ref{as:x_dist_unif}, \ref{as:y_dist_gauss} on the data distribution.
Assume further that $f_{0j}$, $j=1,\ldots,d$ are $k$ 
times weakly differentiable.  Then there are constants $c_0,n_0 > 0$, that
depend only on $\sigma, k$, such that for all $c_n \geq 1$ and
and \smash{$n \geq n_0 d^{(2k+3)/(2k+2)} c_n^{1/(k+1)}$},
\begin{equation}
\label{eq:lower_bd_tv}
\inf_{\hf} \, \sup_{f_0 \in \cF_k^d(c_n)} \, 
\E \| \hf - f_0 \|_2^2 \geq c_0 d n^{-(2k+2)/(2k+3)} c_n^{2/(2k+3)}.
\end{equation}
\end{corollary}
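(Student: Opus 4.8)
The plan is to obtain Corollary~\ref{cor:lower_bd_tv} as a direct specialization of Theorem~\ref{thm:lower_bd_j} to the regularizer $J(g)=\TV(g^{(k)})$, for which the additive class $B_J^d(c_n)$ is exactly $\cF_k^d(c_n)$. Since Assumptions~\ref{as:x_dist_unif} and~\ref{as:y_dist_gauss} on the data distribution are carried over verbatim, all that must be done is (i)~check that $J(g)=\TV(g^{(k)})$ satisfies Assumptions~\ref{as:poly_null}, \ref{as:deriv_bd}, \ref{as:packing_bd}, and (ii)~substitute the resulting exponent $w$ into the bound~\eqref{eq:lower_bd_j} and simplify the exponents.

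Step~(i) splits into an easy part and the main part. For Assumption~\ref{as:poly_null}: $\TV(\cdot)$ is a seminorm, its domain naturally contains the $k$ times weakly differentiable functions, and for $g(t)=t^\ell$ with $0\le\ell\le k$ the derivative $g^{(k)}$ is constant (identically $0$ if $\ell<k$, identically $k!$ if $\ell=k$), so $J(g)=0$; hence the null space of $J$ contains all $k$th order polynomials. For Assumption~\ref{as:deriv_bd}: the oscillation of any function is at most its total variation, so $\esssup_{[0,1]}g^{(k)}-\essinf_{[0,1]}g^{(k)}\le\TV(g^{(k)})\le 1$ for $g\in B_J(1)$, which gives $L=1$. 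The substantive requirement is Assumption~\ref{as:packing_bd}, namely matching lower and upper bounds on the $L_2$ packing and covering numbers of $B_J(1)\cap B_\infty(1)=\{g:\TV(g^{(k)})\le1,\ \|g\|_\infty\le1\}$ (with $L_2$ taken against Lebesgue measure on $[0,1]$). This is the classical metric-entropy estimate for balls in the $k$th order bounded-variation scale: the covering-number upper bound $\log N(\delta,\|\cdot\|_2,\cdot)\le K_2\delta^{-1/(k+1)}$ follows from standard Birman--Solomjak--type arguments (the same entropy calculations underlie the univariate rates in \citet{mammen1997locally,tibshirani2014adaptive}), while the packing-number lower bound $\log M(\delta,\|\cdot\|_2,\cdot)\ge K_1\delta^{-1/(k+1)}$ is obtained by exhibiting an explicit $\delta$-separated family, built by superposing $\asymp\delta^{-1/(k+1)}$ disjointly supported piecewise-polynomial bumps of degree $k+1$, each scaled so that the total variation of its $k$th derivative and its sup norm are both within budget. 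Hence Assumption~\ref{as:packing_bd} holds with $w=1/(k+1)$ and constants $K_1,K_2$ depending only on $k$.

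For step~(ii), plug $w=1/(k+1)$, $L=1$, and $K_1=K_1(k)$, $K_2=K_2(k)$ into Theorem~\ref{thm:lower_bd_j}; then the constants $c_0,n_0$ depend only on $\sigma$ and $k$, matching the corollary statement. The exponent bookkeeping is routine: $2/(2+w)=(2k+2)/(2k+3)$, $2w/(2+w)=2/(2k+3)$, $1+w/2=(2k+3)/(2k+2)$, and $c_n^w=c_n^{1/(k+1)}$, so the hypothesis $n\ge n_0 d^{1+w/2}c_n^w$ becomes $n\ge n_0 d^{(2k+3)/(2k+2)}c_n^{1/(k+1)}$ and the conclusion~\eqref{eq:lower_bd_j} becomes exactly~\eqref{eq:lower_bd_tv}, which finishes the proof. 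The main obstacle is the two-sided entropy estimate in Assumption~\ref{as:packing_bd}: the covering-number upper bound is standard, but the matching packing lower bound requires a careful construction that keeps the bumps simultaneously $L_2$-separated and sup-norm bounded while spending only $O(1)$ of the budget on $\TV$ of the $k$th derivative.
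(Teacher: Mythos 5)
Your proposal is correct and follows the same overall skeleton as the paper: verify the assumptions on $J(g)=\TV(g^{(k)})$ with $w=1/(k+1)$ and then specialize Theorem \ref{thm:lower_bd_j}, with the exponent bookkeeping exactly as you describe. The one place where you genuinely diverge is the step you correctly flag as the main obstacle, the packing lower bound in Assumption \ref{as:packing_bd}. You propose a direct construction: $\asymp \delta^{-1/(k+1)}$ disjointly supported degree-$(k+1)$ bumps, each budgeted in $\TV$ of the $k$th derivative and in sup norm, with sign patterns thinned to a well-separated subfamily (you would need to make the Varshamov--Gilbert step explicit to turn ``$m$ bumps'' into log-packing $\asymp m$; conveniently, the paper already proves exactly this combinatorial fact as Lemma \ref{lem:varshamov_gilbert} for use elsewhere). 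The paper instead avoids any construction: it notes $\TV(f^{(k)})=\int_0^1|f^{(k+1)}|\le(\int_0^1|f^{(k+1)}|^2)^{1/2}$ by Cauchy--Schwarz, so the $\TV$ ball contains a Sobolev ball of order $k+1$, and then cites the Kolmogorov--Tikhomirov packing lower bound $\delta^{-1/(k+1)}$ for that smaller class. The paper's route is a two-line reduction to a classical result; yours is self-contained and makes the extremal functions explicit, at the cost of carrying out the scaling analysis and the combinatorial separation argument in full. Either way the constants $K_1,K_2$ depend only on $k$, so the remainder of your argument goes through as stated.
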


\begin{remark}[{\bf Optimality for a fixed dimension $d$}] For a fixed 
$d$, the estimator defined by \eqref{eq:add_func_j} is minimax rate optimal over
the class of additive functions $f_0$ such that \smash{$\sum_{j=1}^d
  J(f_{0j}) \leq C_n$}.  To see this, note that such a class of functions
contains \smash{$B_J^d(C_n/d)$}, therefore plugging $c_n=C_n/d$ into
the right-hand side in \eqref{eq:lower_bd_j} yields a lower bound rate of  
\smash{$n^{-2/(2+w)} C_n^{2w/(2+w)}$}, which matches the upper bound rate in
\eqref{eq:error_bd_j_add}. 

Furthermore, when \smash{$J(g)=\TV(g^{(k)})$}, the lower bound rate given by
plugging $c_n=C_n/d$ into the right-hand side in \eqref{eq:lower_bd_tv} is  
\smash{$n^{-(2k+2)/(2k+3)} C_n^{2/(2k+3)}$}, matching the upper 
bound rate in \eqref{eq:error_bd_tv}.  Hence additive
locally adaptive regression splines, restricted additive locally adaptive
regression splines, and additive trend filtering all achieve the minimax rate  
over the space of additive functions $f_0$ such that \smash{$\sum_{j=1}^d
  \TV(f_{0j}^{(k)}) \leq C_n$}.        
\end{remark}

\begin{remark}[{\bf Optimality for a growing dimension $d$}]
For growing $d$, the estimator defined by \eqref{eq:add_func_j_bd} 
is minimax rate optimal over the class of additive functions
$f_0$ such that \smash{$J(f_{0j}) \leq c$}, $j=1,\ldots,d$, where $c>0$ is a
constant.  This is verified by noting that the lower bound rate of
\smash{$dn^{-2/(2+w)}$} in \eqref{eq:lower_bd_j} matches the upper bound rates
in \eqref{eq:error_bd_j_hd_empirical}, \eqref{eq:error_bd_j_hd_L2}. 

When \smash{$J(g)=\TV(g^{(k)})$}, and again, $c_n=c$ (a constant), 
the lower bound rate of \smash{$d n^{-(2k+2)/(2k+3)}$} in \eqref{eq:lower_bd_tv}  
matches the upper bound rates in \eqref{eq:error_bd_tv_hd}.
 Thus additive locally adaptive regression splines,
restricted additive locally adaptive regression splines, and additive trend
filtering all attain the minimax rate over the space of additive 
functions $f_0$ with \smash{$\TV(f_{0j}^{(k)}) \leq c$}, $j=1,\ldots,d$.

For growing $c_n$, we note that the upper bounds in \eqref{eq:error_bd_j_hd_add}
and \eqref{eq:error_bd_tv_hd} have an inflated dependence on $c_n$, compared to
\eqref{eq:lower_bd_j} and \eqref{eq:lower_bd_tv}.  It turns out that the latter 
(lower bounds) are tight, and the former (upper bounds) are loose.  The upper
bounds can be tightened under further boundedness assumptions (see Remark 
\ref{rem:error_bd_j_hd_add}). 
\end{remark}

\begin{remark}[{\bf Suboptimality of additive linear smoothers}]
Seminal theory from \citet{donoho1998minimax} on minimax linear
rates over Besov spaces shows that, under Assumption
\ref{as:y_dist_gauss}, and with the inputs $X^i$, $i=1,\ldots,n$ being now 
nonrandom and occurring over the regular $d$-dimensional lattice  
$\{1/N,2/N,\ldots,1\}^d \subseteq [0,1]^d$ with $N=n^{1/d}$, we have 
\begin{equation}
\label{eq:lower_bd_tv_linear}
\inf_{\hf \, \text{\rm additive linear}} \, \sup_{f_0 \in
  \cF_k^d(c_n)} \, \E \| \hf - f_0 \|_2^2 \geq  
c_0 d n^{-(2k+1)/(2k+2)} c_n^{2/(2k+2)} , 
\end{equation}
for all $n \geq n_0$, where $c_0,n_0>0$ are constants, depending only on
$\sigma,k$.  On the left-hand side in \eqref{eq:lower_bd_tv_linear} the infimum
is taken over all additive linear smoothers, i.e., estimators
\smash{$\hf=\sum_{j=1}^d \hf_j$} such that each component \smash{$\hf_j$} is a
linear smoother, for $j=1,\ldots,d$.  The additive linear smoother lower bound
\eqref{eq:lower_bd_tv_linear} is verified in Appendix \ref{app:lower_bd_tv_linear}.  

For a fixed $d$, we can see that all additive linear smoothers---e.g., additive 
smoothing splines, additive kernel smoothing estimators, additive RKHS 
estimators, etc.---are suboptimal over the class of additive functions
$f_0$ with \smash{$\sum_{j=1}^d \TV(f_{0j}^{(k)}) \leq C_n$}, as the
optimal linear rate in \eqref{eq:lower_bd_tv_linear} (set $c_n=C_n/d$) is
\smash{$n^{-(2k+1)/(2k+2)} C_n^{2/(2k+2)}$}, slower than the optimal rate
\smash{$n^{-(2k+2)/(2k+3)} C_n^{2/(2k+2)}$} of additive locally adaptive
regression splines and additive trend filtering in \eqref{eq:error_bd_tv}.  

For growing $d$, and $c_n=c$ (a constant), we also see that additive linear
smoothers are suboptimal over the class of additive functions $f_0$ such that
\smash{$\TV(f_{0j}^{(k)}) \leq c$}, $j=1,\ldots,d$, as the optimal linear rate
in \eqref{eq:lower_bd_tv_linear} is \smash{$d n^{-(2k+1)/(2k+2)}$}, slower than
the optimal rate \smash{$d n^{-(2k+2)/(2k+3)}$} of additive locally adaptive
regression splines and additive trend filtering in \eqref{eq:error_bd_tv_hd}.

% (We note that, for growing $d$ and growing $c_n$, the linear lower bound 
% rate in \eqref{eq:lower_bd_tv_linear} is still larger than the lower bound
% rate in \eqref{eq:lower_bd_tv}, but we have not verified that the latter rate
% is tight.)  
\end{remark}

\section{Backfitting and the dual}
\label{sec:comp}

We now examine computational approaches for the additive
trend filtering problem \eqref{eq:add_tf}.  This is a
convex optimization problem, and many standard approaches can be 
applied.  For its simplicity and its ubiquity in additive modeling, 
we focus on the backfitting algorithm in particular.

\subsection{Backfitting}

The backfitting approach for problem \eqref{eq:add_tf} is
described in Algorithm \ref{alg:backfit}. 
We write $\TF_\lambda(r, \Xj)$ for the univariate trend 
filtering fit, with a tuning parameter $\lambda>0$, 
to a response vector $r=(r^1,\ldots,r^n) \in \R^n$ over an
input vector $\Xj=(X^1_j,\ldots,X^n_j) \in \R^n$.  In words, the
algorithm cycles over $j=1,\ldots,d$, and at each step updates the
estimate for component $j$ by applying 
univariate trend filtering to the $j$th partial residual (i.e., the
current residual excluding component $j$). Centering in Step 2b part
(ii) is optional, because the fit $\TF_\lambda(r,\Xj)$ will have mean
zero whenever $r$ has mean zero, but centering can still be performed
for numerical stability. In general, the efficiency of backfitting 
hinges on the efficiency of the univariate 
smoother employed; to implement Algorithm
\ref{alg:backfit} in practice we can use fast interior
point methods \citep{kim2009trend} or fast operator splitting 
methods \citep{ramdas2016fast} for univariate trend filtering, 
both of which result in efficient empirical performance. 

\begin{algorithm}[tb]
\caption{Backfitting for additive trend filtering}
\label{alg:backfit}
Given responses $Y^i \in \R$ and input points $X^i \in \R^d$, $i=1,\ldots,n$.  
\begin{enumerate}
\item Set $t=0$ and initialize $\thetaj^{(0)}=0$, $j=1,\ldots,d$.
\item For $t=1,2,3,\ldots$ (until convergence): 
\begin{enumerate}
\item[a.] For $j=1,\ldots,d$:
\begin{enumerate}
\item[(i)] $\displaystyle \thetaj^{(t)} = \TF_\lambda \bigg(
Y - \bar{Y}\one - \sum_{\ell<j} \thetaj^{(t)} - 
\sum_{\ell>j} \thetaj^{(t-1)}, \; \Xj \bigg)$
\item[(ii)] (Optional) $\thetaj^{(t)}=\thetaj^{(t)} - \frac{1}{n}
  \one^T \thetaj^{(t)}$
\end{enumerate}
\end{enumerate}
\item Return $\hthetaj$, $j=1,\ldots,d$ (parameters
  $\thetaj^{(t)}$, $j=1,\ldots,d$ at convergence). 
\end{enumerate}
\end{algorithm}

Algorithm \ref{alg:backfit} is equivalent to block
coordinate descent (BCD), also called exact blockwise minimization, 
applied to problem \eqref{eq:add_tf} over the coordinate blocks 
$\thetaj$, $j=1,\ldots,d$.  A general treatment of BCD
is given in \citet{tseng2001convergence}, who shows that for a convex  
criterion that decomposes into smooth plus separable terms, as does 
that in \eqref{eq:add_tf}, all limit points of the sequence of iterates
produced by BCD are optimal solutions.  A recent wave of work from 
the optimization community gives refined convergence analyses for 
coordinate descent (or its variants) in particular settings.  We do
not pursue the implications of this work for our problem; our interest here is
primarily in developing a connection between BCD for problem \eqref{eq:add_tf}
and alternating projections in its dual problem \eqref{eq:add_tf_dual}, 
which is the topic of the next subsection. 

\subsection{Dual alternating projections}

Using the additive trend filtering problem \eqref{eq:add_tf} and its dual
\eqref{eq:add_tf_dual}, related by the transformation \eqref{eq:add_tf_pd}, we
see that for any dimension 
$j=1,\ldots,d$, the univariate trend filtering fit with response vector
$r=(r^1,\ldots,r^n)$ and input vector $\Xj=(X^1_j,\ldots,X^n_j)$ can be written
as     
\begin{equation}
\label{eq:tf_pd}
\TF_\lambda(r,\Xj) = (\Id-\Pi_{U_j})(r),  
\end{equation}
where $U_j=\{S_jD_j^T v_j : \|u\|_\infty \leq \lambda\}$, and recall, we 
abbreviate \smash{$D_j=\Dj$}.  (This follows from \eqref{eq:tf_pd}
specialized to the univariate trend filtering problem.) The backfitting 
approach in Algorithm \ref{alg:backfit} can be viewed (ignoring 
the optional centering step) as performing the updates, for $t=1,2,3,\ldots$, 
\begin{equation}
\label{eq:backfit_tf}
\thetaj^{(t)} = (\Id-\Pi_{U_j})
\bigg(Y - \bar{Y}\one - \sum_{\ell<j} \theta_\ell^{(t)} 
- \sum_{\ell>j} \theta_\ell^{(t-1)}\bigg), \;\;\; j=1,\ldots,d,
\end{equation}
or, reparametrized in terms of the primal-dual relationship
\smash{$u=Y-\bar{Y}\one-\sum_{j=1}^d \thetaj$} in
\eqref{eq:add_tf_pd}, 
\begin{equation}
\label{eq:admm_seq}
\begin{aligned}
u_0^{(t)} &= Y - \bar{Y}\one - \sum_{j=1}^d \thetaj^{(t-1)}, \\ 
u_j^{(t)} &= \Pi_{U_j} \big(u_{j-1}^{(t)} + \thetaj^{(t-1)}\big), 
\;\;\; j=1,\ldots,d, \\
\thetaj^{(t)} &= \thetaj^{(t-1)} + u_{j-1}^{(t)} - u_j^{(t)},
\;\;\;  j=1,\ldots,d.
\end{aligned}
\end{equation}
Thus the backfitting algorithm for \eqref{eq:add_tf},
as expressed above in \eqref{eq:admm_seq}, is seen to be a 
particular type of {\it alternating projections} method applied to the
dual problem \eqref{eq:add_tf_dual}, cycling 
through projections onto $U_j$, $j=1,\ldots,d$.
Interestingly, as opposed to the classical alternating
projections approach, which would repeatedly project the
current iterate \smash{$u_{j-1}^{(t)}$} onto $U_j$,
$j=1,\ldots,d$, the steps in \eqref{eq:admm_seq} repeatedly
project an ``offset'' version \smash{$u_{j-1}^{(t)}+\thetaj^{(t-1)}$}
of the current iterate, for $j=1,\ldots,d$ 
(this corresponds to running univariate trend filtering on the current 
residual, in the iterations \eqref{eq:backfit_tf}). 

There is a considerable literature on alternating projections in 
optimization, see, e.g., \citet{bauschke1996projection} for a review.   
Many alternating projections algorithms can be derived from the
perspective of an operator splitting technique, e.g., 
the alternative direction method of multipliers (ADMM). 
Indeed, the steps in \eqref{eq:admm_seq}
appear very similar to those from an ADMM algorithm applied to the
dual \eqref{eq:add_tf_dual}, if we think of the ``offset''
variables $\thetaj$, $j=1\ldots,d$ in the iterations
\eqref{eq:admm_seq} as {\it dual} variables in the dual problem
\eqref{eq:add_tf_dual} (i.e., if we think of the primal variables
$\thetaj$, $j=1,\ldots,d$ as dual variables in the dual problem
\eqref{eq:add_tf_dual}).  This connection inspires a new
parallel version of backfitting, presented in the next subsection.

\subsection{Parallelized backfitting}

We have seen that backfitting is a special type of alternating
projections algorithm, applied to the dual problem
\eqref{eq:add_tf_dual}. For set intersection
problems (where we seek a point in the intersection of given closed, convex
sets), the optimization literature offers a variety of {\it parallel projections}
methods (in contrast to alternating projections methods) that are provably
convergent.  One such method can be derived using ADMM (e.g., see Section 5.1 of 
\citet{boyd2011distributed}), and a similar construction can be used for the
dual problem \eqref{eq:add_tf_dual}. We first rewrite this problem as
\begin{equation}
\label{eq:add_tf_dual_re}
\begin{alignedat}{2}
&\ccol{\min_{u_0,u_1,\ldots,u_d \in \R^n}} \quad 
&&\half \|Y - \bar{Y}\one - u_0\|_2^2 + 
\sum_{j=1}^d I_{U_j}(u_j) \\
&\ccol{\st} \quad && u_0=u_1,\; u_0=u_2, \;\ldots\; u_0=u_d,   
\end{alignedat}
\end{equation}
where we write $I_S$ for the indicator function of a set $S$  
(equal to 0 on $S$, and $\infty$ otherwise). Then we
define the augmented Lagrangian, for an arbitrary $\rho>0$, as  
$$
%\label{eq:aug_lag}
L_\rho(u_0,u_1,\ldots,u_d,\gamma_1,\ldots,\gamma_d) =  
\half \|Y - \bar{Y}\one - u_0\|_2^2 + \sum_{j=1}^d
\bigg( I_{U_j}(u_j) + \frac{\rho}{2} \|u_0 - u_j + \gamma_j \|_2^2 
- \frac{\rho}{2} \|\gamma_j\|_2^2 \bigg),
$$
The ADMM steps for problem \eqref{eq:add_tf_dual_re} are now given by repeating,
for $t=1,2,3,\ldots$, 
\begin{equation}
\label{eq:admm_par}
\begin{aligned}
u_0^{(t)} &= 
\frac{1}{\rho d+1}\bigg(Y-\bar{Y}\one + 
\rho \sum_{j=1}^d (u_j^{(t-1)}-\gamma_j^{(t-1)})\bigg) \\ 
u_j^{(t)} &= \Pi_{U_j}\big(u_0^{(t)}+\gamma_j^{(t-1)}\big), 
\;\;\; j=1,\ldots,d\\
\gamma_j^{(t)} &= \gamma_j^{(t-1)} + u_0^{(t)} - u_j^{(t)},   
\;\;\; j=1,\ldots,d.
\end{aligned}
\end{equation}
Now compare \eqref{eq:admm_par} to
\eqref{eq:admm_seq}---the key difference is that in
\eqref{eq:admm_par}, the updates to $u_j$, $j=1,\ldots,d$, i.e., the
projections onto 
$U_j$, $j=1,\ldots,d$, completely decouple and can hence be performed   
{\it in parallel}.  Run properly, this could provide
a large speedup over the sequential projections in
\eqref{eq:admm_seq}.  

Of course, for our current study, the dual problem 
\eqref{eq:add_tf_dual_re} is really only interesting insofar as
it is connected to the additive trend filtering problem
\eqref{eq:add_tf}.  Fortunately, the parallel projections algorithm
\eqref{eq:admm_par} maintains a very useful connection to the primal
problem \eqref{eq:add_tf}: \smash{$\rho\hat\gamma_j$}, $j=1,\ldots,d$
(i.e., the scaled iterates \smash{$\rho\gamma_j^{(t)}$},
$j=1,\ldots,d$ at convergence) are optimal for the additive trend
filtering problem \eqref{eq:add_tf}.  This is simply because the dual
of the dual problem \eqref{eq:add_tf_dual_re} is the additive
trend filtering problem \eqref{eq:add_tf} (therefore $\rho\gamma_j$,
$j=1,\ldots,d$, which are dual to the constraints in \eqref{eq:add_tf_dual_re},
are equivalent 
to the primal parameters $\thetaj$, $j=1,\dots d$ in \eqref{eq:add_tf}). 
We state this next as a theorem, and transcribe the iterations in
\eqref{eq:admm_par} into an equivalent primal form, in Algorithm
\ref{alg:backfit_par}.  For details, see Appendix \ref{app:backfit_par}.    

\begin{theorem}
\label{thm:backfit_par}
Initialized arbitrarily, the ADMM steps \eqref{eq:admm_par}
produce parameters \smash{$\hat\gamma_j$}, $j=1,\ldots,d$  
(i.e., the iterates \smash{$\gamma_j^{(t)}$}, $j=1,\ldots,d$ at convergence)
such that the scaled parameters \smash{$\rho \hat\gamma_j$}, $j=1,\ldots,d$
solve additive trend filtering \eqref{eq:add_tf}. Further, the outputs 
\smash{$\hthetaj$}, $j=1,\ldots,d$ of Algorithm \ref{alg:backfit_par} 
solve additive trend filtering \eqref{eq:add_tf}.   
\end{theorem}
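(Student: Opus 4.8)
The plan is to read \eqref{eq:add_tf_dual_re} as a consensus reformulation of the additive trend filtering dual \eqref{eq:add_tf_dual}, to identify the Lagrangian dual of \eqref{eq:add_tf_dual_re} (taken with respect to the constraints $u_0=u_j$) with the primal problem \eqref{eq:add_tf}, and then to apply standard ADMM convergence theory to conclude that the scaled multiplier iterates $\rho\gamma_j^{(t)}$ converge to a solution of \eqref{eq:add_tf}. First I would verify that \eqref{eq:admm_par} is exactly scaled ADMM applied to \eqref{eq:add_tf_dual_re} under the two-block splitting $f(u_0)=\half\|Y-\bar{Y}\one-u_0\|_2^2$ and $g(u_1,\ldots,u_d)=\sum_{j=1}^d I_{U_j}(u_j)$ with linking equations $u_0=u_j$: the three lines of \eqref{eq:admm_par} are, respectively, the $u_0$-minimization (a ridge-type least squares solve), the $u_j$-minimizations (which separate into the projections $\Pi_{U_j}$ since $g$ is block-separable), and the dual-ascent step. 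Since $f,g$ are closed, proper and convex and each $U_j$ is compact (it is the image of an $\ell_\infty$-ball under a linear map), a saddle point of the unaugmented Lagrangian exists, so the standard convergence theorem for ADMM \citep{boyd2011distributed} applies.

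That theorem yields objective convergence, a vanishing primal residual $u_0^{(t)}-u_j^{(t)}\to 0$, and convergence of the scaled dual iterates $\rho\gamma_j^{(t)}$ to an optimal multiplier $\rho\hat\gamma_j$ for the constraints $u_0=u_j$; strong convexity of $f$ in $u_0$ forces $u_0^{(t)}$, and hence each $u_j^{(t)}$, to converge to a common point $\hat u$, which (as $u_0=u_1=\cdots=u_d$ at any optimum) solves \eqref{eq:add_tf_dual}. It remains to see that $(\rho\hat\gamma_1,\ldots,\rho\hat\gamma_d)$ solves \eqref{eq:add_tf}. For this I would write down the Lagrangian dual of \eqref{eq:add_tf_dual_re}: with multipliers $\nu_j$ for $u_0=u_j$, eliminating $u_0$ via $\inf_{u_0}\{f(u_0)+\langle\sum_j\nu_j,u_0\rangle\}=-f^*(-\sum_j\nu_j)$ produces a quadratic in $\sum_j\nu_j$ centered at $Y-\bar{Y}\one$, and eliminating each $u_j$ produces $-\sigma_{U_j}(\nu_j)$, where the support function $\sigma_{U_j}$ is a $\lambda$-scaled $\ell_1$ norm of the discrete $(k+1)$st derivative of a sorted copy of $\nu_j$ --- precisely the $j$-th penalty term of \eqref{eq:add_tf}. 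Negating and minimizing, the dual of \eqref{eq:add_tf_dual_re} is \eqref{eq:add_tf} with $\theta_j=\nu_j$, and the multiplier map of scaled ADMM then identifies $\rho\hat\gamma_j$ with $\theta_j$. One subtlety: the penalty $\|\Dj S_j\theta_j\|_1$ is invariant to shifts of $\theta_j$ along $\one$ (because $\Dj\one=0$), so the centering constraints $\one^T\theta_j=0$ in \eqref{eq:add_tf} only select a representative; since every element of each $U_j$ has mean zero and the $u_0$-update preserves mean zero, initializing \eqref{eq:admm_par} with mean-zero iterates makes $\rho\hat\gamma_j$ land on that representative, and for arbitrary initialization the same conclusion holds after one recentering, which changes neither the penalty nor the fit at optimality.

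Finally, for Algorithm \ref{alg:backfit_par}, I would substitute $\thetaj^{(t)}=\rho\gamma_j^{(t)}$ into \eqref{eq:admm_par}. Using \eqref{eq:tf_pd} in the form $\Id-\Pi_{U_j}=\TF_\lambda(\cdot,\Xj)$ together with the scaling identity $c\,\TF_\mu(z)=\TF_{c\mu}(cz)$, the coupled $u_j$- and $\gamma_j$-updates collapse to a single univariate trend filtering call of the form $\thetaj^{(t)}=\TF_{\rho\lambda}\big(\rho u_0^{(t)}+\thetaj^{(t-1)},\,\Xj\big)$ on a residual-type vector, while the $u_0$-update rewrites as the indicated convex combination of $Y-\bar{Y}\one$ and the current component fits --- these are exactly the steps of Algorithm \ref{alg:backfit_par}. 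Because this is merely a change of variables, the two forms have the same fixed points and the same limit, so the outputs $\hthetaj$ of Algorithm \ref{alg:backfit_par} equal the limits $\rho\hat\gamma_j$, which by the above solve \eqref{eq:add_tf}. The step I expect to be the main obstacle is the duality bookkeeping in the third paragraph: making the Fenchel computation airtight --- in particular getting the permutations $S_j$ and difference operators $\Dj$ to line up exactly between \eqref{eq:add_tf_dual_re} and the penalty of \eqref{eq:add_tf}, and handling the $\one$-shift degeneracy cleanly so that ``$\rho\hat\gamma_j$ solve \eqref{eq:add_tf}'' is literally (not merely up to recentering) true. A secondary, more routine point is checking that the invoked form of the ADMM convergence theorem delivers convergence of the dual iterates themselves, and not only of the residuals and objective value; this holds here by the compactness of the $U_j$ and the strong convexity of the $u_0$-term.
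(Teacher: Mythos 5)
Your proposal is correct and follows essentially the same route as the paper's proof: recognize \eqref{eq:admm_par} as ADMM on the consensus form \eqref{eq:add_tf_dual_re}, compute that the Lagrangian dual of \eqref{eq:add_tf_dual_re} (with multipliers for $u_0=u_j$) recovers \eqref{eq:add_tf} via the support functions $\sigma_{U_j}$, invoke the standard ADMM dual-iterate convergence result of \citet{boyd2011distributed}, and then change variables $\thetaj=\rho\gamma_j$ together with \eqref{eq:tf_pd} to obtain Algorithm \ref{alg:backfit_par}. Your extra attention to the $\one$-shift degeneracy and the mean-zero representative is a point the paper handles implicitly by working with $M=I-\one\one^T/n$ throughout, but it does not change the argument.
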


\begin{algorithm}[tb]
\caption{Parallel backfitting for additive trend filtering}
\label{alg:backfit_par}
Given responses $Y^i \in \R$, input points $X^i \in \R^d$,
$i=1,\ldots,n$, and $\rho>0$.
\begin{enumerate}
\item Initialize \smash{$u_0^{(0)}=0$},
  \smash{$\thetaj^{(0)}=0$} and \smash{$\thetaj^{(-1)}=0$} for $j=1,\ldots,d$. 
\item For $t=1,2,3,\ldots$ (until convergence): 
\begin{enumerate}
\item[a.] $\displaystyle u_0^{(t)} = 
\frac{1}{\rho d+1} \bigg(Y - \bar{Y}\one -  
\sum_{j=1}^d \thetaj^{(t-1)}\bigg) +
\frac{\rho d}{\rho d+1} \bigg( u_0^{(t-1)} + 
\frac{1}{\rho d} \sum_{j=1}^d
\big(\thetaj^{(t-2)}-\thetaj^{(t-1)} \big)\bigg)$  
\end{enumerate}
\begin{enumerate}
\item[b.] For $j=1,\ldots,d$ (in parallel):
\begin{enumerate}
\item[(i)] $\thetaj^{(t)} = \rho \cdot \TF_\lambda \big( 
u_0^{(t)} + \thetaj^{(t-1)}/\rho, \Xj \big)$
\item[(ii)] (Optional) $\thetaj^{(t)}=\thetaj^{(t)} - \frac{1}{n} 
  \one^T \thetaj^{(t)}$
\end{enumerate}
\end{enumerate}
\item Return $\hthetaj$, $j=1,\ldots,d$ (parameters
  $\thetaj^{(t)}$, $j=1,\ldots,d$ at convergence). 
\end{enumerate}
\end{algorithm}

Written in primal form, we see that the
the parallel backfitting approach in Algorithm \ref{alg:backfit_par}
differs from what may be considered the ``naive'' approach to
parallelizing the usual backfitting iterations in Algorithm
\ref{alg:backfit}.  Consider $\rho=1$.  If we were to replace Step 2a in
Algorithm \ref{alg:backfit_par} with \smash{$u_0^{(t)}=r^{(t-1)}$}, the full
residual  
$$
r^{(t-1)} = Y-\bar{Y}\one-\sum_{j=1}^d \thetaj^{(t-1)},
$$
then the update steps for 
\smash{$\thetaj^{(t)}$, $j=1,\ldots,d$} that follow would be just
given by applying univariate trend filtering to each partial
residual (without sequentially updating the partial residuals
between trend filtering runs).  This naive parallel
method has no convergence guarantees, and can fail even  
in simple practical examples to produce optimal solutions.  
Importantly, Algorithm \ref{alg:backfit_par} does not take
\smash{$u_0^{(t)}$} to  
be the full residual, but as Step 2a shows, uses a less greedy
choice: it basically takes \smash{$u_0^{(t)}$} to be a convex
combination of the residual \smash{$r^{(t-1)}$} and its
previous value \smash{$u_0^{(t-1)}$}, with higher 
weight on the latter. The subsequent parallel updates for   
\smash{$\thetaj^{(t)}$, $j=1,\ldots,d$} are still given
by univariate trend filtering fits, and though these steps do not
exactly use partial residuals (since \smash{$u_0^{(t)}$} is not
exactly the full residual), they are guaranteed to produce additive
trend filtering solutions upon convergence (as per Theorem 
\ref{thm:backfit_par}).   An example of cyclic versus parallelized 
backfitting is given in Appendix \ref{app:experiment_backfitting}.

\section{Experiments}
\label{sec:experiments}

Through empirical experiments, we examine the performance of  
additive trend filtering relative to additive smoothing splines.  We also
examine the efficacy of cross-validation for choosing the tuning  
parameter $\lambda$, as well as the use of multiple tuning parameters    
(i.e., a separate parameter $\lambda_j$ for each component $j=1,\ldots,d$).  
All experiments were performed in R. For the univariate trend filtering solver,
we used the {\tt trendfilter} function in the {\tt glmgen} package, which is an
implementation of the fast ADMM algorithm given in \citet{ramdas2016fast}.  For
the univariate smoothing spline solver, we used the {\tt smooth.spline} function
in base R.  

\subsection{Simulated heterogeneously-smooth data}  
\label{sec:experiment_large_sim}

We sampled $n=2500$ input points in $d=10$ dimensions, 
by assigning the inputs along each dimension
\smash{$\Xj=(X^1_j,\ldots,X^n_j)$} to be a different permutation of
the equally spaced points $(1/n,2/n,\ldots,1)$, for $j=1,\ldots,10$.
For the componentwise trends, we examined sinusoids with
Doppler-like spatially-varying frequencies:
$$
g_{0j}(x_j) = \sin\bigg(\frac{2\pi}{(x_j+0.1)^{j/10}}\bigg), \;\;\;
j=1,\ldots,10. 
$$
We then defined the component functions as
\smash{$f_{0j}=a_jg_{0j}-b_j$}, $j=1,\ldots,d$, where $a_j,b_j$ were
chosen so that \smash{$f_{0j}$} had empirical mean zero and empirical  
norm \smash{$\|f_{0j}\|_n=1$}, for $j=1,\ldots,d$. 
The responses were generated according to
\smash{$Y^i \isim N(\sum_{j=1}^d f_{0j}(X^i_j),\sigma^2)$},
$i=1,\ldots,2500$.  By construction, in this setup, there
is considerable heterogeneity in the levels of smoothness both within
and between the component functions.

The left panel of Figure \ref{fig:large_sim} shows a comparison of the
MSE curves from additive trend filtering in \eqref{eq:add_tf} (of
quadratic order, $k=2$) and additive smoothing splines in
\eqref{eq:add_quad} (of cubic order).  We set $\sigma^2$ in the
generation of the responses so that the signal-to-noise ratio (SNR)
was \smash{$\|f_0\|_n^2/\sigma^2=4$}, where \smash{$f_0=\sum_{j=1}^d  
f_{0j}$}.  The two methods (additive trend filtering and additive
smoothing splines) were each allowed their own sequence of tuning 
parameter values, and results were averaged over 10 repetitions
from the simulation setup described above.  As we can see, additive 
trend filtering achieves a better minimum MSE along its regularization 
path, and does so at a less complex model (lower df).

\begin{figure}[tb]
\centering
\includegraphics[width=0.45\textwidth]{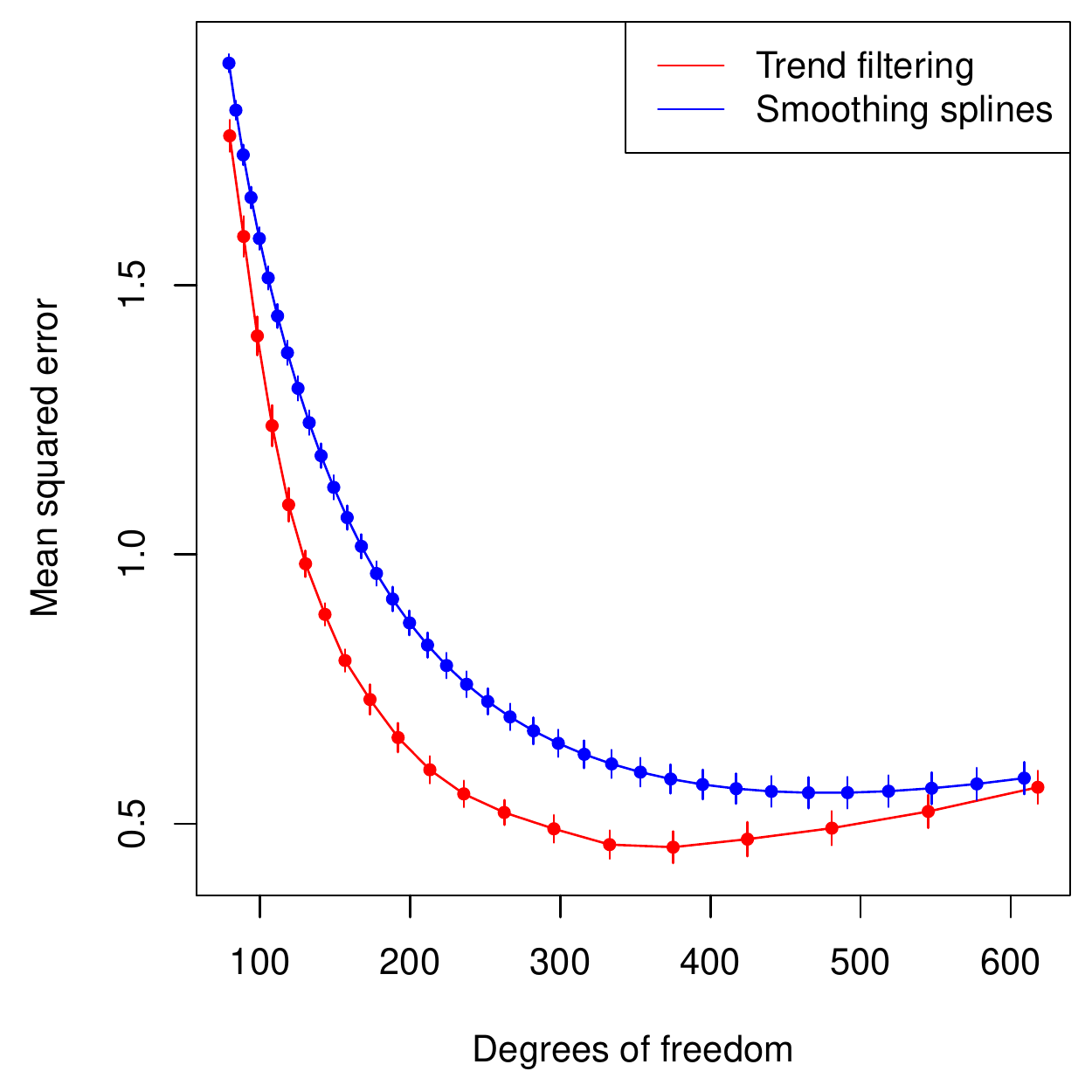} 
\includegraphics[width=0.45\textwidth]{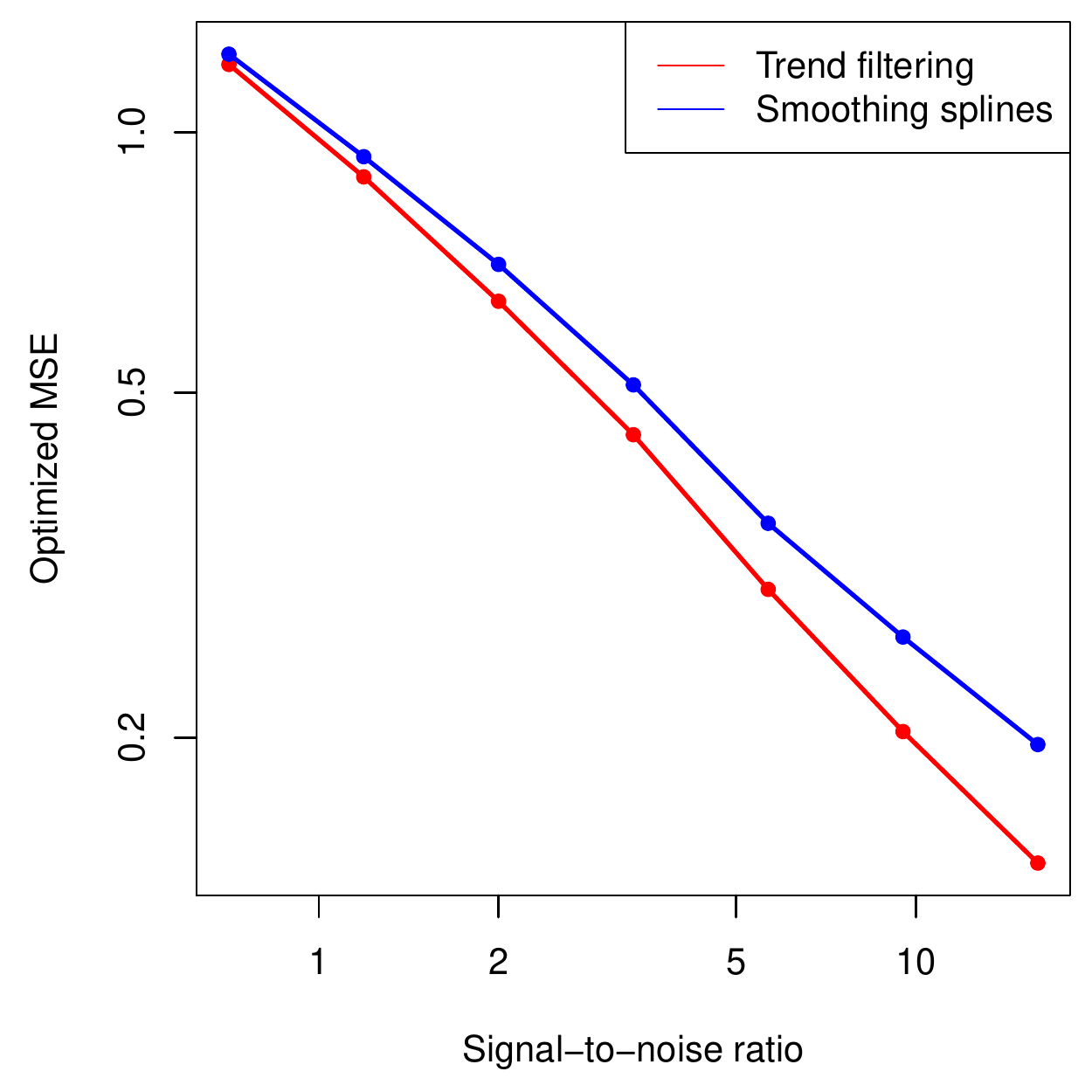} 
\caption{\it\small The left panel shows the MSE curves for 
  additive trend filtering \eqref{eq:add_tf} (of quadratic order) and additive
  smoothing splines \eqref{eq:add_quad} (of cubic order), computed over 10
  repetitions from the heterogeneous smoothness simulation with $n=2500$ and
  $d=10$, described in Section \ref{sec:experiment_large_sim}, where the SNR is 
set to 4. Vertical segments denote $\pm 1$ standard deviations. The right  
panel displays the best-case MSE for each method (the minimum MSE over its  
regularization path), in a problem setup with $n=1000$
and $d=6$, as the signal-to-noise ratio (SNR) varies from 0.7 to 16, in equally 
spaced values on the log scale.} 
\label{fig:large_sim}
\end{figure}
% Plot script: largeSim.R

The right panel of Figure \ref{fig:large_sim} shows the best-case
MSEs for additive trend filtering and additive smoothing splines (i.e.,
the minimum MSE over their regularization paths) as the noise level 
$\sigma^2$ is varied so that the SNR ranges from 0.7 to 1.6, in equally spaced 
values on the log scale.  The results were 
again averaged over 10 repetitions of data drawn from a simulation setup
essentially the same as the one described above, except that we considered a
smaller problem size, with $n=1000$ and $d=6$.  The plot reveals that 
additive trend filtering performs increasingly well (in comparison to
additive smoothing splines) as the SNR grows, which is not surprising,
since for high SNR levels it is able to better capture the heterogeneity in 
the component functions. 

Lastly, in Appendix \ref{app:experiment_large_sim_homo}, we present 
results from an experimental setup mimicking that in this
subsection, except with the component functions  
\smash{$f_{0j}$}, $j=1,\ldots,d$ having homogeneous smoothness
throughout.  The results show that additive trend filtering and
additive smoothing splines perform nearly exactly the same.  

% The high-level message conveyed by these simulations is as follows: 
% additive trend filtering can outperform additive linear smoothers like 
% additive smoothing splines when there is heterogeneity in
% smoothness (either within or between component functions); when
% heterogeneity is not present, its performance does not seem to degrade  
% compared to additive linear smoothers. 

\subsection{Cross-validation for tuning parameter selection}
\label{sec:cv}

Sticking to the simulation setup from the last subsection, but at the
smaller problem size, $n=1000$ and $d=6$ (used to produce the right panel of  
Figure \ref{fig:large_sim}), we study in the left panel of Figure
\ref{fig:large_sim_cv} the use of 5-fold cross-validation (CV) to select  
the tuning parameter $\lambda$ for additive trend filtering and additive  
smoothing splines.  Displayed are the resulting MSE curves as the SNR 
varies from 0.7 to 16.  Also shown on the same plot are the oracle MSE curves
(which are the same as those the right panel of Figure \ref{fig:large_sim}), in
which $\lambda$ has been chosen to minimize the MSE for each method.
We can see that the performance of each method degrades using CV, but not by 
much. 

In the right panel of the figure, we examine the use of multiple tuning
parameters for additive smoothing splines and additive trend filtering, i.e.,
replacing the penalties in \eqref{eq:add_quad} and \eqref{eq:add_tf} by those in
\eqref{eq:multi_lambda}, respectively, so we now have $d$ tuning parameters
$\lambda_j$, $j=1,\ldots,d$.  When the function we are estimating has different
amounts of smoothness along different dimensions, we have argued (and seen
through examples) that additive trend filtering---using only a single tuning
parameter $\lambda$---can accomodate these differences, at least somewhat,
thanks to its locally adaptive nature.  But, when the difference in smoothness
across dimensions is drastic enough, it may be worthwhile to move to multiple
tuning parameters $\lambda_j$, $j=1,\ldots,d$.  Certainly, moving to multiple
tuning parameters will help additive smoothing splines address heterogeneity in 
smoothness across dimensions, which will have greater difficulty in
accommodating such heterogeneity using a single tuning parameter, compared to 
additive trend filtering. 

When $d$ is moderate (even just for $d=6$), cross-validation over a
$d$-dimensional grid of values for $\lambda_j$, $j=1,\ldots,d$ can be
prohibitive.  As discussed in Section \ref{sec:multi_lambda}, there is a lot of
literature dedicated to an alternative approach based on restricted maximum
likelihood (REML), mostly surrounding additive models built from linear
smoothers.  Unfortunately, as far as we understand, REML does not easily apply
to additive trend filtering. We thus use the following simple approach for
multiple tuning parameter selection: within each backfitting loop, for each
component $j=1,\ldots,d$, we use (univariate) CV to choose $\lambda_j$.  While
this does not solve a particular convex optimization problem, and is not
guaranteed to converge in general, we have found it to work quite well in
practice.  (We have helpful to use CV to find the best choice of single tuning
parameter for additive trend filtering, and then use this estimate to initialize
the backfitting routine.) The right panel of Figure \ref{fig:large_sim_cv}
compares the performance of this so-called backfit-CV tuning to the oracle, that
chooses just a single tuning parameter.  Both additive trend filtering and
additive smoothing splines are seen to improve with $d$ tuning parameters, tuned
by backfit-CV, in comparison to the oracle choice of tuning parameter.
Interestingly, we also see that additive smoothing splines with $d$ tuning
parameters performs on par with additive trend filtering with the oracle choice
of tuning parameter.  (In this example, REML tuning for additive smoothing
splines---as implemented by the {\tt mgcv} R package---performed a bit worse
than backfit-CV tuning, and so we only show results from the latter.)

\begin{figure}[tb]
\centering
\includegraphics[width=0.45\textwidth]{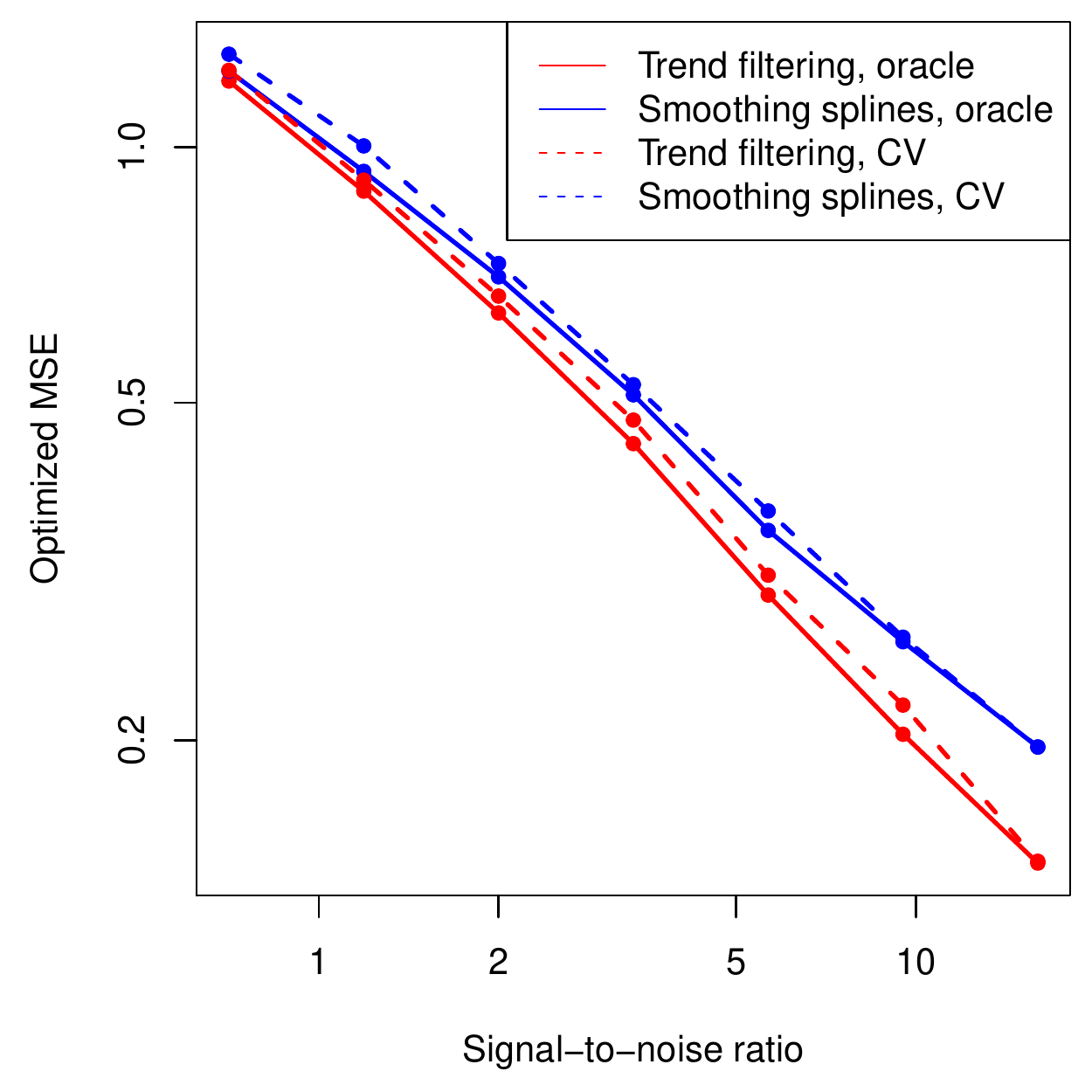} 
\includegraphics[width=0.45\textwidth]{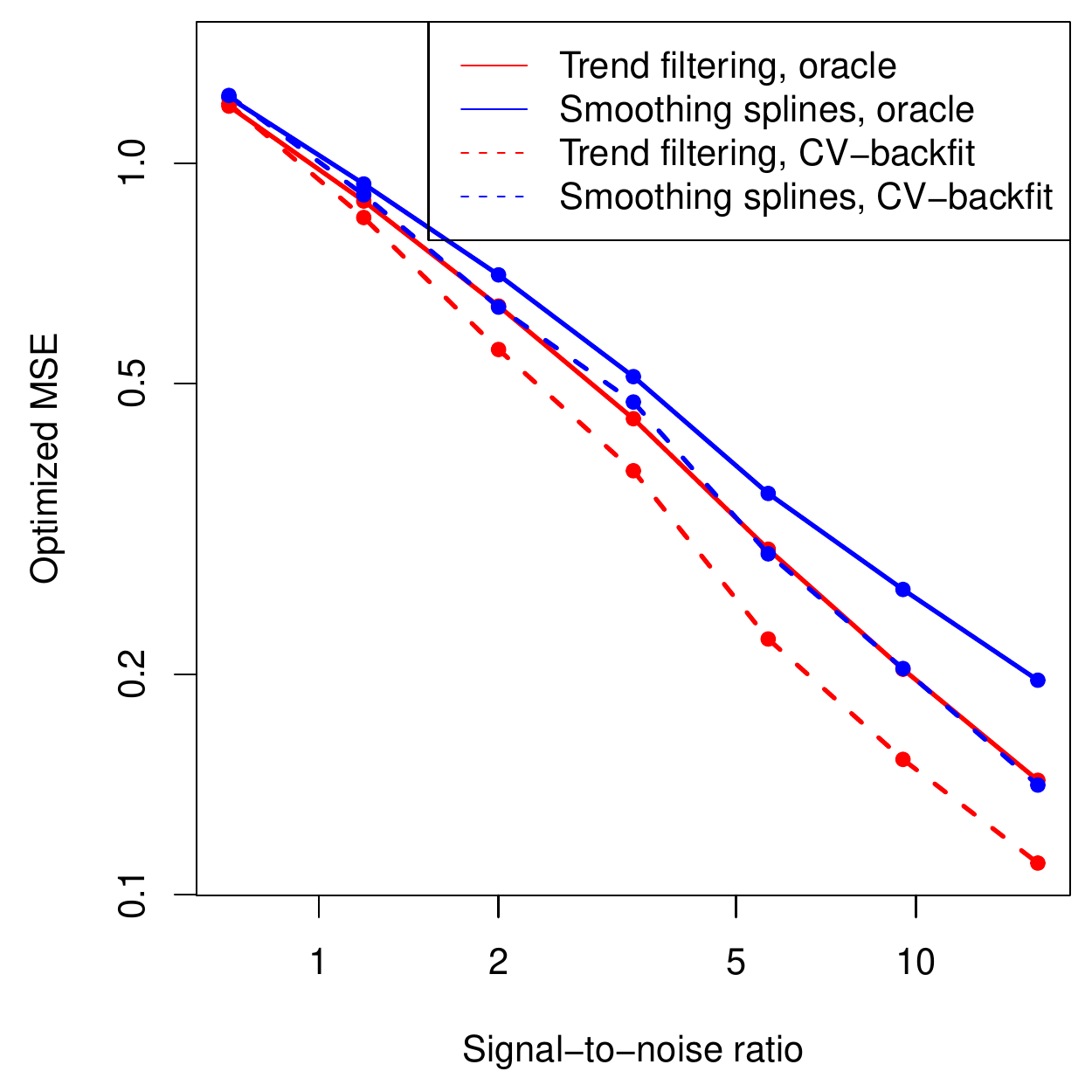} 
\caption{\it\small Both panels display results from the same simulation setup as
  that in the right panel of Figure \ref{fig:large_sim}. The left panel 
  shows MSE curves when the estimators are tuned by 5-fold cross-validation
  (CV), and also by the oracle (reflecting the minimum possible MSE). The
  right panel displays MSE curves when we allow each estimator to have $d$
  tuning  
  parameters, tuned by a hybrid backfit-CV method explained in the text,
  versus the oracle MSE curves for a single tuning parameter.}  
\label{fig:large_sim_cv}
\end{figure}

\section{Discussion} 
\label{sec:discuss}

We have studied additive models built around the univariate trend filtering
estimator, i.e., defined by penalizing according to the sum of $\ell_1$ norms of
discrete derivatives of the component functions.  We examined basic properties
of these additive models, such as extrapolation of the fitted values to a
$d$-dimensional surface, uniqueness of the component fits, and characterization
of effective degrees of freedom of the fit.  When the underlying regression
function is additive, with components whose $k$th derivatives are of bounded
variation, we derived error rates for $k$th order additive trend filtering:
\smash{$n^{-(2k+2)/(2k+3)}$} for a fixed input dimension $d$ (under weak
assumptions), and \smash{$d n^{-(2k+2)/(2k+3)}$} for a growing dimension $d$
(under stronger assumptions).  We showed these rates are sharp by
establishing matching minimax lower bounds, and showed that additive linear
smoothers (e.g., additive smoothing splines) can at best achieve a rate of
\smash{$n^{-(2k+1)/(2k+2)}$} for fixed $d$, and \smash{$dn^{-(2k+1)/(2k+2)}$}
for growing $d$, over the same function class. Finally, on the computational
side, we devised a provably convergent parallel backfitting algorithm for
additive trend filtering.

It is worth noting that our parallel backfitting method is not specific to 
additive trend filtering, but it can be embedded in a more general parallel
coordinate descent framework \citep{tibshirani2017dykstra}.  It 
would be interesting to follow up on this framework to see what practical
benefits it can provide for additive models in particular.  

A natural natural extension of our work is to consider the high-dimensional
case, where $d$ is comparable or possibly even much larger than $n$, and we fit
a {\it sparse additive model} by employing an additional sparsity penalty in
problem \eqref{eq:add_tf}. Another natural extension is to consider responses
$Y^i|X^i$, $i=1,\ldots,n$ from an exponential family distribution, and we fit a 
{\it generalized additive model} by changing the loss in \eqref{eq:add_tf}.
After we completed an initial version of this paper, both extensions have been
pursued: \citet{tan2017penalized} develop a suite of error bounds for sparse
additive models, with various form of penalties (which include total variation
on derivatives of components); and \citet{harris2018sparse} give comprehensive
theory for sparse generalized additive models, with various types of penalties 
(which again include total variation on derivatives of components).

\subsection*{Acknowledgements}

We are very thankful to Garvesh Raskutti for his generous help and insight on
various issues, and Martin Wainwright for generously sharing his unpublished
book with us. We are also grateful to an anonymous referee whose thoughtful
comments improved our paper.

\newpage
\appendix
%\allowdisplaybreaks

\section{Appendix}

\subsection{Fast extrapolation}
\label{app:fastext} 

We discuss extrapolation using the fitted functions
\smash{$\hf_j$}, $j=1,\ldots,d$ from additive trend filtering 
\eqref{eq:add_tf_func_tv}, as in \eqref{eq:fit_j}.  We must compute
the coefficients \smash{$\hat\alpha_j=(\hat{a}_j,\hat{b}_j)$} whose
block form is given in \eqref{eq:a_j}, \eqref{eq:b_j}.  Clearly, the
computation 
of \smash{$\hat{b}_j$} in \eqref{eq:b_j} requires $O(n)$ operations
(owing to the bandedness of \smash{$\Dj$}, and treating $k$ as
a constant).  As for \smash{$\hat{a}_j$} in \eqref{eq:a_j}, it can be
seen from the structure of \smash{$C^{(\Xj,k+1)}$} as described in
\citet{wang2014falling} that 
\begin{gather*}
(\hat{a}_j)_1 = (S_j\hthetaj)_1, \\
(\hat{a}_j)_\ell = \frac{1}{(\ell-1)!} 
\bigg[\diag\bigg(\frac{1}{X_{j}^\ell-X_{j}^1}, \ldots,
\frac{1}{X_{j}^n-X_{j}^{n-\ell+1}}\bigg) 
D^{(\Xj,\ell-1)} S_j \hthetaj\bigg]_1, \;\;\; 
\ell=2,\ldots,k+1, \\
\end{gather*}
which takes only $O(1)$ operations (again treating $k$ as constant,
and now using the bandedness of each \smash{$D_j^{(\Xj,\ell-1)}$},
$\ell=2,\ldots,k+1$).  In total then, computing the coefficients
\smash{$\hat\alpha_j=(\hat{a}_j,\hat{b}_j)$} requires $O(n)$
operations, and computing 
\smash{$\hat\alpha=(\hat\alpha_1,\ldots,\alpha_d)$} requires $O(nd)$
operations. 

After having computed
\smash{$\hat\alpha=(\hat\alpha_1,\ldots,\alpha_d)$}, which only needs
to be done once, a prediction at a new point $x=(x_1,\ldots,x_d) \in
\R^d$ with the additive trend filtering fit \smash{$\hf$} is given by
$$
\hf(x) = \bar{Y} + \sum_{j=1}^d 
\sum_{\ell=1}^n \hat\alpha^\ell_j h_\ell ^{(\Xj)} (x_j),
$$
This requires \smash{$O(d + \sum_{j=1}^d \sum_{\ell=k+2}^n
  1\{\hat{\alpha}^\ell_j \neq 0\})$} operations, utilizing the
sparsity of the components in \smash{$\hat\alpha$} not associated with
the polynomial basis functions.

\subsection{Proof of Lemma \ref{lem:unique}}
\label{app:unique}

We begin by eliminating the constraint in the additive trend filtering 
problem \eqref{eq:add_tf}, rewriting it as
$$
\min_{\theta_1,\ldots,\theta_d \in \R^n} \; 
\half \bigg\| MY - \sum_{j=1}^d M\thetaj \bigg\|_2^2 +  
\lambda \sum_{j=1}^d \big\|\Dj S_j M \thetaj\big\|_1, 
$$
where $M=I-\one\one^T/n$.  Noting that $\Dj\one=0$ for 
$j=1,\ldots,d$, we can replace the penalty term above by
\smash{$\sum_{j=1}^d \|\Dj S_j \thetaj\|_1$}. Reparametrizing using
the falling factorial basis, as in Lemma \ref{lem:fall_fact}, yields
the problem 
$$
\min_{a \in \R^{(k+1)d}, \, b\in\R^{(n-k-1)d}} \;\;\;
\half \bigg\| MY - M \sum_{j=1}^d P_j a_j -  
M \sum_{j=1}^d K_j b_j \bigg\|_2^2 +
\lambda k! \sum_{j=1}^d \|b_j\|_1,
$$
where we have used the abbreviation \smash{$P_j=P^{(\Xj,k)}$} and
\smash{$K_j=K^{(\Xj,k)}$}, as well as the block representation 
\smash{$\alpha_j=(a_j,b_j) \in \R^{(k+1)} \times
  \R^{(n-k-1)}$}, for $j=1,\ldots,d$.  Since each $P_j$, $j=1,\ldots
d$ has $\one$ for its first column, the above 
problem is equivalent to 
$$
\min_{a \in \R^{kd}, \, b\in\R^{(n-k-1)d}} \;\;\;
\half \bigg\| MY - M \sum_{j=1}^d \tilde{P}_j a_j -   
M \sum_{j=1}^d K_j b_j \bigg\|_2^2 +
\lambda k! \sum_{j=1}^d \|b_j\|_1,
$$
where \smash{$\tilde{P}_j$} denotes $P_j$ with the first column
removed, for $j=1,\ldots,d$.  To be clear, solutions in the above
problem and the original trend filtering formulation \eqref{eq:add_tf} 
are related by 
$$
\hthetaj = \tilde{P}_j\hat{a}_j+K_j\hat{b}_j, \;\;\; 
j=1,\ldots,d.
$$
Furthermore, we can see that
\smash{$\hat{a}=(\hat{a}_1,\ldots\hat{a}_d)$} solves 
\begin{equation}
\label{eq:a_min}
\min_{a \in \R^{kd}} \; \half 
\bigg\|\bigg(MY - M \sum_{j=1}^d K_j \hat{b}_j \bigg) 
- \tilde{P} a \bigg\|_2^2,
\end{equation}
where \smash{$\tilde{P}$} is as defined in \eqref{eq:ptil}, 
and \smash{$\hat{b}=(\hat{b}_1,\ldots,\hat{b}_d)$} solves  
$$
\min_{b \in \R^{(n-k-1)d}} \; \half 
\bigg\|UU^T MY - UU^T M \sum_{j=1}^d K_j b_j \bigg\|_2^2 + 
\lambda k!  \|b\|_1,  
$$
where $UU^T$ is the projection orthogonal to the column space of 
\smash{$\tilde{P}$}, i.e., it solves 
\begin{equation}
\label{eq:b_min}
\min_{b \in \R^{(n-k-1)d}} \; \half 
\big\|U^T MY - \tilde{K} b \big\|_2^2 + 
\lambda k!  \|b\|_1,  
\end{equation}
where \smash{$\tilde{K}$} is as in \eqref{eq:ktil}.
Since problem \eqref{eq:b_min} is a standard lasso problem,  
existing results on the lasso (e.g., \citet{tibshirani2013lasso}) imply that  
the solution \smash{$\hat{b}$} is unique whenever
\smash{$\tilde{K}$} has columns in general position.  This proves the
first part of the lemma. 
For the second part of the lemma, note that the solution
\smash{$\hat{a}$} in the least squares problem \eqref{eq:a_min}
is just given by the regression of 
\smash{$MY - M \sum_{j=1}^d K_j \hat{b}_j$} onto 
\smash{$\tilde{P}$}, which is unique whenever \smash{$\tilde{P}$} has
full column rank.  This completes the proof.

\subsection{Derivation of additive trend filtering dual}
\label{app:dual}

As in the proof of Lemma~\ref{lem:unique}, we begin by rewriting 
the problem \eqref{eq:add_tf} as
$$
\min_{\theta_1,\ldots,\theta_d \in \R^n} \; 
\half \bigg\| MY - \sum_{j=1}^d M\thetaj \bigg\|_2^2 +  
\lambda \sum_{j=1}^d \|D_j S_j M \thetaj\|_1, 
$$
where $M=I-\one\one^T/n$. Then, we reparametrize
the above problem,
$$
\begin{alignedat}{2}
&\ccol{\min_{\substack{\theta_1,\ldots,\theta_d \in \R^n \\  
 w \in \R^n, \; z \in \R^{md}}}} \quad &&\half \|MY - w \|_2^2 +     
\lambda \sum_{j=1}^d \|z_j\|_1 \\ 
&\ccol{\st} \quad &&w = \sum_{j=1}^d M\thetaj, \;\;\;  
z_j=D_j S_j M \thetaj, \;\;\; j=1,\ldots,d,
\end{alignedat}
$$
and form the Lagrangian
$$
L(\theta,w,z,u,v) = \half \|MY - w \|_2^2 +   
\lambda \sum_{j=1}^d \|z_j \|_1
+ u^T \bigg(w - \sum_{j=1}^d M\thetaj\bigg)
+ \sum_{j=1}^d v_j^T (D_jS_j M \thetaj - z_j).
$$
Minimizing the Lagrangian $L$ over all $\theta,z$ yields the dual
problem 
$$
\begin{alignedat}{2}
&\ccol{\max_{\substack{u \in \R^n \\ v_1,\ldots,v_d \in \R^m}}} \quad 
&&\half \|MY\|_2^2 - \half\|MY-u\|_2^2 \\
&\ccol{\st}\quad &&u = S_jD_j^T v_j, \;\;\; \|v_j\|_\infty \leq
\lambda, \;\;\; j=1,\ldots,d.
\end{alignedat}
$$
The claimed dual problem \eqref{eq:add_tf_dual} is just the
above, rewritten in an equivalent form.

\subsection{Proof of Lemma \ref{lem:df}}
\label{app:df}

We first eliminate the equality constraint in \eqref{eq:add_tf}, 
rewriting this problem, as was done in the proof of Lemma
\ref{lem:unique}, as  
$$
\min_{\theta_1,\ldots,\theta_d \in \R^d} \; 
\half \bigg\| MY - \sum_{j=1}^d M\thetaj \bigg\|_2^2 +  
\lambda \sum_{j=1}^d \|D_j S_j \thetaj\|_1, 
$$
where $M=I-\one\one^T/n$, and \smash{$D_j=\Dj$}, $j=1,\ldots,d$.  This
is a generalized lasso problem with a design matrix 
$T \in \R^{n\times nd}$ that has $d$ copies of $M$ stacked along its
columns, and a penalty matrix $D \in \R^{nd\times nd}$ that is block 
diagonal in the blocks $D_j$, $j=1,\ldots,d$.  Applying Theorem 3 of
\citet{tibshirani2012degrees}, we see that
$$
\df(T\htheta) = \E\big[ \dim\big(T \nul(D_{-A})\big) \big],
$$
where \smash{$A = \supp(D\htheta)$}, and 
where $D_{-A}$ denotes the matrix $D$ with rows removed that
correspond to the set $A$.  The conditions for uniqueness in the lemma
now precisely imply that 
$$
\dim\big(T \nul(D_{-A})\big) = \bigg(\sum_{j=1}^d |A_j|\bigg) + kd, 
$$
where $A_j$ denotes the subset of $A$ corresponding to the block of
rows occupied by $D_j$, and $|A_j|$ its cardinality, for $j=1,\ldots,
d$.  This can be verified by transforming to the basis perspective as
utilized in the proofs of Lemmas \ref{lem:fall_fact} and
\ref{lem:unique}.  The desired result is obtained by noting that, for  
$j=1,\ldots,d$, the component \smash{$\hthetaj$} exhibits a knot for
each element in $A_j$.  

\subsection{Preliminaries for the proof of Theorem \ref{thm:error_bd_j}}
\label{app:error_bd_j_prelim}

Before the proof of Theorem \ref{thm:error_bd_j}, we collect important
preliminary results.  We start with a result on orthonormal polynomials.  We
thank Dejan Slepcev for his help with the next lemma.

\begin{lemma}
\label{lem:phi_bdry}
Given an integer $\kappa \geq 0$, and a continuous measure $\Lambda$ on
$[0,1]$, whose Radon-Nikodym derivative $\lambda$ is bounded below and above by   
constants $b_1,b_2>0$, respectively.  Denote by $\phi_0,\ldots,\phi_\kappa$
an orthonormal basis for the space of polynomials of degree $\kappa$ on $[0,1]$,  
given by running the Gram-Schmidt procedure on the
polynomials $1,t,\ldots,t^\kappa$, with respect to the $L_2(\Lambda)$ inner
product. Hence, for $\ell=0,\ldots,\kappa$, $\phi_\ell$ is an $\ell$th degree
polynomial, orthogonal (in $L_2(\Lambda)$) to all polynomials of degree less than
$\ell$, and we denote its leading coefficient by $a_\ell > 0$. Now define, for
$t\in [0,1]$: 
\begin{align*}
\Phi_{\kappa,0}(t) &= \phi_\kappa(t) \lambda(t),\\
\Phi_{\kappa,\ell+1}(t) &= \int_0^t \Phi_{\kappa,\ell} (u) \, du,
 \;\;\; \ell=0,\ldots,\kappa.
\end{align*}
Then the following two relations hold:
\begin{equation}
\label{eq:phi_bdry}
 \Phi_{\kappa,\ell}(1) = 
\begin{cases}
0 & \text{for $\ell=1,\ldots,\kappa$}, \\
\frac{(-1)^\kappa}{a_\kappa \kappa!} & \text{for $\ell=\kappa+1$},  
\end{cases}
\end{equation}
and 
\begin{equation}
\label{eq:phi_abs}
a_\kappa \kappa!
|\Phi_{\kappa,\kappa}(t)| \leq
{2\kappa \choose \kappa} \sqrt{\frac{b_2}{b_1}},  
\;\;\; t \in [0,1].
\end{equation}
\end{lemma}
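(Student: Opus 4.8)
The plan is to treat the two relations separately, both resting on Cauchy's formula for repeated integration: for $\ell\ge 1$,
$\Phi_{\kappa,\ell}(t)=\frac{1}{(\ell-1)!}\int_0^t (t-u)^{\ell-1}\phi_\kappa(u)\lambda(u)\,du$, which follows by induction on $\ell$ from the definition. For \eqref{eq:phi_bdry} I would evaluate at $t=1$, writing $\Phi_{\kappa,\ell}(1)=\frac{1}{(\ell-1)!}\langle (1-\cdot)^{\ell-1},\phi_\kappa\rangle_{L_2(\Lambda)}$. When $1\le\ell\le\kappa$ the function $u\mapsto(1-u)^{\ell-1}$ is a polynomial of degree $\ell-1\le\kappa-1$, hence $L_2(\Lambda)$-orthogonal to $\phi_\kappa$, so the inner product is zero. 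When $\ell=\kappa+1$, expand $(1-u)^\kappa=(-1)^\kappa u^\kappa+(\text{degree}<\kappa)$ and $u^\kappa=a_\kappa^{-1}\phi_\kappa(u)+(\text{degree}<\kappa)$ (valid since $a_\kappa^{-1}\phi_\kappa$ is monic of degree $\kappa$); orthogonality kills every term except $(-1)^\kappa a_\kappa^{-1}\langle\phi_\kappa,\phi_\kappa\rangle_{L_2(\Lambda)}=(-1)^\kappa a_\kappa^{-1}$, which gives $\Phi_{\kappa,\kappa+1}(1)=(-1)^\kappa/(a_\kappa\kappa!)$.

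For the pointwise bound \eqref{eq:phi_abs} I would assume $\kappa\ge 1$ (the case $\kappa=0$ being immediate from $\Phi_{0,0}=\phi_0\lambda$) and rewrite $\Phi_{\kappa,\kappa}(t)=\langle g_t,\phi_\kappa\rangle_{L_2(\Lambda)}$ with $g_t(u)=((t-u)_+)^{\kappa-1}/(\kappa-1)!$. Since $\phi_\kappa$ is $L_2(\Lambda)$-orthogonal to all polynomials of degree $\le\kappa-1$, we may replace $g_t$ by $g_t-p$ for any such polynomial $p$; Cauchy--Schwarz together with $\|\phi_\kappa\|_{L_2(\Lambda)}=1$ then yields $|\Phi_{\kappa,\kappa}(t)|\le\mathrm{dist}_{L_2(\Lambda)}(g_t,\mathcal{P}_{\kappa-1})\le\sqrt{b_2}\,\mathrm{dist}_{L_2[0,1]}(g_t,\mathcal{P}_{\kappa-1})$, where $\mathcal{P}_{\kappa-1}$ denotes polynomials of degree $\le\kappa-1$. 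To bound the Lebesgue distance I would pass to its dual form $\sup\{\langle g_t,\psi\rangle: \|\psi\|_{L_2[0,1]}=1,\ \psi\perp\mathcal{P}_{\kappa-1}\}$; for such $\psi$, because $u\mapsto(t-u)^{\kappa-1}$ is a polynomial of degree $\kappa-1$, $\langle g_t,\psi\rangle$ equals both $\frac{1}{(\kappa-1)!}\int_0^t(t-u)^{\kappa-1}\psi(u)\,du$ and $-\frac{1}{(\kappa-1)!}\int_t^1(t-u)^{\kappa-1}\psi(u)\,du$. Applying Cauchy--Schwarz to the first on $[0,t]$ and to the second on $[t,1]$ gives $|\langle g_t,\psi\rangle|\le\alpha\|\psi\|_{L_2[0,t]}$ and $|\langle g_t,\psi\rangle|\le\beta\|\psi\|_{L_2[t,1]}$ with $\alpha=t^{\kappa-1/2}/((\kappa-1)!\sqrt{2\kappa-1})$, $\beta=(1-t)^{\kappa-1/2}/((\kappa-1)!\sqrt{2\kappa-1})$; combining the two with the convex weight $\beta^2/(\alpha^2+\beta^2)$ and using $\|\psi\|_{L_2[0,t]}^2+\|\psi\|_{L_2[t,1]}^2=1$ produces $|\langle g_t,\psi\rangle|\le\alpha\beta/\sqrt{\alpha^2+\beta^2}$. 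A short check---numerator maximized and denominator minimized at $t=1/2$, the latter by convexity of $x\mapsto x^{2\kappa-1}$---shows this is at most $2^{-\kappa}/((\kappa-1)!\sqrt{2\kappa-1})$, and finally $2^{-\kappa}/((\kappa-1)!\sqrt{2\kappa-1})\le 1/(\kappa!\sqrt{2\kappa+1})$ because $\kappa^2 2^{-2\kappa}(2\kappa+1)\le 2\kappa-1$ for every $\kappa\ge1$. Thus $\kappa!\,|\Phi_{\kappa,\kappa}(t)|\le\sqrt{b_2}/\sqrt{2\kappa+1}$.

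The last ingredient is an upper bound on $a_\kappa$. The polynomial $q_\kappa:=a_\kappa^{-1}\phi_\kappa$ is monic of degree $\kappa$ and, being proportional to the $\kappa$th orthogonal polynomial for $L_2(\Lambda)$, minimizes $\|\cdot\|_{L_2(\Lambda)}$ over all monic degree-$\kappa$ polynomials; hence $a_\kappa^{-2}=\|q_\kappa\|_{L_2(\Lambda)}^2\ge b_1\|q_\kappa\|_{L_2[0,1]}^2\ge b_1\big(\binom{2\kappa}{\kappa}\sqrt{2\kappa+1}\big)^{-2}$, using the classical fact that the minimal $L_2[0,1]$ norm of a monic degree-$\kappa$ polynomial is $1/(\binom{2\kappa}{\kappa}\sqrt{2\kappa+1})$, attained by the monic shifted Legendre polynomial. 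This gives $a_\kappa\le\binom{2\kappa}{\kappa}\sqrt{(2\kappa+1)/b_1}$, and multiplying by $\kappa!\,|\Phi_{\kappa,\kappa}(t)|\le\sqrt{b_2}/\sqrt{2\kappa+1}$ yields exactly $a_\kappa\kappa!\,|\Phi_{\kappa,\kappa}(t)|\le\binom{2\kappa}{\kappa}\sqrt{b_2/b_1}$. I expect the main obstacle to be the estimate on $\mathrm{dist}_{L_2[0,1]}(g_t,\mathcal{P}_{\kappa-1})$: the naive Cauchy--Schwarz bound using only the integral over $[0,t]$ (or only over $[t,1]$) loses a factor that does not close against the $a_\kappa$ bound, so the device of writing the inner product in two equivalent ways and interpolating between the two endpoint estimates is essential, as is the point that it suffices to bound the $L_2$ distance rather than to compute it exactly.
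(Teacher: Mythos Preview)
Your proof is correct, and for the boundary identity \eqref{eq:phi_bdry} it follows the paper's argument essentially verbatim: Cauchy's repeated-integration formula, then orthogonality of $\phi_\kappa$ to polynomials of lower degree.

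For the pointwise estimate \eqref{eq:phi_abs} you take a genuinely different and sharper route. The paper applies Cauchy--Schwarz directly to $\int_0^t |\phi_\kappa(u)|\,\frac{(t-u)^{\kappa-1}}{(\kappa-1)!}\,\lambda(u)\,du$, splitting the weight $\lambda$ between the two factors and using $\|\phi_\kappa\|_{L_2(\Lambda)}=1$ together with $\lambda\le b_2$; this yields $|\Phi_{\kappa,\kappa}(t)|\le \sqrt{b_2}\big/\big((\kappa-1)!\sqrt{2\kappa-1}\big)$ and, after combining with the same Legendre-polynomial bound on $a_\kappa$ that you use, produces $\binom{2\kappa}{\kappa}\sqrt{b_2/b_1}$ times the extra factor $\kappa\sqrt{(2\kappa+1)/(2\kappa-1)}$. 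Your approach instead exploits the orthogonality of $\phi_\kappa$ to $\cP_{\kappa-1}$ to replace the truncated-power kernel $g_t$ by its distance to $\cP_{\kappa-1}$, and then bounds that distance by writing $\langle g_t,\psi\rangle$ in two equivalent ways (over $[0,t]$ and $[t,1]$) and interpolating between the two Cauchy--Schwarz estimates. This extra work buys you the stated constant on the nose: your bound $\kappa!\,|\Phi_{\kappa,\kappa}(t)|\le \sqrt{b_2}/\sqrt{2\kappa+1}$ multiplies against $a_\kappa\le \binom{2\kappa}{\kappa}\sqrt{(2\kappa+1)/b_1}$ to give exactly $\binom{2\kappa}{\kappa}\sqrt{b_2/b_1}$, with no leftover factor. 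The paper's argument is shorter but looser; yours is longer but tight.
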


\begin{proof}
First, we use induction to show that for $t \in [0,1]$,
\begin{align}
\label{eq:phi_integral}
\Phi_{\kappa,\ell}(t) = \int_0^t \phi_\kappa(u)
  \frac{(t-u)^{\ell-1}}{(\ell-1)!} \lambda(u)\, du, \;\;\;
\ell=1,\ldots,\kappa+1.
\end{align}
This statement holds for $\ell=1$ by definition of 
$\Phi_{\kappa,0},\Phi_{\kappa,1}$.  Assume it holds at some
$\ell>1$. Then 
\begin{align*}
\Phi_{\kappa,\ell+1}(t)
 %&= \int_0^t \Phi_{\kappa,\ell}(u) \, du \\  
&= \int_0^t \int_0^u \phi_\kappa(v)
  \frac{(u-v)^{\ell-1}}{(\ell-1)!} \lambda(v)\, dv  \, du \\  
% &= \int_0^t \int_v^t \phi_\kappa(v)
%   \frac{(u-v)^{\ell-1}}{(\ell-1)!} \lambda(v)\, du  \, dv \\ 
&= \int_0^t \phi_\kappa(v) \bigg( \int_v^t
  \frac{(u-v)^{\ell-1}}{(\ell-1)!} \, du\bigg) \lambda(v) \, dv \\  
&= \int_0^t \phi_\kappa(v) \frac{(t-v)^{\ell}}{\ell!} \lambda(v) \, dv,      
\end{align*}
where we used inductive hypothesis in the first line and Fubini's
theorem in the second line, which completes the inductive proof.

Now, the relation in
\eqref{eq:phi_integral} shows that $\Phi_{\kappa,\ell}(1)$ is the
$L_2(\Lambda)$ inner product of $\phi_\kappa$ and an $(\ell-1)$st  
degree polynomial, for $\ell=1,\ldots,\kappa$. As $\phi_\kappa$ is 
orthogonal to all polynomials of degree less than $\kappa$, we have
$\Phi_{\kappa,\ell}(1)=0$, $\ell=1,\ldots,\kappa$. For
$\ell=\kappa+1$, note that this same orthogonality along with 
\eqref{eq:phi_integral} also shows   
$$
\Phi_{\kappa,\kappa+1}(1) = \bigg\langle \phi_\kappa,
  \frac{(-1)^\kappa }{a_\kappa \kappa!} \phi_\kappa
\bigg\rangle_2 =  \frac{(-1)^\kappa }{a_\kappa \kappa!}.  
$$
where $\langle \cdot,\cdot \rangle_2$ is the $L_2(\Lambda)$ inner product. 
This establishes the statement in \eqref{eq:phi_bdry}.

As for \eqref{eq:phi_abs}, note that if $\kappa=0$ then the
statement holds, because the left-hand side is 1 and the right-hand side is 
always larger than 1.   Hence consider $\kappa \geq 1$. From
\eqref{eq:phi_integral}, we have, for any $t \in [0,1]$,
\begin{align}
\nonumber
|\Phi_{\kappa,\kappa}(t)| 
&\leq \int_0^t |\phi_\kappa(u)|
  \frac{(t-u)^{\kappa-1}}{(\kappa-1)!} \lambda(u)\, du \\
\nonumber
& \leq \bigg(\int_0^t  \phi_\kappa^2(u) \lambda(u) \, du \bigg)^{1/2}
\bigg(\int_0^t \frac{(t-u)^{2\kappa-2}}{(\kappa-1)!^2} \lambda(u) \, du
  \bigg)^{1/2} \\ 
\label{eq:phi_abs_bd}
&\leq \frac{\sqrt{b_2}}{(\kappa-1)!\sqrt{2\kappa -1}},
\end{align}
where in the second line we used Cauchy-Schwartz, 
and in the third line we used the fact that $\phi_\kappa$ has unit
norm, and the upper bound $b_2$ on $\lambda$. 
Next we bound $a_\kappa$. Let $p$ be the projection of $x^{\kappa}$
onto the space of polynomials of degree $\kappa-1$, with respect to the
$L_2(\Lambda)$ inner product.  Then we have
\smash{$\phi_\kappa=(x^\kappa-p)/\|x^\kappa-p\|_2$}, thus its
leading coefficient is
\smash{$a_\kappa = 1/\|x^\kappa-p\|_2$}, where
$\|\cdot\|_2$ is the $L_2(\Lambda)$ norm. Consider 
\begin{align}
\nonumber
\|x^\kappa-p\|_2 
%&= \bigg(\int_0^1 (x^\kappa-p)^2(t) \lambda(t) \,  dt \bigg)^{1/2} \\ 
%\nonumber
&\geq \sqrt{b_1} \bigg(\int_0^1 (x^\kappa-p)^2(t) \, dt\bigg)^{1/2} 
\label{eq:phi_ak_bd} \\
&\geq \sqrt{b_1} \bigg(\int_0^1 P_\kappa^2(t) \, dt\bigg)^{1/2} 
= \frac{\sqrt{b_1}}{\sqrt{2\kappa+1}{2\kappa \choose \kappa}}.    
\end{align}
In the first line we used the lower bound $b_1$ on $\lambda$.
In the second we used the fact  
the the Legendre polynomial $P_\kappa$ of degree $\kappa$, shifted to
$[0,1]$ but unnormalized, is the result from projecting out
$1,t,\ldots,t^{\kappa-1}$ from $t^\kappa$, with respect to the uniform 
measure.  In the last step we used the fact that $P_\kappa$ has norm  
\smash{$1/(\sqrt{2\kappa+1}{2\kappa \choose \kappa})$}.  Combining  
\eqref{eq:phi_abs_bd} and \eqref{eq:phi_ak_bd} gives the 
result \eqref{eq:phi_abs}.
\end{proof}

\begin{remark}[\textbf{Special case: uniform measure and Rodrigues' 
    formula}] In the case of $\Lambda=U$, the uniform measure on
  $[0,1]$, we can just take $\phi_0,\ldots,\phi_\kappa$ to be the Legendre    
polynomials, shifted to $[0,1]$ and normalized appropriately.
Invoking Rodrigues' formula to express these functions,  
$$
\phi_\ell(t) = \frac{\sqrt{2\ell+1}}{\ell!}
\frac{d^\ell}{dt^\ell} 
(t^2-t)^\ell, \;\;\; \ell=0,\ldots\kappa,
$$
the results in Lemma \ref{lem:phi_bdry} can be directly verified.   
% In fact, in our eventual useage of Lemmas \ref{lem:phi_bdry} and
% \ref{lem:pi_perp_sup_bd} (in what follows), we will only need to consider the   
% uniform measure $\Lambda=U$; however, we keep the results in Lemmas
% \ref{lem:phi_bdry} and \ref{lem:pi_perp_sup_bd} phrased for a general
% measure $\Lambda$, because they seem interesting in their own right. 
\end{remark}

We use Lemma \ref{lem:phi_bdry} to construct a sup norm bound on 
functions in $B_J(1)$ that are orthogonal (in $L_2(\Lambda)$) to all polynomials
of degree $k$.  We again Dejan Slepcev for his help with the next lemma.

\begin{lemma}
\label{lem:pi_perp_sup_bd}
Given an integer $k \geq 0$, and a continuous measure $\Lambda$ on
$[0,1]$, whose Radon-Nikodym derivative $\lambda$ is bounded below and above  
by constants $b_1,b_2>0$, respectively.  Let $J$ be a functional satisfying
Assumptions \ref{as:poly_null} and \ref{as:deriv_bd}, for a constant
$L>0$.  There is a constant $R_0 > 0$, that depends only on
$k,b_1,b_2,L$, such that   
$$
\|g\|_\infty \leq R_0, \;\;\; 
\text{for all $g \in B_J(1)$, such that
$\langle g,p\rangle_2=0$ for all polynomials $p$ of degree $k$},
$$
where $\langle \cdot,\cdot \rangle_2$ denotes the $L_2(\Lambda)$ inner product. 
\end{lemma}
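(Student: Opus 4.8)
The plan is to decompose $g$ as a low-degree polynomial plus a smooth remainder, bound each piece separately, and add the bounds. The first and main step is to control $g^{(k)}$ in sup norm; the second is to recover $g$ itself from that. For the first step I would apply Lemma \ref{lem:phi_bdry} with $\kappa = k$ and with $\Lambda$ the given measure, producing orthonormal polynomials $\phi_0,\dots,\phi_k$ and the iterated antiderivatives $\Phi_{k,0},\dots,\Phi_{k,k+1}$ of $\Phi_{k,0}=\phi_k\lambda$. Since $J(g)\le 1$, Assumption \ref{as:deriv_bd} tells us $g^{(k)}$ has oscillation at most $L$, hence $g^{(k)}\in L^\infty$, which in turn forces $g,g',\dots,g^{(k-1)}$ to be absolutely continuous. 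Integrating $\int_0^1 g\,\Phi_{k,0}$ by parts $k$ times and using $\Phi_{k,\ell}(0)=0$ (by construction) together with $\Phi_{k,\ell}(1)=0$ for $\ell=1,\dots,k$ (the first part of \eqref{eq:phi_bdry}) makes all boundary terms vanish, leaving
\[
0 = \langle g,\phi_k\rangle_2 = \int_0^1 g(t)\,\Phi_{k,0}(t)\,dt = (-1)^k \int_0^1 g^{(k)}(t)\,\Phi_{k,k}(t)\,dt,
\]
where the first equality is the hypothesis that $g$ is orthogonal in $L_2(\Lambda)$ to all polynomials of degree at most $k$. Writing $m = \essinf_{[0,1]} g^{(k)}$, so that $0\le g^{(k)}-m\le L$ a.e., and using $\int_0^1 \Phi_{k,k} = \Phi_{k,k+1}(1) = (-1)^k/(a_k k!)$ from \eqref{eq:phi_bdry}, I can solve for $m$ and estimate
\[
|m| = a_k k!\,\Big|\int_0^1 (g^{(k)}(t)-m)\,\Phi_{k,k}(t)\,dt\Big| \le L\int_0^1 a_k k!\,|\Phi_{k,k}(t)|\,dt \le L\binom{2k}{k}\sqrt{b_2/b_1},
\]
the last inequality by \eqref{eq:phi_abs}. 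Hence $\|g^{(k)}\|_\infty \le |m| + L =: B_k$, a constant depending only on $k,b_1,b_2,L$.

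For the second step I would use Taylor's theorem with integral remainder to write $g = p + r$, where $p$ is the degree-$(k-1)$ Taylor polynomial of $g$ at $0$ and $r(x) = \int_0^x \tfrac{(x-t)^{k-1}}{(k-1)!}\,g^{(k)}(t)\,dt$, so that $\|r\|_\infty \le B_k/k!$. Since $p$ has degree at most $k-1 \le k$, the orthogonality hypothesis gives $\langle g,p\rangle_2 = 0$, hence $\|p\|_{L_2(\Lambda)}^2 = -\langle r,p\rangle_2 \le \|r\|_{L_2(\Lambda)}\|p\|_{L_2(\Lambda)}$, so $\|p\|_{L_2(\Lambda)} \le \|r\|_{L_2(\Lambda)} \le \sqrt{b_2}\,\|r\|_\infty \le \sqrt{b_2}\,B_k/k!$. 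Using $\lambda \ge b_1$ we get $\|p\|_{L_2(\Lambda)}^2 \ge b_1 \int_0^1 p^2$, and on the finite-dimensional space of polynomials of degree at most $k-1$ the sup norm is equivalent to the Lebesgue $L_2$ norm with a constant $\gamma_k$ depending only on $k$; combining these gives $\|p\|_\infty \le \gamma_k\, b_1^{-1/2} b_2^{1/2}\, B_k/k!$. Therefore $\|g\|_\infty \le \|p\|_\infty + \|r\|_\infty =: R_0$, which depends only on $k,b_1,b_2,L$, proving the lemma. (When $k=0$ there is no polynomial part and the bound follows directly from Step 1.)

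The main obstacle is the first step: the decisive maneuver is the $k$-fold integration by parts, which is precisely what converts the single scalar relation $\langle g,\phi_k\rangle_2 = 0$ into a usable linear constraint on $g^{(k)}$. It works only because Lemma \ref{lem:phi_bdry} was engineered so that every boundary term at $t=1$ vanishes (the relations $\Phi_{k,\ell}(1)=0$, $\ell=1,\dots,k$) while the surviving kernel $a_k k!\,\Phi_{k,k}$ is uniformly bounded in terms of $b_1,b_2$ only. A secondary care point is justifying the integration by parts at the level of weak derivatives, which is legitimate because $g^{(k)}\in L^\infty$ makes $g,\dots,g^{(k-1)}$ absolutely continuous; and throughout one must verify that every constant introduced ($B_k$, $\gamma_k$, $R_0$) depends only on $k,b_1,b_2,L$ and not on $g$ or $\Lambda$ beyond its density bounds.
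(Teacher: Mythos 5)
Your proof is correct, and the first half coincides with the paper's: both convert the single relation $\langle g,\phi_k\rangle_2=0$ into $\int_0^1 g^{(k)}\,\Phi_{k,k}=0$ via $k$-fold integration by parts (the boundary terms vanishing by \eqref{eq:phi_bdry}), and both then invoke Assumption \ref{as:deriv_bd} together with \eqref{eq:phi_abs} to bound $\|g^{(k)}\|_\infty$ by a constant depending only on $k,b_1,b_2,L$. Where you genuinely diverge is in descending from $g^{(k)}$ back to $g$. The paper uses the \emph{entire} family of orthogonality relations $\langle g,\phi_\ell\rangle_2=0$, $\ell=0,\ldots,k$, each converted (via Lemma \ref{lem:phi_bdry} with $\kappa=\ell$) into a weighted mean-zero condition $\int_0^1 g^{(\ell)} w_\ell=0$ with $\int_0^1 w_\ell=1$, and then runs a downward induction on the derivative order, bounding the oscillation of $g^{(\ell-1)}$ by $\|g^{(\ell)}\|_\infty$; this yields the explicit constant $R_0 = L(b_2/b_1)^{(k+1)/2}\prod_{i=0}^k\binom{2i}{i}$. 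You instead spend only one more orthogonality relation, $\langle g,p\rangle_2=0$ with $p$ the degree-$(k-1)$ Taylor polynomial of $g$ at $0$, and combine Taylor's integral remainder bound with Cauchy--Schwarz, the density bounds $b_1\le\lambda\le b_2$, and equivalence of norms on the finite-dimensional polynomial space. Both arguments are sound and produce constants depending only on $k,b_1,b_2,L$; the paper's induction gives a cleaner closed-form constant and avoids appealing to an unspecified norm-equivalence constant $\gamma_k$, while yours is shorter and uses less of the structure of Lemma \ref{lem:phi_bdry} (only the $\kappa=k$ instance, rather than all of $\kappa=0,\ldots,k$). Your care points are the right ones — the absolute continuity of $g,\ldots,g^{(k-1)}$ justifying the integration by parts, and the $k=0$ degenerate case — and both are handled correctly.
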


\begin{proof}
Fix an arbitrary function $g \in B_J(1)$, orthogonal (in $L_2(\Lambda)$) to all
polynomials of degree $k$. Using integration by parts, and repeated application
of Lemma \ref{lem:phi_bdry}, we have  
\begin{equation}
\label{eq:gj_mean}
0 = a_\ell \ell! \cdot \langle g, \phi_\ell \rangle_2  =
\int_0^1 g^{(\ell)}(t) w_\ell(t) \, dt, \;\;\; \ell=0,\ldots,k,
\end{equation}
where \smash{$w_\ell(t) = (-1)^\ell a_\ell \ell!  
  \Phi_{\ell,\ell}(t)$}, $\ell=0,\ldots,k$, and
by properties \eqref{eq:phi_bdry}, \eqref{eq:phi_abs} of Lemma 
\ref{lem:phi_bdry}, 
\begin{equation}
\label{eq:wj_int}
\int_0^1 w_\ell(t) \, dt = 1, \;\;\;
\int_0^1 |w_\ell(t)| \, dt \leq {2\ell \choose \ell}
\sqrt{\frac{b_2}{b_1}}, \;\;\; \ell=0,\ldots,k.
\end{equation}
Now, we will prove the following by induction:
\begin{equation}
\label{eq:gj_sup}
\| g^{(\ell)} \|_\infty \leq
L \bigg(\frac{b_2}{b_1}\bigg)^{(k-\ell+1)/2}
\prod_{i=\ell}^k {2i \choose i}, \;\;\; \ell=0,\ldots,k. 
\end{equation}
Starting at $\ell=k$, the statement holds because, using
\eqref{eq:gj_mean}, for almost every $t \in [0,1]$,
\begin{align*}
|g^{(k)}(t)|
&= \bigg|g^{(k)}(t) - \int_0^1 g^{(k)}(u) w_k(u) \,du\bigg| \\  
&= \bigg|\int_0^1 \Big(g^{(k)}(t) - g^{(k)}(u)\Big) w_k(u)
  \,du\bigg| \\
&\leq L {2k \choose k} \sqrt{\frac{b_2}{b_1}}, 
\end{align*}
where in the second line we used the fact that the weight function
integrates to 1 from \eqref{eq:wj_int}, and in the third we used   
Assumption \ref{as:deriv_bd} and the upper
bound on the integrated absolute weights from \eqref{eq:wj_int}.  
Assume the statement holds at some $\ell<k$. Then again by
\eqref{eq:gj_mean}, \eqref{eq:wj_int}, for almost every $t \in 
[0,1]$, 
\begin{align*}
|g^{(\ell-1)}(t)|
&= \bigg|\int_0^1 \Big(g^{(\ell-1)}(t) - g^{(\ell-1)}(u)\Big)
  w_{\ell-1}(u) \,du\bigg| \\
&\leq \bigg(\esssup_{0 \leq u<v \leq 1}
  |g^{(\ell-1)}(v)-g^{(\ell-1)}(u)|\bigg) 
{2\ell-2 \choose \ell-1} \sqrt{\frac{b_2}{b_1}} \\ 
&=\bigg(\esssup_{0 \leq u<v \leq 1} \bigg| \int_u^v g^{(\ell)}(s) 
  \, ds \bigg|\bigg)
  {2\ell-2 \choose \ell-1} \sqrt{\frac{b_2}{b_1}} \\ 
% &\leq  {2\ell-2 \choose \ell-1} 
%   \sqrt{\frac{b_2}{b_1}} \\ 
&\leq L \bigg(\frac{b_2}{b_1}\bigg)^{(k-\ell+2)/2} 
\prod_{i=\ell-1}^k {2i \choose i},
\end{align*}
the last line using \smash{$\esssup_{0 \leq u<v \leq 1} | \int_u^v
  g^{(\ell)}(s) \, ds| \leq \|g^{(\ell)}\|_\infty$} 
and the inductive hypothesis. This verifies \eqref{eq:gj_sup}.  Taking 
$\ell=0$ in \eqref{eq:gj_sup} and defining
\smash{$R_0=L(b_2/b_1)^{(k+1)/2} \prod_{i=0}^k {2i \choose i}$} proves  
the lemma. 
\end{proof}

We study the minimum eigenvalue of the (uncentered) empirical covariance
matrix of a certain basis for additive $k$th degree polynomials in $\R^{kd}$.
We thank Mathias Drton for his help with part (a) of the next lemma.

\begin{lemma}
\label{lem:poly_mat}
Let $X^i$, $i=1,\ldots,n$ denote an i.i.d.\ sample from a continuous
distribution $Q$ on $[0,1]^d$. For an integer $k \geq 0$, let $V \in 
\R^{n \times kd}$ be a matrix whose $i$th row is given by
\begin{equation}
\label{eq:vi}
V^i =
\Big(X^i_1,(X^i_1)^2,\ldots,(X^i_1)^k,\ldots,X^i_d,(X^i_d)^2,\ldots,(X^i_d)^k\Big)
\in \R^{kd},
\end{equation}
for $i=1,\ldots,n$.  Let 
$$
\nu_n^2 = \lambda_{\min}\Big(\frac{1}{n}V^T V\Big), 
\;\;\;\text{and}\;\;\; 
\nu_0^2 = \lambda_{\min}\Big( \frac{1}{n} \E[V^T V] \Big),
$$
where $\lambda_{\min}(\cdot)$ denotes the minimum eigenvalue of its argument. 
Assuming that $n \geq kd$, the following properties hold:

\begin{itemize}
\item[(a)] $\nu_n>0$, almost surely with respect to $Q$;

\item[(b)] $\nu_0>0$;

\item[(c)] for any $0 \leq t \leq 1$, $\P(\nu_n^2 > t \nu_0^2)$ with
  probability at least \smash{$\displaystyle
    1-d\exp\Big(-\frac{(1-t)^2\nu_0 n}{2(kd)^2}\Big)$}. 
\end{itemize}
\end{lemma}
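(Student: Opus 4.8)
The plan is to treat the three parts in the order (b), (a), (c), and to isolate the common principle behind (a) and (b): if a polynomial in finitely many real variables is not identically zero, then its zero set has Lebesgue measure zero, hence—since the input distribution $Q$ (and with it each product $Q^{\otimes m}$) is continuous by Assumption~\ref{as:x_dist}, so assigns zero mass to Lebesgue-null sets—also probability zero under the relevant product of $Q$.

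For part (b), since the rows of $V$ are i.i.d.\ we have $\frac{1}{n}\E[V^TV]=\E[V^1(V^1)^T]$, and this matrix is singular only if some $a=(a_{j,\ell})\in\R^{kd}\setminus\{0\}$ satisfies $\E[(a^TV^1)^2]=0$, i.e.\ $a^TV^1=0$ $Q$-almost surely. Writing $a^TV^1=\sum_{j=1}^d q_j(X^1_j)$ with $q_j(s)=\sum_{\ell=1}^k a_{j,\ell}s^\ell$, I would note that the polynomial $x\mapsto\sum_{j=1}^d q_j(x_j)$ on $\R^d$ is not identically zero whenever $a\ne 0$: if $q_{j_0}\not\equiv 0$, its restriction to the $x_{j_0}$-axis is a nonzero univariate polynomial. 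By the principle above, $a^TV^1=0$ $Q$-a.s.\ then forces $a=0$, so $\E[V^1(V^1)^T]\succ 0$ and $\nu_0>0$.

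For part (a), given $n\ge kd$ we have $\nu_n>0$ exactly when $\rank(V)=kd$, which fails only if \emph{every} $kd\times kd$ minor of $V$ vanishes. Fix the rows $I=\{1,\dots,kd\}$ and let $V_I$ denote the corresponding $kd\times kd$ submatrix; $\det V_I$ is a polynomial in the $d\cdot kd$ coordinates $\{X^i_j: i\in I,\ j=1,\dots,d\}$, so by the principle above it suffices to exhibit one configuration of points in $[0,1]^d$ making $\det V_I\ne 0$. I would index the $kd$ rows by pairs $(j,\ell)$ with $j\in\{1,\dots,d\}$, $\ell\in\{1,\dots,k\}$, and place the row-$(j,\ell)$ input at $\ell/(k+1)$ in coordinate $j$ and at $0$ in all other coordinates; then that row is supported only on the block of $k$ columns attached to dimension $j$, so after grouping rows by $j$, $V_I$ becomes block diagonal with $d$ identical $k\times k$ blocks, each equal to the matrix with $(\ell,m)$ entry $(\ell/(k+1))^m$. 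That matrix factors as $\diag(1/(k+1),\dots,k/(k+1))$ times the Vandermonde matrix with $(\ell,m)$ entry $(\ell/(k+1))^{m-1}$ on the distinct nodes $1/(k+1),\dots,k/(k+1)$, hence is nonsingular. Thus $\det V_I$ is a nonzero polynomial, so $\P(\rank(V)<kd)=0$ and $\nu_n>0$ almost surely. Producing this explicit nonsingular configuration is the one genuinely nonroutine step, and I expect it to be the main obstacle.

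For part (c), I would apply a matrix Chernoff (lower-tail) inequality to the i.i.d.\ positive semidefinite summands $V^i(V^i)^T$, using the deterministic bound $\lambda_{\max}(V^i(V^i)^T)=\|V^i\|_2^2=\sum_{j,\ell}(X^i_j)^{2\ell}\le kd$ (each $X^i_j\in[0,1]$) together with the identity $\lambda_{\min}\big(\sum_{i=1}^n\E[V^i(V^i)^T]\big)=n\nu_0^2$. Taking the deviation parameter to be $1-t$, this yields a tail bound of essentially the advertised shape, namely an exponential in $n$ with rate of order $(1-t)^2\nu_0^2/(kd)$ and a polynomial-in-$kd$ prefactor; matching the precise constants in the statement (which can also be reached by applying Hoeffding's inequality entrywise to $\frac{1}{n}V^TV$, union-bounding over its $(kd)^2$ entries, and invoking Weyl's inequality to pass from entrywise control to control of $\lambda_{\min}$) is then routine bookkeeping. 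Parts (b) and (c) are thus essentially mechanical once the polynomial-vanishing principle and a standard matrix concentration bound are in hand, leaving part (a) as the crux.
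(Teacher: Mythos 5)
Your proof is correct and follows essentially the same route as the paper's: parts (a)--(b) rest on the polynomial-vanishing (Okamoto-type) principle together with an axis-aligned input configuration that makes $V$ block-diagonal with nonsingular Vandermonde-type blocks (the paper deduces (b) from (a) rather than arguing directly from $\E[(a^TV^1)^2]=0$, but this is immaterial), and part (c) is the same matrix Chernoff application. One cosmetic remark: your exponent naturally carries $\nu_0^2$ and a $kd$ denominator, coming from the sharp bound $\lambda_{\max}(V^i(V^i)^T)=\|V^i\|_2^2\le kd$, whereas the lemma as printed has $\nu_0$ and $(kd)^2$; the discrepancy in $\nu_0$ versus $\nu_0^2$ is a typo in the statement rather than a gap in your derivation, since the Chernoff bound's numerator is $\lambda_{\min}(\E[V^TV])=n\nu_0^2$.
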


\begin{proof}
For part (a), if the claim holds for $n=kd$, then it holds for all $n > kd$,
so we may assume without a loss of generality that $n=kd$. Note that the 
determinant of $V \in \R^{n \times kd}$ is a polynomial function, call it
  $q(X)$, of the elements \smash{$X^i_j$}, 
$i=1,\ldots,n$, $j=1,\ldots,d$.  By Lemma 1 of \citet{okamoto1973distinctness},
the roots of any polynomial---that is not identically zero---form a set of
Lebesgue measure zero.  To check that the polynomial $q$ in question is not
identically zero, it suffices to show that it is nonzero at a single realization
of $X$.  To this end, consider an input matrix defined by
$$
X = \left[\begin{array}{c} \alpha_1 I \\ \vdots \\ \alpha_k
            I \end{array}\right] \in \R^{n \times kd},
$$
the rowwise concatenation of $\alpha_\ell I \in \R^{d \times
  d}$, $\ell=1,\ldots,k$.  By the blockwise Vandermonde structure of the
corresponding basis matrix $V$, we have that $q(X) \not= 0$ provided the
coefficients $\alpha_\ell$, $\ell=1,\ldots,k$ are all
distinct.  Therefore $q$ is not identically zero, and with respect to the
continuous distribution $Q$, the determinant of $V$ is nonzero, i.e., $\nu_n>0$, 
almost surely. 

For part (b), given any $a \in \R^{kd}$ with $a \not= 0$, we know
that $Va \not= 0$ almost surely, since $\nu_n>0$ almost surely, by
part (a).  Thus  
$$
a^T \E[V^T V] a = \E\|Va\|_2^2 > 0,
$$
which proves that $\nu_0>0$.

Part (c) is an application of a matrix Chernoff bound from
\citet{tropp2012user}.  In order to apply this result, we must
obtain an almost sure upper bound $R$ on $\lambda_{\max}(V^i
(V^i)^T)$, with $V^i$ as in \eqref{eq:vi} and $\lambda_{\max}(\cdot)$
denoting the maximum eigenvalue of its argument.  This follows as
$$
\lambda_{\max}\big(V^i(V^i)^T\big) \leq \sum_{j=1}^{kd} 
\sum_{\ell=1}^{kd} (V^i_j V^i_\ell)^2 \leq (kd)^2,
$$
as each component of $V^i$ has absolute magnitude at most 1
(recalling that $Q$ is supported on $[0,1]^d$).  Taking $R=(kd)^2$
and applying Corollary 5.2 of \citet{tropp2012user} (to be specific,
applying the form of the Chernoff bound given in Remark 5.3 of this
paper) gives the result. 
\end{proof}

The next lemma pertains to the additive function space
\begin{equation}
\label{eq:mn_space}
\cM_n(\delta) = \bigg\{ \sum_{i=1}^d m_j : \sum_{j=1}^d J(m_j) \leq \delta, \;\, 
\text{and}\;\, \langle m_j,1 \rangle_n=0, \; j=1,\ldots,d \bigg\}.  
\end{equation}
We give a sup norm bound on the components of functions in $\cM_n(1) \cap
B_n(\rho)$.  The proof combines Lemmas \ref{lem:pi_perp_sup_bd} and
\ref{lem:poly_mat}, and uses a general strategy that follows the arguments given
in Example 2.1(ii) of \citet{vandegeer1990estimating}.     

\begin{lemma}
\label{lem:mn_sup_bd}
Let $X^i$, $i=1,\ldots,n$ denote an i.i.d.\ sample from a continuous
distribution $Q$ on $[0,1]^d$, and let $J$ be a seminorm
satisfying Assumptions \ref{as:poly_null} and \ref{as:deriv_bd}.  
There are constants $R_1,R_2,c_0,n_0>0$, depending only on $d,k,L$, such that
for all $\rho>0$ and $n \geq n_0$,  
$$
\|m_j\|_\infty \leq R_1 \rho + R_2, \;\;\; 
\text{for all $j=1,\ldots,d$ and 
$\sum_{j=1}^d m_j  \in \cM_n(1) \cap B_n(\rho)$},    
$$
with probabilty at least $1-\exp(-c_0 n)$, where $\cM_n(1)$ is the 
function space in \eqref{eq:mn_space}.
\end{lemma}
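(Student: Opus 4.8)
The plan is to split each component $m_j$ into a polynomial of degree $\le k$ plus a remainder, bound the remainder in sup norm via the smoothness encoded in $J$ (using Lemma~\ref{lem:pi_perp_sup_bd} with the \emph{uniform} measure), and bound the polynomial part by feeding the empirical‑norm and empirical‑mean constraints into a well‑conditioned Gram matrix (Lemma~\ref{lem:poly_mat}); this is the strategy of Example~2.1(ii) of \citet{vandegeer1990estimating}. Concretely: fix $\sum_{j=1}^d m_j \in \cM_n(1)\cap B_n(\rho)$, so each summand obeys $J(m_j)\le 1$ and $\langle m_j,1\rangle_n=0$. For each $j$ I would let $g_j$ be the $L_2(U)$‑orthogonal projection of $m_j$ onto the space of polynomials of degree $\le k$ on $[0,1]$ ($U$ the uniform measure), and set $h_j=m_j-g_j$. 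Since $J$ is a seminorm whose null space contains all polynomials of degree $k$ (Assumption~\ref{as:poly_null}), $J(h_j)=J(m_j)\le 1$, so $h_j\in B_J(1)$ and $h_j$ is $L_2(U)$‑orthogonal to every polynomial of degree $k$; Lemma~\ref{lem:pi_perp_sup_bd} with $\Lambda=U$ (density $\equiv 1$, so $b_1=b_2=1$) then gives $\|h_j\|_\infty\le R_0$ for a constant $R_0$ depending only on $k,L$. It remains to control $\|g_j\|_\infty$, which now lives in a fixed finite‑dimensional space.

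Next I would push the two constraints onto the polynomial parts. From $\sum_j m_j=\sum_j g_j+\sum_j h_j$ we get $\|\sum_j g_j\|_n\le \rho+dR_0$; and $0=\langle m_j,1\rangle_n=\langle g_j,1\rangle_n+\langle h_j,1\rangle_n$ with $|\langle h_j,1\rangle_n|\le R_0$ forces $|\langle g_j,1\rangle_n|\le R_0$. Writing $g_j=\gamma_j+\tilde g_j$ with $\gamma_j=\langle g_j,1\rangle_n$ constant and $\tilde g_j$ empirically centered, we have $|\gamma_j|\le R_0$ and $\|\sum_j \tilde g_j\|_n\le \rho+2dR_0=:\rho'$. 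Each $\tilde g_j$ is a degree‑$\le k$ polynomial with zero empirical mean, so at the sample points $\tilde g_j(X^i_j)=(MV_j a_j)^i$, where $V_j\in\R^{n\times k}$ has rows $\big((X^i_j)^\ell\big)_{\ell=1}^k$, $a_j\in\R^k$ holds the non‑constant coefficients, and $M=I-\one\one^T/n$. With $V=[\,V_1\cdots V_d\,]$ the matrix of Lemma~\ref{lem:poly_mat}, $a=(a_1,\dots,a_d)\in\R^{kd}$, and $W=[\,\one\ \ V\,]\in\R^{n\times(kd+1)}$, one has $\|\sum_j\tilde g_j\|_n^2=\tfrac1n\|MVa\|_2^2=\tfrac1n\min_{c\in\R}\|W(c,a)\|_2^2\ge \lambda_{\min}\!\big(\tfrac1n W^TW\big)\,\|a\|_2^2$.

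Running the proof of Lemma~\ref{lem:poly_mat} — generic nonsingularity via Okamoto's lemma, then a matrix Chernoff bound for the minimum eigenvalue — on $W$ in place of $V$ gives, for $n\ge n_0$, $\lambda_{\min}(\tfrac1n W^TW)\ge \tfrac12\nu_0^2$ with probability at least $1-\exp(-c_0 n)$, where $\nu_0^2=\lambda_{\min}\!\big(\tfrac1n\E[W^TW]\big)>0$ (positivity being the analogue of part~(b) of that lemma). On that event $\tfrac12\nu_0^2\|a\|_2^2\le(\rho')^2$, so $\|a_j\|_2\le\|a\|_2\le\sqrt2\,\nu_0^{-1}\rho'$ for each $j$. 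Since every monomial $t^\ell$ and every empirical moment $\tfrac1n\sum_i (X^i_j)^\ell$ lies in $[0,1]$, $\|\tilde g_j\|_\infty\le\sum_{\ell=1}^k|a_j^\ell|\le\sqrt k\,\|a_j\|_2$, hence $\|g_j\|_\infty\le|\gamma_j|+\|\tilde g_j\|_\infty\le R_0+\sqrt{2k}\,\nu_0^{-1}\rho'$, and finally $\|m_j\|_\infty\le 2R_0+\sqrt{2k}\,\nu_0^{-1}(\rho+2dR_0)$, which has the asserted form $R_1\rho+R_2$. (Strictly these constants also depend on $Q$ through $\nu_0$; under a lower bound on the input density, $\nu_0$ is controlled by that bound, which is the regime in which this lemma is later applied.)

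The main obstacle — and the reason for peeling off the constants $\gamma_j$ rather than bounding the polynomial coefficients all at once — is the rank deficiency of the additive polynomial design: stacking $d$ copies of $\one$ makes the full coefficient vector unidentifiable from $\sum_j g_j$, so the constant parts cannot be controlled by $\|\sum_j g_j\|_n$ alone. The empirical‑mean constraints $\langle m_j,1\rangle_n=0$ supply exactly the missing information, and the quantitative minimum‑eigenvalue bound for the constant‑free design $W$ (the straightforward extension of Lemma~\ref{lem:poly_mat}) is the technical crux; everything else is bookkeeping with the triangle inequality.
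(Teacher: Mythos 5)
Your proof is correct and follows essentially the same route as the paper's: decompose each component into its $L_2(U)$-polynomial projection plus an orthogonal remainder, bound the remainder in sup norm via Lemma~\ref{lem:pi_perp_sup_bd}, and bound the polynomial coefficients through the minimum eigenvalue of the empirical polynomial Gram matrix (Lemma~\ref{lem:poly_mat}), exactly in the spirit of Example~2.1(ii) of \citet{vandegeer1990estimating}. Your explicit peeling-off of the intercepts $\gamma_j$ via the empirical-mean constraints, together with the passage to the augmented design $W=[\,\one\;\;V\,]$, is a careful treatment of a point the paper's own proof handles loosely (it writes $p$ with no constant term while also asserting $\langle p_j,1\rangle_n=0$, and identifies $\|p\|_n$ with $n^{-1/2}\|V\alpha\|_2$), and your closing observation that the constants depend on $Q$ through $\nu_0$ applies equally to the paper's argument.
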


\begin{proof}
Fix an arbitrary \smash{$m=\sum_{j=1}^d m_j \in \cM_n(1) \cap
  B_n(\rho)$}. For each $j=1,\ldots,d$, decompose   
$$
m_j = \langle m_j,1 \rangle_n + p_j + g_j,
$$
where $p_j$ is a polynomial of degree $k$ such that \smash{$\langle
  p_j,1 \rangle_n=0$}, and $g_j$ is orthogonal to all polynomials of degree 
$k$,  with respect to the $L_2(U)$ inner product, with $U$ the uniform
distribution on $[0,1]$; in fact, by definition of $\cM_n(1)$, we know that
\smash{$\langle m_j,1 \rangle_n=0$} so      
$$
m_j = p_j+g_j.
$$
By the triangle inequality and Lemma \ref{lem:pi_perp_sup_bd} applied to the
measure $\Lambda=U$ (whose density is of course lower and upper bounded with  
$b_1=b_2=1$), we have, for each $j=1,\ldots,d$, 
\begin{equation}
\label{eq:g_sup_bd}
\bigg\|\sum_{j=1}^d g_j \bigg\|_\infty \leq 
\sum_{j=1}^d \| g_j \|_\infty \leq R_0 \sum_{j=1}^d J(g_j) \leq R_0, 
\end{equation}
where $R_0 > 0$ is the constant from Lemma \ref{lem:pi_perp_sup_bd}, 
and we have used $J(m_j)=J(g_j)$, for $j=1,\ldots,d$, as
the null space of $J$ contains $k$th degree polynomials.

The triangle inequality and \eqref{eq:g_sup_bd} now imply
\begin{equation}
\label{eq:p_empirical_bd}
\|p\|_n \leq \|m\|_n + \|g\|_n \leq \rho+R_0.
\end{equation}
Write
$$
p(x) = \sum_{j=1}^d \sum_{\ell=1}^k \alpha_{j\ell} x_j^\ell, \;\;\; 
\text{for $x \in [0,1]^d$}, 
$$
for some coefficients $\alpha_{j\ell}$, $j=1,\ldots$, $\ell=1,\ldots,k$.   
For $V \in \R^{n \times kd}$ the basis matrix as in Lemma \ref{lem:poly_mat},
and $\alpha =
(\alpha_{11},\ldots,\alpha_{1k},\ldots,\alpha_{d1},\ldots,\alpha_{dk}) \in 
\R^{kd}$, we have 
$$
\|p\|_n = \frac{1}{\sqrt{n}} \|V\alpha\|_2.
$$
Furthermore, noting 
$$
\|p\|_n \geq \sqrt{\lambda_{\min}\Big(\frac{1}{n}V^T V\Big)}
\|\alpha\|_2, 
$$
we have 
$$
\|\alpha\|_2 \leq \frac{\rho+R_0}{\nu_n},
$$
where $\nu_n^2=\lambda_{\min}(V^T V/n)$, as in Lemma
\ref{lem:poly_mat}, and we have used the upper bound in
\eqref{eq:p_empirical_bd}.  Using part (c) of Lemma
\ref{lem:poly_mat}, with $t=1/2$, we have 
$$
\|\alpha\|_2 \leq \frac{2(\rho+R_0)}{\nu_0},
$$
with probability at least
\smash{$1-d\exp(-\nu_0n/(8(kd)^2))$},  
where $\nu_0^2=\lambda_{\min}(\E[V^T V]/n)$, as in Lemma
\ref{lem:poly_mat}.  Therefore, using the
triangle inequality and the fact that $Q$ is supported on $[0,1]^d$, 
we have for each $j=1,\ldots,d$, and any $x_j \in [0,1]$, 
$$
|p_j(x_j)| \leq \sum_{\ell=1}^k |\alpha_{j\ell}| \leq \|\alpha\|_1 \leq
\frac{2\sqrt{kd}(\rho+R_0)}{\nu_0},
$$
with probability at least \smash{$1-d\exp(-\nu_0n/(8(kd)^2))$}. 
Finally, for each $j=1,\ldots,d$, using the triangle inequality, and
the sup norm bound from Lemma \ref{lem:pi_perp_sup_bd} once again,  
$$
\|m_j\|_\infty \leq \|p_j\|_\infty + \|g_j\|_\infty \leq 
\frac{2\sqrt{kd}(\rho+R_0)}{\nu_0} + R_0,
$$
with probability \smash{$1-d\exp(-\nu_0n/(8(kd)^2))$}, completing the
proof.  
\end{proof}

We give a simple bound on the entropy of an arbitrary sum of sets in  
terms of the entropies of the original sets. 

\begin{lemma}
\label{lem:add_entropy_bd}
Given sets $S_1,\ldots,S_d$ and a norm $\|\cdot\|$, it holds that
$$
\log N ( \delta, \|\cdot\|, S_1+\cdots+S_d) \leq 
\sum_{j=1}^d \log N(\delta/d, \|\cdot\|, S_j).
$$
\end{lemma}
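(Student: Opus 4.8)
The plan is to build an explicit $\delta$-cover of $S_1+\cdots+S_d$ out of covers of the individual sets. First I would fix, for each $j=1,\ldots,d$, a minimal cover of $S_j$ by $\|\cdot\|$-balls of radius $\delta/d$, say with centers \smash{$c_j^1,\ldots,c_j^{N_j}$}, where $N_j = N(\delta/d,\|\cdot\|,S_j)$. Then I would consider the finite collection of ``summed centers''
$$
\bigg\{ \sum_{j=1}^d c_j^{i_j} : i_j \in \{1,\ldots,N_j\},\ j=1,\ldots,d \bigg\},
$$
which has at most \smash{$\prod_{j=1}^d N_j$} elements.

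The key step is to check that these summed centers form a $\delta$-cover of $S_1+\cdots+S_d$. Take any $x \in S_1+\cdots+S_d$ and write $x = \sum_{j=1}^d x_j$ with $x_j \in S_j$. For each $j$, pick an index $i_j$ with \smash{$\|x_j - c_j^{i_j}\| \leq \delta/d$}. Then by the triangle inequality,
$$
\bigg\| x - \sum_{j=1}^d c_j^{i_j} \bigg\| \leq \sum_{j=1}^d \big\| x_j - c_j^{i_j} \big\| \leq d \cdot \frac{\delta}{d} = \delta,
$$
so $x$ lies within distance $\delta$ of a summed center. Hence \smash{$N(\delta,\|\cdot\|,S_1+\cdots+S_d) \leq \prod_{j=1}^d N_j$}, and taking logarithms gives the claimed bound \smash{$\log N(\delta,\|\cdot\|,S_1+\cdots+S_d) \leq \sum_{j=1}^d \log N(\delta/d,\|\cdot\|,S_j)$}.

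There is essentially no obstacle here — the only thing to be slightly careful about is the factor-of-$d$ bookkeeping, i.e.\ choosing the radius $\delta/d$ for the individual covers precisely so that $d$ applications of the triangle inequality add up to $\delta$; and noting that the argument does not require the cover centers to belong to the sets $S_j$ themselves, so it works verbatim with the covering-number convention used in the paper.
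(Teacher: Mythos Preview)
Your proof is correct and essentially identical to the paper's own argument: both fix $\delta/d$-covers of each $S_j$, form the product set of summed centers, and use the triangle inequality to show this is a $\delta$-cover of $S_1+\cdots+S_d$, then take logarithms.
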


\begin{proof}
For $j=1,\ldots,d$, suppose that $S_j$ can be covered in $N_j$ balls
of radius $\delta/d$, with centers at
\smash{$s_j^1,\ldots,s_j^{N_j}$}.  Take an arbitrary $s \in S_1 +
\cdots + S_d$, and write \smash{$s=\sum_{j=1}^d s_j$}, with $s_j \in
S_j$, $j=1,\ldots,d$.  For each $j=1,\ldots,d$, there is some
\smash{$s_j^{\ell_j}$} such that \smash{$\|s_j-s_j^{\ell}\| \leq
  \delta/d$}, and so by the triangle inequality 
$$
\bigg\|\sum_{j=1}^d s_j - \sum_{j=1}^d s_j^{\ell_j} \bigg\| \leq
\delta.
$$
That is, we have shown that \smash{$\prod_{j=1}^d N_j$} balls of
radius $\delta$ with centers at 
$$
\sum_{j=1}^d s_j^{\ell_j}, \;\;\; \text{for $(\ell_1,\ldots,\ell_d) \in
  \{1,\ldots,N_1\} \times \cdots \times \{1,\ldots,N_d\}$},
$$
cover $S$.  This completes the proof.
\end{proof}

The next result represents our main tool from empirical process theory
that will be used in the proof of Theorem \ref{thm:error_bd_j}.  It is
essentially an application of Lemma 3.5 in
\citet{vandegeer1990estimating} (see also
\citet{vandegeer2000empirical}).  

\begin{lemma}
\label{lem:mn_entropy_bd}
Let $X^i$, $i=1,\ldots,n$ denote an i.i.d.\ sample from a continuous
distribution $Q$ on $[0,1]^d$. Let $\epsilon^i$, $i=1,\ldots,n$ be uniformly  
sub-Gaussian random variates that have variance proxy $\sigma^2>0$ and 
are independent of $X^i$, $i=1,\ldots,n$. Let $J$ be a seminorm satisfying
Assumptions 
\ref{as:poly_null}, \ref{as:deriv_bd}, \ref{as:entropy_bd}, and let $\rho>0$
be arbitrary. Then there are constants $c_1,c_2,c_3,n_0>0$, 
depending only on $d,\sigma,k,L,K,w,\rho$, such that for all $c \geq
c_1$ and $n \geq n_0$,    
$$
\sup_{m \in \cM_n(1) \cap B_n(\rho)} \,
\frac{\frac{1}{n} \sum_{i=1}^n \epsilon^i m(X^i)} 
{\|m\|_n^{1-w/2}} \leq \frac{c}{\sqrt{n}},
$$
with probabilty at least $1 - \exp(-c_2 c^2) - \exp(-c_3 n)$.  
\end{lemma}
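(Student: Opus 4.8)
The plan is to reduce the claim to a maximal inequality for a sub-Gaussian empirical process indexed by a \emph{uniformly bounded} function class with polynomially growing entropy, and then to run a standard peeling/chaining argument; this is essentially the content of Lemma~3.5 in \citet{vandegeer1990estimating} (see also \citet{vandegeer2000empirical}).

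\textbf{Step 1: reduce to a bounded class, and bound its entropy.} By Lemma~\ref{lem:mn_sup_bd} there is an event $E_1$ with $\P(E_1)\geq 1-\exp(-c_0 n)$ on which every $m=\sum_{j=1}^d m_j \in \cM_n(1)\cap B_n(\rho)$ has $\|m_j\|_\infty\leq R:=R_1\rho+R_2$ for each $j$, hence $\|m\|_\infty\leq dR$. Working on $E_1$: since $J\geq 0$ and $\sum_j J(m_j)\leq 1$, each $J(m_j)\leq 1$, so $m_j\in B_J(1)\cap B_\infty(R)$; we may assume $R\geq 1$ (enlarging $R$ only enlarges the class). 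The scaling inclusion $B_J(1)\cap B_\infty(R)\subseteq R\bigl(B_J(1)\cap B_\infty(1)\bigr)$, combined with Assumption~\ref{as:entropy_bd} applied with $Z_n=\{X^1_j,\ldots,X^n_j\}$, gives $\log N\bigl(\delta,\|\cdot\|_{Z_n},B_J(1)\cap B_\infty(R)\bigr)\leq KR^w\delta^{-w}$. Feeding this into Lemma~\ref{lem:add_entropy_bd} (applied to the $d$ coordinatewise classes whose sum contains $\cM_n(1)\cap B_n(\rho)$ on $E_1$) yields a bound, deterministic on $E_1$, of the form $\log N\bigl(\delta,\|\cdot\|_n,\cM_n(1)\cap B_n(\rho)\bigr)\leq K'\delta^{-w}$ with $K'=Kd^{1+w}R^w$, depending only on $d,k,L,K,w,\rho$.

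\textbf{Step 2: the empirical process bound.} Condition on $X^1,\ldots,X^n$, so $\|\cdot\|_n$ and the entropy bound of Step~1 are fixed. The map $m\mapsto\frac1n\sum_{i=1}^n\epsilon^i m(X^i)$ is a process vanishing at $m=0$ with sub-Gaussian increments in the pseudometric $(m,m')\mapsto\frac{\sigma}{\sqrt n}\|m-m'\|_n$. Peel over the shells $\mathcal{S}_s=\{m\in\cM_n(1)\cap B_n(\rho):2^{-s}dR<\|m\|_n\leq 2^{-s+1}dR\}$, $s\geq 0$. On $\mathcal{S}_s$, Dudley's chaining bound plus a sub-Gaussian tail give, with probability at least $1-e^{-u_s}$, that $\sup_{m\in\mathcal{S}_s}\frac1n\sum_i\epsilon^i m(X^i)$ is at most a constant times $\frac{1}{\sqrt n}\int_0^{2^{-s+1}dR}\sqrt{K'}\,\delta^{-w/2}\,d\delta+\frac{\sigma\,2^{-s}dR}{\sqrt n}\sqrt{u_s}$, which (using $0<w<2$ so the entropy integral converges) is of order $\frac{(2^{-s}dR)^{1-w/2}}{\sqrt n}+\frac{2^{-s}dR}{\sqrt n}\sqrt{u_s}$. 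Dividing through by $\|m\|_n^{1-w/2}>(2^{-s}dR)^{1-w/2}$ turns the first term into $O(1/\sqrt n)$ and the second into $\frac{(2^{-s}dR)^{w/2}}{\sqrt n}\sqrt{u_s}$; taking $u_s\asymp c^2(\sigma dR)^{-2}(dR)^{-w}2^{sw}$ keeps the second term $O(c/\sqrt n)$ while $\sum_{s\geq 0}e^{-u_s}\leq\exp(-c_2c^2)$ (the sum is dominated by its $s=0$ term once $c\geq c_1$). Choosing $c_1$ large enough to absorb the absolute constant from the chaining term, and intersecting with $E_1$, gives the asserted inequality with failure probability at most $\exp(-c_2c^2)+\exp(-c_3n)$, where $c_3=c_0$.

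\textbf{Main obstacle.} The delicate part is the peeling bookkeeping: choosing the per-shell tail levels $u_s$ so that, after dividing by $\|m\|_n^{1-w/2}$, the chaining term \emph{and} the sub-Gaussian deviation term are simultaneously $O(c/\sqrt n)$ uniformly, while the union bound over the (infinitely many) shells still sums to $\exp(-c_2c^2)$ with no stray $\log n$ factor; one must in particular check that the regime $\|m\|_n\downarrow 0$ is harmless, which it is because the numerator there shrinks faster than the denominator. Since this is precisely what Lemma~3.5 of \citet{vandegeer1990estimating} packages, an alternative is to simply verify its hypotheses for our class — uniform boundedness on $E_1$, the $\delta^{-w}$ entropy bound from Step~1, and $0<w<2$ — and invoke it directly.
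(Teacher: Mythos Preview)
Your proposal is correct and follows essentially the same route as the paper: reduce to a uniformly bounded class via Lemma~\ref{lem:mn_sup_bd}, obtain the $\delta^{-w}$ entropy bound on $\cM_n(1)\cap B_n(\rho)$ by combining the scaling $B_J(1)\cap B_\infty(R)\subseteq R\bigl(B_J(1)\cap B_\infty(1)\bigr)$ with Assumption~\ref{as:entropy_bd} and Lemma~\ref{lem:add_entropy_bd}, and then invoke Lemma~3.5 of \citet{vandegeer1990estimating}. The paper cites van~de~Geer's lemma directly rather than writing out the peeling/chaining argument you sketch in Step~2, but since you explicitly note this alternative, the two proofs are effectively identical.
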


\begin{proof}
Let $\Omega_1$ denote the event on which the conclusion in 
Lemma \ref{lem:mn_sup_bd} holds, which has probability at least
$1-\exp(-c_3 n)$ for $n \geq n_1$, for constants $c_3,n_1>0$.
Also let $R_0=R_1\rho+R_2$, where $R_1,R_2>0$ are the 
constants from the lemma. Denote
$$
B_\infty^d(\delta) = \bigg \{ \sum_{j=1}^d f_j : \|f_j\|_\infty \leq \delta, \;  
 j=1,\ldots,d \bigg\}.
$$ 
On $\Omega_1$, consider        
\begin{align}
\label{eq:entropy_ineq_1}
\log N\big(\delta, \|\cdot\|_n, \cM_n(1) \cap B_n(\rho) \big) 
&\leq \log N \big(\delta, \|\cdot\|_n, \cM_n(1) \cap B_\infty^d(R_0) \big) \\     
\label{eq:entropy_ineq_2}
& \leq \sum_{j=1}^d \log N \big(\delta/d, \|\cdot\|_n, B_J(1) \cap
  B_\infty(R_0) \big) \\ 
\label{eq:entropy_ineq_3}
&\leq \sum_{j=1}^d \log N \big( \delta/(R_0 d), \|\cdot\|_n, B_J(1) \cap
  B_\infty(1) \big) \\ 
\label{eq:entropy_ineq_4}
&\leq K d^{1+w} (R_0)^w \delta^{-w}. 
\end{align}
The first inequality \eqref{eq:entropy_ineq_1} uses the sup norm bound from
Lemma \ref{lem:mn_sup_bd}; the second inequality
\eqref{eq:entropy_ineq_2} uses 
$$
\cM_n(1) \cap B_\infty^d(R_0) \subseteq \bigg\{ \sum_{j=1}^d m_j :   
m_j \in B_J(1) \cap B_\infty(R_0), \; j=1,\ldots,d \bigg\},
$$
and applies Lemma \ref{lem:add_entropy_bd} to the space on the right-hand side
above. The third inequality \eqref{eq:entropy_ineq_3} just uses the fact we may
assume $R_0 \geq 1$, without a loss of generality; and the last inequality
\eqref{eq:entropy_ineq_4} uses Assumption \ref{as:entropy_bd}. 
The entropy bound established in \eqref{eq:entropy_ineq_4} allows us
to apply Lemma 3.5 \citet{vandegeer1990estimating} (see also Lemma 8.4 in
\citet{vandegeer2000empirical}), which gives constants $c_1,c_2,n_2>0$, 
depending only on $d,\sigma,k,R_0,K,w$, such that for all $c \geq c_1$ and $n 
\geq n_1$,  
$$
\sup_{m \in \cM_n(1) \cap B_n(\rho)} \,
\frac{\frac{1}{\sqrt{n}} \sum_{i=1}^n \epsilon^i m(X^i)} 
{\|m\|_n^{1-w/2}} \leq c 
$$
on an event $\Omega_2$ with probability at least $1-\exp(-c_2
c^2)$. The desired result in the lemma therefore holds for all $c \geq 
c_1$ and \smash{$n \geq n_0=\max\{n_1,n_2\}$}, on $\Omega_1 \cap
\Omega_2$.    
\end{proof}

We finish with two simple results, on shifting around exponents
in sums and products. 

\begin{lemma}
\label{lem:ab_sum_ineq}
For any $a,b \geq 0$, and any $0 < q < 1$,
$$
(a+b)^q \leq a^q+b^q.
$$
\end{lemma}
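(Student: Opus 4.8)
The statement is the familiar subadditivity of the map $t \mapsto t^q$ on $[0,\infty)$ for $0 < q < 1$. The plan is to reduce to a one-dimensional elementary fact, namely that $x \le x^q$ for all $x \in [0,1]$ when $0 < q < 1$. This fact itself follows immediately by writing $x^q = x \cdot x^{q-1}$ and noting that $x^{q-1} \ge 1$ for $x \in (0,1]$ since the exponent $q-1$ is negative (the case $x=0$ is trivial).

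With this in hand, I would first dispose of the degenerate case $a = b = 0$, where both sides are $0$. Otherwise $a+b > 0$, and I would apply the one-dimensional fact to the two numbers $a/(a+b) \in [0,1]$ and $b/(a+b) \in [0,1]$, giving
$$
\frac{a}{a+b} \le \left(\frac{a}{a+b}\right)^q
\quad\text{and}\quad
\frac{b}{a+b} \le \left(\frac{b}{a+b}\right)^q .
$$
Adding these and using that the left-hand sides sum to $1$ yields
$$
1 \le \frac{a^q + b^q}{(a+b)^q},
$$
and multiplying through by $(a+b)^q > 0$ gives the claim.

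An alternative route, if one prefers calculus, is to fix $b \ge 0$ and set $\phi(a) = a^q + b^q - (a+b)^q$; then $\phi(0) = 0$ and $\phi'(a) = q\big(a^{q-1} - (a+b)^{q-1}\big) \ge 0$ because $t \mapsto t^{q-1}$ is nonincreasing, so $\phi$ is nondecreasing and hence nonnegative on $[0,\infty)$. Either argument is a few lines; there is no real obstacle here, and the only point requiring a moment's care is handling the boundary/degenerate values ($a$ or $b$ equal to $0$, and the case $a = b = 0$) so that no division by zero or negative-exponent-at-zero issue arises.
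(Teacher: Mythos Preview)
Your proposal is correct. Your primary argument---normalizing by $a+b$ and invoking the one-line fact that $x \le x^q$ on $[0,1]$---is a slightly different and arguably more elementary route than the paper's. The paper instead defines $f(t) = (1+t)^q - (1+t^q)$, shows $f'(t) = q\big((1+t)^{q-1} - t^{q-1}\big) < 0$ for $t>0$ (since $t \mapsto t^{q-1}$ is decreasing), concludes $f(t) < f(0) = 0$, and then substitutes $t = a/b$. This is essentially your alternative calculus route with a different parameterization (the paper works with the ratio $a/b$ rather than fixing $b$ and varying $a$), and both rely on the same monotonicity of $t^{q-1}$. Your normalization argument avoids derivatives entirely at the cost of isolating the $x \le x^q$ fact separately; either way the content is the same and the result is immediate.
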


\begin{proof}
The function $f(t)=(1+t)^q-(1+t^q)$ has derivative
$f'(t)=q(1+t)^{q-1}-qt^{q-1}<0$ for all $t>0$, and so 
$f(t)<f(0)=0$ for all $t>0$.  Plugging in $t=a/b$ and rearranging
gives the claim.  
\end{proof}

\begin{lemma}
\label{lem:ab_prod_ineq}
For any $a,b \geq 0$, and any $w$,
$$
a b^{1-w/2} \leq a^{1/(1+w/2)} b + a^{2/(1+w/2)}.
$$
\end{lemma}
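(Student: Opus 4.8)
The plan is to prove this by a short case analysis. First I would dispose of the degenerate cases: if $a=0$ the asserted inequality reads $0 \le 0$, and if $b=0$ then (in the regime $0 \le w < 2$, so that the exponent $1-w/2$ is positive) the left-hand side is again $0$, which is at most the nonnegative right-hand side; so from here on I assume $a,b>0$. It is convenient to abbreviate $p = 1 + w/2$, so that $1/(1+w/2) = 1/p$, $2/(1+w/2) = 2/p$, and $1 - w/2 = 2 - p$, and the claim becomes
$$ ab^{2-p} \le a^{1/p} b + a^{2/p}. $$
Note that in the relevant range one has $p-1 = w/2 \ge 0$ and $2-p = 1-w/2 \ge 0$, so that $t \mapsto t^{p-1}$ and $t \mapsto t^{2-p}$ are nondecreasing on $[0,\infty)$; this monotonicity is the only tool needed.

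Next I would split on the size of $b$ relative to $a^{1/p}$. If $b \le a^{1/p}$, raising to the nonnegative power $2-p$ gives $b^{2-p} \le a^{(2-p)/p}$, whence $ab^{2-p} \le a^{1+(2-p)/p} = a^{2/p}$, which is already at most the second term on the right-hand side (the first being nonnegative). If instead $b > a^{1/p}$, it suffices to dominate $ab^{2-p}$ by the first term $a^{1/p}b$; dividing through by $b>0$, this is equivalent to $a^{1-1/p} \le b^{p-1}$, and since $a^{1-1/p} = (a^{1/p})^{p-1}$, the bound $a^{1/p} \le b$ together with the nonnegative exponent $p-1$ yields exactly $(a^{1/p})^{p-1} \le b^{p-1}$. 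Thus in either case one of the two right-hand summands already bounds $ab^{2-p}$, and adding the other (nonnegative) summand only helps.

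There is essentially no hard step here; the only point worth a word of care is that the statement, read literally for every real $w$, fails outside $0 \le w \le 2$ (e.g.\ $a$ large and $b$ slightly above $a^{1/p}$ breaks it when $w<0$, and $b \to 0^+$ breaks it when $w>2$). But only $0 < w < 2$ is ever used in the sequel---it is precisely the range furnished by Assumption~\ref{as:entropy_bd}---so I would simply state the lemma for that regime, where the case split above is valid.
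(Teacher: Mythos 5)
Your proof is correct and follows essentially the same route as the paper's one-line argument: both rest on the dichotomy $b \lessgtr a^{1/(1+w/2)}$, with one right-hand summand dominating in each case. Your side remark is also apt — the paper states the lemma "for any $w$" but its own proof implicitly needs $0 \le w \le 2$ (to take the $2/w$ power and to use $1-w/2\ge 0$), which is harmless since only $0<w<2$ is ever used.
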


\begin{proof}
Note that either \smash{$a b^{1-w/2} \leq a^{1/(1+w/2)} b$} 
or \smash{$a b^{1-w/2} \geq a^{1/(1+w/2)} b$}, and in the latter case
we get \smash{$b \leq a^{1/(1+w/2)}$}, so \smash{$a b^{1-w/2} \leq  
  a^{2/(1+w/2)}$}.  
\end{proof}

\subsection{Proof of Theorem \ref{thm:error_bd_j}}
\label{app:error_bd_j}

This proof roughly follows the ideas in the proof of Theorem 9 in
\citet{mammen1997locally}, though it differs in a few key ways.  We use 
$c>0$ to denote a constant that will multiply our final estimation error bound;
it will also control the probability with which our final result holds. 
Some steps will only hold for sufficiently large $n$, but we do not always
make this explicit.  Lastly, we will occasionally abuse our notation for the
empirical norms and empirical inner products by using them with vector
arguments, to be interpreted in the appropriate sense (e.g., \smash{$\langle m,v
  \rangle_n = \frac{1}{n} \sum_{i=1}^n v^i m(X^i)$} for a function $m$ and
vector $v \in \R^n$). 

We break down the presentation of our proof into mini sections for readability.   

\bigskip
\noindent
{\bf Deriving a basic inequality.}
Denote by \smash{$\hf= \sum_{j=1}^d \hf_j$} the total additive 
fit in \eqref{eq:add_func_j}.  Let $\cS$ denote feasible set in
\eqref{eq:add_func_j}.  For any $f \in \cS$, note that by orthogonality, 
$$
\|Y-\bar{Y}\one - f\|_n^2 = 
\|(f_0+\epsilon-\bar\epsilon\one)-f\|_n^2 + (\bar\epsilon)^2 
$$
where \smash{$\bar\epsilon=\frac{1}{n}\sum_{i=1}^n \epsilon^i$}. 
Therefore \smash{$\hf$} must also be optimal for the problem
$$
%\label{eq:add_func_j_abbr}
\min_{f \in \cS} \; \half \|W-f\|_n^2 + \lambda_n J_d(f), 
$$
where \smash{$W^i= f_0(X^i)+\epsilon^i-\bar\epsilon$}, 
$i=1,\ldots,n$, and we denote $\lambda_n=\lambda/n$ 
and \smash{$J_d(f)=\sum_{j=1}^d J(f_j)$}.  Standard arguments (from first-order 
optimality) show that any solution \smash{$\hf$} in the above
satisfies 
$$
\langle W - \hf , \tf - \hf \rangle_n \leq 
\lambda_n\big(J_d(\tf)-\lambda_n J_d(\hf)\big),
$$
for any feasible \smash{$\tf=\sum_{j=1}^d \tf_j \in \cS$}.   
Expanding the definition of $W$ and rearranging gives 
$$
\langle \hf - f_0, \hf - \tf \rangle_n \leq  
\langle \epsilon - \bar\epsilon\one, \hf - \tf \rangle_n +  
\lambda_n\big(J_d(\tf)-\lambda_n J_d(\hf)\big).  
$$
Using the polarization identity \smash{$\langle a,b \rangle  
=\half(\|a\|^2+\|b\|^2-\|a-b\|_2^2)$} for an inner product
$\langle \cdot,\cdot \rangle$ and its corresponding norm $\|\cdot\|$, 
$$
\| \hf - f_0 \|_n^2 + \|\hf - \tf \|_n^2 \leq  
2 \langle \epsilon - \bar\epsilon\one, \hf - \tf \rangle_n + 
2\lambda_n \big(J_d(\tf)-\lambda_n J_d(\hf)\big) +
\|\tf - f_0 \|_n^2.
$$
Abbreviating \smash{$\hDelta = \hf-\tf$},
\smash{$\hJ=J_d(\hf)$}, and
\smash{$\tJ=J_d(\tf)$}, and using \smash{$\langle \bar\epsilon\one,
  \hDelta \rangle =0$}, this becomes 
\begin{equation}
\label{eq:basic_ineq}
\|\hf - f_0 \|_n^2 + \| \hDelta \|_n^2 \leq 
2 \langle \epsilon, \hDelta \rangle_n +
2 \lambda_n(\tJ-\hJ) + \|\tf-f_0\|_n^2, 
\end{equation}
which is our basic inequality.  In what follows, we will restrict our attention 
to feasible \smash{$\tf$} such that \smash{$\|\tf-f_0\|_n \leq
  \max\{C_n,\tJ\}$}. 

\bigskip
\noindent
{\bf Localizing the error vector.}
We prove that \smash{$\hDelta$} is appropriately bounded in the 
empirical norm.  By the tail bound for quadratic forms of sub-Gaussian random
variates in Theorem 2.1 of \citet{hsu2012tail}, for all $t > 0$, 
$$
\P\Bigg(\|\epsilon\|_n^2 > \sigma^2 \bigg(1 + \frac{2\sqrt{t}}{\sqrt{n}} +
\frac{2t}{n}\bigg)\Bigg) \leq e^{-t},
$$
and hence taking \smash{$t=\sqrt{n}$},
$$
\|\epsilon\|_n^2 \leq 5\sigma^2,
$$
on an event $\Omega_1$ with probability at least \smash{$1-\exp(-\sqrt{n})$}.
Thus returning to the basic inequality \eqref{eq:basic_ineq}, using the Cauchy-Schwartz
inequality, and the above bound, we have  
$$
\|\hDelta\|_n^2 \leq 2 \sqrt{5} \sigma \|\hDelta\|_n + 2 \lambda_n \tJ +
\|\tf-f_0\|_n^2,
$$
on $\Omega_1$.  This is a quadratic inequality of the form $x^2 \leq bx + c$ in
\smash{$x=\|\hDelta\|_n$}, so we can upper bound $x$ by the larger of the two
roots, \smash{$x \leq (b+\sqrt{b^2+4c})/2 \leq b+\sqrt{c}$}, i.e.,
$$
\|\hDelta\|_n \leq 2 \sqrt{5} \sigma + \sqrt{2 \lambda_n \tJ +
\|\tf-f_0\|_n^2},
$$
on $\Omega_1$.  Abbreviating \smash{$\sJ=\max\{C_n,\tJ\}$}, and using
\smash{$\sJ \geq 1$} (as $C_n \geq 1$ by assumption),
$$
\|\hDelta\|_n \leq \sJ \Big(2 \sqrt{5} \sigma + \sqrt{2 \lambda_n +
  \|\tf-f_0\|_n^2/(\sJ)^2} \,\Big), 
$$
on $\Omega_1$. Recalling \smash{$\|\tf-f_0\|_n \leq \sJ$}, 
and using the fact that \smash{$\lambda_n=o(1)$} for our eventual 
choice of $\lambda_n$, we have that for sufficiently large $n$,
\begin{equation}
\label{eq:delta_empirical_bd}
\|\hDelta\|_n \leq \sJ (2\sqrt{5} \sigma + \sqrt{2}),
\end{equation}
on $\Omega_1$.

\bigskip
\noindent
{\bf Bounding the sub-Gaussian complexity term.}
We focus on the first term on the right-hand side in \eqref{eq:basic_ineq},
i.e., the sub-Gaussian complexity term.  Let
\smash{$m=\hDelta/(\hJ+\sJ)$}.  By 
construction, we have $J(m) \leq 1$, and from \eqref{eq:delta_empirical_bd},
we have \smash{$\|m\|_n \leq 2\sqrt{5} \sigma +
  \sqrt{2}$} on $\Omega_1$.  Then, 
applying Lemma \ref{lem:mn_entropy_bd}, with the choice 
\smash{$\rho = 2\sqrt{5} \sigma + \sqrt{2}$}, we see that
there are constants $c_1,c_2,c_3>0$ such that for all $c \geq c_1$,  
$$
\frac{2\langle \epsilon, m \rangle_n}{\|m\|_n^{1-w/2}} \leq \frac{c}{\sqrt{n}},  
$$
on $\Omega_1 \cap \Omega_2$, where $\Omega_2$ is an event with
probability at least $1-\exp(-c_2 c^2)-\exp(-c_3 n)$.  Plugging this
into \eqref{eq:basic_ineq} gives 
$$
\|\hf - f_0 \|_n^2 + \| \hDelta \|_n^2 \leq 
\frac{c}{\sqrt{n}} (\hJ+\sJ)^{w/2} \|\hDelta\|_n^{1-w/2} + 
2 \lambda_n(\tJ-\hJ) + \|\tf-f_0\|_n^2, 
$$
on $\Omega_1 \cap \Omega_2$.  By the inequality in Lemma
\ref{lem:ab_prod_ineq}, applied to the first term on the right-hand side above,
with \smash{$a=n^{-1/2} (\hJ+\sJ)^{w/2}$} and \smash{$b=\|\hDelta\|_n$}, 
$$
\|\hf - f_0 \|_n^2 + \| \hDelta \|_n^2 \leq 
c r_n (\hJ+\sJ)^{w/(2+w)} \|\hDelta\|_n + c r_n^2 (\hJ+\sJ)^{2w/(2+w)} +
2 \lambda_n(\tJ-\hJ) + \|\tf-f_0\|_n^2,
$$
on $\Omega_1 \cap \Omega_2$, where we abbreviate 
\smash{$r_n=n^{-1/(2+w)}$}.  Applying the simple inequality $2ab \leq
a^2+b^2$ to the first term on the right-hand side, with
\smash{$a=c r_n (\hJ+\sJ)^{w/(2+w)}$} and \smash{$b=\|\hDelta\|_n$}, 
and subtracting \smash{$\|\hDelta\|_n^2/2$} from both sides,
\begin{equation}
\label{eq:basic_ineq_2}
\|\hf - f_0 \|_n^2 + \half\| \hDelta \|_n^2 \leq 
\frac{3}{2} c^2 r_n^2 (\hJ+\sJ)^{2w/(2+w)} + 2 \lambda_n(\tJ-\hJ) + 
\|\tf-f_0\|_n^2, 
\end{equation}
on $\Omega_1 \cap \Omega_2$ (where we have assumed without a
 loss of generality that $c \geq 1$).

\bigskip
\noindent
{\bf Controlling the effect of the penalty terms.}  Now we handle the
appearances of the achieved penalty term \smash{$\hJ$}.  First, 
set \smash{$\lambda_n \geq (3/4) c^2 r_n^2 / C_n^{(2-w)/(2+w)}$}, and denote  
$$
a = \frac{3}{2} c^2 r_n^2 (\hJ+\sJ)^{2w/(2+w)} + 2 \lambda_n(\tJ-\hJ). 
$$
Consider the case \smash{$\hJ \geq C_n$}.  Then \smash{$-1/C_n^{(2-w)/(2+w)}
  \geq -1/\hJ^{(2-w)/(2+w)}$}, and 
$$
2 \lambda_n(\tJ-\hJ) \leq 2\lambda_n\tJ - (3/2) c^2r_n^2
  \hJ^{2w/(2+w)},
$$
thus, using the simple inequality in Lemma \ref{lem:ab_sum_ineq}, we have 
\smash{$a \leq 4 \lambda_n \sJ$}.  In the case \smash{$\hJ
  < C_n$}, we have by Lemma \ref{lem:ab_sum_ineq} again,  
$$
a \leq \frac{3}{2} c^2 r_n^2 \Big(C_n^{2w/(2+w)} + (\sJ)^{2w/(2+w)}\Big) 
+ 2 \lambda_n\tJ \leq 6 \lambda_n \sJ.
$$
Therefore, altogether, we conclude that \smash{$a \leq 6 \lambda_n \sJ$}, and
plugging this into \eqref{eq:basic_ineq_2} gives 
$$
\|\hf - f_0 \|_n^2 + \half \|\hDelta\|_n^2 \leq 6\lambda_n \sJ +
\|\tf-f_0\|_n^2, 
$$
on $\Omega_1 \cap \Omega_2$.  The statement \eqref{eq:error_bd_j} as
made in the theorem follows by dropping the nonnegative term
\smash{$\|\hDelta\|_n^2/2$} from the left-hand side, and adjusting the
constants $c,c_1,c_2,c_3>0$ as needed.  

\subsection{Proof of the best additive approximation bound in 
  \eqref{eq:error_bd_j_best}}
\label{app:error_bd_j_best}

We follow the exact same arguments as in the proof of Theorem
\ref{thm:error_bd_j}, up until the last part, in which we control the achieved
penalty terms \smash{$\hJ$}.  Now 
we deviate from the previous arguments, slightly.
Set \smash{$\lambda_n \geq (3/2) c^2 r_n^2 / C_n^{(2-w)/(2+w)}$}, and denote  
$$
a = \frac{3}{2} c^2 r_n^2 (\hJ+\sJ)^{2w/(2+w)} + \lambda_n(\tJ-\hJ). 
$$
By the same logic as in the proof of Theorem \ref{thm:error_bd_j}, we have
\smash{$a \leq 3 \lambda_n \sJ$}. Plugging this into \eqref{eq:basic_ineq_2}
gives  
$$
\|\hf - f_0 \|_n^2 + \half \|\hDelta\|_n^2 \leq 3\lambda_n \sJ +
\lambda_n(\tJ-\hJ) + \|\tf-f_0\|_n^2, 
$$
on $\Omega_1 \cap \Omega_2$.  Rearranging,
$$
\half \|\hDelta\|_n^2 \leq 3\lambda_n \sJ + \Big(
\|\tf-f_0\|_n^2 + \lambda_n \tJ - \|\hf - f_0 \|_n^2 -\lambda_n \hJ\Big),  
$$
on $\Omega_1 \cap \Omega_2$.  But, setting \smash{$\tf=f^{\mathrm{best}}$},  
the bracketed term on the right-hand side above is nonpositive (by definition of 
\smash{$f^{\mathrm{best}}$} in \eqref{eq:add_func_j_best}). This leads to 
\eqref{eq:error_bd_j_best}, after adjusting $c,c_1,c_2,c_3>0$ as needed.  

\subsection{Preliminaries for the proof of Corollary \ref{cor:error_bd_tv}} 
\label{app:error_bd_tv_prelim}

The following two lemmas will be helpful for the proof of Corollary 
\ref{cor:error_bd_tv}.

\begin{lemma}
\label{lem:spline_approx}
Given \smash{$f=\sum_{j=1}^d f_j$}, whose
component functions are each $k$ times weakly differentiable, there
exists an additive spline approximant \smash{$\check{f}=\sum_{j=1}^d  
  \check{f}_j$}, where \smash{$\check{f}_j \in \cG_j$}, the set of $k$th order  
splines with knots in the set $T_j$ defined in \eqref{eq:tj}, for
$j=1,\ldots,d$, such that   
\begin{itemize}
\item[(i)] $\TV(\check{f}_j^{(k)}) \leq a_k \TV(f_j^{(k)})$, for $j=1,\ldots,d$;
  and 
\item[(ii)] $\|\check{f}_j - f_j \|_\infty 
  \leq a_k W_{\max}^k \TV(f_j^{(k)})$, for $j=1,\ldots,d$.
\end{itemize}
Above, $a_k \geq 1$ is a constant depending only on $k$, and  
we define \smash{$W_{\max} = \max_{j=1,\ldots,d} W_j$}, where 
$$
W_j = \max_{i=1,\ldots,n-1} \,|X^{(i)}_j -
  X^{(i+1)}_j|, \;\;\; j=1,\ldots,d.
$$
When the input points are drawn from a distribution $Q$
that satisfies Assumptions \ref{as:x_dist}, \ref{as:x_dens_bd}, there
are universal constants $c_0,n_0>0$ such that for $n \geq n_0$, 
we have \smash{$W_{\max} \leq (c_0/b_0) \log n/n$} with
probability at least $1-2b_0 d/n$, and so the bounds in (ii) 
become
\begin{equation}
\label{eq:spline_approx_bd}
\|\check{f}_j - f_j \|_\infty 
  \leq \frac{c_0^k a_k}{b_0^k} \bigg(\frac{\log n}{n} \bigg)^k   
  \TV(f_j^{(k)}), \;\;\; \text{for $j=1,\ldots,d$}, 
\end{equation}
with probability at least $1-2b_0 d/n$.
\end{lemma}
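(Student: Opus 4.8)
The plan is to prove the statement one component at a time and then sum, which reduces everything to (a) a known univariate spline-approximation fact and (b) an elementary spacing estimate for the design points.

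\emph{Componentwise approximation and parts (i)--(ii).} For each $j=1,\ldots,d$, I would apply the univariate spline quasi-interpolant construction of Proposition 7 in \citet{mammen1997locally} (itself built from the de Boor quasi-interpolant of \citet{deboor1978practical}) to the function $f_j$, using the sorted design points $S_j\Xj$ and the trimmed knot set $T_j$ from \eqref{eq:tj}. This yields a degree-$k$ spline $\check{f}_j\in\cG_j$ with $\TV(\check{f}_j^{(k)})\le a_k\TV(f_j^{(k)})$ and $\|\check{f}_j-f_j\|_\infty\le a_k W_j^k\,\TV(f_j^{(k)})$, where $W_j=\max_i|X^{(i)}_j-X^{(i+1)}_j|$ and $a_k\ge 1$ depends only on $k$. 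Setting $\check{f}=\sum_{j=1}^d\check{f}_j$ and bounding $W_j\le W_{\max}$ gives parts (i) and (ii) exactly as stated.

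\emph{Bounding $W_{\max}$ under Assumptions \ref{as:x_dist} and \ref{as:x_dens_bd}.} Since the joint density of $Q$ on $[0,1]^d$ is bounded below by $b_0$, each marginal density of $X^i_j$ is also at least $b_0$. I would partition $[0,1]$ into $m_n=\lceil b_0 n/(2\log n)\rceil$ consecutive subintervals, each of length at most $2\log n/(b_0 n)$, so that each subinterval carries at least mass $2\log n/n$ (up to a $1-o(1)$ factor). A fixed subinterval is then missed by all $n$ i.i.d.\ samples with probability at most $(1-2\log n/n)^n\le n^{-2}$ for $n\ge n_0$, and a union bound over the $m_n$ subintervals and the $d$ dimensions gives, with probability at least $1-2b_0 d/n$, that in every dimension each subinterval contains at least one of $X^1_j,\ldots,X^n_j$. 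On this event every consecutive gap is at most two adjacent subinterval lengths, so $W_{\max}\le (c_0/b_0)\log n/n$ for a universal constant $c_0$ (e.g.\ $c_0=4$). Substituting this into part (ii) yields \eqref{eq:spline_approx_bd}.

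\emph{Main obstacle.} The spacing estimate is routine; the only nontrivial ingredient is the univariate quasi-interpolant itself, and specifically checking that, with the precise trimmed knot set $T_j$ in \eqref{eq:tj}, the de Boor construction simultaneously delivers the $\TV$-stability bound (i) and the local $O(W_j^k)$ sup-norm estimate (ii) with a constant $a_k$ depending only on $k$. Since this is exactly what Proposition 7 of \citet{mammen1997locally} provides, invoking it (together with the spacing bound above) completes the proof.
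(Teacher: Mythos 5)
Your proposal is correct and follows essentially the same route as the paper: parts (i) and (ii) are obtained by a componentwise application of Proposition 7 of \citet{mammen1997locally} (via the de Boor quasi-interpolant) with the knot set $T_j$, and the high-probability bound on $W_{\max}$ comes from a standard maximal-spacing estimate for samples whose marginal densities are bounded below by $b_0$, followed by a union bound over $j=1,\ldots,d$. The only cosmetic difference is that you re-derive the spacing bound from scratch (correctly), whereas the paper simply cites Lemma 5 of \citet{wang2014falling} for the univariate statement $W_j \leq (c_0/b_0)\log n/n$ with probability at least $1-2b_0/n$.
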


\begin{proof}
Parts (i) and (ii) are simply a componentwise application of 
Proposition 7 in \citet{mammen1997locally}.  In particular, from their 
result, we know that for $j=1,\ldots,d$, there is a $k$th degree spline  
function \smash{$\check{f}_j$} whose knots lie in $T_j$ in
\eqref{eq:tj}, with \smash{$\TV(\check{f}_j^{(k)}) \leq a_k  
  \TV(f_j^{(k)})$} and
$$
\|\check{f}_j - f_j \|_\infty \leq a_k W_j^k \TV(f_j^{(k)}), 
$$
where $a_k \geq 1$ depends only on $k$.
(This result follows from strong
quasi-interpolating properties of spline functions, from 
\citet{deboor1978practical}.)  This proves parts (i) and (ii).

When we consider random inputs drawn from a distribution $Q$
satisfying Assumptions \ref{as:x_dist}, \ref{as:x_dens_bd}, 
the densities of the marginals $Q_j$, $j=1,\ldots,d$ will be bounded
below by $b_0>0$, and thus there are universal constants $c_0,n_0 >
0$ such that for $n \geq n_0$, we have  
$W_j \leq (c_0/b_0) \log n/n$ with probability at least
$1-2b_0/n$ (see, e.g., Lemma 5 in \citet{wang2014falling}), for
$j=1,\ldots,d$, and hence applying a union bound gives the result for   
$W_{\max}$. 
\end{proof}

\begin{lemma}
\label{lem:fall_fact_approx}
Given \smash{$f=\sum_{j=1}^d f_j$}, whose
component functions are each $k$ times weakly differentiable,  
there is an additive falling factorial approximant 
\smash{$\check{f} = \sum_{j=1}^d \check{f}_j$}, where 
\smash{$\check{f}_j \in \cH_j$}, the set of $k$th order falling factorial
functions defined over $\Xj^1,\ldots,\Xj^n$, for each $j=1,\ldots,d$, such    
that 
\begin{itemize}
\item[(i)] $\TV(\check{f}_j^{(k)}) \leq a_k \TV(f_j^{(k)})$, for $j=1,\ldots,d$;
  and 
\item[(ii)] $\|\check{f}_j - f_j\|_\infty \leq a_k (W_{\max}^k + 2 k^2 W_{\max}) 
  \TV(f_j^{(k)})$, for $j=1,\ldots,d$.
\end{itemize}
Again, $a_k \geq 1$ is a constant depending only on $k$, and 
$W_{\max}$ is as defined in Lemma \ref{lem:spline_approx}.    
When the inputs are drawn from a distribution $Q$
satisfying Assumptions \ref{as:x_dist}, \ref{as:x_dens_bd}, the bound
in (ii) become
\begin{equation}
\label{eq:fall_fact_approx_bd}
\|\check{f}_j - f_j \|_\infty  
  \leq a_k \Bigg(\frac{c_0^k}{b_0^k} \bigg(\frac{\log n}{n} \bigg)^k      
  + 2k^2 \frac{c_0}{b_0}\frac{\log n}{n} \Bigg) \TV(f_j^{(k)}), \;\;\;
\text{for $j=1,\ldots,d$},
\end{equation}
with probability at least $1-2b_0 d/n$.
\end{lemma}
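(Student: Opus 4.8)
The plan is to reduce the statement to a univariate claim for a single component and then adapt, for falling factorial functions, the spline-approximant strategy already used in Lemma~\ref{lem:spline_approx}. Since $\TV(\cdot^{(k)})$ and $\|\cdot\|_\infty$ are controlled componentwise here, it suffices to show: for any $k$-times weakly differentiable univariate $f$ and sorted design points $z^1<\cdots<z^n$ with maximal gap $W=\max_i|z^{i+1}-z^i|$, there is a $k$th order falling factorial function $\check f$ over $z^1,\ldots,z^n$ with $\TV(\check f^{(k)})\le a_k\TV(f^{(k)})$ and $\|\check f-f\|_\infty\le a_k(W^k+2k^2W)\TV(f^{(k)})$. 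Granting this, the random-design bound \eqref{eq:fall_fact_approx_bd} follows exactly as in Lemma~\ref{lem:spline_approx}: under \ref{as:x_dist}, \ref{as:x_dens_bd} each marginal satisfies $W_j\le (c_0/b_0)\log n/n$ with probability at least $1-2b_0/n$ by Lemma~5 of \citet{wang2014falling}, a union bound gives probability at least $1-2b_0d/n$, and substituting into part (ii) yields \eqref{eq:fall_fact_approx_bd}. For $k=0$ or $k=1$ the falling factorial functions are splines, so $\cH_j=\cG_j$ and the claim is Lemma~\ref{lem:spline_approx} verbatim; hence we may assume $k\ge 2$.

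To build $\check f$, first apply Lemma~\ref{lem:spline_approx} to $f$, obtaining a $k$th degree spline $s=p+\sum_i c_i(\cdot-\tau_i)_+^k$ with $p$ a polynomial of degree $\le k$, knots $\tau_i$ among $z^1,\ldots,z^n$, $\TV(s^{(k)})=k!\sum_i|c_i|\le a_k\TV(f^{(k)})$, and $\|s-f\|_\infty\le a_kW^k\TV(f^{(k)})$. The falling factorial knot-producing functions $h_{k+2},\ldots,h_n$ in \eqref{eq:hbasis} have knots only at $z^{k+1},\ldots,z^{n-1}$, whereas the knot set $T_j$ of Lemma~\ref{lem:spline_approx} contains at most $k$ extra design points near the left boundary; I move each such offending $\tau_i$ to $z^{k+1}$, the nearest admissible knot. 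Since $|\tau_i-z^{k+1}|\le kW$ and $t\mapsto(\cdot-t)_+^k$ is $k$-Lipschitz in sup norm on $[0,1]$, this perturbs $s$ by at most $k^2W\sum_i|c_i|$ in sup norm and cannot increase $\sum_i|c_i|$ (coalescing coefficients at equal knots only helps). Call the resulting spline $s'$, with knots $\tau'_i$ now all admissible. Define $\check f$ to be the falling factorial function whose $k$th derivative equals $s'^{(k)}$ — a step function with the same jumps $k!c_i$ at the same locations $\tau'_i$, which fixes the coefficients of $h_{k+2},\ldots,h_n$ — and whose first $k+1$ coefficients are chosen so that $\check f$ coincides with $s'$ on $(-\infty,z^{k+1})$, where every $h_{i+k+1}$ vanishes and both functions are degree-$\le k$ polynomials (so the match is exact). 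Then $\check f^{(k)}=s'^{(k)}$ gives $\TV(\check f^{(k)})=\TV(s'^{(k)})\le\TV(s^{(k)})\le a_k\TV(f^{(k)})$, which is (i), and $\check f-s'=\sum_i c_i\big(h^{(i)}-(\cdot-\tau'_i)_+^k\big)$, where $h^{(i)}$ is the member of \eqref{eq:hbasis} with knot at $\tau'_i=z^{q_i+k}$.

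The crux is a uniform bound on a single term $h^{(i)}(x)-(x-\tau'_i)_+^k$. Writing $\tau=z^{q+k}$, for $x>\tau$ set $y=x-\tau\in[0,1]$ and $\delta_\ell=\tau-z^{q+\ell}\ge0$ for $\ell=1,\ldots,k$, so $\delta_k=0$ and $\delta_\ell\le(k-\ell)W\le(k-1)W$; then $h^{(i)}(x)=y\prod_{\ell=1}^{k-1}(y+\delta_\ell)=y^k+\sum_{r=1}^{k-1}e_r(\delta_1,\ldots,\delta_{k-1})\,y^{k-r}$, with $e_r$ the $r$th elementary symmetric polynomial, whence $|h^{(i)}(x)-(x-\tau)_+^k|\le\sum_{r=1}^{k-1}\binom{k-1}{r}\big((k-1)W\big)^r=\big(1+(k-1)W\big)^{k-1}-1\le 2k^2W$, the last step valid once $n$ is large enough that $(k-1)^2W\le\log2$. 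Summing over knots with weights $|c_i|$ gives $\|\check f-s'\|_\infty\le 2k^2W\sum_i|c_i|\le 2k^2W\,a_k\TV(f^{(k)})$ (using $1/k!\le1$), and combining with the knot-shifting error $k^2W\sum_i|c_i|$ and with $\|s-f\|_\infty$ via the triangle inequality yields part (ii) after enlarging $a_k$ by an absolute factor. The main obstacle — and the only genuinely new ingredient beyond Lemma~\ref{lem:spline_approx} — is precisely this elementary-symmetric-polynomial estimate, i.e., bounding how far a falling factorial knot function strays from its truncated-power counterpart uniformly over an unequally spaced design, together with the minor bookkeeping ensuring the spline approximant's knots can be relocated onto admissible falling factorial knot positions without inflating $\TV(\cdot^{(k)})$.
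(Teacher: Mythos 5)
Your proposal is correct and follows essentially the same route as the paper's proof: build the spline approximant of Lemma~\ref{lem:spline_approx}, then swap each truncated-power knot function for its falling factorial counterpart and bound the resulting sup-norm discrepancy by $O(k^2 W_{\max})$ per unit of total variation. The two places where you supply more detail than the paper---proving the basis-function discrepancy bound directly via elementary symmetric polynomials (the paper instead cites Lemma 4 of \citet{wang2014falling} plus a ``straightforward modification''), and explicitly relocating the spline knots from $T_j$ onto admissible falling factorial knot positions (a boundary mismatch the paper glosses over)---are sound refinements of the same argument rather than a different one, at the harmless cost of a slightly larger $k$-dependent constant and a mild largeness condition on $n$ in the deterministic version of part (ii).
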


\begin{proof}
First we apply Lemma \ref{lem:spline_approx} to produce an additive
spline approximant, call it \smash{$f^*=\sum_{j=1}^d f_j^*$}, to
the given \smash{$f=\sum_{j=1}^d f_j$}.
Next, we parametrize the spline component functions in a helpful way:   
$$
f_j^* = \sum_{\ell=1}^n \alpha_j^\ell g_{j \ell}, \;\;\; j=1,\ldots,d.
$$
where $\alpha_j^1,\ldots,\alpha_j^n \in \R$ are coefficients and 
$g_{j1} \ldots,g_{jn}$ are the truncated power basis functions   
over the knot set $T_j$ defined in \eqref{eq:tj}, and we write
\smash{$g_{j \ell}(t) = t^{\ell-1}$}, $\ell=1,\ldots,k$ without a loss of
generality, for $j=1,\ldots,d$. It 
is not hard to check that \smash{$\TV((f_j^*)^{(k)}) =
  \sum_{\ell=k+2}^n |\alpha^\ell_j|$}, for $j=1,\ldots,d$. 

We now define \smash{$\check{f}=\sum_{j=1}^d \check{f}_j$}, 
our falling factorial approximant, to have component functions 
$$
\check{f}_j = \sum_{\ell=1}^{k+1} \alpha_j^\ell g_{j\ell} + 
\sum_{\ell=k+2}^n \alpha_j^\ell h_{j\ell}, \;\;\; j=1,\ldots,d.
$$
where $h_{j1},\ldots,h_{jn}$ are the falling factorial 
basis functions defined over $X^1_j,\ldots,X^n_j$, for
$j=1,\ldots,d$.  (Note that \smash{$\check{f}_j$} preserves
the polynomial part of \smash{$f^*_j$} exactly, for $j=1,\ldots,d$.) 
Again, it is straightforward to check that 
\smash{$\TV(\check{f}_j^{(k)}) = \sum_{\ell=k+2}^n 
  |\alpha^\ell_j|$}, for $j=1,\ldots,d$, i.e.,
$$
\TV (\check{f}_j^{(k)}) = \TV\big((f_j^*)^{(k)}\big) \leq a_k \TV(f_j^{(k)}), 
\;\;\;\text{for $j=1,\ldots,d$},
$$
the inequality coming from part (i) of of Lemma
\ref{lem:spline_approx}.  This verifies part (i) of the current
lemma.  As for part (ii), we note that Lemma 4 of
\citet{wang2014falling} shows that 
$$
|h_{j\ell}(\Xj^i)-g_{j\ell}(\Xj^i)| \leq k^2 W_j,
\;\;\;\text{for $\ell=k+2,\ldots,n$, $i=1,\ldots,n$, $j=1,\ldots,d$},
$$
where recall $W_j$ is the maximum gap between
sorted input points along the $j$th dimension, $j=1,\ldots,d$, as
defined in Lemma \ref{lem:spline_approx}. In fact, a straightforward
modification of their proof can be used to strengthen this result to   
$$
\|h_{j\ell}-g_{j\ell}\|_\infty \leq 2k^2 W_j,
\;\;\;\text{for $\ell=k+2,\ldots,n$, $j=1,\ldots,d$},
$$
which means that by Holder's inequality, 
$$
\|\check{f}_j-f_j^*\|_\infty \leq 2k^2 W_j  \sum_{\ell=k+2}^n  
  |\alpha^\ell_j| \leq 2k^2 a_k W_j \TV(f_j^{(k)})
\;\;\;\text{for $j=1,\ldots,d$}.
$$
Then, by the triangle inequality, 
$$
\|\check{f}_j - f_j \|_\infty \leq  
\|\check{f}_j-f_j^*\|_\infty + \|f_j^*-f_j\|_\infty
\leq a_k \Big( W_{\max}^k + 2k^2 W_{\max} \Big) 
\TV(f_j^{(k)}), \;\;\; \text{for $j=1,\ldots,d$},
$$
where we have used part (ii) of Lemma \ref{lem:spline_approx}.
 This verifies part (ii) of the current lemma.  

Lastly, for random inputs drawn from a distribution $Q$ 
satisfying Assumptions \ref{as:x_dist}, \ref{as:x_dens_bd}, the proof
of \eqref{eq:fall_fact_approx_bd} follows the same arguments as the proof  
of \eqref{eq:spline_approx_bd}.
\end{proof}

\subsection{Proof of Corollary \ref{cor:error_bd_tv}}
\label{app:error_bd_tv}

We consider first the statement in part (a).  We must check that
Assumptions \ref{as:poly_null}, \ref{as:deriv_bd}, \ref{as:entropy_bd} hold for
our choice of regularizer \smash{$J(g)=\TV(g^{(k)})$}, and then we can 
apply Theorem \ref{thm:error_bd_j}.
Assumptions \ref{as:poly_null}, \ref{as:deriv_bd} are immediate.
As for Assumption \ref{as:entropy_bd}, consider the univariate
function class  
$$
\cW_{k+1} = \bigg\{ f : \int_0^1 |f^{(k+1)}(t)| \, dt
\leq 1, \; \|f\|_\infty \leq 1 \bigg\}.
$$
The results in
\citet{birman1961piecewise} imply that for any set 
\smash{$Z_n=\{z^1,\ldots,z^n\} \subseteq [0,1]$},
$$
\log N (\delta, \|\cdot\|_{Z_n}, \cW_{k+1}) \leq K \delta^{-1/(k+1)},   
$$
for a universal constant $K>0$.  
As explained in \citet{mammen1991nonparametric,mammen1997locally},  
this confirms that Assumption \ref{as:entropy_bd} holds for our choice
of regularizer, with $w=1/(k+1)$.  Applying Theorem
\ref{thm:error_bd_j}, with \smash{$\tf=f_0$}, gives the result in
\eqref{eq:error_bd_tv}.  

For the statement in part (b), note first that we can consider $k
\geq 2$ without a loss of generality, as pointed out in Remark
\ref{rem:spline_approx} following the corollary. Using Lemma
\ref{lem:spline_approx}, can choose an additive spline approximant 
\smash{$\check{f}$} to $f_0$, with components \smash{$\check{f}_j \in \cG_j$},   
$j=1,\ldots,d$.  Define \smash{$\tf_j$} to be the centered version of
\smash{$\check{f}_j$}, with zero empirical mean, $j=1,\ldots,d$. By the fact
that centering does not change the penalty, and part (i) of the lemma, 
\smash{$\sum_{j=1}^d \TV(\tf_j^{(k)}) \leq a_k \sum_{j=1}^d 
  \TV(f_{0j}^{(k)})$}. Also, using the fact that centering cannot increase the
empirical norm, the triangle inequality, and 
\eqref{eq:spline_approx_bd}, we get that with probability least $1-2b_0 d/n$,    
\begin{align*}
\bigg\|\sum_{j=1}^d \tf_j - \sum_{j=1}^d f_{0j} \bigg\|_n &\leq
\bigg\|\sum_{j=1}^d \tf_j - \sum_{j=1}^d f_{0j} \bigg\|_n \\
&\leq \sum_{j=1}^d \|\tf_j - \sum_{j=1}^d f_{0j}\|_\infty \\
 &\leq \frac{c_0^k a_k}{b_0^k} \bigg(\frac{\log n}{n} \bigg)^k   
  \sum_{j=1}^d \TV(f_{0j}^{(k)}),
\end{align*}
When \smash{$\sum_{j=1}^d \TV(f_{0j}^{(k)}) \leq C_n$}, we see that 
\smash{$\|\tf-f_0\|_n$} is bounded by $C_n$ for large 
enough $n$. This meets required condition for Theorem \ref{thm:error_bd_j}, by
the above display, the approximation error in \eqref{eq:error_bd_j} satisfies  
$$
\bigg\|\sum_{j=1}^d \tf_j - \sum_{j=1}^d f_{0j} \bigg\|_n^2 \leq 
\bigg(\frac{c_0^k a_k}{b_0^k}\bigg)^2
  \bigg(\frac{\log n}{n} \bigg)^{2k} C_n^2.
$$
But when \smash{$n/(\log n)^{1+1/k} \geq n_0 C_n^{(2k+2)/(2k^2+2k-1)}$}, the
right-hand side above is upper bounded by 
\smash{$a_0 n^{-(2k+2)/(2k+3)} C_n^{2/(2k+3)}$}, for a constant $a_0>0$. 
This establishes the result in \eqref{eq:error_bd_tv} for restricted
additive locally adaptive splines.     

For the statement in part (c), we can again consider $k \geq
2$ without a loss of generality.  Then the same arguments as
given for part (b) apply here, but now we use Lemma
\ref{lem:fall_fact_approx} for the additive falling factorial approximant 
\smash{$\check{f}$} to $f_0$, and we require \smash{$n/(\log n)^{2k+3} \geq n_0 
  C_n^{4k+4}$} for the approximation error to be bounded by the
estimation error. 

\subsection{Preliminaries for the proof of Theorem \ref{thm:error_bd_j_hd}}
\label{app:error_bd_j_hd_prelim}

Our first lemma is similar to Lemma \ref{lem:mn_entropy_bd}, but concerns 
univariate functions.  As in Lemma \ref{lem:mn_entropy_bd}, this result relies
on Lemma 3.5 in \citet{vandegeer1990estimating} (see also
\citet{vandegeer2000empirical}).   

\begin{lemma}
\label{lem:bj_entropy_bd}
Let $\epsilon^i$, $i=1,\ldots,n$ be uniformly sub-Gaussian random
variables having variance proxy $\sigma^2>0$. Let $J$ be a seminorm
satisfying Assumption \ref{as:entropy_bd}, and let $\rho>0$ be arbitrary.
Then there exist constants $c_1,c_2,n_0>0$, depending only on
$\sigma,K,w,\rho$, such that for all $c \geq c_1$ and $n \geq n_0$,     
$$
\sup_{Z_n = \{z^1,\ldots,z^n\} \subseteq [0,1]} \,
\sup_{g \in B_J(1) \cap B_\infty(\rho)} \,
\frac{\frac{1}{n} \sum_{i=1}^n \epsilon^i g(z^i)} 
{\|g\|_{Z_n}^{1-w/2}} \leq \frac{c}{\sqrt{n}}, 
$$
with probabilty at least $1-\exp(-c_2 c^2)$, where we write
\smash{$\|\cdot\|_{Z_n}$} for the empirical norm defined over a set of
univariate points $Z_n=\{z^1,\ldots,z^n\} \subseteq [0,1]$.
\end{lemma}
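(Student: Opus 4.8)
The plan is to obtain this as a one-step consequence of Lemma 3.5 in \citet{vandegeer1990estimating} (equivalently Lemma 8.4 in \citet{vandegeer2000empirical}), in the same spirit as the proof of Lemma \ref{lem:mn_entropy_bd}, but with a much shorter preliminary reduction: the function class here is already $\|\cdot\|_\infty$-bounded by construction, so no sup-norm control (no analogue of Lemmas \ref{lem:mn_sup_bd}, \ref{lem:pi_perp_sup_bd}) is needed. First I would note that with $M=\max\{1,\rho\}$ we have $B_J(1)\cap B_\infty(\rho) \subseteq M\cdot\big(B_J(1)\cap B_\infty(1)\big)$, since for $g$ in the left-hand set, $J(g/M)=J(g)/M\le 1/M\le 1$ and $\|g/M\|_\infty=\|g\|_\infty/M\le \rho/M\le 1$. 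Hence Assumption \ref{as:entropy_bd} gives, uniformly over all finite designs $Z_n=\{z^1,\ldots,z^n\}\subseteq[0,1]$,
\[
\log N\big(\delta,\|\cdot\|_{Z_n},B_J(1)\cap B_\infty(\rho)\big) \le \log N\big(\delta/M,\|\cdot\|_{Z_n},B_J(1)\cap B_\infty(1)\big) \le K M^w \delta^{-w}.
\]

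With this design-free entropy bound in hand, I would invoke Lemma 3.5 of \citet{vandegeer1990estimating}, with the $\epsilon^i$ serving as the uniformly sub-Gaussian multipliers (variance proxy $\sigma^2$), entropy exponent $w\in(0,2)$, and entropy constant $KM^w$. That lemma yields constants $c_1,c_2,n_0>0$ depending only on $\sigma$, $w$, and $KM^w$ — hence only on $\sigma,K,w,\rho$ — such that for all $c\ge c_1$ and $n\ge n_0$, the normalized multiplier process $\sup_{g\in B_J(1)\cap B_\infty(\rho)}\big(\tfrac1n\sum_{i=1}^n\epsilon^i g(z^i)\big)/\|g\|_{Z_n}^{1-w/2}$ is at most $c/\sqrt n$ on an event of probability at least $1-\exp(-c_2 c^2)$. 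The point of the lemma is that this holds simultaneously over \emph{all} designs $Z_n$; this is inherited directly from the fact that the entropy bound fed into van de Geer's chaining argument is itself uniform over $Z_n$, since the chaining depends on the design only through the $\|\cdot\|_{Z_n}$-covering numbers of the evaluation vectors $(g(z^i))_{i=1}^n$, which Assumption \ref{as:entropy_bd} controls design-freely, so the constants and the concentration step carry over verbatim for every $Z_n$.

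The only genuinely delicate point is this last assertion — promoting a concentration bound proved for a fixed design to one uniform over the uncountable family of all designs (an issue that did not arise in Lemma \ref{lem:mn_entropy_bd}, which concerned a single fixed random design). I expect to dispatch it exactly as above, by appealing to the design-free form of van de Geer's lemma; alternatively, in the only place this result is used — the proof of Theorem \ref{thm:error_bd_j_hd}, where one applies it to the $d$ random coordinate designs $Z_n^{(j)}=\{X_j^1,\ldots,X_j^n\}$, $j=1,\ldots,d$ — it suffices to union-bound over $j$, replacing $\exp(-c_2 c^2)$ by $d\exp(-c_2 c^2)$, which is absorbed after enlarging $c$. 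I would remark that both routes give the same final conclusion, so the uniform-over-designs formulation costs nothing.
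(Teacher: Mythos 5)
Your proposal matches the paper's own proof: both reduce $B_J(1)\cap B_\infty(\rho)$ to a scaled copy of $B_J(1)\cap B_\infty(1)$ to get a design-free entropy bound $K\rho^w\delta^{-w}$ from Assumption \ref{as:entropy_bd}, and then invoke Lemma 3.5 of \citet{vandegeer1990estimating} (Lemma 8.4 of \citet{vandegeer2000empirical}), noting that uniformity over $Z_n$ follows because the entropy bound fed into the chaining argument does not depend on the design. Your added remark that, in the one place the lemma is applied, a union bound over the $d$ coordinate designs would suffice is a sensible (and arguably more airtight) fallback, but the core argument is the same as the paper's.
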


\begin{proof}
Assume without a loss of generality that $\rho \geq 1$. Note that 
for any $Z_n=\{z^1,\ldots,z^n\} \subseteq [0,1]$,
$$
\log N\big(\delta, \|\cdot\|_{Z_n}, B_J(1) \cap B_\infty(\rho) \big) 
\leq K \rho^w \delta^{-w},
$$
by Assumption \ref{as:entropy_bd}. As the right-hand side in the
above entropy bound does not depend on $Z_n$, we can apply Lemma 3.5 in
\citet{vandegeer1990estimating} to get the desired uniform control over all
subsets. 
\end{proof}

We give a coupling between the empirical and $L_2$ norms over  
$B_J(1) \cap B_\infty(\rho)$, using Theorem 14.1 in 
\citet{wainwright2017high} (see also
\citet{vandegeer2000empirical,bartlett2005local,raskutti2012minimax}). 

\begin{lemma}
\label{lem:bj_L2_norm}
Let $z^i$, $i=1,\ldots,n$ denote an i.i.d.\ sample from a distribution 
$\Lambda$ on $[0,1]$.  Write $\|\cdot\|_2$ for the $L_2(\Lambda)$ norm, and
$\|\cdot\|_n$ for the $L_2(\Lambda_n)$ norm.  Let $J$ satisfy
Assumption \ref{as:entropy_bd}, and let $\rho>0$ be arbitrary.  Then there
are constants $c_1,c_2,c_3,n_0>0$, that depend only on 
$K,w,\rho$, such that for any \smash{$t \geq c_1 n^{-1/(2+w)}$} and $n \geq
n_0$,     
$$
\big|\|g\|^2_n - \|g\|_2^2\big| \leq \half\|g\|_2^2 + \frac{t^2}{2},  
\;\;\; \text{for all $g \in B_J(1) \cap B_\infty(\rho)$},
$$
with probability at least $1-c_2\exp(-c_3nt^2)$.  
\end{lemma}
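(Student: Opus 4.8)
The plan is to derive this as a black-box application of the one-sided uniform law of large numbers for star-shaped, uniformly bounded function classes, specifically Theorem 14.1 in \citet{wainwright2017high}. Set $\cF = B_J(1) \cap B_\infty(\rho)$, and assume without loss of generality that $\rho \geq 1$. First I would record the two structural facts about $\cF$ that the localization machinery requires. (i) $\cF$ is \emph{star-shaped about the origin}: $0 \in \cF$ since $J(0)=0$, and for any $g \in \cF$ and $\alpha \in [0,1]$ we have $J(\alpha g) = \alpha J(g) \leq 1$ (as $J$ is a seminorm) and $\|\alpha g\|_\infty \leq \rho$, so $\alpha g \in \cF$. (ii) $\cF$ is $\rho$-uniformly bounded in sup norm by construction. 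These are exactly the hypotheses under which Theorem 14.1 applies.

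The second step, which is the crux, is to pin down the critical radius $\delta_n$ for $\cF$. Here the key input is Assumption \ref{as:entropy_bd}: it bounds the empirical covering number \emph{uniformly over all point configurations} $Z_n \subseteq [0,1]$, hence in particular over the realized sample $z^1,\ldots,z^n$, so it applies verbatim to $\|\cdot\|_n$. A simple rescaling $g \mapsto g/\rho$ (legitimate since $\rho \geq 1$ maps $\cF$ into $B_J(1)\cap B_\infty(1)$) gives $\log N(\delta, \|\cdot\|_n, \cF) \leq K \rho^w \delta^{-w}$. Feeding this into Dudley's entropy integral, and using $w < 2$ so the integral $\int_0^\delta u^{-w/2}\,du \propto \delta^{1-w/2}$ converges, bounds the localized (empirical) Rademacher complexity of $\{g \in \cF : \|g\|_n \leq \delta\}$ by a quantity of order $n^{-1/2}\delta^{1-w/2}$, with the hidden constant depending only on $(K,w,\rho)$. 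The critical inequality of Theorem 14.1 — local complexity at radius $\delta$ at most a constant times $\delta^2$ — then reduces to $n^{-1/2}\delta^{1-w/2} \lesssim \delta^2$, i.e. $\delta^{1+w/2} \gtrsim n^{-1/2}$, which is solved by $\delta_n \asymp n^{-1/(2+w)}$; this is the origin of the threshold $t \geq c_1 n^{-1/(2+w)}$, and of the requirement $n \geq n_0$ (so that $\delta_n$ is in the valid range where the Dudley bound and the theorem apply).

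With the critical radius identified, Theorem 14.1 in \citet{wainwright2017high} yields, for every $t \geq \delta_n = c_1 n^{-1/(2+w)}$,
\[
\bigl|\|g\|_n^2 - \|g\|_2^2\bigr| \leq \tfrac12 \|g\|_2^2 + \tfrac{t^2}{2}
\quad \text{simultaneously over all } g \in \cF,
\]
on an event of probability at least $1 - c_2\exp(-c_3 n t^2)$, where $c_1,c_2,c_3,n_0$ depend only on $(K,w,\rho)$ — the $\rho^2$ factor that would otherwise appear in the concentration exponent is absorbed into $c_3$, and $\sigma$ does not enter because this is a statement purely about functions in $\cF$, with no sub-Gaussian noise present. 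I do not expect a genuine obstacle here: the only point that needs care is the transition from the deterministic supremum-over-point-sets form of Assumption \ref{as:entropy_bd} to the random-design local complexity (handled by noting the bound holds for every $Z_n$, including the sample) and the bookkeeping that ensures every constant is a function of $(K,w,\rho)$ alone after the $g/\rho$ rescaling; the remainder is the standard critical-radius calculus for polynomial-entropy classes.
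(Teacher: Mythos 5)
Your proposal follows essentially the same route as the paper's proof: both apply Theorem 14.1 of \citet{wainwright2017high} to $\cF = B_J(1)\cap B_\infty(\rho)$ and bound the critical radius by $c\,n^{-1/(2+w)}$ via Dudley's entropy integral fed with the uniform empirical covering bound of Assumption \ref{as:entropy_bd}.

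The one step you elide is the mismatch between the two localizations: the critical inequality in Theorem 14.1 is stated for the \emph{population}-localized Rademacher complexity $\cR(\cF\cap B_2(t))$, whereas the Dudley integral you compute controls the complexity of the \emph{empirically} localized set $\cF\cap B_n(\delta)$, since Assumption \ref{as:entropy_bd} only gives covering numbers in $\|\cdot\|_{Z_n}$. You correctly note that the entropy bound holds for every point configuration, but that alone does not convert an $L_2$-ball into an empirical-norm ball. The paper bridges this with Corollary 2.2 of \citet{bartlett2005local}, which gives $\cF\cap B_2(t)\subseteq \cF\cap B_n(\sqrt{2}t)$ with probability at least $1-1/n$ for $t$ at least the critical radius, and then uses the trivial bound $\rho$ on the complementary event before taking the expectation over the design; the resulting bound $c\,t^{1-w/2}/\sqrt{n}+\rho/n$ is absorbed into $c\,t^{1-w/2}/\sqrt{n}$ for $n$ large. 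This is a routine repair and does not change the conclusion, but it should be stated, since without it the critical-radius calculation is comparing quantities defined in different norms.
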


\begin{proof}
Abbreviate $\cF=B_J(1) \cap B_\infty(\rho)$.  We will analyze the local Rademacher 
complexity    
$$
\cR\big(\cF \cap B_2(t)\big) = \E_{z,\sigma} \left[   
\sup_{g \in \cF \cap B_2(t)} \,
\frac{1}{n} \Bigg|\sum_{i=1}^n \sigma^i g(z^i) \Bigg|\right],
$$
the expectation being taken over i.i.d.\ draws $z^i$, $i=1,\ldots,n$ from
$\Lambda$ and i.i.d.\ Rademacher variables $\sigma^i$, $i=1,\ldots,n$, as
usual. Define the critical radius $\tau_n>0$ to the smallest solution of the
equation  
$$
\frac{\cR\big(\cF \cap B_2(t)\big)}{t} = \frac{t}{\rho}.
$$
We will prove \smash{$\tau_n \leq c_1 n^{-1/(2+w)}$} for a constant
$c_1>0$. Applying Theorem 14.1 in \citet{wainwright2017high} would
then give the result. 

In what follows, we will use $c>0$ to denote a constant whole value may  
change from line to line (but does not depend on $z^i$,
$i=1,\ldots,n$).  Consider the empirical local Rademacher complexity    
$$
\cR_n\big(\cF \cap B_2(t)\big) = \E_\sigma \left[  
\sup_{g \in \cF \cap B_2(t)} \,
\frac{1}{n} \Bigg|\sum_{i=1}^n \sigma^i g(z^i)\Bigg| \right].
$$
As we are considering $t \geq \tau_n$, Corollary 2.2 of
\citet{bartlett2005local} gives 
$$
\cF \cap B_2(t) \subseteq \cF \cap B_n(\sqrt{2} t),
$$
with probability at least $1-1/n$. Denote by $\cE$ the event that this 
occurs.  Then on $\cE$,
\begin{align*}
\cR_n\big(\cF \cap B_2(t)\big) &\leq \E_\sigma \left[ 
\sup_{g \in \cF \cap B_n(\sqrt{2} t)} \,
\frac{1}{n}\Bigg|\sum_{i=1}^n \sigma^i g(z^i) \Bigg| \right] \\ 
&\leq \frac{c}{\sqrt{n}} \int_0^{\sqrt{2}t}
\sqrt{\log N(\delta, \|\cdot\|_n, \cF)} \, d\delta \\
&\leq \frac{c\sqrt{K}\rho^{w/2}}{\sqrt{n}} 
\int_0^{\sqrt{2}t} \delta^{-w/2} \, d\delta =
\frac{c}{\sqrt{n}} t^{1-w/2},
\end{align*}
where in second line we used Dudley's entropy integral \citep{dudley1967sizes}, 
and in the third line we used Assumption \ref{as:entropy_bd}. On
$\cE^c$, note that we have the trivial bound \smash{$\cR_n(\cF \cap B_2(t)) \leq
  \rho$}.  Therefore we can upper bound the local Rademacher complexity,
splitting the expectation over $\cE$ and $\cE^c$, 
$$
\cR\big(\cF \cap B_2(t)\big) = \E_z \cR_n\big(\cF \cap B_2(t)\big)   
\leq \frac{c t^{1-w/2}}{\sqrt{n}} + \frac{\rho}{n} \leq
\frac{c t^{1-w/2}}{\sqrt{n}}, 
$$
where the second inequality holds when $n$ is large enough, as we may assume
\smash{$t \geq n^{-1/2}$} without a loss of generality.   An upper bound on the
critical radius $\tau_n$ is thus given by the solution of    
$$
\frac{c t^{-w/2}}{\sqrt{n}} = \frac{t}{\rho},
$$
which is \smash{$t = c n^{-1/(2+w)}$}. This completes the proof.        
\end{proof}

We extend Lemma \ref{lem:pi_perp_sup_bd} to give a uniform sup norm bound on the   
functions in $B_J(1) \cap B_2(\rho)$.   

\begin{lemma}
\label{lem:bj_b2_sup_bd}
Assume the conditions of Lemma \ref{lem:pi_perp_sup_bd}.  Then there are
constants $R_1,R_2>0$ that depend only on $k,b_1,b_2,L$, such that   
$$
\|m\|_\infty \leq R_1 \rho + R_2, \;\;\; 
\text{for all $m \in B_J(1) \cap B_2(\rho)$}.
$$ 
\end{lemma}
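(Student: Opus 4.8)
The plan is to mimic the decomposition used in the proof of Lemma~\ref{lem:mn_sup_bd}, but now working directly with the $L_2(\Lambda)$ norm rather than the empirical norm. Fix $m \in B_J(1) \cap B_2(\rho)$, and write $m = p + g$, where $p$ is the $L_2(\Lambda)$-orthogonal projection of $m$ onto the space of polynomials of degree $k$ on $[0,1]$, and $g = m - p$ is orthogonal (in $L_2(\Lambda)$) to all such polynomials. Since $J$ is a seminorm whose null space contains all $k$th degree polynomials (Assumption~\ref{as:poly_null}), we have $J(g) = J(m) \le 1$, so $g \in B_J(1)$ and is $L_2(\Lambda)$-orthogonal to degree-$k$ polynomials; hence Lemma~\ref{lem:pi_perp_sup_bd} applies and gives $\|g\|_\infty \le R_0$ for the constant $R_0 > 0$ there, which depends only on $k, b_1, b_2, L$.

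Next I would control $\|p\|_\infty$. From $\|g\|_\infty \le R_0$ and the upper density bound, $\|g\|_2^2 = \int_0^1 g^2 \lambda \le b_2 \|g\|_\infty^2 \le b_2 R_0^2$, so by the triangle inequality $\|p\|_2 \le \|m\|_2 + \|g\|_2 \le \rho + \sqrt{b_2}\, R_0$. Now pass to the uniform-measure $L_2$ norm using the lower density bound: writing $\|\cdot\|_{L_2(U)}$ for the $L_2$ norm on $[0,1]$ with respect to Lebesgue measure, $\|p\|_2^2 = \int_0^1 p^2 \lambda \ge b_1 \|p\|_{L_2(U)}^2$, hence $\|p\|_{L_2(U)} \le b_1^{-1/2}(\rho + \sqrt{b_2}\, R_0)$. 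Finally, since $p$ lies in the finite-dimensional space of degree-$k$ polynomials, all norms on it are equivalent with constants depending only on $k$: expanding $p = \sum_{\ell=0}^k \beta_\ell P_\ell$ in the shifted, $L_2(U)$-orthonormal Legendre basis gives $\|p\|_{L_2(U)}^2 = \sum_{\ell=0}^k \beta_\ell^2$, and by Cauchy--Schwarz $\|p\|_\infty \le \sum_{\ell=0}^k |\beta_\ell|\, \|P_\ell\|_\infty \le c_k \|p\|_{L_2(U)}$, where $c_k = \big(\sum_{\ell=0}^k \|P_\ell\|_\infty^2\big)^{1/2}$ depends only on $k$.

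Combining these bounds via the triangle inequality yields $\|m\|_\infty \le \|p\|_\infty + \|g\|_\infty \le c_k b_1^{-1/2}(\rho + \sqrt{b_2}\, R_0) + R_0$, which is of the required form $R_1 \rho + R_2$ with $R_1 = c_k b_1^{-1/2}$ and $R_2 = c_k b_1^{-1/2}\sqrt{b_2}\, R_0 + R_0$, both constants depending only on $k, b_1, b_2, L$ (through $R_0$, $c_k$, $b_1$, $b_2$). There is no substantial obstacle here; the one point requiring care is that the polynomial norm-equivalence constant $c_k$ must be independent of $\Lambda$, which is ensured by routing the estimate through the fixed uniform measure and absorbing all $\Lambda$-dependence into the explicit factors $b_1^{-1/2}$ and $\sqrt{b_2}$.
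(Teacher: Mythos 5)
Your proof is correct and follows essentially the same route as the paper's: decompose $m=p+g$ with $g$ the $L_2(\Lambda)$-orthogonal part, invoke Lemma \ref{lem:pi_perp_sup_bd} for $\|g\|_\infty$, and bound $\|p\|_\infty$ by finite-dimensional norm equivalence on degree-$k$ polynomials. The only difference is in that last step: the paper expands $p$ in polynomials orthonormal with respect to $L_2(\Lambda)$ and cites an external sup-norm bound for such polynomials (with constant depending on $k,b_1$), whereas you pass to the uniform measure via the lower density bound $b_1$ and use the shifted Legendre basis, which makes the constant explicit and the argument self-contained — a harmless and arguably cleaner variant.
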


\begin{proof}
For $m \in B_J(1) \cap B_2(\rho)$, decompose $m = p + g$ where $p$ is a
polynomial of degree $k$, and $g$ is orthogonal to all polynomials of degree $k$
with respect to the $L_2(\Lambda)$ inner product. By Lemma
\ref{lem:pi_perp_sup_bd}, we have $\|g\|_\infty \leq R_0$ for a constant
$R_0>0$, and by the triangle inequality,   
$$
\|p\|_2 \leq \|m\|_2 + \|g\|_2 \leq \rho + R_0.
$$
Now write 
$$
p(x) = \sum_{\ell=1}^{k+1} \alpha_\ell \phi_\ell(x), \;\;\; 
\text{for $x \in [0,1]^d$}, 
$$
where $\phi_\ell$, $\ell=1,\ldots,k+1$ are orthonormal polynomials on $[0,1]$
with respect to the $L_2(\Lambda)$ inner product. Then 
$\|\alpha\|_2=\|p\|_2 \leq \rho + R_0$, from the second to last display, and  
\smash{$\|\alpha\|_2 \leq \sqrt{k+1} (\rho + R_0)$}, so for any $x \in [0,1]$,
$$
|p(x)| \leq \|\alpha\|_1 \max_{\ell=1,\ldots,k+1} |\phi_\ell(x)| 
\leq c_k \sqrt{k+1} (\rho + R_0),
$$
where \smash{$c_k = \max_{\ell=1,\ldots,k+1} \|\phi_\ell\|_\infty$} is a  
constant that depends only on $k,b_1$ from \cite{aptekarev2016orthogonal}.  
Therefore 
$$
\|m\|_\infty \leq \|p\|_\infty + \|g\|_\infty \leq  c_k \sqrt{k+1}
(\rho + R_0) + R_0,
$$
and defining $R_1,R_2>0$ appropriately, this is of the desired form, and
completes the proof.
\end{proof}

Our last two lemmas pertain to the function space
\begin{equation}
\label{eq:m2_space} 
\cM_2(\delta) = \bigg\{ \sum_{i=1}^d m_j : J(m_j) \leq \delta, \;\,  
\text{and}\;\, \langle m_j,1 \rangle_2=0, \; j=1,\ldots,d \bigg\}.
\end{equation}
We derive a one-sided bound on the $L_2$ norm in terms of the   
empirical norm, over $\cM_2(1)$.  Our proof uses Theorem 14.2 in
\citet{wainwright2017high}, which is a somewhat unique theorem, because it does
not require a global sup norm bound on the function class in consideration
(unlike many standard results of this type).

\begin{lemma}
\label{lem:m2_L2_norm}
Let $X^i$, $i=1,\ldots,n$ denote an i.i.d.\ sample from a distribution
$Q$ on $[0,1]^d$ satisfying Assumption \ref{as:x_dist_prod},
 and let $J$ satisfy Assumption \ref{as:entropy_bd}.
Then there are constants $c_1,c_2,c_3,n_0>0$, that depend only on
$b_1,b_2,k,L,K,w$,        
such that for any \smash{$c_1 \sqrt{d} n^{-1/(2+w)} \leq t \leq 1$} and  
$n \geq n_0$, 
$$
\|m\|_2^2 \leq 2 \|m\|_n^2 + t^2, \;\;\; 
\text{for all $m \in \cM_2(1)$},
$$
with probability at least $1-c_2\exp(-c_3 nt^2)$, where $\cM_2(1)$ 
is the space in \eqref{eq:m2_space}.
\end{lemma}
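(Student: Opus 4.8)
The plan is to invoke Theorem 14.2 of \citet{wainwright2017high}, which provides one-sided control of the empirical norm by the $L_2$ norm over a star-shaped function class \emph{without} assuming that class is uniformly bounded. First I would note that $\cM_2(1)$ is star-shaped about the origin: if $m=\sum_{j=1}^d m_j\in\cM_2(1)$ and $s\in[0,1]$, then $sm=\sum_{j=1}^d sm_j\in\cM_2(1)$, since $J$ is a seminorm and the $L_2$ mean-zero constraint is preserved under scaling. Theorem 14.2 then reduces the claim to bounding the critical radius $\delta_n$ of the localized (empirical) Rademacher complexity of $\cM_2(1)$: once $\delta_n$ is controlled, the conclusion $\|m\|_2^2\le 2\|m\|_n^2+t^2$ holds uniformly over $\cM_2(1)$, with the stated exponential-in-$nt^2$ probability, for all $t\gtrsim\delta_n$. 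The target is to show $\delta_n\asymp\sqrt{d}\,n^{-1/(2+w)}$, which gives exactly the admissible range $c_1\sqrt{d}\,n^{-1/(2+w)}\le t\le 1$.

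To bound the critical radius, fix $\delta\in(0,1]$ and analyze the localized set $\cM_2(1)\cap B_2(\delta)$. Under Assumption \ref{as:x_dist_prod}, the product form of $Q$ yields the $L_2$ decomposability \eqref{eq:L2_decomp}, so for $m=\sum_{j=1}^d m_j$ in this set we have $\sum_{j=1}^d\|m_j\|_2^2=\|m\|_2^2\le\delta^2\le 1$; in particular each $m_j\in B_J(1)\cap B_2(1)$, and Lemma \ref{lem:bj_b2_sup_bd} applied to the marginal $Q_j$ (whose density lies in $[b_1,b_2]$) gives $\|m_j\|_\infty\le R$ for a constant $R$ depending only on $k,b_1,b_2,L$. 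Thus the localized additive class consists of sums of $d$ uniformly bounded univariate functions whose $L_2$ norms are square-summable against $\delta^2$. Using the componentwise empirical/$L_2$ coupling of Lemma \ref{lem:bj_L2_norm}, on a high-probability event each $m_j$ also satisfies $\|m_j\|_n\lesssim\delta$. The key step is then to bound the empirical entropy of this class by allocating covering accuracy $\delta'/\sqrt{d}$ (not $\delta'/d$) to each component --- which is legitimate precisely because of $L_2$ orthogonality of the components --- so that, via the argument of Lemma \ref{lem:add_entropy_bd} combined with Assumption \ref{as:entropy_bd} rescaled by $R$,
\[
\log N\big(\delta',\|\cdot\|_n,\cM_2(1)\cap B_2(\delta)\big)\ \lesssim\ d\cdot K\big(R\sqrt{d}/\delta'\big)^{w}\ =\ K R^w\, d^{\,1+w/2}\,(\delta')^{-w}.
\]
Plugging this into Dudley's entropy integral (using $w<2$ for convergence) bounds the localized empirical Rademacher complexity by $\lesssim d^{\,1/2+w/4}\,n^{-1/2}\,\delta^{1-w/2}$, and solving the critical inequality $\cR_n(\delta)\asymp\delta^2$ gives $\delta_n\asymp\big(d^{(2+w)/4}n^{-1/2}\big)^{2/(2+w)}=\sqrt{d}\,n^{-1/(2+w)}$, as required. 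The translation between the $B_2(\delta)$-localization used above and the $B_n$-localization in the complexity functional is handled exactly as in the proof of Lemma \ref{lem:bj_L2_norm}, via Corollary 2.2 of \citet{bartlett2005local}.

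I expect the main obstacle to be this last point in the entropy estimate: converting the $L_2$ orthogonality of the additive components into an \emph{empirical}-norm covering bound with the sharp $d^{1+w/2}$ scaling rather than the naive $d^{1+w}$ that a direct application of Lemma \ref{lem:add_entropy_bd} would produce --- the naive bound would only yield the admissible range $t\gtrsim d^{(1+w)/(2+w)}n^{-1/(2+w)}$, which is strictly weaker for $w>0$. Getting the $\sqrt{d}$ scaling therefore requires the componentwise coupling of Lemma \ref{lem:bj_L2_norm} together with the per-component accuracy choice $\delta'/\sqrt d$; the rest of the argument is bookkeeping, with $t\le 1$ used only to make the sup-norm constant $R$ from Lemma \ref{lem:bj_b2_sup_bd} independent of $\delta$, and $n\ge n_0$ used to absorb lower-order terms and to validate the coupling and complexity estimates.
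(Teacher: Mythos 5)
Your overall skeleton matches the paper's: invoke Theorem 14.2 of \citet{wainwright2017high} (the paper additionally verifies the required moment condition $\|m^2\|_2^2 \leq C^2\|m\|_2^4$ via its Example 14.6, which you should not omit since that is what substitutes for uniform boundedness), reduce to bounding the critical radius of $\cM_2(1)$, and show $\tau_n \lesssim \sqrt{d}\,n^{-1/(2+w)}$. You also correctly diagnose that the naive additive entropy bound (Lemma \ref{lem:add_entropy_bd}) gives $d^{1+w}$ and hence only $t \gtrsim d^{(1+w)/(2+w)}n^{-1/(2+w)}$.

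However, the step you propose to close this gap does not work as stated. You claim an \emph{empirical-norm} covering bound $\log N(\delta',\|\cdot\|_n,\cM_2(1)\cap B_2(\delta)) \lesssim d^{1+w/2}(\delta')^{-w}$ by allocating accuracy $\delta'/\sqrt{d}$ per component, ``legitimate precisely because of $L_2$ orthogonality.'' But $L_2$ orthogonality of the components does not make the covering errors $m_j - s_j^{\ell_j}$ orthogonal in the \emph{empirical} inner product; in $\|\cdot\|_n$ you only have the triangle inequality, so per-component accuracy $\delta'/\sqrt{d}$ yields a $\sqrt{d}\,\delta'$-cover of the sum, not a $\delta'$-cover. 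To convert an $L_2$-orthogonal allocation into an empirical cover you would need a uniform comparison $\|\cdot\|_n \lesssim \|\cdot\|_2$ over the additive error class, which is essentially the companion of the statement being proved and risks circularity. The paper avoids computing any entropy for the additive class: it bounds the localized Rademacher complexity of $\cM_2(1)\cap B_2(t)$ directly by the \emph{sum} of univariate localized complexities $\sup_{m_j \in B_J(1)\cap B_2(|\beta_j|)}\frac{1}{n}|\sum_i \sigma^i m_j(X_j^i)|$ (each controlled via Theorem 3.6 of \citet{wainwright2017high}, Lemmas A.4 and 3.6 of \citet{bartlett2005local}, and the univariate critical-radius bound underlying Lemma \ref{lem:bj_L2_norm}), and then obtains the sharp dimension dependence by applying H\"older's inequality with exponents $p=4/(2+w)$, $q=4/(2-w)$ to $\sum_{j=1}^d |\beta_j|^{1-w/2}$ subject to $\|\beta\|_2 \leq t$, giving the factor $d^{(2+w)/4}t^{1-w/2}$ and hence $\tau_n \lesssim \sqrt{d}\,n^{-1/(2+w)}$. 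That H\"older step is the missing idea in your proposal; without it (or a correct substitute for your entropy claim), your argument does not deliver the $\sqrt{d}$ scaling.
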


\begin{proof}
Let $m \in \cM_2(1)$ with $\|m\|_2 \leq 1$.  Then as $\|m\|_2^2 = \sum_{j=1}^d
\|m_j\|_2^2$, it follows that $\|m_j\|_2 \leq 1$, $j=1,\ldots,d$, and by Lemma
\ref{lem:bj_b2_sup_bd}, we have $\|m_j\|_\infty \leq R_1+R_2$, $j=1,\ldots,d$. 
From the calculation in Example 14.6 of \citet{wainwright2017high}, we have
the property 
$$
\|m^2\|_2^2 \leq C^2 \|m\|_2^4, \;\;\; \text{for all $m \in \cM_2(1) \cap
  B_2(1)$},
$$
where $C^2 = (R_1+R_2)^2+6$.  Abbreviating $\cF=\cM_2(1)$, we will study the
local Rademacher complexity       
$$
\cR\big(\cF \cap B_2(t)\big) = \E_{z,\sigma} \left[     
\sup_{m \in \cF \cap B_2(t)} \, 
\frac{1}{n} \Bigg|\sum_{i=1}^n \sigma^i m(z^i) \Bigg|\right],
$$
and the associated critical radius $\tau_n>0$, defined as usual to be the
smallest solution of 
$$
\frac{\cR\big(\cF \cap B_2(t)\big)}{t} = \frac{t}{C}. 
$$
We will establish \smash{$\tau_n \leq c_1 \sqrt{d} n^{-1/(2+w)}$} for a 
constant $c_1>0$. Applying Theorem 14.2 in \citet{wainwright2017high}
would then give the result.    

For the rest of the proof, we will use $c>0$ for a constant whose value may 
change from line to line; also, many statements will hold for large enough $n$, 
but this will not always be made explicit.  Fix some $0 < t \leq 1$. By $L_2$
orthogonality of the components of functions in $\cF$,   
\begin{align*}
\sup_{m \in \cF \cap B_2(t)} \, \frac{1}{n}  
\Bigg| \sum_{i=1}^n \sigma^i m(X^i) \Bigg| 
&\leq \sup_{\|\beta\|_2 \leq t} \,   
\sup_{\substack{m_j \in B_J(1) \cap B_2(|\beta_j|),  \\ 
 j=1,\ldots,d}} \,      
\Bigg| \sum_{i=1}^n \sigma^i \sum_{j=1}^d m_j(X_j^i) \Bigg| \\ 
&\leq \sup_{\|\beta\|_2 \leq t} \,  
\sum_{j=1}^d \sup_{m_j \in B_J(1) \cap B_2(|\beta_j|)} \,
\frac{1}{n} \Bigg| \sum_{i=1}^n \sigma^i m_j(X_j^i) \Bigg|.  
\end{align*}
We now bound the inner supremum above, for an arbitrary 
$j=1,\ldots,d$. Denote by $\tau_{nj}$ the critical radius 
of $B_J(1) \cap B_2(|\beta_j|)$, denote \smash{$r_n=n^{-1/(2+w)}$}, and define
the abbreviation \smash{$a \vee b = \max\{a,b\}$}. Observe  
\begin{align*}
\sup_{m_j \in B_J(1) \cap B_2(|\beta_j|)}
  \hspace{-30pt}&\hspace{30pt}\, 
\frac{1}{n} \Bigg| \sum_{i=1}^n \sigma^i m_j(X^i_j) \Bigg| \\  
&\leq c\Bigg( \cR_n\big(B_J(1) \cap
  B_2(|\beta_j|)\big)  + \sqrt{\frac{\log n}{n}} \bigg(\sup_{m_j \in
  B_J(1) \cap B_2(|\beta_j|)} \, \|m_j\|_n \bigg) \Bigg) \\     
& \leq c\bigg( \cR\big(B_J(1) \cap
  B_2(|\beta_j|)\big) +  \frac{\log n}{n} + \sqrt{\frac{\log n}{n}}
  \bigg(\sup_{m_j \in B_J(1) \cap B_2(|\beta_j|)} \,
  \|m_j\|_n \bigg) \Bigg) \\     
& \leq c\bigg( \cR\big(B_J(1) \cap
  B_2(|\beta_j|)\big) +  \frac{\log n}{n} + \sqrt{\frac{\log
  n}{n}} \sqrt{2} (|\beta_j| \vee \tau_{nj}) \bigg) \\      
& \leq c\bigg( \frac{|\beta_j|^{1-w/2}}{\sqrt{n}} +
  \frac{\log n}{n} + (|\beta_j| \vee \tau_{nj}) 
  \sqrt{\frac{\log n}{n}} \bigg) \\     
& \leq c\bigg( \frac{|\beta_j|^{1-w/2}}{\sqrt{n}} + 
  (|\beta_j| \vee r_n) \sqrt{\frac{\log n}{n}}\bigg). 
\end{align*}
The first three inequalities above
hold with probability at least $1-1/3n^2$ each. The first inequality
is by Theorem 3.6 in \citet{wainwright2017high} (see also Example 3.9
in \citet{wainwright2017high}); the second and third are by Lemma
A.4 and Lemma 3.6 in \citet{bartlett2005local}, respectively. The fourth 
upper bounds the local Rademacher complexity of $B_J(1) \cap B_2(|\beta_j|)$,
and the fifth upper bounds the critical radius $\tau_{nj}$ of this class, both 
following the proof of Lemma \ref{lem:bj_L2_norm} (recall, the functions in
$B_J(1) \cap B_2(|\beta_j|)$ have a uniform sup norm bound of $\rho=R_1+R_2$, 
by Lemma \ref{lem:bj_b2_sup_bd}).
The last step also uses \smash{$\log n/n \leq r_n \sqrt{\log n/n}$} for $n$
sufficiently large. The final result of the above display holds with probability
at least $1-1/n^2$; by a union bound, it holds with probability at 
least $1-d/n^2$ simultaneously over $j=1,\ldots,d$.  Call 
this event $\cE$. Then on $\cE$,
\begin{align}
\nonumber
\sup_{m \in \cF \cap B_2(t)} 
\frac{1}{n} \Bigg| \sum_{i=1}^n \sigma^i m(X^i) \Bigg| 
&\leq c \sum_{j=1}^d
  \bigg( \frac{|\beta_j|^{1-w/2}}{\sqrt{n}} +  
  (|\beta_j| \vee r_n) \sqrt{\frac{\log n}{n}}\bigg) \\   
\label{eq:f_rad_bd_e}
&\leq c \bigg( \frac{d^{(2+w)/4} t^{1-w/2}}{\sqrt{n}}  
+ \sqrt{\frac{d \log n}{n}} t + d r_n^2 \bigg).
\end{align}
In the second line, we use Holder's inequality \smash{$a^T b \leq \|a\|_p
  \|b\|_q$} for the first term, with $p=4/(2+w)$ and $q=4/(2-w)$;
we use $a \vee b \leq a+b$ for the second term, along the
bound \smash{$\|\beta\|_1 \leq \sqrt{d} t$}, and the fact that \smash{$r_n 
  \sqrt{\log n/n} \leq r_n^2$} for large enough $n$.

Meanwhile, on $\cE^c$, we can apply the simple bound
\smash{$\|m\|_\infty \leq \sum_{j=1}^d \|m_j\|_\infty \leq \rho d$} for
functions in $\cF \cap B_2(t)$, where $\rho=R_1+R_2$ (owing to Lemma
\ref{lem:bj_b2_sup_bd}), and thus
\begin{equation}
\label{eq:f_rad_bd_ec}
\sup_{m \in \cF \cap B_2(t)} \, \frac{1}{n}  
\Bigg| \sum_{i=1}^n \sigma^i m(X^i) \Bigg| \leq \rho d.
\end{equation}
Splitting the expectation defining the local Rademacher 
complexity over $\cE,\cE^c$, and using \eqref{eq:f_rad_bd_e}, 
\eqref{eq:f_rad_bd_ec},
\begin{align}
\nonumber
\cR\big(\cF \cap B_2(t)\big) &=   
\E_{X,\sigma} \left[\sup_{m \in \cF \cap B_2(t)} \,
  \frac{1}{n} \Bigg| \sum_{i=1}^n \sigma^i m(X^i) \Bigg|\right] \\ 
\label{eq:f_rad_bd}
&\leq c \bigg( \frac{d^{(2+w)/4} t^{1-w/2}}{\sqrt{n}}  
+ \sqrt{\frac{d \log n}{n}} t + d r_n^2\bigg) + \frac{\rho d^2}{n^2}. 
\end{align}
It can be easily verified that for \smash{$t=c \sqrt{d} r_n^2$}, the upper bound
in \eqref{eq:f_rad_bd} is at most $t^2/C$.  Therefore this is an upper bound on
the critical radius of $\cF$, which completes the proof.
\end{proof}

Lastly, we bound the gap in the empirical and $L_2$ means of functions 
in \smash{$\cM_2(1) \cap B_2(t)$}, for small enough $t$.  The proof uses
Theorem 2.1 in \citet{bartlett2005local}.

\begin{lemma}
\label{lem:m2_L2_mean}
Let $X^i$, $i=1,\ldots,n$ denote an i.i.d.\ sample from a distribution
$Q$ on $[0,1]^d$ satisfying Assumption \ref{as:x_dist_prod},
 and let $J$ satisfy Assumption \ref{as:entropy_bd}.
There are constants $c_0,n_0>0$, that depend only on $b_1,b_2,k,L,K,w$,  
such that for any $0< t \leq 1$ and $n \geq n_0$,
$$
\big|\langle m,1 \rangle_n  - \langle m,1 \rangle_2 \big| \leq 
c_0 \bigg( \frac{d^{(2+w)/4} t^{1-w/2}}{\sqrt{n}} + 
\sqrt{\frac{d \log n}{n}} t + d n^{-2/(2+w)} \bigg),
\;\;\; \text{for all $m \in \cM_2(1) \cap B_2(t)$},   
$$
with probability at least $1-1/n$, where $\cM_2(1)$ 
is the space in \eqref{eq:m2_space}.
\end{lemma}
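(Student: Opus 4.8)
The plan is to observe first that the population quantity vanishes: every $m\in\cM_2(1)$ has $L_2$-mean-zero components, so $\langle m,1\rangle_2=\sum_{j=1}^d\langle m_j,1\rangle_2=0$, and hence $|\langle m,1\rangle_n-\langle m,1\rangle_2|=|\tfrac1n\sum_{i=1}^n m(X^i)|$ is a centered empirical mean. Thus the object to control is the localized empirical process $\sup_{m\in\cF}|\tfrac1n\sum_{i=1}^n m(X^i)|$ with $\cF=\cM_2(1)\cap B_2(t)$, a class of $L_2$-radius $t$; note $\cF$ is symmetric under $m\mapsto-m$ (since $J$ is a seminorm and $\langle\cdot,1\rangle_2$, $\|\cdot\|_2$ are unchanged by negation), so the one-sided and two-sided suprema coincide and it suffices to bound one of them.

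Next I would bound the expectation by symmetrization, $\E\sup_{m\in\cF}|\tfrac1n\sum_i m(X^i)|\le 2\,\cR(\cF)$, and then invoke the local Rademacher complexity bound already established in the proof of Lemma~\ref{lem:m2_L2_norm}. Precisely, \eqref{eq:f_rad_bd} gives, with $r_n=n^{-1/(2+w)}$ and $\rho=R_1+R_2$,
$$
\cR\big(\cM_2(1)\cap B_2(t)\big)\le c\Big(\tfrac{d^{(2+w)/4}t^{1-w/2}}{\sqrt n}+\sqrt{\tfrac{d\log n}{n}}\,t+dr_n^2\Big)+\tfrac{\rho d^2}{n^2},
$$
and for $n\ge n_0$ the term $\rho d^2/n^2$ is dominated by $dr_n^2=dn^{-2/(2+w)}$. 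Hence the expected supremum is at most a constant multiple of the bound claimed in the lemma.

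Then I would upgrade the in-expectation bound to a $1-1/n$ probability bound. For $0<t\le1$, the $L_2$-decomposability \eqref{eq:L2_decomp} under Assumption~\ref{as:x_dist_prod} gives $\|m_j\|_2\le\|m\|_2\le t\le1$ for each $j$, so Lemma~\ref{lem:bj_b2_sup_bd} yields $\|m_j\|_\infty\le\rho$ and therefore $\|m\|_\infty\le\rho d$ uniformly over $\cF$; moreover functions in $\cF$ have variance at most $t^2$. A standard concentration argument now applies: either Theorem~2.1 in \citet{bartlett2005local} with the sub-root/critical-radius bound coming from \eqref{eq:f_rad_bd}, or a bounded-differences (McDiarmid) argument, since changing a single $X^i$ moves $\tfrac1n\sum_i m(X^i)$ by at most $2\rho d/n$ uniformly in $m$. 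Either way, with probability at least $1-1/n$ the supremum exceeds twice its mean by at most $c\big(t\sqrt{\log n/n}+\rho d\log n/n\big)$. Both extra terms are lower order: $t\sqrt{\log n/n}\le t\sqrt{d\log n/n}$, and $\rho d\log n/n\le c\,dn^{-2/(2+w)}$ once $\sqrt{\log n}\le n^{w/(2+w)}$, i.e. for $n\ge n_0$. Combining with the previous paragraph and relabelling constants gives the stated inequality.

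The only genuinely new ingredient is the concentration step, and it is handled by a textbook inequality once the sup-norm bound $\|m\|_\infty\le\rho d$ and the variance bound $t^2$ on $\cF$ are in hand; the conceptual crux — that the population mean is identically zero on $\cM_2(1)$, so no empirical-process machinery beyond that of Lemma~\ref{lem:m2_L2_norm} is needed — is immediate. I expect the main (routine) obstacle to be bookkeeping: verifying uniformly over the admissible range of $t$ and in $d$ that every auxiliary term ($\rho d^2/n^2$, the McDiarmid/Talagrand fluctuation, the $\log n/n$ factors) is absorbed into $d^{(2+w)/4}t^{1-w/2}/\sqrt n+\sqrt{d\log n/n}\,t+dn^{-2/(2+w)}$ for $n$ sufficiently large.
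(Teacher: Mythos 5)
Your main route is exactly the paper's: the entire proof in the paper is one sentence, combining the local Rademacher bound \eqref{eq:f_rad_bd} with Theorem 2.1 of \citet{bartlett2005local} and absorbing lower-order terms for large $n$. Your setup is correct — $\langle m,1\rangle_2=0$ on $\cM_2(1)$ by the product structure, the class is symmetric so one- and two-sided suprema agree, the variance proxy is $t^2$ and the envelope is $\rho d$ via Lemma \ref{lem:bj_b2_sup_bd} and \eqref{eq:L2_decomp} — and with $x=\log n$ in Theorem 2.1 the fluctuation terms $t\sqrt{\log n/n}$ and $\rho d\log n/n$ are absorbed as you say.

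One caveat: the ``either way'' claim is wrong for the McDiarmid branch. Bounded differences of size $2\rho d/n$ give a deviation of order $\rho d\sqrt{\log n/n}$ at confidence $1-1/n$, not the variance-sensitive $t\sqrt{\log n/n}+\rho d\log n/n$ you wrote (that is what Talagrand/Bousquet, i.e.\ Theorem 2.1, delivers). Since $w<2$, $d\sqrt{\log n/n}$ is \emph{not} dominated by $dn^{-2/(2+w)}$ (one would need $n^{(2-w)/(2(2+w))}\lesssim 1/\sqrt{\log n}$), so for small $t$ the McDiarmid fluctuation cannot be absorbed into the stated bound. The variance-adaptive concentration inequality is essential here; with it, the argument is complete.
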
 

\begin{proof}
This follows by combining the local Rademacher bound in \eqref{eq:f_rad_bd} from 
the proof of Lemma \ref{lem:m2_L2_norm} with Theorem 2.1 in
\citet{bartlett2005local}, and simplifying by keeping the dominant terms for
large enough $n$.
\end{proof}

\subsection{Proof of Theorem \ref{thm:error_bd_j_hd}}
\label{app:error_bd_j_hd}

At a high-level, the difference between this proof and that of Theorem
\ref{thm:error_bd_j} is that here we do not try to directly control the
sub-Gaussian complexity term (as this would lead to a poor dependence
on the dimension $d$).  Instead, we reduce the 
problem to controlling univariate sub-Gaussian complexities, and then assemble
the result using ties between the empirical and $L_2$ norms, and the
decomposition property \eqref{eq:L2_decomp}. We will use the same
general notation as in the proof of Theorem \ref{thm:error_bd_j}:
$c>0$ denotes a constant that will multiply our final bound, and will control
the probability with which the final result holds; we will use the empirical
norms and inner products with vector arguments, to be interpreted appropriately;
we use the abbreviations \smash{$r_n,\hDelta$}, and so on.  Finally, in many
lines that follow, we will redefine $c$ by absorbing constant factors into it,
without explicit notice.  

The same arguments that led us to \eqref{eq:basic_ineq} yield the basic inequality 
\begin{equation}
\label{eq:basic_ineq_3}
\|\hf - f_0 \|_n^2 + \| \hDelta \|_n^2 
\leq 2 \langle \epsilon, \hDelta \rangle_n + \|\tf-f_0\|_n^2 
= 2 \sum_{j=1}^d \langle \epsilon, \hDelta_j \rangle_n + \|\tf-f_0\|_n^2, 
\end{equation}
where we write \smash{$\hDelta=\sum_{j=1}^d \hDelta_j$}. 

\bigskip
\noindent
{\bf Bounding the sub-Gaussian complexity terms.}  We now bound the univariate 
sub-Gaussian complexity terms, appearing in the sum on the right-hand side in 
\eqref{eq:basic_ineq_3}.  For $j=1,\ldots,d$, define 
\smash{$g_j = \hDelta_j/(2\delta + \|\hDelta_j\|_n)$}, and note that by
construction $J(g_j) \leq 1$ and $\|g_j\|_n \leq 1$. By Lemma
\ref{lem:mn_sup_bd}, there are constants $c_0,R>0$ such that $\|g_j\|_\infty 
\leq R$ on an event whose probability is at least $1-\exp(-c_0n)$.  Thus
by Lemma \ref{lem:bj_entropy_bd}, there are constants $c_1,c_2>0$ such
that for all $c \geq c_1$, 
$$
\frac{2\langle \epsilon, g_j \rangle_n}{\|g_j\|_n^{1-w/2}} \leq
  \frac{c}{\sqrt{n}}, \;\;\; \text{for all $j=1,\ldots,d$},
$$
on an event $\Omega_1$ with probability at least $1-\exp(c_0n)-\exp(-c_2c^2)$.  
Plugging this into \eqref{eq:basic_ineq_3} gives
\begin{align}
\nonumber
\|\hf - f_0 \|_n^2 + \| \hDelta \|_n^2 
&\leq \frac{c}{\sqrt{n}} \sum_{j=1}^d (2\delta + \|\hDelta_j\|_n)^{w/2}  
\|\hDelta_j\|_n^{1-w/2} + \|\tf-f_0\|_n^2, \\
\label{eq:basic_ineq_4}
&\leq \frac{c\delta^{w/2}}{\sqrt{n}} \sum_{j=1}^d \|\hDelta_j\|_n^{1-w/2} +
\frac{c}{\sqrt{n}} \sum_{j=1}^d \|\hDelta_j\|_n + \|\tf-f_0\|_n^2, 
\end{align}
on $\Omega_1$, where we used Lemma \ref{lem:ab_sum_ineq} in the second
inequality.  

\bigskip
\noindent
{\bf Converting empirical norms into $L_2$ norms.}  For each $j=1,\ldots,d$, let  
\smash{$\bDelta_j=\langle \hDelta_j,1 \rangle_2$} be the $L_2$ mean of
\smash{$\hDelta_j$}, and \smash{$\tDelta_j = \hDelta_j-\bDelta_j$} 
the $L_2$ centered version of \smash{$\hDelta_j$}.  Note that, for each
$j=1,\ldots,d$, we have by empirical orthogonality \smash{$\|\tDelta_j\|_n^2 =   
  \|\hDelta_j\|_n^2 + |\bDelta_j|^2$}, which implies
\smash{$\|\hDelta_j\|_n \leq \|\tDelta_j\|_n$}.  Applying this to upper bound
the right-hand side in \eqref{eq:basic_ineq_4} gives  
\begin{equation}
\label{eq:basic_ineq_5}
\|\hf - f_0 \|_n^2 + \| \hDelta \|_n^2 
\leq \frac{c\delta^{w/2}}{\sqrt{n}} \sum_{j=1}^d \|\tDelta_j\|_n^{1-w/2} +
\frac{c}{\sqrt{n}} \sum_{j=1}^d \|\tDelta_j\|_n + \|\tf-f_0\|_n^2, 
\end{equation}
on $\Omega_1$. We bound each empirical norm in the sum on the right-hand side in
\eqref{eq:basic_ineq_5} by its $L_2$ norm counterpart.  Now, for each
$j=1,\ldots,d$, define \smash{$g_j = \tDelta_j/(2\delta + \|\tDelta_j\|_2)$}.  
Since $J(g_j) \leq 1$ and $\|g_j\|_2 \leq 1$, by Lemma \ref{lem:bj_b2_sup_bd},
there is a constant $R>0$ such that $\|g_j\|_\infty \leq R$. We can hence apply
Lemma \ref{lem:bj_L2_norm} to the measure $\Lambda=Q_j$, which gives constants
$c_3,c_4,c_5>0$ such that  
$$
\|g_j\|_n \leq \sqrt{\frac{3}{2}} \|g_j\|_2 + c_3 r_n, 
\;\;\; \text{for all $j=1,\ldots,d$},
$$
on an event $\Omega_2$ with probability at least $1-c_4d\exp(-c_5nr_n^2)$,
where recall \smash{$r_n=n^{-1/(2+w)}$}, i.e.,
$$
\|\tDelta_j\|_n \leq 2 \sqrt{\frac{3}{2}} \|\tDelta_j\|_2 + 2 c_3 r_n \delta,   
\;\;\; \text{for all $j=1,\ldots,d$}, 
$$
on $\Omega_2$, where we assume $n$ is large enough so that 
\smash{$c_3 r_n \leq \sqrt{3/2}$}. 
Returning to \eqref{eq:basic_ineq_5}, and using the simple inequality in 
Lemma \ref{lem:ab_sum_ineq}, we have    
\begin{equation} 
\label{eq:basic_ineq_6}
\|\hf - f_0 \|_n^2 + \| \hDelta \|_n^2 
\leq \frac{c\delta^{w/2}}{\sqrt{n}} \sum_{j=1}^d \|\tDelta_j\|_2^{1-w/2} +
\frac{c}{\sqrt{n}} \sum_{j=1}^d \|\tDelta_j\|_2 + cdr_n^2 \delta +
\|\tf-f_0\|_n^2,
\end{equation}
on $\Omega_1 \cap \Omega_2$. 

\bigskip
\noindent
{\bf Invoking $L_2$ decomposability.}  We recall the key $L_2$ decomposition 
property \eqref{eq:L2_decomp}, of additive functions with $L_2$ mean
zero components.  Using Holder's inequality \smash{$a^T b \leq \|a\|_p \|b\|_q$}
to bound the first sum on the right-hand side in \eqref{eq:basic_ineq_6}, with 
$p=4/(2+w)$ and $q=4/(2-w)$, and Cauchy-Schwartz to bound the second sum in
\eqref{eq:basic_ineq_6}, we get 
\begin{equation}
\label{eq:basic_ineq_7}
\|\hf - f_0 \|_n^2 + \| \hDelta \|_n^2 
\leq \frac{c d^{(2+w)/4} \delta^{w/2}}{\sqrt{n}} \|\tDelta\|_2^{1-w/2} + 
c \sqrt{\frac{d}{n}} \|\tDelta\|_2 + cdr_n^2 \delta + 
\|\tf-f_0\|_n^2,
\end{equation}
on $\Omega_1 \cap \Omega_2$, where we denote \smash{$\tDelta = \sum_{j=1}^d
  \tDelta_j$}. 

\bigskip
\noindent
{\bf Converting back to empirical norm.}  We bound the $L_2$ norm of the
centered error vector on the right-hand side in \eqref{eq:basic_ineq_7} with its
empirical norm counterpart.  By Lemma \ref{lem:m2_L2_norm} applied to
\smash{$m=\tDelta/(2\delta)$}, 
provided $n$ is large enough so that \smash{$c_6 \sqrt{d} r_n \leq 1$},
there are constants $c_6,c_7,c_8>0$ such that  
\begin{equation}
\label{eq:delta_L2_bd}
\|\tDelta\|_2 \leq \sqrt{2} \|\tDelta\|_n + 2 c_6 \sqrt{d} r_n \delta, 
\end{equation}
on an event $\Omega_3$ with probability at least $1-c_7\exp(-c_8 d n r_n^2)$.
Plugging this into the right-hand side in \eqref{eq:basic_ineq_7}, and using  
Lemma \ref{lem:ab_sum_ineq}, we have     
$$
\|\hf - f_0 \|_n^2 + \| \hDelta \|_n^2 
\leq \frac{c d^{(2+w)/4} \delta^{w/2}}{\sqrt{n}} \|\tDelta\|_n^{1-w/2} + 
c \sqrt{\frac{d}{n}} \|\tDelta\|_n + cdr_n^2 \delta + 
\|\tf-f_0\|_n^2,
$$
on $\Omega_1 \cap \Omega_2 \cap \Omega_3$.  Using Lemma \ref{lem:ab_prod_ineq}
on the first term above, with \smash{$a=d^{(2+w)/4} \delta^{w/2} / \sqrt{n}$}
and $b=\|\tDelta\|_n$, and simplifying, gives
\begin{equation}
\label{eq:basic_ineq_8}
\|\hf - f_0 \|_n^2 + \| \hDelta \|_n^2 
\leq c \sqrt{d} r_n \sqrt\delta \|\tDelta\|_n + 
cdr_n^2 \delta + \|\tf-f_0\|_n^2,
\end{equation}
on $\Omega_1 \cap \Omega_2 \cap \Omega_3$.

\bigskip
\noindent
{\bf Deriving an empirical norm error bound.} Note that in
\eqref{eq:basic_ineq_8}, we have \smash{$\|\hDelta\|_n$} on the
left-hand side and \smash{$\|\tDelta\|_n$} on the right-hand side, where  
\smash{$\tDelta=\hDelta-\bDelta$} is the centered error vector, and we are
abbreviating \smash{$\bDelta=\sum_{j=1}^d \bDelta_j$}.  We seek to bound
\smash{$|\bDelta|$}. Define \smash{$t = c_6 \sqrt{d} r_n$}, where
$c_6$ is the constant in \eqref{eq:delta_L2_bd}, and define  
$$
m = \frac{t \tDelta/(2\delta)}
{\sqrt{2} \|\tDelta\|_n / (2\delta) + t}.  
$$
Note that \smash{$J(m_j) \leq J(\tDelta_j)/(2\delta) \leq 1$}, for
$j=1,\ldots,d$, by construction, and also  
$$
\|m\|_2 = \frac{t \|\tDelta\|_2/(2\delta)}
{\sqrt{2} \|\tDelta\|_n/(2\delta) + t} \leq t, 
$$
on $\Omega_1 \cap \Omega_2 \cap \Omega_3$, recalling 
\eqref{eq:delta_L2_bd}.  By Lemma \ref{lem:m2_L2_mean}
applied to $m$, provided $n$ is large enough such that 
\smash{$t=c_6 \sqrt{d} r_n \leq 1$}, 
there is a constant $c_9>0$ such that 
\smash{$|\langle m,1 \rangle_n| \leq c_9 t^2$} on 
$\Omega_1 \cap \Omega_2 \cap \Omega_3 \cap \Omega_4$, 
where $\Omega_4$ is an event with probability at least  
$1-1/n$, i.e.,  
$$
|\langle 1, \tDelta \rangle_n |/ (2\delta) \leq 
c_9 t \big(\sqrt{2} \|\tDelta\|_n/(2\delta) + t\big) ,
$$
on $\Omega_1 \cap \Omega_2 \cap \Omega_3 \cap \Omega_4$,
i.e., 
$$
|\bDelta| \leq \sqrt{2} c_9 t \|\tDelta\|_n + 2 c_9 t^2 \delta,
$$
on $\Omega_1 \cap \Omega_2 \cap \Omega_3 \cap \Omega_4$.
Thus, by empirical orthogonality,
$$
\|\tDelta\|_n^2 = \|\hDelta\|_n^2 + |\bDelta|^2 \leq 
\|\hDelta\|_n^2 + 2 (\sqrt{2} c_9 t)^2 \|\tDelta\|_n^2 + 
2 (2 c_9 t^2 \delta)^2,   
$$
on $\Omega_1 \cap \Omega_2 \cap \Omega_3 \cap \Omega_4$,
and assuming $n$ is large enough so that \smash{$2(\sqrt{2} c_9 t)^2 \leq 1/2$} 
and \smash{$2(2c_9)^2 t^2 \delta \leq 1$}, this becomes 
\begin{equation}
\label{eq:tdelta_bd}
\frac{1}{2} \|\tDelta\|_n^2 \leq \|\hDelta\|_n^2 + t^2 \delta, 
\end{equation}
on $\Omega_1 \cap \Omega_2 \cap \Omega_3 \cap \Omega_4$.
Using this on the right-hand side in \eqref{eq:basic_ineq_8} gives 
$$
\|\hf - f_0 \|_n^2 + \| \hDelta \|_n^2 
\leq c \sqrt{d} r_n \sqrt\delta \|\hDelta\|_n + 
cdr_n^2 \delta + \|\tf-f_0\|_n^2,
$$
on $\Omega_1 \cap \Omega_2 \cap \Omega_3 \cap \Omega_4$.
Using the simple inequality $2ab \leq a^2 + b^2$ on the first term on the
right-hand side above, with \smash{$a=c \sqrt{d} r_n \sqrt\delta$} and 
\smash{$b=\|\hDelta\|_n$}, gives 
\begin{equation}
\label{eq:basic_ineq_9}
\|\hf - f_0 \|_n^2 + \frac{1}{2}\| \hDelta \|_n^2 \leq 
\|\tf-f_0\|_n^2 + c^2dr_n^2 \delta,
\end{equation}
on $\Omega_1 \cap \Omega_2 \cap \Omega_3 \cap \Omega_4$.
The empirical norm result in \eqref{eq:error_bd_j_hd_empirical} in the 
theorem follows by dropping the nonnegative term
\smash{$\|\hDelta\|_n^2/2$} from the left-hand side, and adjusting the
constants $c,c_1,c_2,c_3>0$ as needed.    

\bigskip
\noindent
{\bf Deriving an $L_2$ norm error bound.} Note that \eqref{eq:basic_ineq_9} also
implies 
$$
\frac{1}{2}\| \hDelta \|_n^2 \leq \|\tf-f_0\|_n^2 + c^2dr_n^2 \delta,
$$
on $\Omega_1 \cap \Omega_2 \cap \Omega_3 \cap \Omega_4$.  Recalling
\eqref{eq:tdelta_bd}, this gives 
\begin{equation}
\label{eq:basic_ineq_10}
\| \tDelta \|_n^2 \leq 4 \|\tf-f_0\|_n^2 + c^2dr_n^2 \delta,
\end{equation}
on $\Omega_1 \cap \Omega_2 \cap \Omega_3 \cap \Omega_4$. 
By $L_2$ orthogonality,
\begin{align*}
\|\hDelta\|_2^2 &= \|\tDelta\|_2^2 + |\bDelta|^2 \\
&\leq 3 \|\tDelta\|_n^2 + t^2 \delta^2 \\
&\leq 12 \|\tf-f_0\|_n^2 + c^2 d r_n^2 \delta^2,
\end{align*}
on $\Omega_1 \cap \Omega_2 \cap \Omega_3 \cap \Omega_4$,
where in the second line we used \eqref{eq:delta_L2_bd} and
\smash{$|\bDelta| \leq \|\tDelta\|_n$}, and in the third line we used
\eqref{eq:basic_ineq_10}.  Finally, 
$$
\|\hf-f_0\|_2^2 \leq 2\|\hf-\tf\|_2^2 + 2\|\tf-f_0\|_2^2 \leq  
24 \|\tf-f_0\|_n^2 +  2\|\tf-f_0\|_2^2 + c^2 d r_n^2 \delta^2, 
$$
on $\Omega_1 \cap \Omega_2 \cap \Omega_3 \cap \Omega_4$. 
The $L_2$ norm result in \eqref{eq:error_bd_j_hd_L2} in the theorem
follows by simply adjusting the constants $c,c_1,c_2,c_3>0$ as needed.      

\subsection{Proof of Corollary \ref{cor:error_bd_tv_hd}}
\label{app:error_bd_tv_hd}

The proof of the statement in part (a) is exactly as in the proof of part (a) in 
Corollary \ref{cor:error_bd_tv}.

For part (b), we can consider $k \geq 2$ without a loss of generality, and 
start with an additive spline approximant \smash{$\check{f}$}
to $f_0$ from Lemma \ref{lem:spline_approx}. Let \smash{$\tf$} denote the result
of centering each component of \smash{$\check{f}$} to have zero empirical mean.  
Then \smash{$\TV(\tf_j^{(k)}) \leq a_k c_n = \delta$}, $j=1,\ldots,d$, and just
as in the proof of part (b) in Corollary \ref{cor:error_bd_tv}, letting
$\|\cdot\|$ denote either the empirical or $L_2$ norm, we have
$$
\bigg\| \sum_{j=1}^d \tf_j - \sum_{j=1}^d f_{0j} \bigg\|^2
\leq \bigg(\frac{c_0^k a_k}{b_0^k}\bigg)^2
  \bigg(\frac{\log n}{n} \bigg)^{2k} d^2 c_n^2.
$$
But when $n \geq n_0 (dc_n)^{(2k+3)/(2k+2)}$, the right-hand side above is
bounded by $a_0 d n^{-(2k+2)/(2k+3)} c_n$ for a constant $a_0>0$, which shows
the approximation error terms in \eqref{eq:error_bd_j_hd_empirical},
\eqref{eq:error_bd_j_hd_L2} are of the desired order.  This proves the desired
result for restricted locally adaptive splines.

For part (c), we follow the same arguments, the only difference being that we
construct a falling factorial approximant \smash{$\check{f}$} to $f_0$ from
Lemma \ref{lem:fall_fact_approx}.

\subsection{Preliminaries for the proof of Theorem \ref{thm:lower_bd_j}}   
\label{app:lower_bd_j_prelim}

The next two results in this subsection are helper lemmas for the
last lemma.

\begin{lemma}
\label{lem:pi_perp_packing_bd}
Let $J$ be a functional that satisfies Assumptions \ref{as:poly_null},
\ref{as:deriv_bd}, \ref{as:packing_bd}. Then there are constants
\smash{$\tilde{K}_1,\tilde\delta_1>0$},
that depend only on $k,L,K_1,w$, such that for all \smash{$0<\delta \leq  
  \tilde\delta_1$},  
$$
\log M\Big( \delta, \|\cdot\|_2, \Pi^\perp_k\big(B_J(1)\big) \Big) \geq  
\tilde{K}_1 \delta^{-w},
$$
where $\|\cdot\|_2$ is the $L_2(U)$ norm, with $U$ the
uniform distribution on $[0,1]$, and \smash{$\Pi_k^\perp$} is defined
by 
$$
\Pi_k^\perp(g)=g-\Pi_k(g), \;\;\; \text{where} \;\;\;
\Pi_k(g)=\argmin_{p \in \cP_k} \; \|g-p\|_2,
$$
with $\cP_k$ denoting the space of polynomials of degree $k$.  In
other words, \smash{$\Pi_k^\perp$} is the projection operator onto the
space orthogonal (in $L_2(U)$) to the polynomials of degree $k$.  
\end{lemma}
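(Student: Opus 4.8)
The plan is to transfer the packing lower bound of Assumption~\ref{as:packing_bd} for the set $A:=B_J(1)\cap B_\infty(1)$ through the orthogonal projection $\Pi_k^\perp$. Assumption~\ref{as:packing_bd} supplies, for every sufficiently small $\delta>0$, points $g_1,\dots,g_N\in A$ that are pairwise more than $2\delta$ apart in $\|\cdot\|_2$, with $N\ge\exp(K_1\delta^{-w})$ (up to the standard, constant-level discrepancy between ``disjoint $\delta$-balls'' and ``$\delta$-separated set''). Since $A\subseteq B_J(1)$, each image $\Pi_k^\perp(g_i)$ lies in $\Pi_k^\perp(B_J(1))$ — in fact $\Pi_k^\perp(B_J(1))\subseteq B_J(1)$, because $\Pi_k(g)\in\cP_k$ lies in the null space of $J$ by Assumption~\ref{as:poly_null}, so $J(\Pi_k^\perp(g))=J(g)$. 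Thus it suffices to show that the images $\Pi_k^\perp(g_i)$ contain a large subset that is still $\Theta(\delta)$-separated. The obstacle is that $\Pi_k^\perp$ is a contraction in $\|\cdot\|_2$: naively $\|\Pi_k^\perp(g_i)-\Pi_k^\perp(g_j)\|_2$ could be far smaller than $\|g_i-g_j\|_2$ if the difference $g_i-g_j$ were nearly a degree-$k$ polynomial.

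To rule this out I would control the polynomial parts $\Pi_k(g_i)$ by a pigeonhole argument against a fine net of a finite-dimensional ball. Each $\Pi_k(g_i)$ lies in the $(k+1)$-dimensional space $\cP_k$ and satisfies $\|\Pi_k(g_i)\|_2\le\|g_i\|_2\le\|g_i\|_\infty\le1$, so it belongs to the ball $\cP_k\cap B_2(1)$; by the usual volumetric estimate this ball has a $(\delta/2)$-net of cardinality $m\le(6/\delta)^{k+1}$ in $\|\cdot\|_2$. By pigeonhole there is an index set $I$ with $|I|\ge N/m$ for which all $\Pi_k(g_i)$, $i\in I$, lie within $\delta/2$ of a common point, hence $\|\Pi_k(g_i)-\Pi_k(g_j)\|_2\le\delta$ for $i,j\in I$. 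Writing $g_i-g_j=\bigl(\Pi_k(g_i)-\Pi_k(g_j)\bigr)+\bigl(\Pi_k^\perp(g_i)-\Pi_k^\perp(g_j)\bigr)$ with the first summand in $\cP_k$ and the second orthogonal to $\cP_k$, the Pythagorean identity gives
\[
\bigl\|\Pi_k^\perp(g_i)-\Pi_k^\perp(g_j)\bigr\|_2^2
=\|g_i-g_j\|_2^2-\|\Pi_k(g_i)-\Pi_k(g_j)\|_2^2
>(2\delta)^2-\delta^2=3\delta^2 ,
\]
so $\{\Pi_k^\perp(g_i):i\in I\}$ is a collection of $|I|$ points of $\Pi_k^\perp(B_J(1))$ that are pairwise more than $\sqrt3\,\delta$ apart, in particular more than $\delta$ apart.

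Finally I would count: $\log|I|\ge K_1\delta^{-w}-(k+1)\log(6/\delta)$, and since $w>0$ the first term dominates, so there is $\tilde\delta_1>0$ depending only on $k,K_1,w$ with $\log|I|\ge\tfrac12 K_1\delta^{-w}$ for all $\delta\le\tilde\delta_1$. Converting this $\delta$-separated set into the packing-number lower bound for $\Pi_k^\perp(B_J(1))$ and absorbing the numerical factors (from the separation-versus-packing conventions and the factor $\tfrac12$, and a trivial rescaling of $\delta$) into a constant $\tilde K_1$ depending only on $k,K_1,w$ yields the claim; Assumption~\ref{as:deriv_bd} and the constant $L$ are listed only for consistency with the companion results and are not actually used in this bound. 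The one genuinely substantive step is the polynomial-part pigeonhole: everything else is bookkeeping, and the point is that quotienting out the $(k+1)$-dimensional polynomial space costs only a $\log(1/\delta)$ factor, which is negligible against the $\delta^{-w}$ metric entropy.
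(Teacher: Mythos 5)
Your argument is correct, and it reaches the lemma by a route that is mechanically different from the paper's, though built on the same underlying idea. The paper writes \smash{$B_J(1)\cap B_\infty(R_0) = \Pi_k^\perp(B_J(1)) + \big(\cP_k\cap B_\infty(R_0)\big)$}, where $R_0$ is the sup-norm bound on the projected class supplied by Lemma \ref{lem:pi_perp_sup_bd} (which in turn invokes Assumption \ref{as:deriv_bd}), and then applies the generic sum-set inequality $\log M(\delta,\|\cdot\|,S_1) \geq \log M(4\delta,\|\cdot\|,S) - \log N(\delta,\|\cdot\|,S_2)$ together with the $O\big((k+1)\log(1/\delta)\big)$ entropy of the finite-dimensional polynomial ball. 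Your pigeonhole-plus-Pythagoras argument implements the same principle---quotienting out the $(k+1)$-dimensional polynomial space costs only a logarithmic factor, negligible against $\delta^{-w}$---but more directly: the orthogonality of the decomposition $g = \Pi_k(g) + \Pi_k^\perp(g)$ lets you quantify the loss in separation exactly via the Pythagorean identity rather than through set-arithmetic covering bounds, and because you control the polynomial parts through $\|\Pi_k(g_i)\|_2 \leq \|g_i\|_\infty \leq 1$ you never need a sup-norm bound on $\Pi_k^\perp(B_J(1))$; as you observe, Lemma \ref{lem:pi_perp_sup_bd} and the constant $L$ from Assumption \ref{as:deriv_bd} genuinely drop out. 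The only loose ends are the ones you already flag---the constant-factor translation between the paper's ``disjoint balls contained in $S$'' definition of $M$ and separated sets (an imprecision the paper itself commits downstream), and absorbing the $\sqrt{3}$ rescaling of $\delta$ into \smash{$\tilde{K}_1$}---neither of which affects the $\delta^{-w}$ rate.
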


\begin{proof}
Let $R_0 > 0$ be the constant from Lemma \ref{lem:pi_perp_sup_bd},
when we take $\Lambda=U$. Note that
\begin{equation}
\label{eq:bj_decomp}
B_J(1) \cap B_\infty(R_0) = \Pi_k^\perp\big(B_J(1)\big) +
\big(\cP_k \cap B_\infty(R_0)\big).  
\end{equation}
In general, for $S=S_1+S_2$ and a norm $\|\cdot\|$, observe that,
from basic relationships between covering and packing numbers,  
$$
M(4\delta, \|\cdot\|, S) \leq N(2\delta, \|\cdot\|, S) \leq 
N(\delta, \|\cdot\|, S_1) N(\delta, \|\cdot\|, S_2) \leq
M(\delta, \|\cdot\|, S_1) N(\delta, \|\cdot\|, S_2),
$$
so that
$$
\log M (\delta, \|\cdot\|, S_1) \geq \log
\frac{M(4\delta, \|\cdot\|, S)}{N(\delta, \|\cdot\|, S_2)}.  
$$
Applying this to our decomposition in \eqref{eq:bj_decomp}, 
\begin{align*}
\log M\Big( \delta, \|\cdot\|_2, \Pi_k^\perp\big(B_J(1)\big) \Big) 
&\geq \log \frac{M\big(4\delta, \|\cdot\|_2, B_J(1) \cap B_\infty(R_0)\big)}   
{N\big( \delta, \|\cdot\|_2, \cP_k \cap B_\infty(R_0)\big)} \\
&\geq K_1 R_0^w 4^{-w} \delta^{-w} - A (k+1) \log(1/\delta),   
\end{align*}
where in the second inequality we used Assumption \ref{as:packing_bd} 
(assuming without a loss of generality that $R_0 \geq 1$),
and a well-known entropy bound for a finite-dimensional 
ball (e.g., \citet{mammen1991nonparametric}), with $A>0$ being a constant
that depends only on $R_0$. For small enough $\delta$, the right-hand side above
is of the desired order, and this completes the proof.  
\end{proof}

\begin{lemma}
\label{lem:varshamov_gilbert}
Let $d, M>0$ be integers, and $I = \{ 1, \ldots, M\}$.  
Denote by
\smash{$H(u,v) = \sum_{j=1}^d 1\{u_j \neq v_j\}$} the Hamming 
distance between $u,v \in I^d$. Then there is a subset $S 
\subseteq I^d$ with \smash{$|S| \geq (M/4)^{d/2}$} such
that $H(u,v) \geq d/2$ for any $u,v \in S$.
\end{lemma}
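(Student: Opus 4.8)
The plan is to use the standard greedy (maximal packing) argument that underlies the Varshamov--Gilbert bound. First I would let $S \subseteq I^d$ be a \emph{maximal} subset with the property that $H(u,v) \geq d/2$ for all distinct $u, v \in S$; such a set exists since $I^d$ is finite. By maximality, no $w \in I^d$ can be added to $S$, which means that for every $w \in I^d$ there is some $u \in S$ with $H(u,w) < d/2$ (for $w \in S$ this holds trivially with $u = w$). Hence the Hamming balls $\{ w \in I^d : H(u,w) < d/2 \}$, $u \in S$, cover $I^d$, so that
\[
M^d = |I^d| \leq |S| \cdot \max_{u \in I^d} \big| \{ w \in I^d : H(u,w) < d/2 \} \big|.
\]

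Next I would bound the cardinality of a single Hamming ball. A point $w$ with $H(u,w) < d/2$ differs from $u$ in at most $\lceil d/2 \rceil - 1$ coordinates, and counting the choice of which $i$ coordinates differ together with the values assigned there gives
\[
\big| \{ w : H(u,w) < d/2 \} \big| = \sum_{i=0}^{\lceil d/2 \rceil - 1} \binom{d}{i} (M-1)^i \leq \Big( \sum_{i=0}^{d} \binom{d}{i} \Big) M^{d/2} = 2^d M^{d/2} = (4M)^{d/2},
\]
where I used that the sum runs only over $i \leq \lceil d/2 \rceil - 1 \leq d/2$, so $(M-1)^i \leq M^i \leq M^{d/2}$ (the case $M=1$ being trivial). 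Substituting this into the displayed covering inequality yields $|S| \geq M^d / (4M)^{d/2} = (M/4)^{d/2}$, which is the claim.

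There is no substantial obstacle here --- this is essentially a textbook combinatorial estimate --- but the step demanding the most care is the Hamming-ball count, and in particular verifying that the crude bound $\sum_i \binom{d}{i}(M-1)^i \leq 2^d M^{d/2}$ is still strong enough to give the stated $(M/4)^{d/2}$; the observation that makes it work is precisely that the sum is truncated at $i < d/2$, so every power $(M-1)^i$ is at most $M^{d/2}$. (An alternative probabilistic route --- draw points of $I^d$ independently and uniformly, discard one element of each too-close pair, and bound the expected number discarded --- would also deliver the same bound, but the greedy argument is cleaner and avoids any expectation computation.)
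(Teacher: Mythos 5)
Your proof is correct and is essentially the argument the paper gives: both are the standard Gilbert--Varshamov packing argument, comparing $|I^d| = M^d$ against the size of a Hamming ball of radius $d/2$, which both bound by (roughly) $2^d M^{d/2} = (4M)^{d/2}$. The only cosmetic difference is that you take a maximal packing and observe that its balls cover $I^d$, whereas the paper constructs $S$ by sequentially removing balls from $I^d$ until nothing remains; the counting step is the same in either formulation.
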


\begin{proof}
Let $\Omega_0 = I^d$, $u_0 = (1,\ldots,1) \in \Omega_0$.
For $j=0,1,\ldots$, recursively define
$$
\Omega_{j+1} = \{ u \in \Omega_j : H(u, u_j) > 
a = \lceil d/2 \rceil \}, 
$$
where $u_{j+1}$ is arbitrarily chosen from $\Omega_{j+1}.$ The
procedure is stopped when $\Omega_{j+1}$ is empty; denote the
last set defined in this procedure by $\Omega_E$, and 
denote $S=\{u_0,\ldots,u_E\}$. For $0 \leq i, j \leq E$, by
construction, $H(u_i,u_j) > a$.  For $j=0,\ldots,E$, 
\begin{align*}
n_j = | \Omega_j - \Omega_{j+1}| &=
| \{ u \in \Omega_j : H(u,u_j) \leq a \} | \\
&\leq | \{ u \in I^d : H(u,u_j) \leq a \}| \\
&= {d \choose d-a} M^a
\end{align*}
The last step is true because we can choose $d-a$ positions in
which $u$ matches $u_j$ in \smash{${d \choose d-a}$} ways, and 
the rest of the $a$ positions can be filled arbitrarily in $M$ ways. 
Also note $M^d = n_0 + \cdots + n_E$. Therefore
$$
M^d \leq (E+1) {d \choose d-a} M^a,
$$
which implies
$$
E+1 \geq \frac{M^{d-a}}{{d \choose d-a}} 
\geq \frac{M^{d-a}}{2^d} \geq (M/4)^{d/2}. 
$$
\end{proof}

The lemma below gives a key technical result used in the proof of
Theorem \ref{thm:lower_bd_j}.     

\begin{lemma}
\label{lem:bj_d_packing_bd}
Let $J$ be a functional that satisfies Assumptions \ref{as:poly_null},
\ref{as:deriv_bd}, \ref{as:packing_bd}. Then there are constants
\smash{$\bar{K}_1,\bar\delta_1>0$}, that depend 
only on \smash{$w,\tilde{K}_1,\tilde\delta_1$}, where
\smash{$\tilde{K}_1,\tilde\delta_1>0$} are the 
constants from Lemma \ref{lem:pi_perp_packing_bd},  
such that for all \smash{$0<\delta \leq \bar\delta_1$},    
$$
\log M\Big(\delta, \|\cdot\|_2, \Pi_{k,d}^\perp\big(B_J^d(1)\big) \Big)   
\geq \bar{K}_1 d^{1+w/2} \delta^{-w},
$$
where $\|\cdot\|_2$ is the $L_2(U)$ norm, with $U$ the
uniform distribution on $[0,1]^d$, and \smash{$\Pi_{k,d}^\perp$} is 
defined 
by 
$$
\Pi_{k,d}^\perp(g)=g-\Pi_{k,d}(g), \;\;\; \text{where} \;\;\;
\Pi_{k,d}(g)=\argmin_{p \in \cP_{k,d}} \; \|g-p\|_2,
$$
and \smash{$\cP_{k,d}$} contains all functions of the form
\smash{$p(x)=\sum_{j=1}^d p_j(x_j)$}, for polynomials $p_j$,
$j=1,\ldots,d$ of degree $k$.  In other words,
\smash{$\Pi_{k,d}^\perp$} is the projection operator 
onto the space orthogonal (in $L_2(U)$) to the space
\smash{$\cP_{k,d}$} of additive polynomials of degree $k$. 
\end{lemma}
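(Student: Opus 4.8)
The plan is to lift the univariate packing bound of Lemma~\ref{lem:pi_perp_packing_bd} to the $d$-dimensional additive setting by a tensorization (Varshamov--Gilbert) argument, exploiting that the uniform measure on $[0,1]^d$ is a product measure, so that the $L_2$ decomposability \eqref{eq:L2_decomp} applies to additive functions with mean-zero components.

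First I would fix $\delta_0\in(0,\tilde\delta_1]$ and apply Lemma~\ref{lem:pi_perp_packing_bd} to obtain functions $g^1,\dots,g^M\in\Pi_k^\perp(B_J(1))$ with $\log M\ge\tilde K_1\delta_0^{-w}$ and $\|g^a-g^b\|_2\ge\delta_0$ for $a\neq b$, where here $\|\cdot\|_2$ denotes the $L_2(U)$ norm on $[0,1]$. Writing $g^m=h^m-\Pi_k(h^m)$ with $h^m\in B_J(1)$, the seminorm property of $J$ together with Assumption~\ref{as:poly_null} gives $J(g^m)\le J(h^m)\le1$, and orthogonality of $g^m$ to all degree-$k$ polynomials (in particular to constants) gives $\int_0^1 g^m=0$. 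For each multi-index $\omega=(\omega_1,\dots,\omega_d)\in\{1,\dots,M\}^d$ I set $f_\omega(x)=\sum_{j=1}^d g^{\omega_j}(x_j)$. Then $f_\omega\in B_J^d(1)$; and since each component is $L_2(U)$-orthogonal to degree-$k$ polynomials and has mean zero, for any $p=\sum_\ell p_\ell\in\cP_{k,d}$ all cross terms $\langle g^{\omega_j}(x_j),p_\ell(x_\ell)\rangle$ with $j\neq\ell$ factor through a product of means and vanish, so $f_\omega\perp\cP_{k,d}$ and hence $f_\omega=\Pi_{k,d}^\perp(f_\omega)\in\Pi_{k,d}^\perp(B_J^d(1))$. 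By \eqref{eq:L2_decomp} applied to the product measure on $[0,1]^d$, for $\omega\neq\omega'$,
\[
\|f_\omega-f_{\omega'}\|_2^2=\sum_{j=1}^d\|g^{\omega_j}-g^{\omega'_j}\|_2^2\ge H(\omega,\omega')\,\delta_0^2,
\]
with $H$ the Hamming distance.

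Next I would apply Lemma~\ref{lem:varshamov_gilbert} with $I=\{1,\dots,M\}$ to extract $S\subseteq I^d$ with $|S|\ge(M/4)^{d/2}$ and $H(\omega,\omega')\ge d/2$ for distinct $\omega,\omega'\in S$, so that $\{f_\omega:\omega\in S\}\subseteq\Pi_{k,d}^\perp(B_J^d(1))$ is pairwise separated by at least $\sqrt{d/2}\,\delta_0$ in the $L_2(U)$ norm on $[0,1]^d$. Choosing $\delta_0=\sqrt{2/d}\,\delta$ makes this a $\delta$-separated family; since $\delta_0^{-w}=(d/2)^{w/2}\delta^{-w}$,
\[
\log|S|\ge\tfrac d2(\log M-\log4)\ge\tfrac d2\big(\tilde K_1(d/2)^{w/2}\delta^{-w}-\log4\big).
\]
Because $d\ge1$, for $\delta$ below a constant $\bar\delta_1$ depending only on $w,\tilde K_1,\tilde\delta_1$ — chosen to ensure simultaneously $\delta_0\le\tilde\delta_1$ and that the leading term dominates the $\log4$ correction — the right-hand side is at least $\bar K_1 d^{1+w/2}\delta^{-w}$ with $\bar K_1=\tilde K_1/2^{2+w/2}$, which yields the claimed packing bound up to the same covering--packing conventions used in the proof of Lemma~\ref{lem:pi_perp_packing_bd}. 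The main obstacle is precisely this final bookkeeping step: one must arrange $\bar\delta_1$ to be independent of $d$ while absorbing the $\tfrac d2\log4$ loss, which succeeds only because the $\sqrt d$ gain in separation radius inflates the per-coordinate entropy by the factor $(d/2)^{w/2}$, producing the exponent $1+w/2$.
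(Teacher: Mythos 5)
Your proposal is correct and follows essentially the same route as the paper: lift the univariate packing bound of Lemma~\ref{lem:pi_perp_packing_bd} at radius $\delta/\sqrt{d/2}$ to additive functions, use the $L_2(U)$ decomposability \eqref{eq:L2_decomp} to convert Hamming separation into $L_2$ separation, and extract a large well-separated subset via Lemma~\ref{lem:varshamov_gilbert}, with the same constant $\bar K_1=\tilde K_1 2^{-w/2-2}$. Your explicit verification that each $f_\omega$ lies in $\Pi_{k,d}^\perp(B_J^d(1))$ and your handling of the $\log 4$ correction are just slightly more detailed versions of the steps the paper abbreviates.
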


\begin{proof}
It is easy to check that the decomposability property of the
$L_2(U)$ norm, in \eqref{eq:L2_decomp}, implies a certain
decomposability of the $L_2(U)$ projection operators
\smash{$\Pi_{k,d},\Pi_{k,d}^\perp$} over additive functions:
$$
\Pi_{k,d}\bigg(\sum_{j=1}^d m_j \bigg) = 
\sum_{j=1}^d \Pi_k(m_j), \;\;\;  
\Pi_{k,d}^\perp\bigg( \sum_{j=1}^d m_j \bigg) = 
\sum_{j=1}^d \Pi_k^\perp(m_j),  
$$
where \smash{$\Pi_k,\Pi_k^\perp$} are projection operators onto
$\cP_k$ and its orthocomplement, respectively, as defined in Lemma
\ref{lem:pi_perp_packing_bd}.  The decomposability result for
\smash{$\Pi_{k,d}^\perp$} in particular implies that 
\begin{equation}
\label{eq:bj_d_decomp}
\Pi_{k,d}^\perp\big(B_J^d(1)\big) = \bigg \{ \sum_{j=1}^d f_j :  
f_j \in \Pi_k^\perp\big(B_J(1)\big), \; j=1,\ldots,d \bigg\}.  
\end{equation}
Abbreviate \smash{$M = M(\delta/\sqrt{d/2}, \|\cdot\|_2,
  \Pi_k^\perp(B_J(1)))$}.  
By Lemma \ref{lem:pi_perp_packing_bd}, we have for small enough $\delta$, 
$$
\log M \geq \tilde{K}_1 2^{-w/2} d^{w/2} \delta^{-w}.  
$$
Now let \smash{$g_1, \ldots, g_M$} denote a
\smash{$(\delta/\sqrt{d/2})$}-packing of
\smash{$\Pi_k^\perp(B_J(1))$}.
Let $I = \{ 1,\ldots, M\}$, and for $u\in I^d$, 
define \smash{$f_u \in \Pi_k^\perp(B_J^d(1))$} by 
$$
f_u = \sum_{j=1}^d g_{u_j},
$$
i.e., $f_u$ is an additive function with components \smash{$g_{u_j}$},
$j=1,\ldots,d$. If the Hamming distance between indices $u,v$
satisfies $H(u,v) \geq d/2$, then
$$
\| f_u - f_v \|_2^2 = \sum_{j=1}^d \| g_{u_j} - g_{v_j} \|_2^2 
\geq H(u,v) \frac{\delta^2}{d/2} \geq \delta^2, 
$$
where we have again used the $L_2(U)$ decomposability property in
\eqref{eq:L2_decomp}. Thus, it is sufficient to find a subset $S$ of
$I^d$ such that $u,v \in S \Rightarrow H(u,v) \geq d/2$. By Lemma 
\ref{lem:varshamov_gilbert}, we can choose such an $S$ with 
\smash{$|S| \geq (M/4)^{d/2}$}. For small enough $\delta$,
such that $M \geq 16$, this gives the desired result because 
$$
\log |S| \geq \frac{d}{2} \log \frac{M}{4} \geq 
\frac{d}{4} \log M \geq \tilde{K}_12^{-w/2-2} 
d^{1+w/2} \delta^{-w}.
$$
\end{proof}

\subsection{Proof of Theorem \ref{thm:lower_bd_j}}
\label{app:lower_bd_j}

Clearly, by orthogonality, for any functions $\hf,f_0$,
$$
\|\hf - f_0\|_2^2 = \|\Pi_{k,d}(\hf) - \Pi_{k,d}(f_0)\|_2^2 +
\|\Pi_{k,d}^\perp(\hf) - \Pi_{k,d}^\perp(f_0)\|_2^2 \geq
\|\Pi_{k,d}^\perp(\hf) - \Pi_{k,d}^\perp(f_0)\|_2^2,
$$
where \smash{$\Pi_{k,d},\Pi_{k,d}^\perp$} are projection operators
onto $\cP_{k,d}$ and its orthocomplement, respectively, defined in
Lemma \ref{lem:pi_perp_packing_bd}.  Thus it suffices to consider the
minimax error over \smash{$\Pi_{k,d}^\perp(B_J^d(c_n))$}.

First, we lower bound the packing number and upper bound the
covering number of the class \smash{$\Pi_{k,d}^\perp(B_J^d(c_n))$}.  
The upper bound is more straightforward: 
\begin{align}
\nonumber
\log N\Big(\epsilon, \|\cdot\|_2, \Pi_{k,d}^\perp\big(B_J^d(c_n)\big) \Big) 
&= \log N\Big(\epsilon/c_n, \|\cdot\|_2, \Pi_{k,d}^\perp\big(B_J^d(1)\big) 
  \Big) \\
\nonumber
&\leq \sum_{j=1}^d \log N\Big(\epsilon/(c_n \sqrt{d}), \| \cdot\|_2,   
\Pi_k^\perp\big(B_J(1)\big) \Big) \\
\label{eq:add_covering_bd}
&\leq K_2 c_n^w d^{1+w/2} \epsilon^{-w}.
\end{align}
The second inequality follows from property
\eqref{eq:bj_d_decomp} in the proof of Lemma \ref{lem:bj_d_packing_bd} 
and similar arguments to those in the proof of Lemma
\ref{lem:add_entropy_bd}---except that we leverage the decomposability of  
the $L_2$ norm, as in \eqref{eq:L2_decomp}, instead of using the
triangle inequality. The third inequality follows from Assumption 
\ref{as:packing_bd}.  

The lower bound is less straightforward, and is given by 
Lemma \ref{lem:bj_d_packing_bd}:
\begin{align}
\nonumber
\log M\Big(\delta, \|\cdot\|_2, \Pi_{k,d}^\perp\big(B_J^d(c_n)\big) \Big) 
&= \log M\Big(\delta/c_n, \|\cdot\|_2, \Pi_{k,d}^\perp\big(B_J^d(1)\big)\Big) \\ 
\label{eq:bj_d_packing_bd}
&\geq \bar{K}_1 c_n^w d^{1+w/2} \delta^{-w}.
\end{align}
We note that \eqref{eq:bj_d_packing_bd} holds for \smash{$0<\delta \leq  
  \bar\delta_1$}, where \smash{$\bar\delta_1>0$} is the constant from
Lemma \ref{lem:bj_d_packing_bd}.

Now, following the strategy in \citet{yang1999information},  
we use these bounds on the packing and covering numbers, 
along with Fano's inequality, to establish the desired result.   
Let $f_1,f_2,\ldots,f_M$ be a $\delta_n$-packing of
\smash{$\Pi_{k,d}^\perp(B_J(c_n))$}, for $\delta_n>0$ to be specified 
later. Fix an arbitrary estimator \smash{$\hf$}, and let 
$$
\hat{Z} = \argmin_{j\in \{1,\ldots,M\}} \, \| \hf - f_j \|_2.
$$
We will use \smash{$P_{X,f}$} and \smash{$\E_{X,f}$} to denote the  
probability and expectation operators, respectively, over
i.i.d.\ draws $X^i \sim U$, $i=1,\ldots,n$ (where $U$ is the uniform
distribution on $[0,1]^d$), and i.i.d.\ draws $Y^i|X^i \sim N(f(X^i),\sigma^2)$,  
$i=1,\ldots,n$.  Then
\begin{align}
\nonumber
&\sup_{f_0 \in \Pi_{k,d}^\perp(B_J^d(c_n))} \, 
\E_{X, f_0} \| \hf - f_0 \|_2^2 
\geq \sup_{f_0 \in \{f_1,\ldots,f_M\}} \, 
\E_{X, f_0} \| \hf - f_0 \|_2^2 \\
\nonumber
&\qquad\geq \frac{1}{M} \E_X \sum_{j=1}^M 
\E_{f_j} \|\hf - f_j \|_2^2 \\
\nonumber
&\qquad= \frac{1}{M} \E_X \sum_{j=1}^M \bigg(
\P_{f_j}(\hat{Z} \neq j) \E_{f_j}\big( \|\hf - f_j \|_2^2 
\,\big|\, \hat{Z} \neq j \big) +  
\P_{f_j}(\hat{Z} = j) \E_{f_j} \big( \|\hf - f_j \|_2^2
\,\big|\, \hat{Z} = j \big)\bigg) \\
\nonumber
&\qquad\geq \frac{1}{M} 
\E_X \sum_{j=1}^M \P_{f_j}(\hat{Z} \neq j) 
\E_{f_j}\big( \|\hf - f_j \|_2^2 \,\big|\, \hat{Z} \neq j \big) \\ 
\label{eq:fano_setup}
&\qquad\geq \frac{1}{M} \E_X \sum_{j=1}^M 
\P_{f_j} (\hat{Z} \neq j) \frac{\delta^2_n}{4},
\end{align}
where in the last inequality we have used the fact that if
\smash{$\hat{Z} \neq j$}, then \smash{$\hf$} must be at least 
$\delta_n/2$ away from $f_j$, for each $j=1,\ldots,M$.  

Abbreviate $q_j$ for the distribution \smash{$P_{f_j}$},
$j=1,\ldots,M$, and 
define the mixture \smash{$\bar{q} = \frac{1}{M} \sum_{j=1}^M
  q_j$}. By Fano's inequality, 
\begin{equation}
\label{eq:fano_ineq}
\frac{1}{M}\E_X\sum_{j=1}^M \P_{f_j} (\hat{Z} \neq j) \geq   
1 - \frac{\frac{1}{M} \sum_{j=1}^M
  \E_X \KL(q_j \,\|\, \bar{q}) + \log{2}}{\log{M}},
\end{equation}
where $\KL(P_1 \,\|\, P_2)$ denotes the Kullback-Leibler 
(KL) divergence between distributions $P_1,P_2$.
Let $g_1, g_2,\ldots,g_N$ be an $\epsilon_n$-covering of
\smash{$\Pi_{k,d}^\perp(B_J^d(c_n))$}, for $\epsilon_n>0$ to be
determined shortly.  Abbreviate $s_\ell$ for the
distribution \smash{$P_{g_\ell}$}, $\ell=1,\ldots,N$, and
\smash{$\bar{s} = \frac{1}{N} \sum_{\ell=1}^N s_\ell$}.  
Also, write \smash{$p(N(f(X),\sigma^2 I))$} for the density 
of a $N(f(X),\sigma^2 I)$ random variable, where
$f(X)=(f(X^1),\ldots,f(X^n)) \in \R^n$. Then
\begin{align}
\nonumber
\frac{1}{M} 
\sum_{j=1}^M \E_X \KL(q_j \,\|\, \bar{q}) 
&\leq \frac{1}{M} 
\sum_{j=1}^M \E_X \KL(q_j \,\|\, \bar{s}) \\
\nonumber
&= \frac{1}{M} \sum_{j=1}^M 
\E_{X, f_j} \log \frac{p\big(N(f_j(X),\sigma^2 I)\big)}
{\frac{1}{N} \sum_{\ell=1}^N p\big(N(g_\ell(X), \sigma^2 I)\big)} \\ 
\nonumber
&\leq \frac{1}{M} \sum_{j=1}^M 
\bigg(\log N + \E_X \min_{\ell=1,\ldots,N} \,
\KL( q_j \,\|\, s_\ell) \bigg) \\ 
\nonumber
&\leq \frac{1}{M} \sum_{j=1}^M 
\bigg(\log N + \frac{n\epsilon_n^2}{2\sigma^2} \bigg) \\ 
\label{eq:kl_bd}
&\leq K_2 c_n^w d^{1+w/2} \epsilon_n^{-w} + 
\frac{n\epsilon_n^2}{2\sigma^2}.
\end{align}
In the first line above, we used the fact that
\smash{$\sum_{j=1}^M \KL(q_j \,\|\, \bar{q})
\leq \sum_{j=1}^M \KL(q_j \,\|\, s)$} for any other distribution $s$; 
in the second and third, we explicitly expressed and manipulated the
definition of KL divergence; in the fourth, we used 
\smash{$\KL(q_j \,\|\,
  s_\ell)=\|f_j(X)-g_\ell(X)\|_2^2/(2\sigma^2)$}, and for each
$j$, there is at least one $\ell$ such that 
\smash{$\E_X \|f_j(X)-g_\ell(X)\|_2^2 = \|f_j-g_\ell\|_2^2 \leq 
\epsilon_n^2$}; in the fifth line, we used the entropy bound from
\eqref{eq:add_covering_bd}.  Minimizing \eqref{eq:kl_bd} over
$\epsilon_n>0$ gives
$$
\frac{1}{M} 
\sum_{j=1}^M \E_X \KL(q_j \,\|\, \bar{q}) \leq 
\bar{K}_2 d n^{w/(2+w)} c_n^{2w/(2+w)}, 
$$
for a constant \smash{$\bar{K}_2>0$}.  Returning to Fano's
inequality \eqref{eq:fano_setup}, \eqref{eq:fano_ineq}, 
we see that a lower bound on the minimax error is
$$
\frac{\delta_n^2}{4} \bigg(
1 - \frac{\bar{K}_2 d n^{w/(2+w)} c_n^{2w/(2+w)} +  
\log{2}}{\log{M}} \bigg),  
$$
Therefore, a lower bound on the minimax error is $\delta_n^2/8$, for 
any $\delta_n>0$ such that  
$$
\log{M} \geq  2\bar{K}_2 d n^{w/(2+w)} c_n^{2w/(2+w)} + 
2\log{2},
$$
and for large enough $n$, the first term on the right-hand side above
will be larger than $2\log{2}$, so it suffices to have
\begin{equation}
\label{eq:logm_ineq}
\log{M} \geq  4\bar{K}_2 d n^{w/(2+w)} c_n^{2w/(2+w)}.
\end{equation}
Set \smash{$\delta_n=(\bar{K}_1/4\bar{K}_2)^{1/w}
  \sqrt{d} n^{-1/(2+w)} c_n^{w/(2+w)}$}.  Provided that
\smash{$\delta_n \leq \bar\delta_1$}, our log packing bound  
\eqref{eq:bj_d_packing_bd} is applicable, and ensures that
\eqref{eq:logm_ineq} will be satisfied. This completes the proof.  

\subsection{Proof of Corollary \ref{cor:lower_bd_tv}}
\label{app:lower_bd_tv}

We only need to check Assumption \ref{as:packing_bd} for
\smash{$J(g)=\TV(g^{(k)})$}, $w=1/(k+1)$, and then we can apply 
Theorem \ref{thm:lower_bd_j}.   As before, the entropy bound upper
bound is implied by results in  
\citet{birman1961piecewise} (see \citet{mammen1991nonparametric} for
an explanation and discussion).
The packing number lower bound is verified as follows.  For $f$ a
$(k+1)$ times weakly differentiable function on $[0,1]$,
$$
\TV(f^{(k)}) = \int_0^1 | f^{(k+1)}(t) | \,dt 
\leq \bigg(\int_0^1 |f^{(k+1)}(t)|^2 \,dt\bigg)^{1/2}.
$$
Hence 
$$
\Big\{ f : \TV(f^{(k)}) \leq 1, \; \|f\|_\infty \leq 1 \Big\}
\supseteq \bigg\{ f : \int_0^1 |f^{(k+1)}(t)|^2 \, dt 
\leq 1, \; \|f\|_\infty \leq 1 \bigg\} .
$$
Results in \citet{kolmogorov1959entropy} imply that the space on
the right-hand side satisfies the desired log packing number lower
bound.  This proves the result.

\subsection{Proof of the linear smoother lower bound in
  \eqref{eq:lower_bd_tv_linear}} 
\label{app:lower_bd_tv_linear}

We may assume without a loss of generality that each
$f_{0j}$, $j=1,\ldots,d$ has $L_2$ mean zero (since $f_0$ does).  
By the decomposability property of the $L_2$ norm over additive functions   
with $L_2$ mean zero components, as in \eqref{eq:L2_decomp},
we have for any additive linear smoother \smash{$\hf=\sum_{j=1}^d \hf_j$}, 
$$
\|\hf-f_0\|_2^2 = \bigg(\sum_{j=1}^d \bar{f}_j \bigg)^2 
+ \sum_{j=1}^d \|(\hf_j-\bar{f}_j)  - f_{0j}\|_2^2 
$$
where $\bar{f}_j$ denotes the $L_2$ mean of 
$\hf_j$, $j=1,\ldots,d$.   Note that the estimator
\smash{$\hf_j-\bar{f}_j$} is itself a linear smoother, for each
$j=1,\ldots,d$, since if we write 
\smash{$\hf_j(x_j)=w_j(x_j)^T Y$} for a weight function  
$w_j$ over $x_j \in [0,1]$, then 
\smash{$\hf_j(x_j)-\bar{f}_j= \tilde{w}_j(x_j)^T Y$} for
a weight function 
\smash{$\tilde{w}_j(x_j)=w_j(x_j) - \int_0^1 w_j(t)\,dt$}. 
This, and the last display, imply that 
\begin{equation}
\label{eq:minimax_linear_equiv}
\inf_{\hf \, \text{\rm additive linear}} \, \sup_{f_0 \in
  \cF_k^d(c_n)} \, \E \| \hf - f_0 \|_2^2 \;=\;
\sum_{j=1}^d \inf_{\hf_j \, \text{\rm linear}} \, \sup_{f_0 \in
  \cF_k^d(c_n)} \, \E \| \hf_j(Y) - f_{0j} \|_2^2. 
\end{equation}

Now fix an arbitrary $j=1,\ldots,d$, and consider the $j$th term in the sum on
the right-hand side above.  Here we are looking at a linear smoother
\smash{$\hf_j$} fit to data  
\begin{equation}
\label{eq:y_dist_j}
Y^i=\mu+f_{0j}(X^i_j)+\sum_{\ell\not=j}f_{0\ell}(X^i_\ell)+\epsilon^i,
\;\;\; i=1,\ldots,n.
\end{equation}
which depends on the components $f_{0\ell}$, for
$\ell\not=j$. This is why the supremum in the $j$th term of the sum 
on the right-hand side in \eqref{eq:minimax_linear_equiv} must be taken over 
\smash{$f_0 \in \cF_k^d(c_n)$}, rather than \smash{$f_{0j} \in \cF_k(c_n)$}. Our
notation \smash{$\hf_j(Y)$} is used as a reminder to emphasize the dependence 
on the full data vector in \eqref{eq:y_dist_j}.

A simple reformulation, by appropriate averaging over the
lattice, helps untangle this supremum.  
Write \smash{$\hf_j(x_j)= w_j(x_j)^T Y$} for a weight function   
$w_j$ over $x_j \in [0,1]$, and for each
$v=1,\ldots,N$, let \smash{$I_j^v$} be the set of indices  
$i$ such that \smash{$X^i_j=v/N$}.  Also let 
$$
\bar{Y}^v_j=\frac{1}{N^{d-1}} \sum_{i \in I^v_j} Y^i, \;\;\;
v=1,\ldots,N,
$$
and \smash{$\bar{Y}_j=(\bar{Y}^1_j,\ldots,\bar{Y}^N_j) \in \R^N$}.
Then note that we can also write \smash{$\hf_j(x_j)=\bar{w}_j(x_j)^T 
  \bar{Y}_j$}  for a suitably defined weight function
\smash{$\bar{w}_j$}, i.e., 
note that we can think of \smash{$\tf_j$} as a linear smoother fit to
data \smash{$\bar{Y}_j$}, whose components follow the distribution  
\begin{equation}
\label{eq:ybar_dist_j}
\bar{Y}^v_j=\mu_j+f_{0j}(v/N)+\bar\epsilon^v_j,
\;\;\; v=1,\ldots,N,
\end{equation}
where we let 
\smash{$\mu_j = \mu+\frac{1}{N}\sum_{\ell\not=j}\sum_{u=1}^n
  f_{0\ell}(u/N)$}, and \smash{$\bar\epsilon^v_j$}, $v=1,\ldots,n$
are i.i.d.\ \smash{$N(0,\sigma^2/N^{d-1})$}.  Recalling that $f_{0j}
\in \cF_k(c_n)$, we are in a position to invoke 
univariate minimax results from \citet{donoho1998minimax}. 
As shown in Section 5.1 of \citet{tibshirani2014adaptive}, the space
$\cF_k(c_n)$ contains the Besov space \smash{$B_{1,1}^{k+1}(c_n')$}, 
for a radius $c_n'$ that differs from $c_n$ only by a constant
factor. Therefore, by Theorem 1 of \citet{donoho1998minimax} on
the minimax risk of linear smoothers fit to data from the model
\eqref{eq:ybar_dist_j}, we see that for large enough $N$ and a constant $c_0>0$,    
\begin{align}
\nonumber
\inf_{\hf_j \, \text{\rm linear}} \, \sup_{f_{0j} \in
  \cF_k(c_n)} \, \E \| \hf_j(\bar{Y}_j) - f_{0j} \|_2^2  
&\geq c_0 (c_n N^{(d-1)/2})^{2/(2k+2)}
\frac{N^{-(2k+1)/(2k+2)}}{N^{d-1}} \\
\nonumber
&= c_0 N^{-d(2k+1)/(2k+2)} c_n^{2/(2k+2)} \\
\label{eq:minimax_linear_j}
&= c_0 n^{-(2k+1)/(2k+2)} c_n^{2/(2k+2)}.
\end{align}
As we have reduced the lower bound to the minimax risk of linear
smoothers over a Besov ball, we can see that the same result
\eqref{eq:minimax_linear_j} indeed holds simultaneously over all 
$j=1,\ldots,d$.  Combining this with \eqref{eq:minimax_linear_equiv}
gives the desired result \eqref{eq:lower_bd_tv_linear}. 

\subsection{Proof of Theorem \ref{thm:backfit_par} and derivation
  details for Algorithm \ref{alg:backfit_par}}
\label{app:backfit_par}

We show that the dual of \eqref{eq:add_tf_dual_re} is
equivalent to the additive trend filtering problem
\eqref{eq:add_tf}, and further, the Lagrange multipliers
corresponding to the constraints $u_0=u_j$, for $j=1,\ldots,d$, 
are equivalent to the primal variables $\thetaj$,
$j=1,\ldots,d$.   Let $M=I-\one\one^T/n$, and rewrite problem
\eqref{eq:add_tf_dual_re} as 
$$
\begin{alignedat}{2}
&&\min_{u_0,u_1,\ldots,u_d \in \R^n} \quad &\half \|MY - Mu_0\|_2^2 +
\sum_{j=1}^d I_{U_j}(u_j) \\  
&&\st \quad & Mu_0= Mu_1, \; Mu_0= Mu_2, \;\ldots, \;  Mu_0= Mu_d,       
\end{alignedat}
$$
We can write the Lagrangian of this problem as
$$
L(u_0,u_1,\ldots,u_d, \theta_1,\ldots,\theta_d) =  
\half \|MY - Mu_0\|_2^2 + \sum_{j=1}^d I_{U_j}(u_j)
+ \sum_{j=1}^d \thetaj^T M (u_0 - u_j). 
$$ 
and we want to minimize this over $u_0,\ldots,u_d$ to form the dual of 
\eqref{eq:add_tf_dual_re}.  This gives 
\begin{equation}
\label{eq:add_tf_dual_dual}
\max_{\theta_1, \ldots,\theta_d \in \R^n} \;
\half\|MY\|_2^2 - \half \bigg\|MY - \sum_{j=1}^d M\thetaj
\bigg\|_2^2  - \sum_{j=1}^d \Big(\max_{u_j \in U_j} \; u_j^T M
\thetaj\Big).  
\end{equation}
We use the fact that the support function of $U_j$ is just
$\ell_1$ penalty composed with $S_j D_j$ (invoking  
the duality between $\ell_\infty$ and $\ell_1$ norms),
$$
\max_{u_j \in U_j} \, u_j^T M \thetaj = 
\max_{\|v_j\|_\leq \lambda} \, v_j^T D_j S_j M \thetaj =  
\lambda \|D_j S_j M \thetaj \|_1,
$$
where recall we abbreviate \smash{$D_j=\Dj$}, 
for $j=1,\ldots,d$, and this allows us to rewrite the above
problem \eqref{eq:add_tf_dual_dual} as
$$
\min_{\theta_1, \ldots,\theta_d \in \R^n} \;
\half \bigg\|MY - \sum_{j=1}^d M \thetaj \bigg\|_2^2 
+ \lambda \sum_{j=1}^d \|D_j S_j M \thetaj \|_1, 
$$
which is precisely the same as the original additive trend filtering
problem \eqref{eq:add_tf}.

This realization has important consequences.  In the ADMM
iterations \eqref{eq:admm_par}, the scaled parameters $\rho \gamma_j$,
$j=1,\ldots,d$ correspond to dual variables $\thetaj$, $j=1,\ldots,d$
in problem \eqref{eq:add_tf_dual_re}, which from the above
calculation, are precisely primal variables in \eqref{eq:add_tf}. 
Under weak conditions, ADMM is known to produce convergent dual
iterates, e.g., Section 3.2 of \citet{boyd2011distributed} shows that
if (i) the criterion is a sum of closed, convex
functions and (ii) strong duality holds, then the dual iterates from
ADMM converge to optimal dual solutions.   (Convergence of primal
iterates requires stronger assumptions.) Our problem
\eqref{eq:add_tf_dual_re} satisfies these two conditions, and so
for the ADMM algorithm outlined in \eqref{eq:admm_par}, the scaled
iterates \smash{$\rho\gamma_j^{(t)}$, $j=1,\ldots,d$} converge to
optimal solutions in the dual of \eqref{eq:add_tf_dual_re}, i.e.,
optimal solutions in the additive trend filtering problem
\eqref{eq:add_tf}. This proves the first part of the theorem. 

As for the second part of the theorem, it remains to show that
Algorithm \ref{alg:backfit_par} is equivalent to the ADMM iterations
\eqref{eq:admm_seq}. This follows by notationally swapping
$\gamma_j$, $j=1,\ldots,d$ for $\thetaj/\rho$, $j=1,\ldots,d$, 
rewriting the updates
$$
\thetaj^{(t)}/\rho=u_0^{(t)}+\thetaj^{(t-1)}/\rho-u_j^{(t)}, 
\;\;\; j=1,\ldots,d, 
$$ 
as
$$
\thetaj^{(t)} = \rho \cdot \TF_\lambda 
\big( u_0^{(t)}+\thetaj^{(t-1)}/\rho, \Xj \big), \;\;\; j=1,\ldots,d,
$$
using \eqref{eq:tf_pd}, and lastly, eliminating $u_j$, $j=1,\ldots,d$
from the $u_0$ update by solving for these variables in terms of
terms of $\thetaj$, $j=1,\ldots,d$, i.e., by using
$$
u_j^{(t-1)} = u_0^{(t-1)}+\thetaj^{(t-2)}/\rho -
\thetaj^{(t-1)}/\rho, \;\;\; j=1,\ldots,d.
$$

\subsection{Cyclic versus parallel backfitting}
\label{app:experiment_backfitting}

We compare the performances of the usual cyclic backfitting
method in Algorithm \ref{alg:backfit} to the parallel version 
in Algorithm \ref{alg:backfit_par}, on a simulated data set
generated as in Section \ref{sec:experiment_large_sim}, except with 
$n=2000$ and $d=24$. We computed the additive trend filtering
estimate 
\eqref{eq:add_tf} (of quadratic order), at a fixed value of $\lambda$
lying somewhere near the middle of the regularization path, by
running the cyclic and parallel backfitting algorithms until each
obtained a suboptimality of $10^{-8}$ in terms of the achieved
criterion value (the optimal criterion value here was determined by 
running Algorithm \ref{alg:backfit} for a very large number of
iterations).  We used simply $\rho=1$ in Algorithm \ref{alg:backfit_par}. 

Figure \ref{fig:parallel} shows the progress of the two algorithms,
plotting the suboptimality of the criterion value across the
iterations. The two panels, left and right, differ in how iterations
are counted for the parallel method.  On the left, one full cycle
of $d$ component updates is counted as one iteration for the parallel
method---this corresponds to running the parallel algorithm in 
``naive'' serial mode, where each component update is actually
performed in sequence.  On the right, $d$ full cycles of $d$ component
updates is counted as one iteration for the parallel method---this
corresponds to running the parallel algorithm in an ``ideal'' parallel
mode with $d$ parallel processors. In both panels, one full cycle of
$d$ component updates is counted as one iteration for the cyclic
method. We see that, if parallelization is fully utilized, the
parallel method cuts down the iteration cost by about a factor of 2,
compared to the cyclic method.  We should expect these computational
gains to be even larger as the number of components $d$ grows.

\begin{figure}[tb]
\centering
\includegraphics[width=0.45\textwidth]{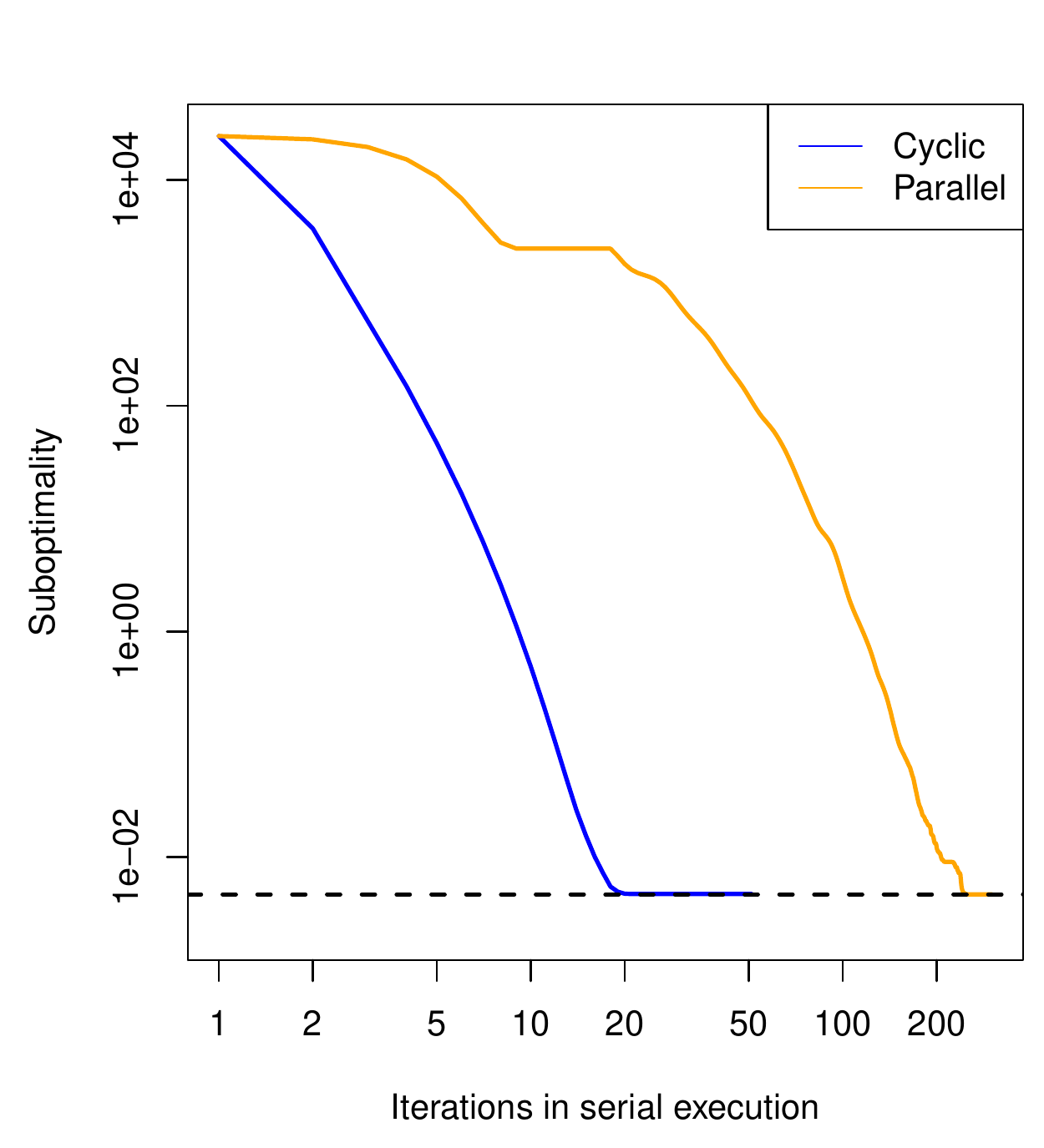}
\includegraphics[width=0.45\textwidth]{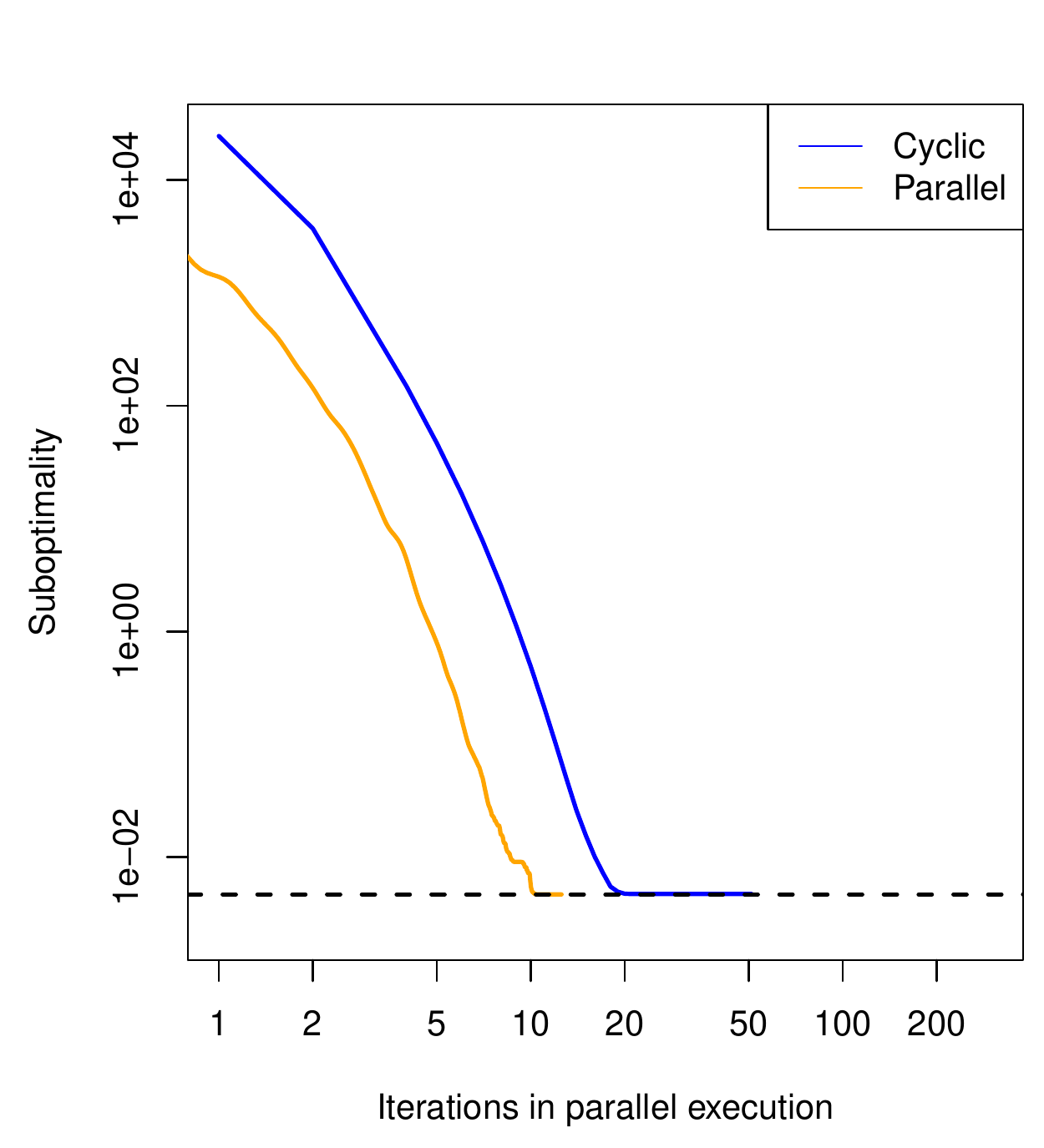}
\caption{\it\small Suboptimality in criterion value versus
iteration number for the cyclic (Algorithm \ref{alg:backfit}) and
parallel (Algorithm \ref{alg:backfit_par}) backfitting methods, on a
synthetic data set with $n=2000$ and $d=24$.  On the left, iterations
for the parallel method are counted as if ``ideal'' parallelization is
used, where the $d$ component updates are performed by $d$ processors,
at the total cost of one update, and on the right, iterations for the
parallel method are counted as if ``naive'' serialization is used,
where the component updates are performed in sequence.  To avoid
zeros on the y-axis (log scale), we added a small value to all the
suboptimalities (dotted line).} 
\label{fig:parallel}
\end{figure}
% Plot script: parallel_simdata.R

\subsection{Simulated homogeneously-smooth data}  
\label{app:experiment_large_sim_homo}

Figure \ref{fig:large_sim_homo} shows the results of a homogeneous  
simulation, as in Section
\ref{sec:experiment_large_sim} and Figure \ref{fig:large_sim}, except
that for the base component trends
we used sinusoids of equal (and spatially-constant) frequency:
$$
g_{0j}(x_j) = \sin(10\pi x_i), \;\;\; j=1,\ldots,10,
$$
and we defined the component functions as
\smash{$f_{0j}=a_jg_{0j}-b_j$}, $j=1,\ldots,d$, where $a_j,b_j$ were
chosen to standardize \smash{$f_{0j}$} (give it zero empirical mean
and unit empirical norm), for $j=1,\ldots,d$. 

\begin{figure}[tb]
\centering
\includegraphics[width=0.45\textwidth]{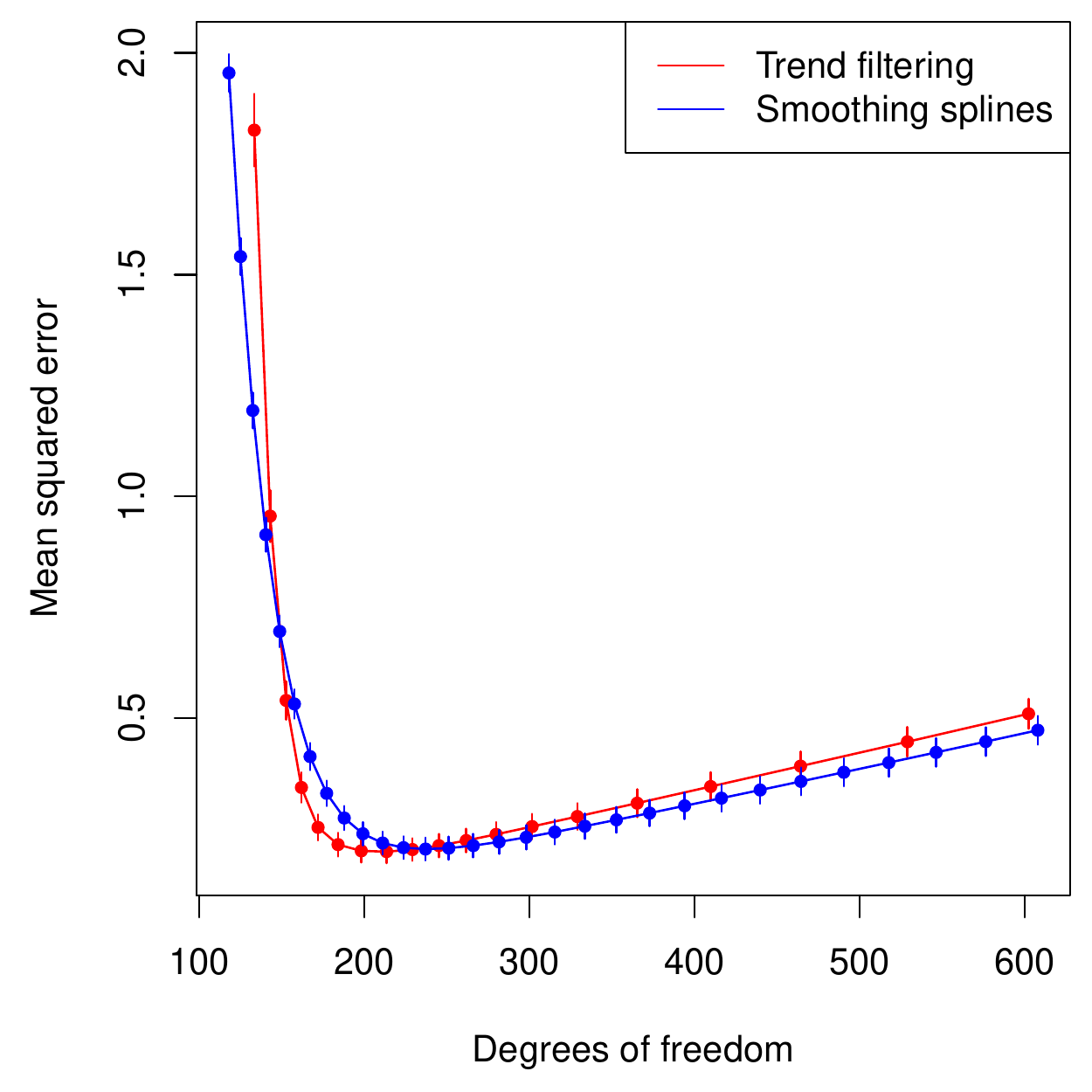}
\includegraphics[width=0.45\textwidth]{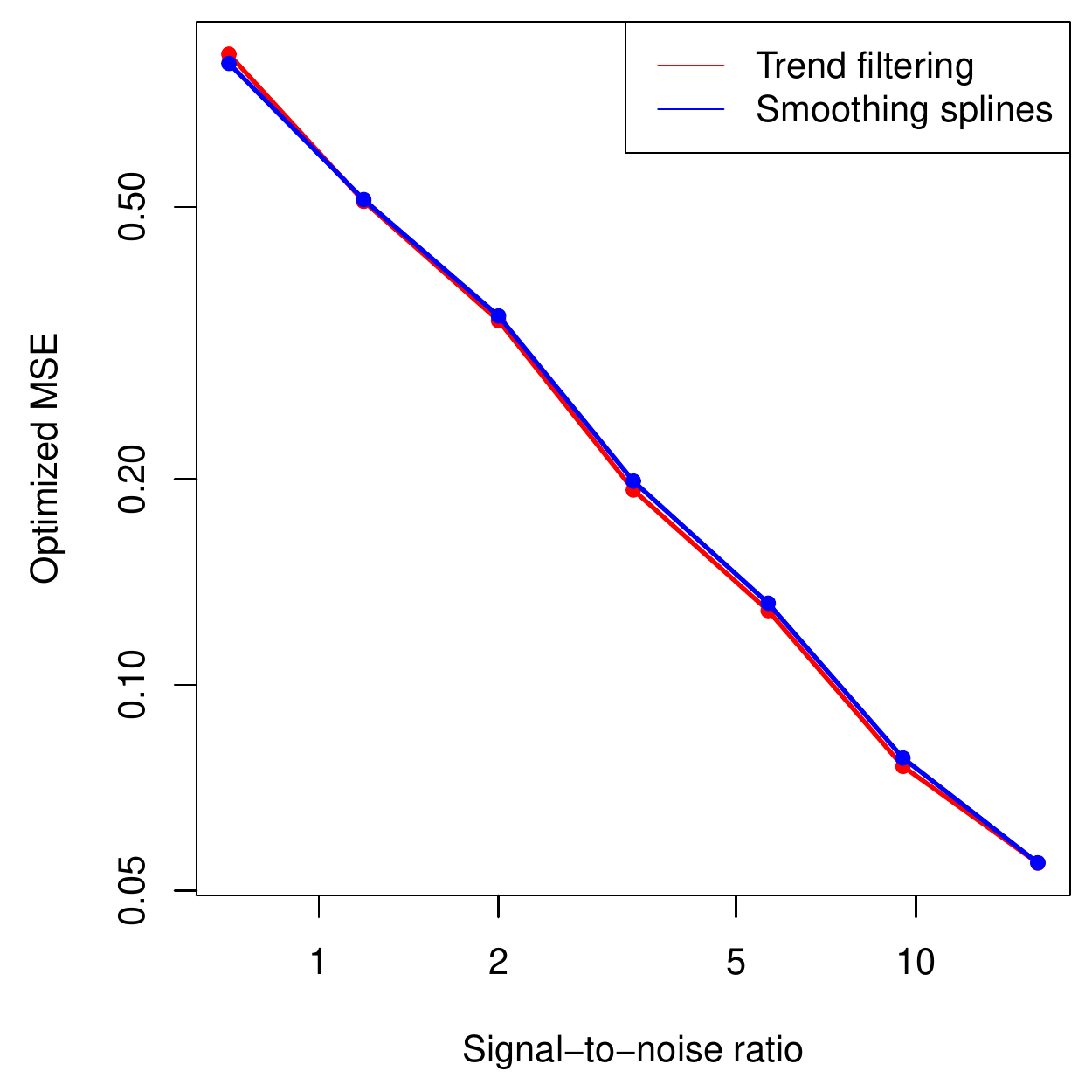}
\caption{\it\small Results from a simulation setup identical to that described 
  in Section \ref{sec:experiment_large_sim}, i.e., identical to that used to
  produce Figure \ref{fig:large_sim}, except with homogeneous smoothness in the 
  underlying component functions.}  
\label{fig:large_sim_homo}
\end{figure}
% Plot script: largeSimHomo.R

\newpage
\bibliographystyle{plainnat}
\bibliography{ryantibs}
\end{document}